\documentclass{article}
\setlength{\pdfpagewidth}{8.5in}
\setlength{\pdfpageheight}{11in}
\PassOptionsToPackage{numbers}{natbib}
\usepackage[final]{nips_2017} 
\usepackage{nicefrac}
\usepackage{amssymb, amsmath, amsthm, graphicx, bbm, appendix, booktabs, array}
\usepackage{microtype}
\usepackage{thmtools,thm-restate}
\usepackage[inline]{enumitem}
\usepackage[colorlinks,bookmarks=true]{hyperref}
\usepackage[capitalize,nameinlink]{cleveref}
\crefname{thm}{\text{Thm}}{\text{Thms}}
\crefname{defn}{\text{Defn}}{\text{Defns}}
\crefname{assm}{\text{Axiom}}{\text{Axioms}}
\crefname{myquote}{\text{Hypothesis}}{\text{Hypotheses}}
\crefformat{footnote}{#2\footnotemark[#1]#3}
\usepackage{mymacros} 
\theoremstyle{theorem}

\newtheorem{assm}[thm]{Axiom}
 
\usepackage[normalem]{ulem}
\usepackage{subcaption}
\usepackage[font=small,labelfont=bf]{caption}

\renewcommand{\poly}{\mathsf{poly}}
\newcommand{\nxt}{\overline}
\newcommand{\prv}{\underline}
\newcommand{\Gaus}{\mathcal{N}}
\newcommand{\Vt}{\mathrm{V}}
\newcommand{\Wt}{\mathrm{W}}

\newcommand{\sech}{\operatorname{sech}}

\newcommand{\eps}{\epsilon}
\newcommand{\ee}{\mathbf{e}}
\newcommand{\JJ}{\mathbb{J}}
\newcommand{\cV}{\mathsf{c}}
\newcommand{\oalpha}{{\f 1 {1-\alpha}}}
\newcommand{\aalpha}{{\f \alpha {1-\alpha}}}
\newcommand{\bessel}{{\mathcal{K}_0}}

\usepackage{letltxmacro}
\LetLtxMacro{\oldchi}{\chi}
\renewcommand{\chi}{\boldsymbol{\oldchi}}
\LetLtxMacro{\oldgamma}{\gamma}
\renewcommand{\gamma}{\boldsymbol{\oldgamma}}
\LetLtxMacro{\oldlambda}{\lambda}
\renewcommand{\lambda}{\boldsymbol{\oldlambda}}
\LetLtxMacro{\olddaleth}{\daleth}

\renewcommand{\daleth}{\chi}
\newcommand{\qq}{\mathbf{q}}
\newcommand{\pp}{\mathbf{p}}
\renewcommand{\cc}{\mathbf{c}}
\renewcommand{\beth}{\beta}
\newcommand{\mfs}{\mathbf{s}}

%
\usepackage{endnotes}

\let\footnote=\endnote

\title{Mean Field Residual Networks: On the Edge of Chaos}
\author{
	Greg Yang\thanks{Work done while at Harvard University}\\
 	{\small Microsoft Research AI}\\
     \texttt{\small gregyang@microsoft.com}\\
     \And
     Samuel S. Schoenholz\\
     {\small Google Brain}\\
     \texttt{\small schsam@google.com}\\
}
\begin{document}

\maketitle
\begin{abstract}
We study randomly initialized residual networks using mean field theory and the theory of difference equations.
Classical feedforward neural networks, such as those with tanh activations, exhibit exponential behavior on the average when propagating inputs forward or gradients backward.
The exponential forward dynamics causes rapid collapsing of the input space geometry, while the exponential backward dynamics causes drastic vanishing or exploding gradients.
We show, in contrast, that by adding skip connections, the network will, depending on the nonlinearity, adopt subexponential forward and backward dynamics, and in many cases in fact polynomial.
The exponents of these polynomials are obtained through analytic methods and proved and verified empirically to be correct.
In terms of the ``edge of chaos'' hypothesis, these subexponential and polynomial laws allow residual networks to ``hover over the boundary between stability and chaos,'' thus preserving the geometry of the input space and the gradient information flow.
In our experiments, for each activation function we study here, we initialize residual networks with different hyperparameters and train them on MNIST.
Remarkably, our {\it initialization time} theory can accurately predict {\it test time} performance of these networks, by tracking either the expected amount of gradient explosion or the expected squared distance between the images of two input vectors.
Importantly, we show, theoretically as well as empirically, that common initializations such as the Xavier or the He schemes are not optimal for residual networks, because {\it the optimal initialization variances depend on the depth}.
Finally, we have made mathematical contributions by deriving several new identities for the kernels of powers of ReLU functions by relating them to the zeroth Bessel function of the second kind.
\end{abstract}

\section{Introduction}

Previous works \cite{poole_exponential_2016,daniely_toward_2016,schoenholz_deep_2017} have shown that randomly initialized neural networks exhibit a spectrum of behavior with depth, from stable to chaotic, which depends on the variance of the initializations: the cosine distance of two input vectors converges exponentially fast with depth to a fixed point in [0, 1]; if this fixed point is 1, then the behavior is stable; if this fixed point is 0, then the behavior is chaotic.
It has been argued in many prior works \cite{bertschinger_real-time_2004,poole_exponential_2016} that effective computation can only be supported by a dynamical behavior that is on {\bf the edge of chaos}.
Too much stability prevents the neural network from telling apart two different inputs.
While some chaotic behavior can increase the expressivity of a network, too much chaos makes the neural network think two similar inputs are very different.
At the same time, the same initialization variances also control how far gradient information can be propagated through the network; the networks with chaotic forward dynamics will tend to suffer from exploding gradients, while networks with stable forward dynamics will tend to suffer from vanishing gradients.

These works have focused on vanilla (fully connected) feedforward networks.
Here we consider residual networks \cite{he_deep_2016,he_identity_2016} (with fully-connected layers and without batchnorm), which are a family of recently proposed neural network architectures that has achieved state-of-the-art performance on image recognition tasks, beating all other approaches by a large margin.
The main innovation of this family of architectures is the addition of a passthrough (identity) connection from the previous layer to the next, such that the usual nonlinearity computes the ``residual'' between the next-layer activation and the previous-layer activation.

In this work, we seek to characterize randomly initialized residual networks.
One of our main results is that random residual networks for many nonlinearities such as $\tanh$ {\bf live on the edge of chaos}, in that the cosine distance of two input vectors will converge to a fixed point at a polynomial rate, rather than an exponential rate, as with vanilla tanh networks.
Thus a typical residual network will slowly cross the stable-chaotic boundary with depth, hovering around this boundary for many layers.
In addition, for most of the nonlinearities considered here, the mean field estimate of the gradient grows subexponentially with depth.
In fact, for $\alpha$-ReLU, the $\alpha$th-power of ReLU, for $\alpha < 1$, the gradient grows only polynomially.
These theoretical results provide some theoretical justification for why residual networks work so well in practice.
In our experiments, we are also able to predict surprisingly well the relative performances of {\it trained} residual networks based only on their initialization hyperparameters, in a variety of settings.
In particular, we find that the quality of initialization for tanh resnets is determined by {\it trainability} (how much gradient explosion on average) while that for ($\alpha$-)ReLU resnets is determined by expressivity (how far can two different input vectors be pulled apart) (see \cref{sec:experiments}).
To the best of our knowledge, this is the first time that a quantity other than gradient explosion/vanishing has been found to control the quality of initialization.
We establish theoretically and empirically that the best initialization variances for residual networks depend on the depth of the network (contrary to the feedforward case \cite{schoenholz_deep_2017}), so that \textbf{common initialization schemes like Xavier \cite{glorot_understanding_2010} or He \cite{he_delving_2015} cannot be optimal}.
In fact, even the rationale of He initialization is incorrect for ReLU residual networks because it tries to control gradient dynamics rather than expressivity.
However we want to emphasize that we study a simplified model of residual networks in this work, with no batchnorm or convolutional layers, so that these results are not necessarily indicative of the MSRA residual network used in practice \cite{he_deep_2016}.

In the body of this paper, we give account of general intuition and/or proof strategy when appropriate for our theoretical results, but we relegate all formal statements and proofs to the appendix.

\section{Background}
\label{sec:background}
Consider a vanilla feedforward neural network of $L$ layers, with each layer $l$ having $\p N l$ neurons; here layer 0 is the input layer.
For the ease of presentation we assume all hidden layer widths are the same $\p N l = N$ for all $l > 0$.
Let $\p x 0 = (\p x 0_1, \ldots, \p x 0_{\p N 0})$ be the input vector to the network, and let $\p x l$ for $l > 0$ be the activation of layer $l$.
Then a neural network is given by the equations
	\begin{align*}
		\p x l _i &= \phi(\p h l _i), & \p h l _i &= \sum_{j=1}^N \p w l _{ij} \p x {l-1}_j + \p b l_i
	\end{align*}
	where
	\begin{enumerate*}[label=(\roman*)]
	    \item $\p h l$ is the pre-activation at layer $l$,
	    \item $\p w l$ is the weight matrix,
	    \item $\p b l$ is the bias vector, and
	    \item $\phi$ is a nonlinearity, for example $\tanh$ or ReLU, which is applied coordinatewise to its input.
	\end{enumerate*}
	
To lighten up notation, we suppress the explicit layer numbers $l$ and write
	\begin{align*}
		x_i &= \phi(h_i), & h_i &= \sum_j w_{ij} \prv x_j + b_i
	\end{align*}
where $\bullet$ implicitly denotes $\p \bullet l$, and $\prv \bullet$ denotes $\p \bullet {l-1}$ (and analogously, $\nxt \bullet$ denotes $\p \bullet {l+1}$).

A series of papers \cite{poole_exponential_2016,raghu_expressive_2016,schoenholz_deep_2017} investigated the ``average behavior'' of random neural networks sampled via $\p w l_{ij} \sim \Gaus(0, \sigma_w^2/N), \p b l_i \sim \Gaus(0, \sigma_b^2)$, for fixed parameters $\sigma_w$ and $\sigma_b$, independent of $l$.
Consider the expectation of $\f 1 N \sum_{i=1}^N x^2_i$, the normalized squared length of $x$, over the sampling of $w$ and $b$.
\citet{poole_exponential_2016} showed that this quantity converges to a fixed point exponentially fast for sigmoid nonlinearities.
Now suppose we propagate two different vectors $\p x 0$ and $(\p x 0)'$ through the network.
\citet{poole_exponential_2016} also showed that the expectation of the normalized dot product $\f 1 N \sum_{i=1}^N x_i x'_i$ converges exponentially fast to a fixed point.
The ratio between the normalized squared length and the normalized dot product is the cosine distance between $x$ and $x'$.
Thus these two exponential convergence results show that the cosine distance converges exponentially fast to a fixed point as well.
Intuitively, this means that a vanilla feedforward network ``forgets'' the geometry of the input space ``very quickly,'' after only a few layers.

In addition, \citet{schoenholz_deep_2017}, under certain independence assumptions, showed that the expected normalized squared norm of the gradient also vanishes or explodes in an exponential fashion with depth, with the ''half-life'' controlled by $\sigma_w$ and $\sigma_b$.
They verified that this theoretical ''half-life'' correlates in practice with the maximal number of layers that are admissible to good performance.

At the same time, \citet{daniely_toward_2016} published work of similar nature, but phrased in the language of reproducing kernel Hilbert spaces, and provided high probability estimates that are meaningful for the case when the width $N$ is finite and the depth is logarithmic in $N$.
However, they essentially fixed the variance parameters $\sigma_\bullet$, and furthermore, their framework (for example the notion of a ``skeleton'') does not immediately generalize to the residual network case.

In this work, we show that residual networks have very different dynamics from vanilla feedforward networks.
In most cases, the cosine distance convergence rate and the gradient growth rate are subexponential in a residual network, and in most cases, these rates may be polynomial.

\section{Preliminaries}

Residual networks were first introduced by \cite{he_deep_2016} and later refined by \cite{he_identity_2016}, and they are now commonplace among deployed neural systems.
The key innovation there is the addition of a shortcut connection from the previous layer to the next.
We define the following idealized architectures for ease of analysis.
Note that we only consider fully-connected affine layers instead of convolutional layers.
A {\bf reduced residual network (RRN)} has the recurrence
	\begin{align*}
		x_i &= \phi(h_i) + \prv x, & h_i &= \sum_j w_{ij} \prv x_j + b_i.
	\end{align*}
A {\bf (full) residual network (FRN)} in addition has an affine connection given by weights $v$ and biases $a$ from the nonlinearity $\phi(h)$ to the next layer:
	\begin{align*}
	x_i &= \sum_j v_{ij} \phi( h_j) + \prv x_i + a_i, &  h_i &= \sum_j w_{ij} \prv x_j + b_i
	\end{align*}

We are interested in the ``average behavior'' of these network when the weights and biases, $\p w l_{ij}, \p b l_i, \p v l_{ij}$, and $\p a l_i$ are sampled i.i.d. from Gaussian distributions resp. with standard deviations $\sigma_w, \sigma_b, \sigma_v,$ and $\sigma_a$, independent from $l$.
Here we take the variance of $\p w l_{ij}$ to be $\sigma_w^2/N$ so that the variance of each $h_i$ is  $\sigma_w^2$, assuming each $\prv x_j$ is fixed (similarity for $\p v l_{ij}$).
Such an initialization scheme is standard in practice.

We make several key ``physical assumptions'' to make theoretical computations tractable:
\begin{assm}[Symmetry of activations and gradients]\label{ass:symAct}
	(a) We assume $\la (\p h l _i)^2 \ra = \la (\p h l _j)^2 \ra$ and $\la (\p x 0 _i)^2 \ra = \la (\p x 0 _j)^2 \ra$ for any $i, j, l$.
	(b) We also assume that the gradient $\pd E/\pd \p x l _i$ with respect to the loss function $E$ satisfies $\la (\pd E/\pd \p x l _i)^2 \ra = \la (\pd E/ \pd \p x l _j)^2 \ra$ for any $i, j, l$.
\end{assm}
One can see that \cref{ass:symAct}(a) is satisfied if the input $\p x 0 \in \{\pm 1\}^N$ and \cref{ass:symAct}(b) is satisfied if \cref{ass:gradInd} below is true and the gradient at the last layer $\pd E/\pd x L \in \{\pm 1 \}^N$.
But in general it is justified both empirically and theoretically as an approximation, because $(\p h l _i)^2 - (\p h l _j)^2$ stays about constant with $l$, but $(\p h l _i)^2$ and $(\p h l _j)^2$ grow rather quickly at the same pace with $l$ (as will be seen later in calculations), so that their additive difference becomes negligible; similarly for $(\p x l _i)^2$ and $(\pd E/\pd \p h l _i)^2$.
\begin{assm}[Gradient independence]\label{ass:gradInd}
	(a) We assume the we use a different set of weights for backpropagation than those used to compute the network outputs, but sampled i.i.d. from the same distributions.
	(b) For any loss function $E$, we assume that the gradient at layer $l$, $\pd E/\pd \p x l _i$, is independent from all activations $\p h {l} _j$ and $\p x {l-1} _j$ from the previous layer.
\end{assm}
\cref{ass:gradInd}(a) was first made in \cite{schoenholz_deep_2017} for computing the mean field theory of gradients for feedforward tanh networks.
This is similar to the practice of feedback alignment \cite{lillicrap_random_2016}.
Even though we are the first to explicitly formulate \cref{ass:gradInd}(b), in fact it was already applied implicitly in the gradient calculations of \cite{schoenholz_deep_2017}.
Note that a priori \cref{ass:gradInd}(b) is not true, as $\pd E/\pd \p x l _i$ depends on $\dot \phi(\p h {l+1} _k)$ for every $k$, which depend on $\p h {l} _j$ for each $j$, and which depends on $\p x {l-1} _k$ for every $k$.
Nevertheless, in practice both subassumptions hold very well.

Now we define the central quantities studied in this paper.
Inevitably, our paper involves a large amount of notation that may be confusing for the first-time reader.
We have included a glossary of symbols (\cref{tab:glossary}) to ameliorate notation confusion.
\begin{defn}\label{defn:length}
Fix an input $\p x 0$.
Define the {\bf length quantities} $\p \qq l := \la (\p h l_1)^2 \ra$ and $\p \pp l := \la (\p x l_1)^2 \ra$ for $l > 0$ and $\p \pp 0 = \|\p x 0\|^2/N$.
Here the expectations $\la \bullet \ra$ are taken over all random initialization of weights and biases for all layers $l$, as $N \to \infty$ (large width limit).
\end{defn}
Note that in our definition, the index $1$ does not matter by \cref{ass:symAct}.
\begin{defn}\label{defn:corr}
Fix two inputs $\p x 0$ and $\p x 0{}'$.
We write $\bullet'$ to denote a quantity $\bullet$ with respect to the input ${\p x 0}'$.
Then define {\bf the correlation quantities} $\p \gamma l:= \la \p h l_1 \p h l_1{}' \ra$ and $\p \lambda l:= \la \p x l_1 \p x l_1{}'\ra$ for $l > 0$ and $\p \gamma 0 = \p x 0 \cdot \p x 0 {}' / N$, where the expectations $\la \bullet \ra$ are taken over all random initialization of weights and biases for all layers $l$, as $N \to \infty$ (large width limit).
Again, here the index $1$ does not matter by \cref{ass:symAct}.
By {\bf metric expressivity}, we mean $\p \mfs l := \f 1 {2N} \la \|\p x l - \p x l {}'\|^2\ra = \f 1 {2N} (\la \|\p x l\|^2\ra + \la \|\p x l {}'\|^2 \ra - 2 \la \p x l \cdot \p x l {}'\ra) = \f 1 2 (\p \pp l + \p \pp l {}') - \p \gamma l$.
Additionally, define {\bf the cosine distance quantities} $\p \ee l := \p \gamma l / \sqrt{\p \pp l \p \pp l {}'}$ and $\p \cc l := \p \lambda l / \sqrt{\p \qq l \p \qq l{}'}$, and we will also call $\p \ee l$ {\bf angular expressivity}.
\end{defn}
In this paper, for the ease of presentation, we assume $\p \pp 0 = \p \pp 0 {}'$.
Then, as we will see, $\p \pp l = \p \pp l{}', \p \qq l = \p \qq l{}'$ for all $l$, and as a result, $\p \ee l = \p \gamma l / \p \pp l$ and $\p \mfs l = \p \pp l - \p \gamma l = (1 - \p \ee l) \p \pp l$.

\begin{defn}\label{defn:grad}
Fix an input $\p x 0$ and a gradient vector $(\pd E/ \pd{\p x L_i})_i$ of some loss function $E$ with respect to the last layer $\p x L$.
Then define {\bf the gradient quantities} $\p \daleth l:= \la (\pd E/\pd \p x l _1)^2 \ra, \p \chi l _\bullet := \la (\pd E/\pd \p \bullet l _1)^2 \ra$ for $\bullet = a, b$, and $\p \chi l _\bullet := \la (\pd E/\pd \p \bullet l _{11})^2 \ra$ for $\bullet = w, v$.
Here the expectations are taken with \cref{ass:gradInd} in mind, over both random initialization of forward and backward weights and biases, as $N \to \infty$ (large width limit).
Again, the index $1$ or $11$ does not matter by \cref{ass:symAct}.
\end{defn}

\newcommand{\TTheta}{{\check\Theta}}
\paragraph{Asymptotic notations.}
The expressions $f = O(g) \iff g = \Omega(f)$ have their typical meanings, and $f = \Theta(g)$ iff $f = O(g), g = O(f)$.
We take $f(x) = \tilde O(g(x)) \iff g(x) = \tilde \Omega(f(x))$ to mean $f(x) = O(g\log^k x)$ for some $k \in \Z$ (this is slightly different from the standard usage of $\tilde O$), and $f = \tilde\Theta(g) \iff f = \tilde O(g) \And g = \tilde O(f).$
We introduce a new notation: $f = \TTheta(g)$ if $f(x) = O(g(x) \cdot x^\eps)$ and $f(x) = \Omega(g(x) \cdot x^{-\eps})$, as $x \to \infty$, for any $\eps > 0$.
All asymptotic notations are sign-less, i.e. can indicate either positive or negative quantities, unless stated otherwise.
\section{Overview}

The primary reason we may say anything about the average behavior of any of the above quantities is the central limit theorem: every time the activations of the previous layer pass through an affine layer whose weights are sampled i.i.d., the output is a sum of a large number of random variables, and thus follows approximately Gaussian distributions.
The mean and variance of these distributions can be computed by keeping track of the mean and variances of the activations in the previous layer.

In what follows, we use this technique to derive recurrence equations governing $\pp, \qq, \gamma, \lambda, \daleth$ for different architectures and different activation functions.
We use these equations to investigate the dynamics of $\ee$ and $\mfs$, the key quantities in the forward pass, and the dynamics of $\daleth$, the key quantity in the backward pass.

The cosine distance $\ee$ in some sense measures the angular geometry of two vectors.
If $\ee = 1$, then the vectors are parallel; if $\ee = 0$, then they are orthogonal.
Just as in \cite{poole_exponential_2016} and \cite{schoenholz_deep_2017}, we will show that in all of the architectures and activations we consider in this paper, $\p \ee l$ converges to a fixed point $\ee^*$ as $l \to \infty$ \footnote{Under simplified conditions, \citet{daniely_toward_2016} showed that there exists a fixed point for any ``well-behaved'' activation function in a feedforward net.
	However, this result does not apply to architectures with residual connections.}.
Thus, on the average, as vectors propagate through network, the geometry of the original input space, for example, linear separability, is ``forgotten'' by residual networks as well as by vanilla networks.
But we will prove and verify experimentally that, while \citet{poole_exponential_2016} and \cite{schoenholz_deep_2017} showed that the convergence rate to $\ee^*$ is exponential in a vanilla network, the convergence rate is rather only polynomial in residual networks, for tanh and $\alpha$-ReLU (\cref{defn:alphaReLU}) nonlinearities; see \cref{thm:dalethRecReduced}, \cref{thm:eDynamicsFullResTanh}, \cref{thm:ReLUSquaredConvergence}, and \cref{thm:alphaReLUeConvergence}.
This slow convergence preserves geometric information in the input space, and allows a typical residual network to ``hover over the edge of chaos'': Even when the cosine distance $\p\ee l$ converges to 0, corresponding to ``chaos'', (resp. 1, corresponding to ``stability''), for the number of layers usually seen in practice, $\p \ee l$ will reside well away from 0 (resp. 1).

Similarly, the quantity $\mfs$ measures the metric geometry of two vectors.
The evolution of $\p \mfs l$ with $l$ tells us the ability of the average network to separate two input points in terms of Euclidean distance.
Again, for tanh and $\alpha$-ReLU ($\alpha < 1$) nonlinearities, $\mfs$ varies only polynomially with $l$.

On the other hand, $\p \daleth l$ measures the size of gradient at layer $l$, and through it we track the dynamics of gradient backpropagation, be it explosion or vanishing.
In contrast to vanilla tanh networks, which can experience both of these two phenomenon depending on the initialization variances, typical residual networks cannot have vanishing gradient, in the sense of vanishing $\p \daleth l$ as $l \to 1$; see \cref{thm:dalethRecReduced} and \cref{thm:dalethRecFull}.
Furthermore, while vanilla tanh networks exhibit exponentially vanishing or exploding gradients, all of the activation/architecture pairings considered here, except the full residual network with ReLU, have subexponential gradient dynamics.
While tanh residual networks (reduced or full) has $\p \daleth 0 \approx \exp(\Theta(\sqrt l)) \p \daleth l$ (\cref{thm:dalethExpSqrtTanhFullRes}), $\alpha$-ReLU residual networks for $\alpha < 1$ have $\p \daleth 0 \approx \poly(l) \p \daleth l$ (\cref{thm:dalethDynamicsAlphaReLU}).
Instead of $\pd E/\pd x_i$, we may also consider the size of gradients of actual trainable parameters.
For tanh and $\alpha$-ReLU with $\alpha < 1$, they are still subexponential and polynomial (\cref{thm:alphaReLUAllGradients}).
On the other hand, while $\p \daleth 0 = \exp(\Theta(l))\p \daleth l$ for a ReLU resnet, its weight gradients have size independent of layer, within $O(1)$ (\cref{thm:alphaReLUAllGradients})!
This is the only instance in this paper of gradient norm being completely preserved across layers.

The above overviews the theoretical portion of this paper.
Through experiments, we discover that we can very accurately predict whether one random initialization leads to better performance than another on the test set, after training, by leveraging this theory we build.
Residual networks of different nonlinearities have different {\it controlling quantities}: for resnets with tanh, the optimal initialization is obtained by controlling the gradient explosion $\p \daleth 0 / \p \daleth L$; whereas for ReLU and $\alpha$-ReLU, the optimal initialization is obtained by maximizing $\mfs$ without running into numerical issues (with floating point computation).
See \cref{sec:experiments} for details.

Over the course of our investigation of $\alpha$-ReLU, we derived several new identities involving the associated kernel functions, first defined in \cite{cho_kernel_2009}, which relate them to the zeroth Bessel functions (\cref{lemma:JalphaBessel,lemma:LAlphaRec,lemma:JalphaRec,lemma:JalphaGrad}).

\section{Theoretical Results}

In what follows in the main text, we assume $\sigma_\bullet > 0$ for all $\bullet = w, v, b, a$; in the appendix, the formal statement of each main theorem will contain results for other cases.
We are interested in the two major categories of nonlinearities used today: tanh-like and rectified units.
We make the following formal definitions as a foundation for further consideration.
\begin{defn}
We say a function $\phi$ is {\bf tanh-like} if $\phi$ is antisymmetric ($\phi(-x) = -\phi(x)$), $|\phi(x)| \le 1$ for all $x$, $\phi(x) \ge 0, \forall x \ge 0$, and $\phi(x)$ monotonically increases to 1 as $x \to \infty$.
\end{defn}
\begin{defn}\label{defn:alphaReLU}
Define the $\alpha$-ReLU $\psi_\alpha(x) = x^\alpha$ if $x > 0$ and 0 otherwise.~\footnote{
	Note that in practice, to avoid the diverging gradient $\dot \psi_\alpha(x) \to \infty$ as $x \to 0$, we can use a tempered version $\Psi_\alpha(x)$ of $\alpha$-ReLU, defined by $\Psi_\alpha(x) = (x + \eps)^\alpha - \eps^\alpha$ on $x > 0$ and 0 otherwise, for some small $\eps > 0$.
	The conclusions of this paper on $\psi_\alpha$ should hold similarly for $\Psi_\alpha$ as well.}
\end{defn}

\begin{table}[t]
  \caption{Main Recurrences}
  \label{tab:recurrences}
  \centering
  \begin{tabular}{lll}
    \toprule
    Antisymmetric/RRN& Any/FRN \\
    \midrule
    \parbox{3cm}{
	\begin{align*}
		\qq &= \sigma_w^2 \prv \pp + \sigma_b^2 &
		\pp &= \Vt \phi( \qq) + \prv \pp\\
		\lambda &= \sigma_w^2 \prv \gamma + \sigma_b^2 &
		\gamma &= \Wt \phi(\qq, \lambda) + \prv \gamma\\
		&& \prv \daleth &= (\sigma_w^2  \Vt \dot \phi( \qq) + 1)\daleth
	\end{align*}
	}
	&
	\parbox{3cm}{
	\begin{align*}
	\qq &= \sigma_w^2 \prv \pp + \sigma_b^2 &
	\pp &= \sigma_v^2 \Vt \phi( \qq) + \sigma_a^2 + \prv \pp\\
	\lambda &= \sigma_w^2 \prv \gamma + \sigma_b^2 &
	\gamma &= \sigma_v^2 \Wt \phi(\qq, \lambda) + \sigma_a^2 + \prv \gamma\\
	&& \prv \daleth &= (\sigma_v^2\sigma_w^2  \Vt \dot \phi( \qq) + 1)\daleth
	\end{align*}
	}\\
	\midrule
	 Theorems~\ref{thm:p_q_linear},
	 \ref{thm:lambda_gamma_recurrence},
	 \ref{thm:dalethRecReduced} &
	 Theorems~\ref{thm:fullResPQRec},
	 \ref{thm:full_res_l_g_recurr},
	 \ref{thm:dalethRecFull}\\
    \bottomrule
  \end{tabular}
\end{table}
By applying the central limit theorem as described in the last section, we derive a set of recurrences for different activation/architecture pairs, shown in \cref{tab:recurrences} (see appendix for proofs).
They leverage certain integral transforms \footnote{\citet{daniely_toward_2016} called the version of $\Wt\phi$ with fixed $\rho = 1$ the ``dual function'' of $\phi$.} as in the following
\begin{defn}\label{defn:integralTransform}
Define the transforms $\Vt$ and $\Wt$ by $\Vt \phi(q) := \EV[\phi(z)^2: z \sim \Gaus(0, q)]$ and $\Wt \phi(\rho, \nu) := \EV[\phi(z)\phi(z'): (z, z') \sim \Gaus(0, \begin{pmatrix}\rho & \nu \\ \nu & \rho \end{pmatrix})]$.
\end{defn}

These recurrences are able to track the corresponding quantities in practice very well.
For example, \cref{fig:theory_tracks_pratice} compares theory vs experiments for the tanh/FRN pair.
The agreement is very good for tanh/RRN (not shown, but similar to the case of tanh/FRN with $\sigma_v = 1$ and $\sigma_a = 0$) and $\alpha$-ReLU/FRN as well (see \cref{fig:alphaReLUTheoryVsEmpirics}).

As mentioned in previous sections, we seek to characterize the long term/high depth behavior of all of the quantities defined in \cref{sec:background}.
To do so, we solve for the asymptotics of the recurrences in \cref{tab:recurrences}, where $\phi$ is instantiated with tanh or $\alpha$-ReLU.
Our main dynamics results are summarized in \cref{tab:dynamics}.

\begin{table}[t]
  \caption{Summary of Main Dynamics Results.
  Note that while $\p\daleth l$ is exponential for ReLU/FRN, the gradients with respect to weight parameters have norms ($\chi_w$ and $\chi_v$) constant in $l$ (\cref{thm:alphaReLUAllGradients}).
  Also, the $\p \daleth l$ entry for $\alpha$-ReLU is for $\alpha \in (3/4, 1)$ only}
  \label{tab:dynamics}
  \centering
  \begin{tabular}{llllll}
    \toprule
    		&Tanh/RRN	&Tanh/FRN	&ReLU/FRN	&$\alpha$-ReLU/FRN, $\alpha<1$\\
    \midrule
    $\p \pp l$		&$\Theta(l)$, \hfill \ref{thm:p_q_linear}
    					&$\Theta(l)$, \hfill\ref{thm:pIsLinearTanh}
    								&$\exp(\Theta(l))$, \hfill\ref{thm:pDynamicAlphaReLU}
    									&$\Theta(l^{1/(1-\alpha)})$, \hfill\ref{thm:pDynamicAlphaReLU}\\
    $\p \mfs l$		&$\Theta(l)$, \hfill \ref{thm:edynamics}
    					&$\Theta(l)$, \hfill \ref{thm:eDynamicsFullResTanh}
    								&$\exp(\Theta(l))$, \hfill \ref{thm:ReLUSquaredConvergence}
    											&$\Theta(l^{1/(1-\alpha)})$, \hfill \ref{thm:alphaReLUeConvergence}\\
    $\p \ee l - \ee^*$
    		&$\TTheta(l^{\f2 \pi -1})$, \hfill\ref{thm:edynamics}
						&$\poly(l)$, \hfill\ref{thm:eDynamicsFullResTanh}
									&$\Theta(l^{-2})$, \hfill\ref{thm:ReLUSquaredConvergence}
										&$\poly(l)$, \hfill\ref{thm:alphaReLUeConvergence}\\
	$\p \daleth l$&$\exp(\Theta(\sqrt l))$, \hfill\ref{thm:dalethExpSqrtTanh}
						&$\exp(\Theta(\sqrt l))$, \hfill\ref{thm:dalethRecFull}
									&$\exp(\Theta(l))$, \hfill\ref{thm:dalethDynamicsAlphaReLU}
										&$\Theta(l^{\f{\alpha^2}{(1-\alpha)(2 \alpha - 1)}})$, \hfill\ref{thm:dalethDynamicsAlphaReLU}\\
    \bottomrule
  \end{tabular}
\end{table}

\subsection{Tanh}

\begin{figure}
\centering
\includegraphics[height=.17\textheight]{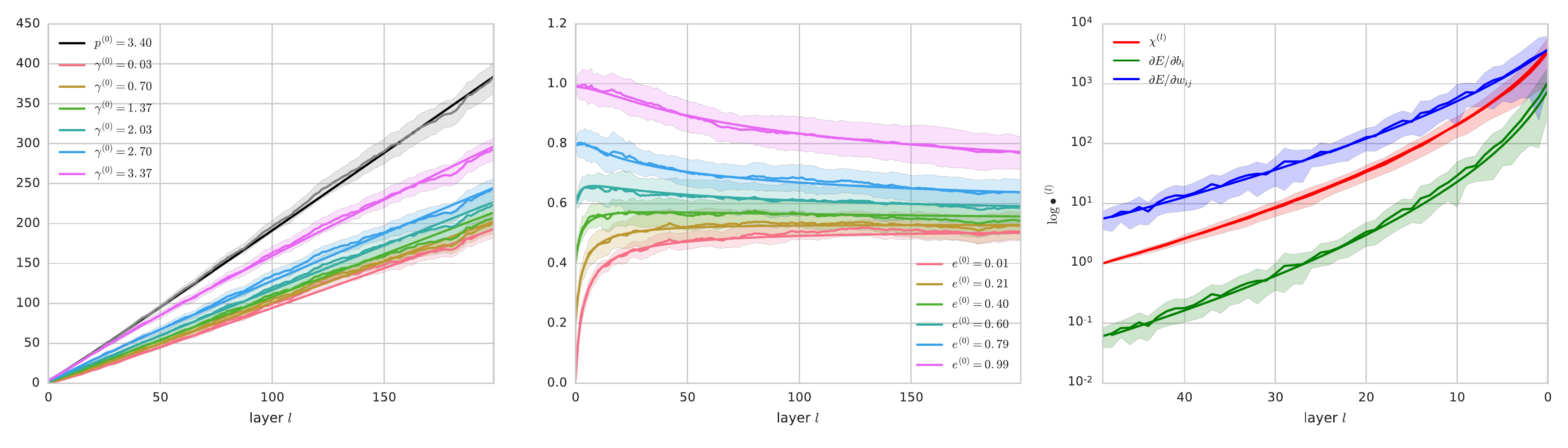}
\caption{Our equations predict the relevant quantities very well in practice.
These plots make the comparison between prediction and measurements for the full resnet with tanh activation, with $\sigma_v^2 = 1.5$, $\sigma_a^2 = .5$, $\sigma_w^2 = 1.69$, $\sigma_b^2 = .49$.
Left-to-right:
{\bf (a)} $\p \pp l$ and $\p \gamma l$ against layer $l$ for 200 layers.
{\bf (b)} $\p \ee l = \p \gamma l /\p \pp l$ against $l$ for 200 layers.
Both (a) and (b) trace out curves for different initial conditions.
{\bf (c)} Different gradient quantities against $l$ for 50 layers.
From left to right the layer number $l$ decreases, following the direction of backpropagation.
Notice that the gradient increases in norm as $l \to 1$.
All three figures exhibit smooth curves, which are theoretical estimates, and irregular curves with shades around them, which indicate empirical means and standard deviations (both of which taken in regular scale, not log scale).
(a) and (b) are made with 20 runs of resnets of width 1000.
(c) is made with 25 runs of resnets of width 250.}
\label{fig:theory_tracks_pratice}
\end{figure}


\paragraph{Forward dynamics.}
When $\phi = \tanh$, $\p \pp l$ and $\p \qq l$ increase as $\Theta(l)$ in either RRN or FRN (\cref{thm:p_q_linear}), as one might expect by observing that $\Vt \tanh( \qq) \to 1$ as $\qq \to \infty$ so that, for example in the RRN case, the recurrence $\pp = \Vt \tanh( \qq) + \prv \pp$ becomes $\pp = 1 + \prv \pp$.
This is confirmed graphically by the black lines of the leftmost chart of \cref{fig:theory_tracks_pratice}.
We carefully verify that this intuition is correct in its proof in the appendix, and find that in fact $\p \pp l \sim l$ in the RRN case and $\p \pp l \sim (\sigma_v^2 + \sigma_a^2)l$ in the FRN case.

What about $\p \gamma l$?
The middle chart of \cref{fig:theory_tracks_pratice} shows that over time, $\p \ee l = \p \gamma l / \p \pp l$ contracts toward the center of the interval $[0, 1]$, but from the looks of it, it is not clear whether there is a stable fixed point $\ee^*$ of $\ee$ or not.  
We prove that, in fact, {\bf all trajectories of $\ee$ not starting at 1 do converge to a single fixed point, but only at a polynomial rate}, in both the RRN and FRN cases (\cref{thm:p_q_linear} and \cref{thm:full_res_l_g_recurr}); we can even explicitly compute the fixed point and the rate of convergence:
For FRN, there is a {\bf unique stable fixed point} $\ee^* < 1$ determined by the equation
$$\ee^* = \f 1 {\sigma_v^2 + \sigma_a^2}[\sigma_v^2 \f 2 \pi \arcsin\lp \ee^* \rp + \sigma_a^2],$$
and $|\ee^* - \p \ee l|$ decreases like $l^{-\delta^*}$, where
$$\delta^* := 1 - \f 2 \pi \f 1 {\sqrt{1 - (\ee^*)^2}} \f{\sigma_v^2 }{\sigma_v^2 + \sigma_a^2}.$$
Since $\ee^* < 1$, $\mfs = (1 - \ee) \pp = \Theta(\pp) = \Theta(l).$
The case of RRN can be viewed as a special case of the above, setting $\sigma_v^2 = 1$ and $\sigma_a^2 = 0$, which yields $\ee^* = 0$ and $\delta^* = 1 - \f 2 \pi$.
We observe that both $\ee^*$ and $\delta^*$ only depend on the ratio $\rho := \sigma_a/\sigma_v$, so in \cref{fig:edelta_plot} we graph these two quantities as a function of $\rho$.
$\ee^*$ and $\delta^*$ both increase with $\rho$ and asymptotically approach 1 and $\nicefrac 1 2$ respectively from below.
When $\rho = \sigma_a = 0$, $\ee^* = 0$ and $\delta^* = 1 - \f 2 \pi$.
Thus the rate of convergence at its {\bf slowest} for tanh/FRN is $\delta^* = 1 - \f 2 \pi \approx 0.36338$, where asymptotically the network tends toward a {\bf chaotic regime} $\ee^* = 0$, corresponding to a large weight variance and a small bias variance; it at its {\bf fastest} is $\delta^* = \nicefrac 1 2$, where asymptotically the network tends toward a {\bf stable regime} $\ee^* = 1$, corresponding to a large bias variance and small weight variance.
We verify $\delta^*$ by comparing $\p \ee l - \p \ee {l-1}$ to $l^{-\delta^* - 1}$ in log-log scale.
If $\p \ee l = \Theta(l^{-\delta^*})$, then $\p \ee l - \p \ee {l-1} = \Theta(l^{-\delta^* - 1})$ and should obtain the same slope as $l^{-\delta^* - 1}$ as $l \to \infty$.
The middle figure of \cref{fig:edelta_plot} ascertains that this is indeed the case, starting around layer number 400. 

\begin{figure}
\centering
\includegraphics[height=.16\textheight,width=.3\textwidth]{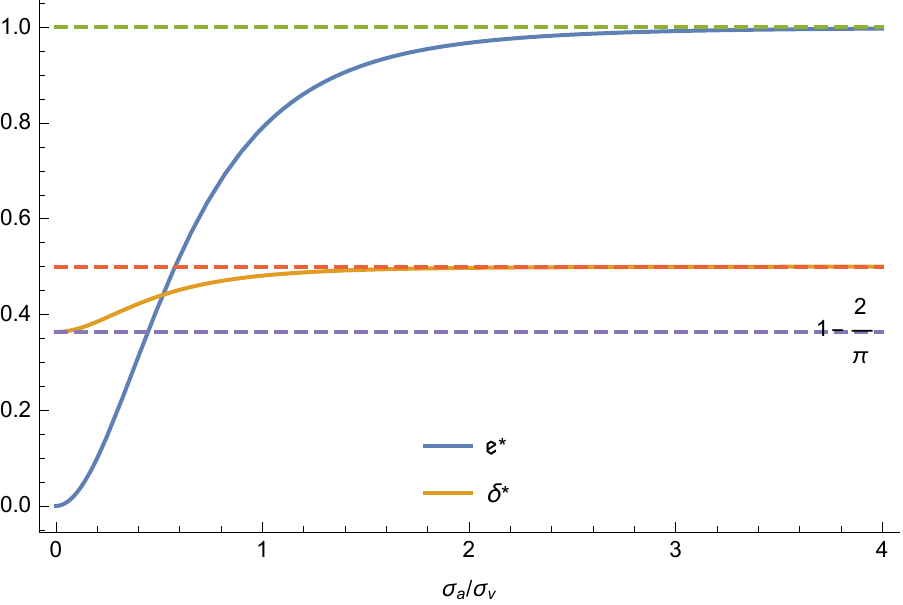}
\includegraphics[height=.16\textheight]{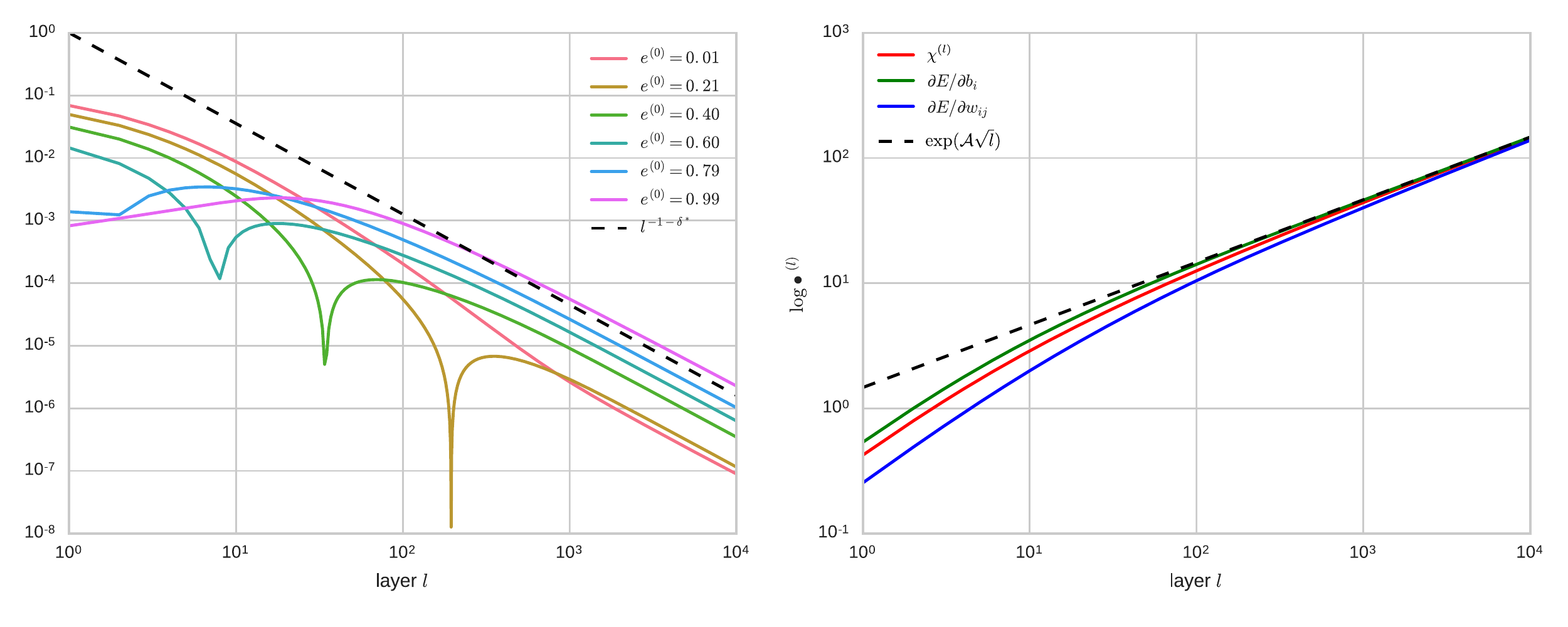}
\caption[caption with footnote]{Left-to-right:
{\bf (a)} Plots of $\ee^*$ and $\delta^*$ against $\sigma_a/\sigma_v$.
{\bf (b)} In log-log scale: the dashed line is $l^{-\delta^* - 1}$, and the colored lines are $\p \ee l - \p \ee {l-1}$ for different initial conditions $\p \ee 0$.
That they become parallel at about $l = 400$ on verifies that $\p \ee l = \Theta(l^{-\delta^*})$.
\footnote{{A more natural visualization is to graph $\p \ee l - \ee^*$ versus $l^{-\delta^*}$, but because of floating point precision, $\p \ee l - \ee^*$ doesn't converge to 0, but a small number close to 0, so that the log-log plot wouldn't look like what is expected.\label{footnote:plotDelta}}}
{\bf (c)} In log-log scale: The dashed line is $\mathcal A \sqrt l$  ($\mathcal A$ given in \cref{thm:dalethExpSqrtTanhFullRes}), and the colored lines are $\log(\p \bullet 1/\p \bullet l)$ for $\bullet = \daleth, \chi_b, \chi_w$.
That they all converge together starting around $l=1000$ indicates that the approximation in \cref{thm:dalethExpSqrtTanhFullRes} is very good for large $l$.}
\label{fig:edelta_plot}
\end{figure}

\paragraph{Backward dynamics.} Finally, we show that the gradient is approximated by 
\begin{align}
\p \daleth {m} &= \exp(\mathcal A(\sqrt{l} - \sqrt{m}) + O(\log l - \log m))\p \daleth {l} \label{eqn:tanhGradEst}\tag{$\star$}
\end{align}
where $\mathcal A = \f 4 3 \sqrt{\f 2 \pi} \sigma_w$ in the RRN case and $
\mathcal A = \f 4 3 \sqrt{\f 2 \pi} \f{\sigma_v^2 \sigma_w}{\sqrt{\sigma_v^2 + \sigma_a^2}}$ in the FRN case (\cref{thm:dalethExpSqrtTanh} and \cref{thm:dalethExpSqrtTanhFullRes}).
The rightmost plot of \cref{fig:edelta_plot} verifies that indeed, for large $l \ge 1000$, this is a very good approximation.
This demonstrates that the mean field assumption of independent backpropagation weights is very practical and convenient even for residual networks.

Note that in the FRN case, the constant $\mathcal A$ can be decomposed into $\mathcal A = \f 4 3 \sqrt{\f 2 \pi} \cdot \sigma_v \cdot \sigma_w \cdot (1 + \sigma_a^2/\sigma_v^2)^{-1/2}$.
Consider the ratio $\rho := \sigma_a/\sigma_v$.
If $\rho \gg 1$, then $\ee^* \approx 1$ (\cref{fig:jjj_vs_id_main}), meaning that the typical network essentially computes a constant function, and thus unexpressive; at the same time, large $\rho$ makes $\mathcal A$ small, and thus ameliorating the gradient explosion problem, making the network more trainable.
On the other hand, if $\rho \ll 1$, then $\ee^* \approx 0$ (\cref{fig:jjj_vs_id_main}), the typical network can tease out the finest differences between any two input vectors, and a final linear layer on top of such a network should be able to express a wide variety of functions \cite{poole_exponential_2016}; at the same time, small $\rho$ increases $\mathcal A$, worsening the gradient explosion problem, making the network less trainable.
This is the same expressivity-trainability tradeoff discussed in \cite{schoenholz_deep_2017}.

\subsection{$\alpha$-ReLU}

\paragraph{Forward dynamics.} As with the tanh case, to deduce the asymptotic behavior of random $\alpha$-ReLU resnets, we need to understand the transforms $\Vt\psi_\alpha$ and $\Wt \psi_\alpha$.
Fortunately, $\Vt\psi_\alpha$ has a closed form, and $\Wt \psi_\alpha$ has been studied before \cite{cho_kernel_2009}.
In particular, if $\alpha > -\f 1 2$, then
$\Vt\psi_\alpha( \qq) = \cV_\alpha \qq^{\alpha}$, where $\cV_\alpha$ is a constant with a closed form given by \cref{lemma:VtPsiAlpha}.
In addition, by \cite{cho_kernel_2009}, we know that $\Wt\psi_\alpha( \qq, \cc \qq) = \Vt\psi_\alpha( \qq) \JJ_\alpha(\cc)$ for $\JJ_\alpha$ given in \cref{sec:AlphaReluForwardProofs}.
\cref{fig:jjj_vs_id_main} shows a comparison of $\JJ_\alpha$ for different $\alpha$s along with the identity function.

Substituting in $\cV_\alpha \qq^\alpha$ for $\Vt \psi_\alpha$, we get a difference equation $\pp - \prv \pp = \sigma_v^2 \cV_\alpha (\sigma_w^2 \prv \pp + \sigma_b^2)^\alpha + \sigma_a^2$ governing the evolution of $\pp$.
This should be reminiscent of the differential equation $\dot P(l) = C P(l)^\alpha$, which has solution $\propto l^{1/(1-\alpha)}$ for $\alpha < 1$, and $\propto \exp(Cl)$ when $\alpha = 1$.
And indeed, the solutions $\p \pp l$ to these difference equations behave asymptotically exactly like so (\cref{thm:pDynamicAlphaReLU}).
Thus {\bf ReLU behaves very explosively compared to $\alpha$-ReLU with $\alpha<1$}.
In fact, in simulations, for $\sigma_w^2 = 1.69$ and $\sigma_v^2 = 1.5$, the ReLU resnets overflows into \texttt{inf}s after around 100 layers, while there's no problem from any other kind of networks we consider.

Regardless, {\bf $\alpha$-ReLU for all $\alpha$ massages $\p \ee l$ toward a fixed point $\ee^*$ that depends on $\alpha$}.
{ When $\phi = \psi_1$, the standard ReLU, $\p \ee l$ converges to 1 asymptotically as $C l^{-2}$ for an explicit constant $C$ depending on $\sigma_v$ and $\sigma_w$ only (\cref{thm:ReLUSquaredConvergence}), so that $\mfs = (1 - \ee)\pp = \Theta(l^{-2}\exp(\Theta(l))) = \exp(\Theta(l)).$
When $\phi = \psi_\alpha$ for $\alpha < 1$, then $\p \ee l$ converges to the nonunit fixed point $\ee^*$ of $\JJ_\alpha$ at a rate of $\TTheta(l^{-\mu})$, where $\mu = (1-\dot \JJ_\alpha(\ee^*))/(1-\alpha)$ is independent of the variances (\cref{thm:alphaReLUeConvergence}), so that $\mfs = \Theta(\pp)$.}
These rates are verified in \cref{fig:alphaReLUVerifyExponents}.

\paragraph{Backward dynamics.} Finally, we have also characterized the rate of gradient growth for any $\alpha \in (\f 3 4, 1]$.
\unskip\footnote{Our derivations actually apply to all $\alpha \in (\f 1 2, 1]$, where at $\alpha = \f 1 2$, the expected norm of the gradient diverges within our mean field formalism.
However, at $\alpha \le \f 3 4$, the variance of the gradient already diverges (\cref{thm:dalethInfVarAlphaReLU}), so we cannot expect the empirical values to agree with our theoretical predictions.
But in fact, empirically our theoretical predictions seem to form an upper bound on the gradient norms (see \cref{fig:alphaReLUTheoryVsEmpirics}).
}
{\bf In the case of $\alpha = 1$, the dynamics of $\daleth$ is exponential}, the same as that of $\pp$, $\p\daleth{l-m} = \p \daleth{l} B^m$ where $B =\f 1 2 \sigma_v^2 \sigma_w^2 + 1$.
{\bf For $\alpha \in (\f 3 4, 1)$, the dynamics is polynomial}, but with different exponent in general from that of the forward pass: $\p\daleth{l-m} = \Theta(1) \p \daleth{l} (l/(l-m))^R$ for $R = \f{\alpha^2}{(1-\alpha)(2 \alpha - 1)}$, where the constants in $\Theta(1)$ do not depend on $l$ or $m$.
This exponent $R$ is minimized on $\alpha \in [\f 3 4, 1)$ at $\alpha = \nicefrac 3 4$, where $R = \nicefrac 9 2$ (but on $\alpha \in (\f 1 2, 1)$ it is minimized at $\alpha = \nicefrac 2 3$, where $R = 4$); see \cref{fig:backprop_exponent_alpha-relu}.
These exponents are verified empirically in \cref{fig:alphaReLUVerifyExponents}.

Looking only at $\daleth$ and the gradients against the biases, it seems that ReLU suffers from a dramatic case of exploding gradients.
But in fact, because $\daleth$ gains a factor of $B$ moving backwards while $\pp$ loses a factor of $B$, the gradient norm $\chi_w^{(l-m)}$ (and similarly for $\chi_v^{(l-m)}$) is independent of how far, $m$, the gradient has been propagated (\cref{thm:alphaReLUAllGradients}) --- this is certainly the best gradient preservation among all of the models considered in this paper.
Thus strangely, random ReLU FRN exhibits both the best (constant for $v$ and $w$) and the worse (exponential for $a$ and $b$) gradient dynamics.
This begs the question, then, is this a better deal than other $\alpha$-ReLU for which for any learnable parameter we have at most a polynomial blowup with depth in its gradient?
Our experiments (discussed below) show that $\alpha$-ReLU is useful to the extent that smaller $\alpha$ avoids numerical issues with exponentiating forward and backward dynamics, but the best performance is given by the largest $\alpha$ that avoids them (\cref{fig:tanhHeatmaps}(c, d)); in fact, the metric expressivity $\mfs$, determines performance, not gradient explosion (see $\alpha$-ReLU experiments).

\section{Experimental Results}
\label{sec:experiments}
\newcommand{\height}{.16\textheight}
\newcommand{\negsp}{\hspace{0em}}
\begin{figure}
	\centering
	\includegraphics[height=\height]{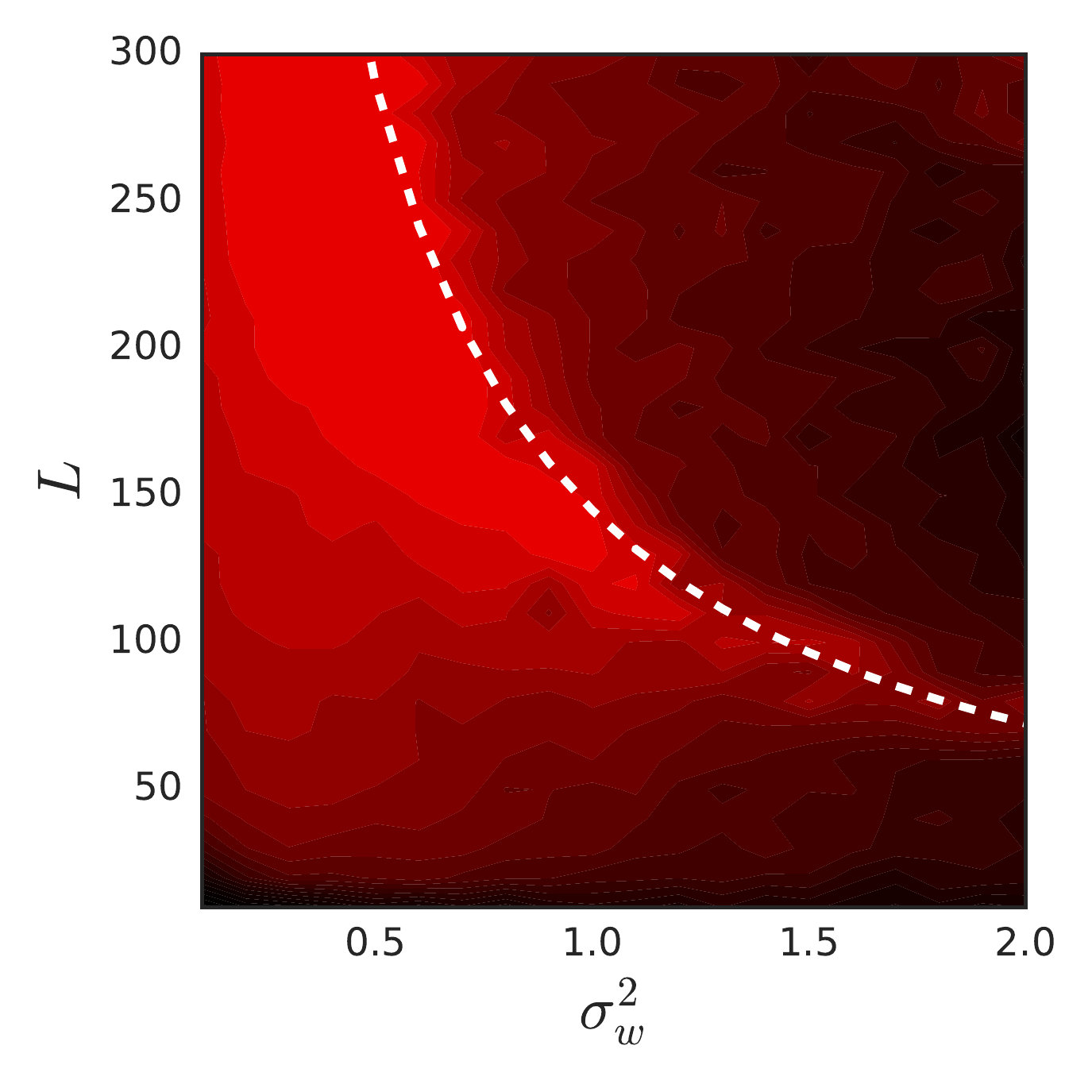}
	\negsp\includegraphics[height=\height]{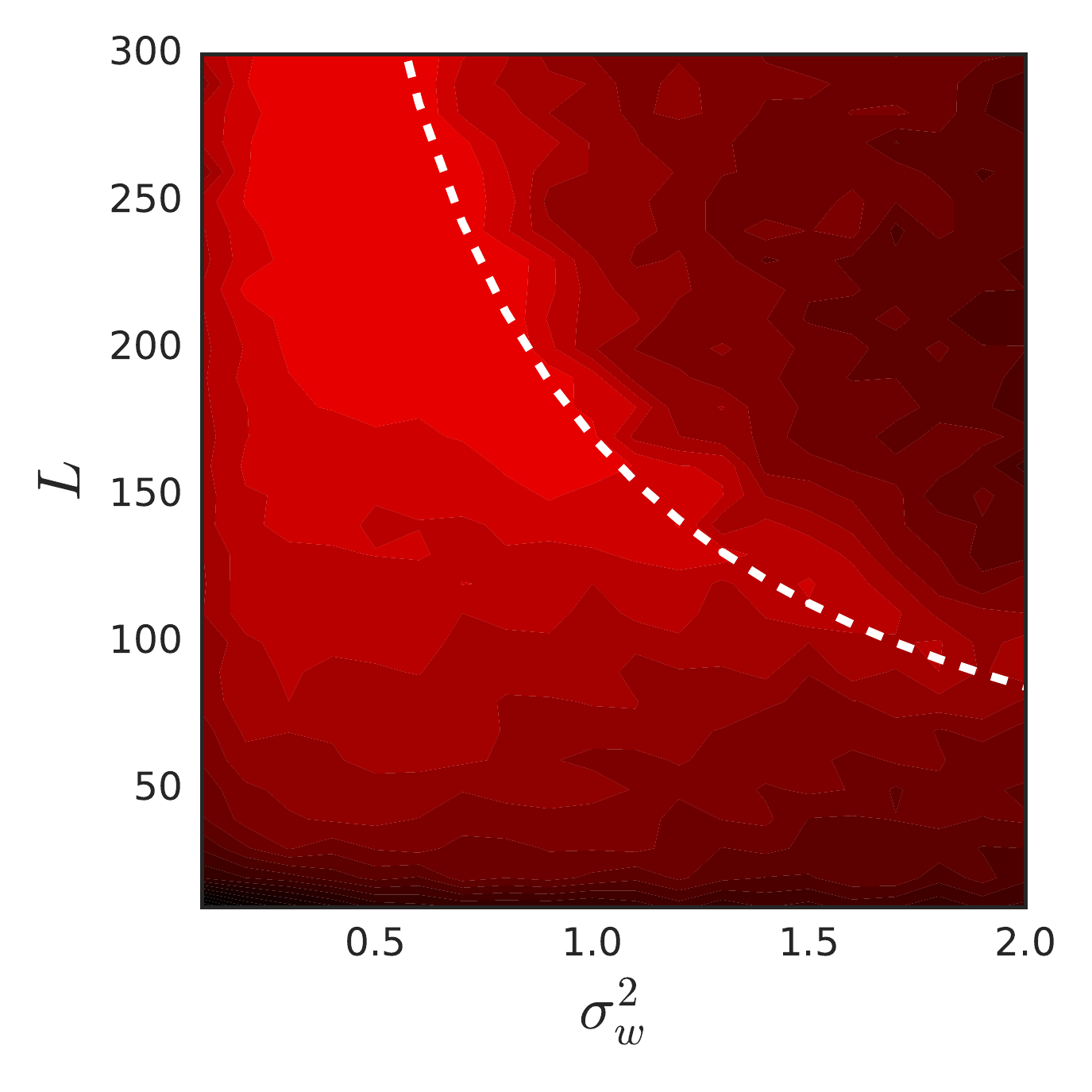}
	\negsp\includegraphics[height=\height]{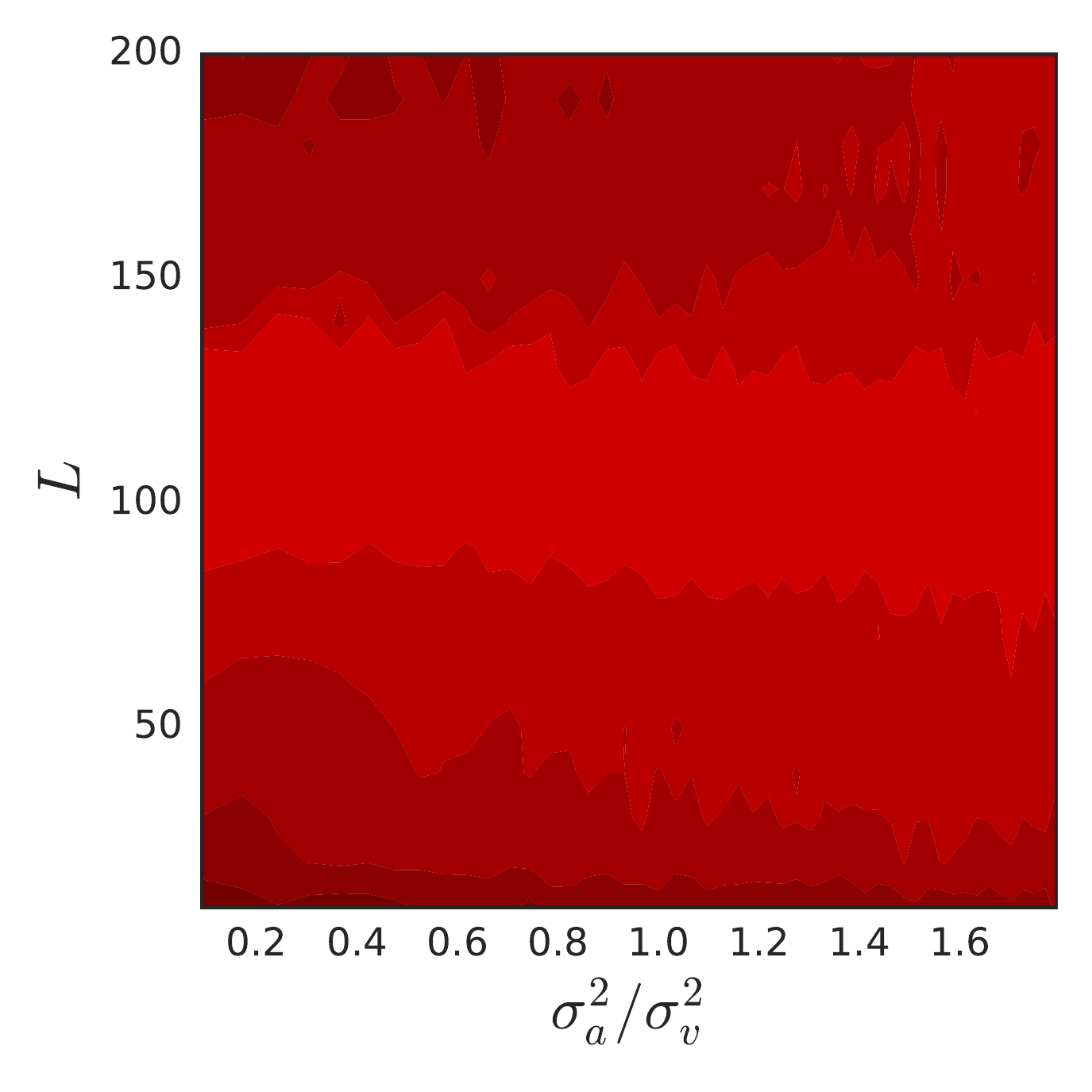}\\
	\negsp\includegraphics[height=\height]{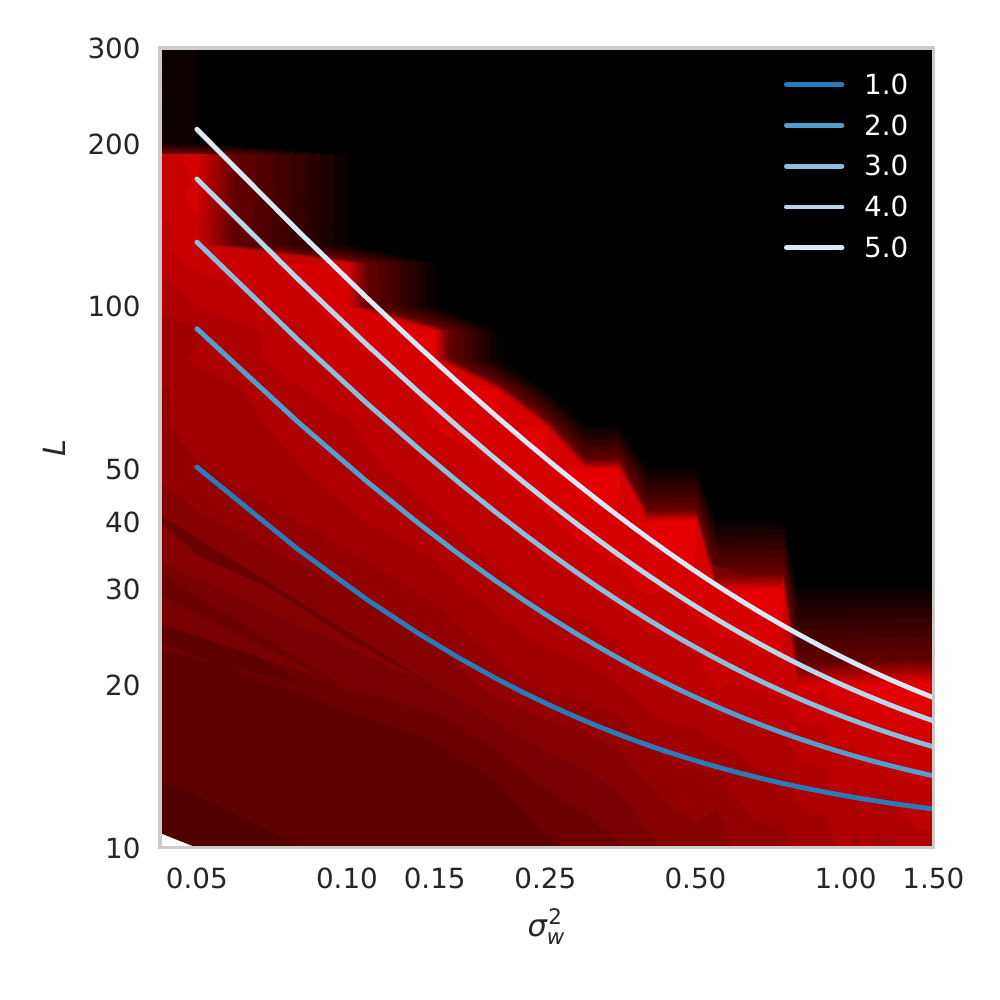}
	\negsp\includegraphics[height=\height]{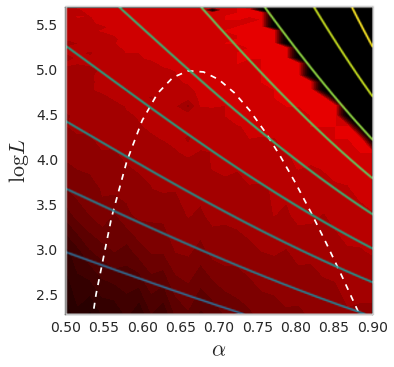}

	\caption{From left to right, top to bottom: \textbf{(a)} and \textbf{(b)}: $\sigma_w^2$, $L$, and test set accuracy of a grid of tanh reduced (left) and full (right) resnets trained on MNIST.
		Color indicates performance, with ligher colors indicating higher accuracy on test set.
		Other than the values on the axes, we have fixed $\sigma_b^2 = \sigma_a^2 = \f 1 2$ and $\sigma_v^2 = 1$.
		The white dotted lines are given by $\sigma_w^2 L = C$, where $C = 170$ on the left and $C = 145$ on the right.
		We see that both dotted lines accurately predict the largest optimal $\sigma_w$ for each depth $L$.
		\textbf{(c)} Varying the ratio $\sigma_a^2/\sigma_v^2$ while fixing $\sigma_v/\sqrt{1 + \sigma_a^2/\sigma_v^2}$, and thus fixing $\mathcal A$, the leading constant of $\log \p \daleth 0 / \p \daleth L$.
		\textbf{(d)} in log-log scale: Heatmap gives the test accuracies of ReLU FRN for varying $\sigma_w^2$ and $L$.
		Curves give level sets for the log ratios $\log \p \mfs L / \p \mfs 0 \approx \log \p \pp L / \p \pp 0 \approx \log \p\daleth 0 / \p \daleth L = L \log(1 + \sigma_v^2 \sigma_w^2/2)$.
		\textbf{(e)} Red heatmap shows the test accuracies of a grid of $\alpha$-ReLU FRN with varying $\alpha$ and $L$ as shown, but with all $\sigma_\bullet$s fixed.
		The white dashed curve gives a typical contour line of $L^R = \text{const}$, where $R = \f {\alpha^2}{(1-\alpha)(2\alpha-1)}.$
		The yellow-to-blue curves form a set of level curves for $\p \mfs l = \p \pp l - \p \gamma l = \text{const}$, with yellow curves corresponding to higher levels.
	}
\label{fig:tanhHeatmaps}
\end{figure}

Our experiments show a dichotomy of what matters in initialization: for tanh resnets, quality of an initialization is determined by how much gradient explosion there is (measured by $\p \daleth 0 / \p \daleth L$); for ($\alpha$-)ReLU resnets, it is determined by how expressive the random network is (measured by the metric expressivity $\p \mfs L$).
We hypothesize this is because in tanh resnets, the gradient dynamics is much more explosive than the expressivity dynamics ($\exp(\Theta(\sqrt l))$ vs $\Theta(l)$), whereas for ReLU it's somewhat the opposite ($\chi_w, \chi_v = \Theta(1)$ vs $\mfs = \exp(\Theta(l))$).

\paragraph{Tanh, vary $\sigma_w$.}
We train a grid of reduced and full tanh resnets on MNIST, varying the variance $\sigma_w^2$ and the number of layers (for FRN we fix $\sigma_v = 1$).
The results are indicated in \cref{fig:tanhHeatmaps}(a, b).
We see that in either model, deeper resnets favor much smaller $\sigma_w$ than shallower ones.
The white dotted lines in \cref{fig:tanhHeatmaps}(a, b) confirm our theory: according to \cref{eqn:tanhGradEst}, for the same gradient ratio $R = \p \daleth 0 / \p \daleth L$, we want $\log R \approx \sigma_w \sqrt L$.
Indeed, the white dotted lines in \cref{fig:tanhHeatmaps}(a, b) trace out such a level curve and it remarkably pinpoints the largest $\sigma_w$ that gives the optimal test set accuracy for each depth $L$.
Why isn't the best initialization given by $R = 1 \iff \sigma_w = 0$?
We believe that when $L$ and/or $\sigma_w$ is small, gradient dynamics no longer dominates the initialization quality because it has ``less room to explode,'' and expressivity issues start to dampen the test time performance.

\paragraph{Tanh, vary $\sigma_a^2/\sigma_v^2$.}
As suggested in the analysis of \cref{eqn:tanhGradEst}, the ratio $\rho^2 = \sigma_a^2/\sigma_v^2$ determines the fixed point $\ee^*$ and its convergence rate by itself while also contributes to the rate of gradient explosion in tanh FRN.
We seek to isolate its effect on forward dynamics by varying $\sigma_v$ with $\rho$ such that $\sigma_v/\sqrt{1 + \rho^2}$ is kept constant, so that the leading term of the log gradient ratio is kept approximately equal for each $L$ and $\rho$.
\cref{fig:tanhHeatmaps}(c) shows the test accuracies of a grid of tanh FRN initialized with such an ensemble of $\sigma_\bullet$s.
What stands out the most is that performance is maximized essentially around a fixed value of $L$ regardless of $\rho$, which shows that indeed gradient dynamics determines the initialization quality in tanh resnets.
There is also a minor increase in performance with increasing $\rho$ regardless of $L$; this is counterintuitive as increasing $\rho$ means ``decreasing expressivity.''
It is currently not clear what accounts for this effect.

\paragraph{ReLU, vary $\sigma_w$}
We train a grid of ReLU FRN on MNIST, varying $\sigma_w^2 \in [0, 1.5]$ while fixing $\sigma_v^2 = 1, \sigma_a^2 = \sigma_b^2 = \f 1 2$.
The resulting test set accuracies are shown in \cref{fig:tanhHeatmaps}(d).
The dark upper region signifies failure of training caused by numerical issues with exploding activation and gradient norms: This corresponds to the region where $\p \pp L$, which is a measure of the mean magnitude of an neuronal activation in layer $L$, becomes too big.
We see that the best test accuracies are given by depths just below where these numerical issues occur.
However, if we were to predict that the optimal init is the one minimizing $\p \chi 0 / \p \chi L \ge 1$, then we would be wrong --- in fact it is exactly the opposite.
In this case, the dynamics of $\p \mfs l, \p \pp l$, and $\p \chi 0 / \p \chi l$ are approximately the same (all $\exp(\Theta(l))$ with the same hidden constants), and optimal performance corresponds to the highest $\p \mfs L$, $\p \pp L$, and $\p \chi 0 / \p \chi L$ without running into \texttt{inf}s.

\paragraph{$\alpha$-ReLU, vary $\alpha$.}
We similarly trained a grid of $\alpha$-ReLU FRN on MNIST, varying only $\alpha$ and the depth, fixing all $\sigma_\bullet$.
\cref{fig:tanhHeatmaps}(e) shows their test accuracies.
We see similar behavior to ReLU, where when the net is too deep, numerical issues doom the training (black upper right corner), but the best performance is given by $L$ just below where this problem occurs.
In this case, if we were to predict optimality based on minimizing gradient explosion, we would be again wrong, and furthermore, the contour plot of $\p \chi 0 / \p \chi L$ (white dashed line) now gives no information at all on the test set accuracy.
In contrast, the contours for $\p \mfs l$ succeeds remarkably well at this prediction (yellow/green lines).%
\footnote{the contour for $\p \pp l$ is similar, but its slopes are slightly off from the heatmap contours.}
By interpolation, this suggests that indeed in the ReLU case, it is expressivity, not trainability, which determines performance at test time.

In all of our experiments, we did not find $\ee$ dynamics to be predictive of neural network performance.

\section{Conclusion}
In this paper, we have extended the mean field formalism developed by \cite{poole_exponential_2016,raghu_expressive_2016,schoenholz_deep_2017} to residual networks, a class of models closer to practice than classical feedforward neural networks as were investigated earlier.
We proved and verified that in both the forward and backward passes, most of the residual networks discussed here do not collapse their input space geometry or the gradient information exponentially.
We found our theory incredibly predictive of test time performance despite saying nothing about the dynamics of training.
In addition, we overwhelmingly find, through theory and experiments, that an optimal initialization scheme must take into account the depth of the residual network.
The reason that Xavier \cite{glorot_understanding_2010} or He \cite{he_delving_2015} scheme are not the best for residual networks is in fact not that their statistical assumptions are fragile --- theirs are similar to our mean field theoretic assumptions, and they hold up in experiments for large width --- but rather that their structural assumptions on the network break very badly on residual nets.

\paragraph{Open Problems.}
Our work thus have shown that optimality of initialization schemes can be very unstable with respect to architecture.
We hope this work will form a foundation toward a mathematically grounded initialization scheme for state-of-the-art architectures like the original He et al. residual network.
To do so, there are still two major components left to study out of the following three:
\begin{enumerate*}
	\item Residual/skip connection
	\item Batchnorm
	\item Convolutional layers.
\end{enumerate*}
Recurrent architectures and attention mechanisms are also still mostly unexplored in terms of mean field theory.
Furthermore, many theoretical questions still yet to be resolved; the most important with regard to mean field theory is:
why can we make \cref{ass:symAct,ass:gradInd} and still be able to make accurate predictions?
We hope to make progress on these problems in the future and encourage readers to take part in this effort.

\newpage

\section*{Acknowledgments}

Thanks to Jeffrey Ling for early exploration experiments and help with the initial draft.
Thanks to Felix Wong for offering his wisdom and experience working in statistical physics.
\bibliographystyle{plainnat}
\bibliography{neural_dynamics}
\theendnotes
\newpage
\appendix
\renewcommand\thefigure{\thesection.\arabic{figure}}
\setcounter{table}{0}
\renewcommand*{\thetable}{\thesection.\arabic{table}}

\setcounter{figure}{0}
\appendixpage

\section{Additional Figures}
In figures appearing in the appendix, $\olddaleth$ means $\chi$ (due to legacy reasons).

\begin{figure}[h!]
\centering
	\begin{tabular}{m{.1\linewidth}m{0.6\linewidth}}
	$\alpha=1$ & \includegraphics[height=.08\textheight]{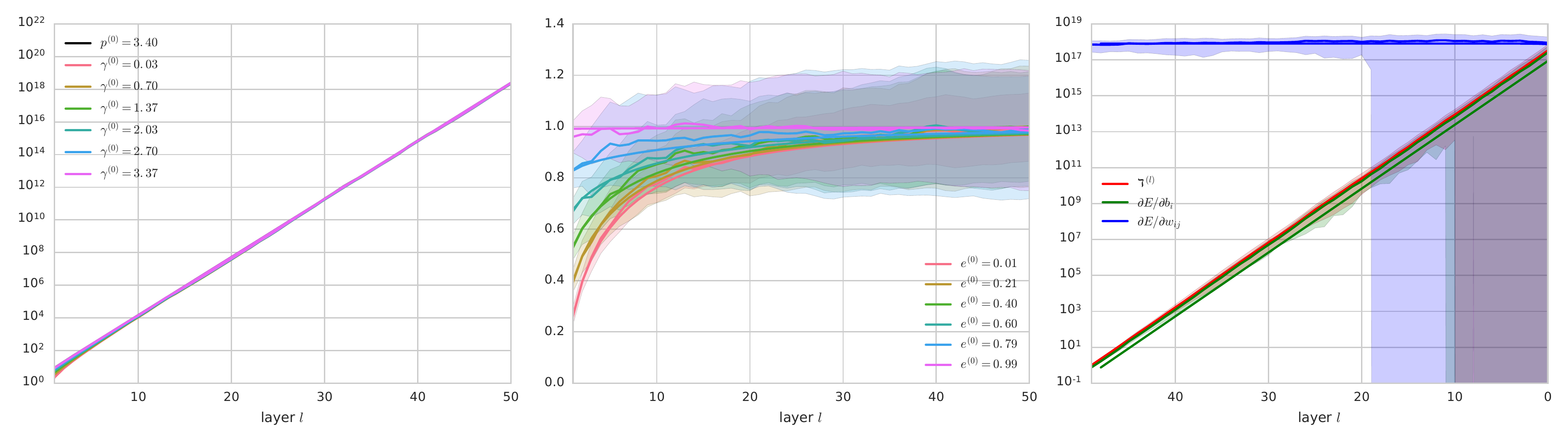}\\

	$\alpha=.9$ & \includegraphics[height=.08\textheight]{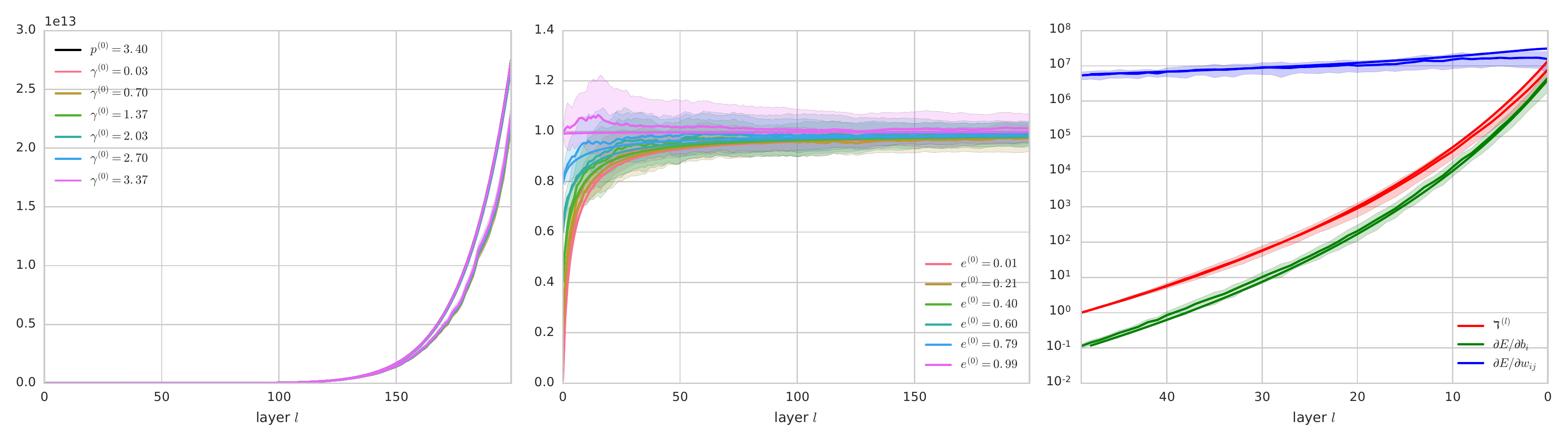}\\

	$\alpha=.8$ & \includegraphics[height=.08\textheight]{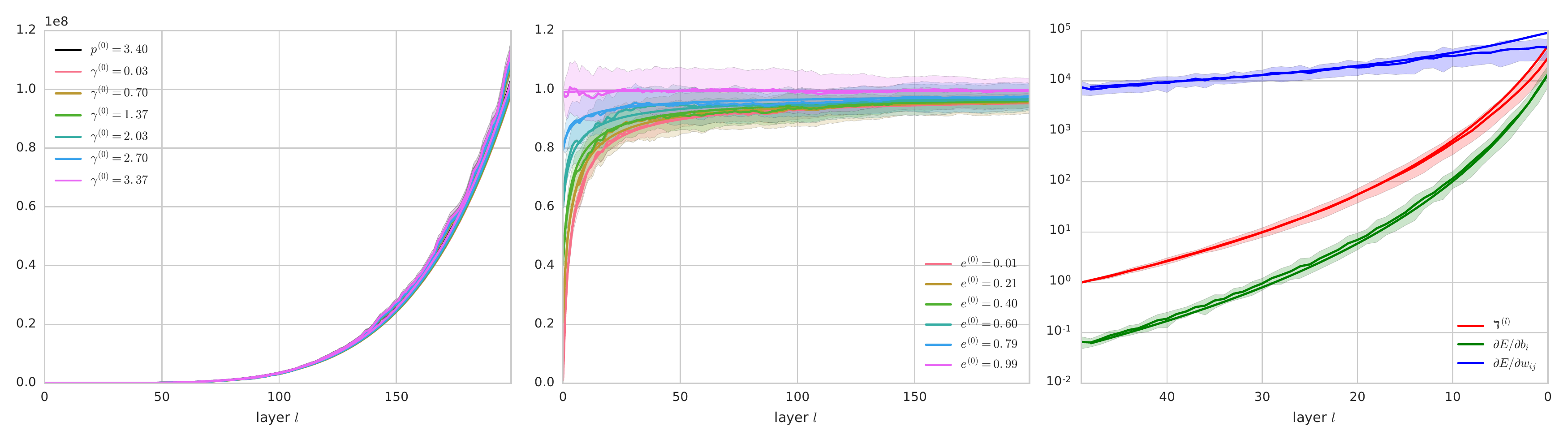}\\

	$\alpha=.7$ & \includegraphics[height=.08\textheight]{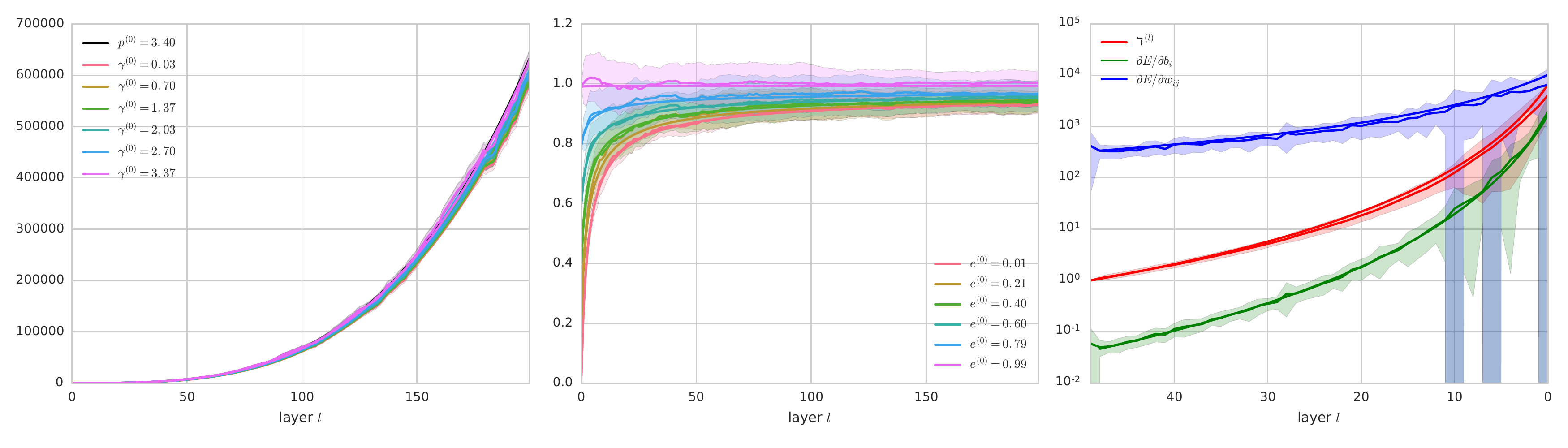}\\

	$\alpha=.6$ & \includegraphics[height=.08\textheight]{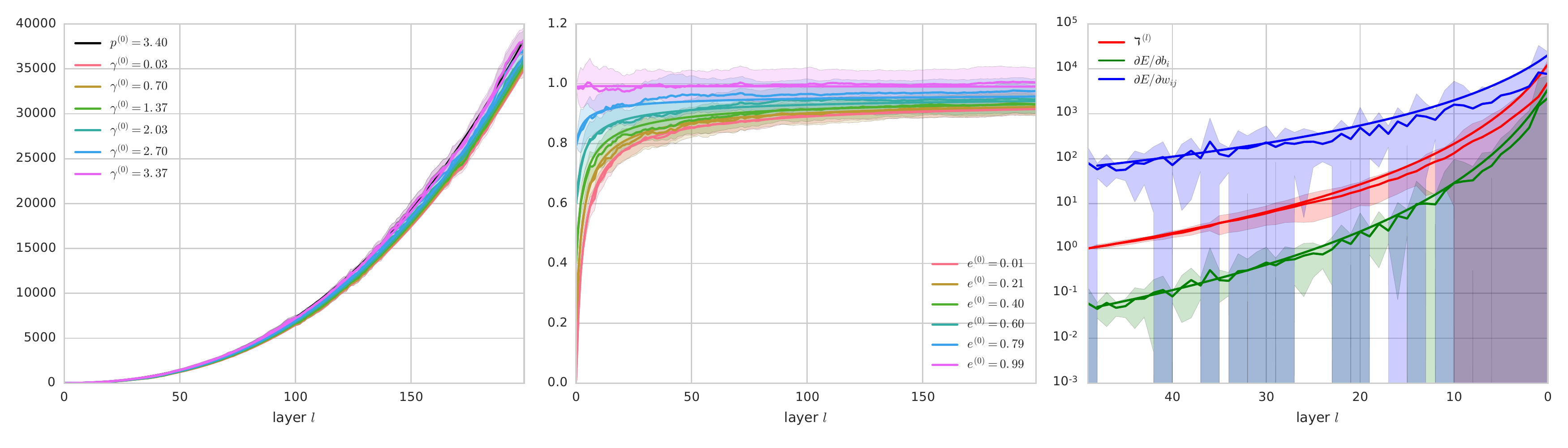}\\

	$\alpha=.55$ & \includegraphics[height=.08\textheight]{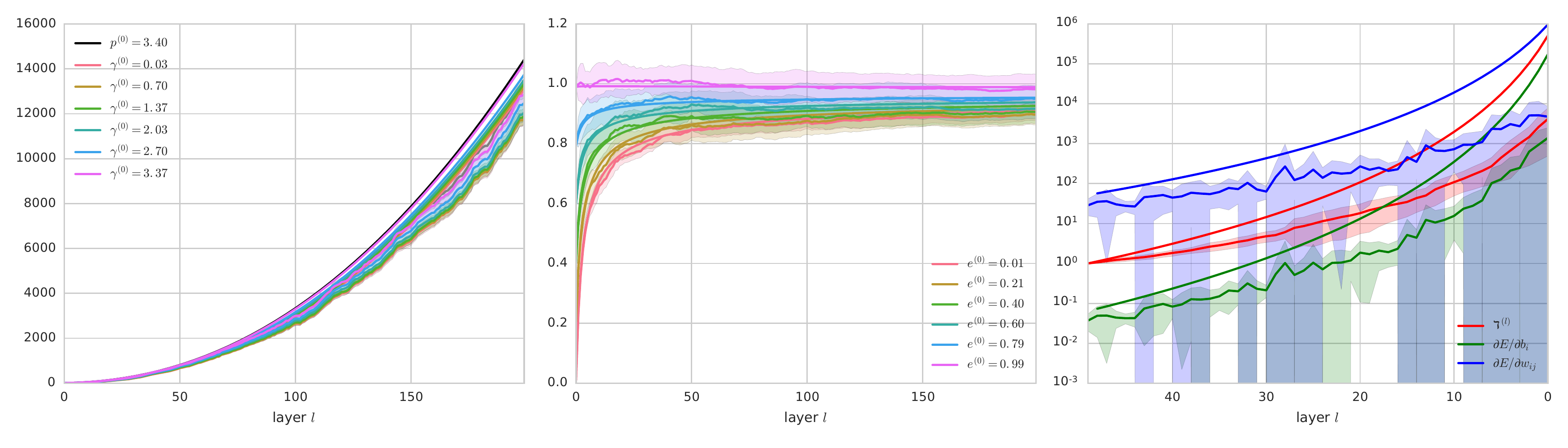}\\

	$\alpha=.51$ & \includegraphics[height=.08\textheight]{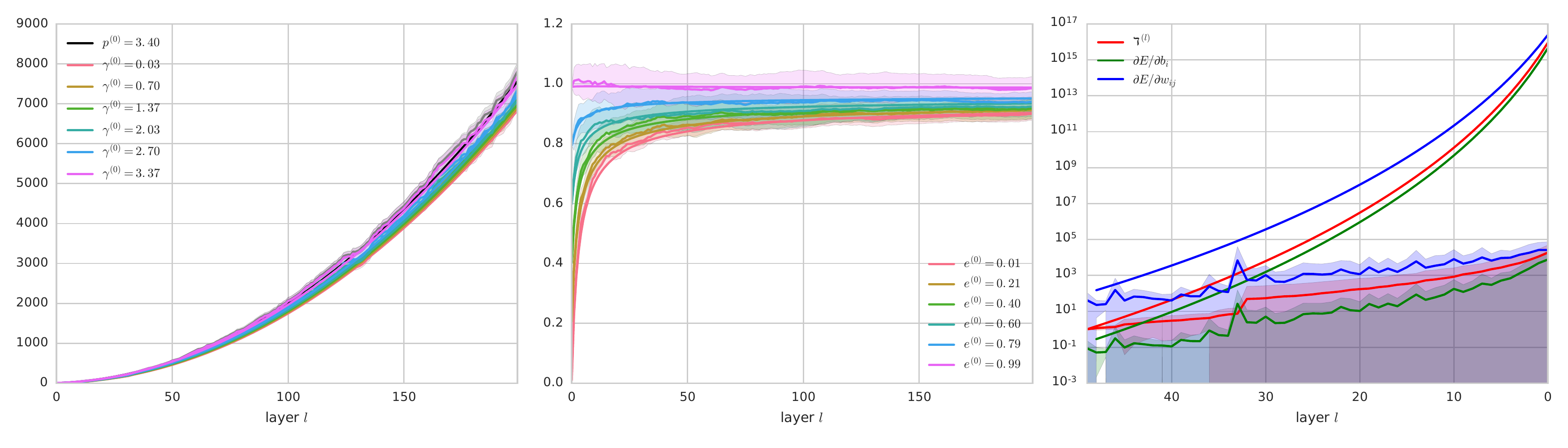}
	\end{tabular}

\caption{Empirical vs theoretical dynamics for $\p \pp l, \p \ee l$, and different gradient quantities for $\alpha$-ReLU, with format similar to \cref{fig:theory_tracks_pratice}.
	We refer to each figure on each row from left to right as (a), (b), and (c).
Note that in the $\alpha=1$ case, figure (a) ($\p \pp l$ and $\p \gamma l$ for different initial values) has log scale y-axis and (a) and (b) have x-axis ranging from 1 to 50, while for other $\alpha$, (a) has normal y-axis and (a) and (b) have x-axis ranging from 1 to 200.
We do so because the norm of the activation vector in a typical ReLU resnet blows up into \texttt{NaN} at around layer 90, while this is not a problem for $\alpha < 1$.
Our theoretical predictions track the average of empirical values closely for forward quantities $\p \pp l, \p \gamma l,$ and $\p \ee \l$ for all $\alpha$, but variance is extremely large for $\p \ee l$ at $\alpha = 1$; it also predicts the average gradient norm accurately for $\alpha = 1$ to $\alpha = .7$ (despite the fact that we should not expect so for $\alpha \le .75$ due to exploding variance (\cref{thm:dalethInfVarAlphaReLU})), although variance is large for $\alpha = 1$ at earlier layers (i.e. later layers w.r.t backpropagation).
However it {\it consistently and significantly overestimates} the average gradient norm for $\alpha = .6$ to $\alpha = .5$, where the variance is so large that one standard deviation below the mean results in negative values.
All plots are made with parameters $\sigma_v^2 = 1.5, \sigma_a^2 = .5, \sigma_w^2 = 1.69, \sigma_b^2 = .49$; only $\alpha$ is varied.
All figures exhibit smooth curves, which are theoretical estimates, and irregular curves with shades around them, which indicate empirical means and standard deviations (both of which taken in regular scale, not log scale).
For each $\alpha$, figures (a) and (b) are made with 20 runs of resnets of width 1000.
(c) is made with 25 runs of resnets of width 250.}
\label{fig:alphaReLUTheoryVsEmpirics}
\end{figure}
\begin{figure}[h]
\centering
\begin{tabular}{m{.1\linewidth}m{0.4\linewidth}}
	$\alpha=.9$ & \includegraphics[height=.08\textheight]{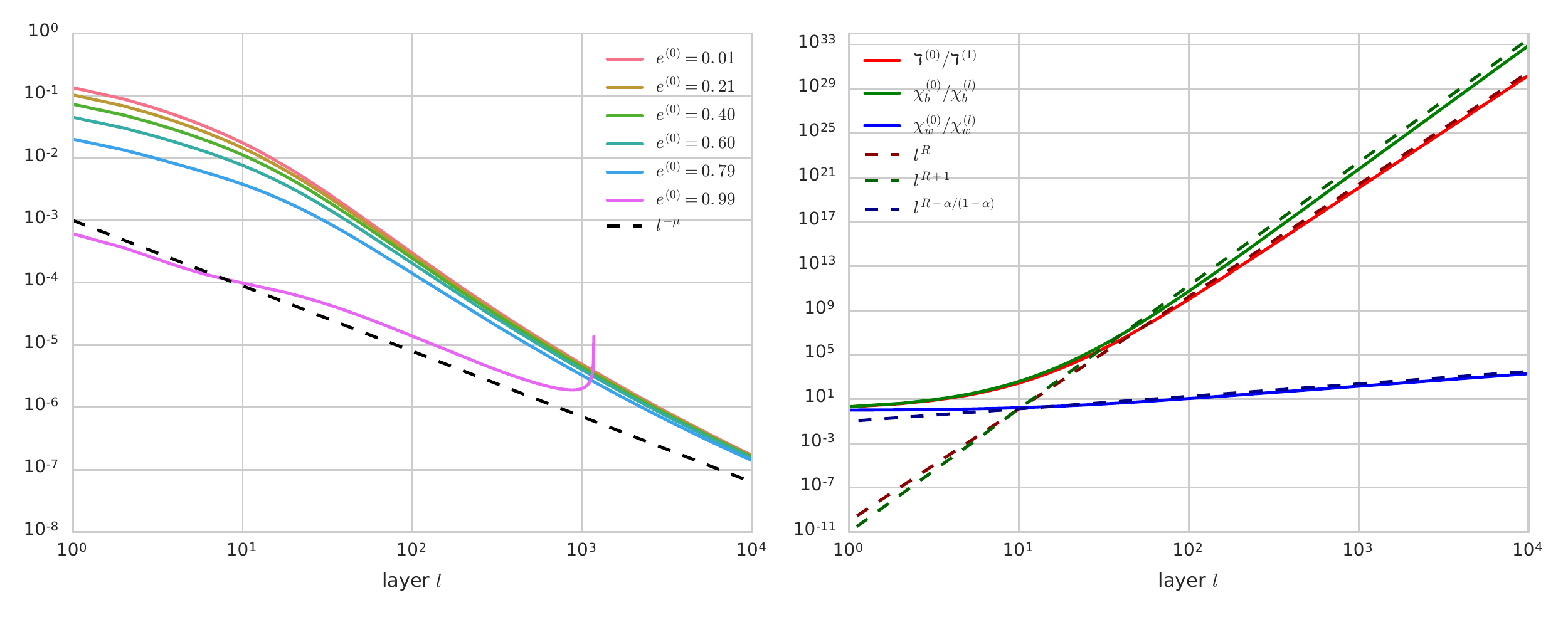}\\
	$\alpha=.8$ & \includegraphics[height=.08\textheight]{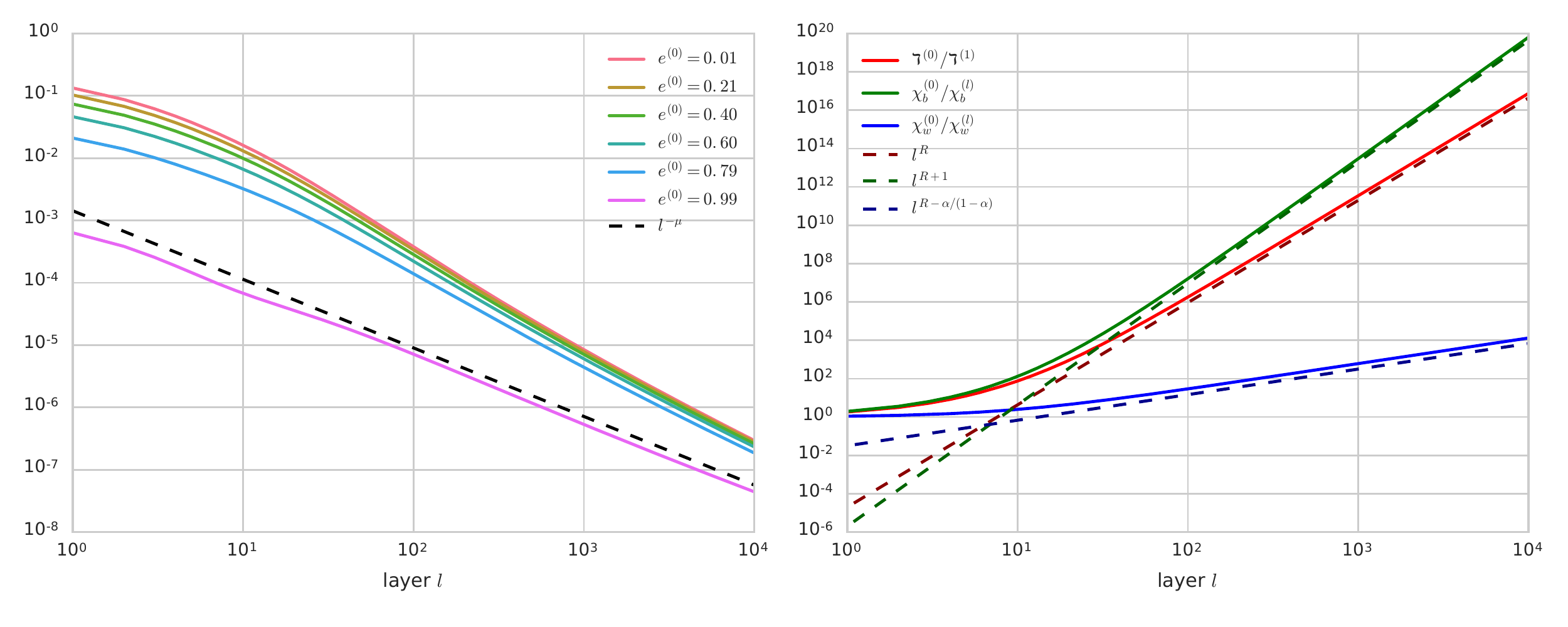}\\
	$\alpha=.7$ & \includegraphics[height=.08\textheight]{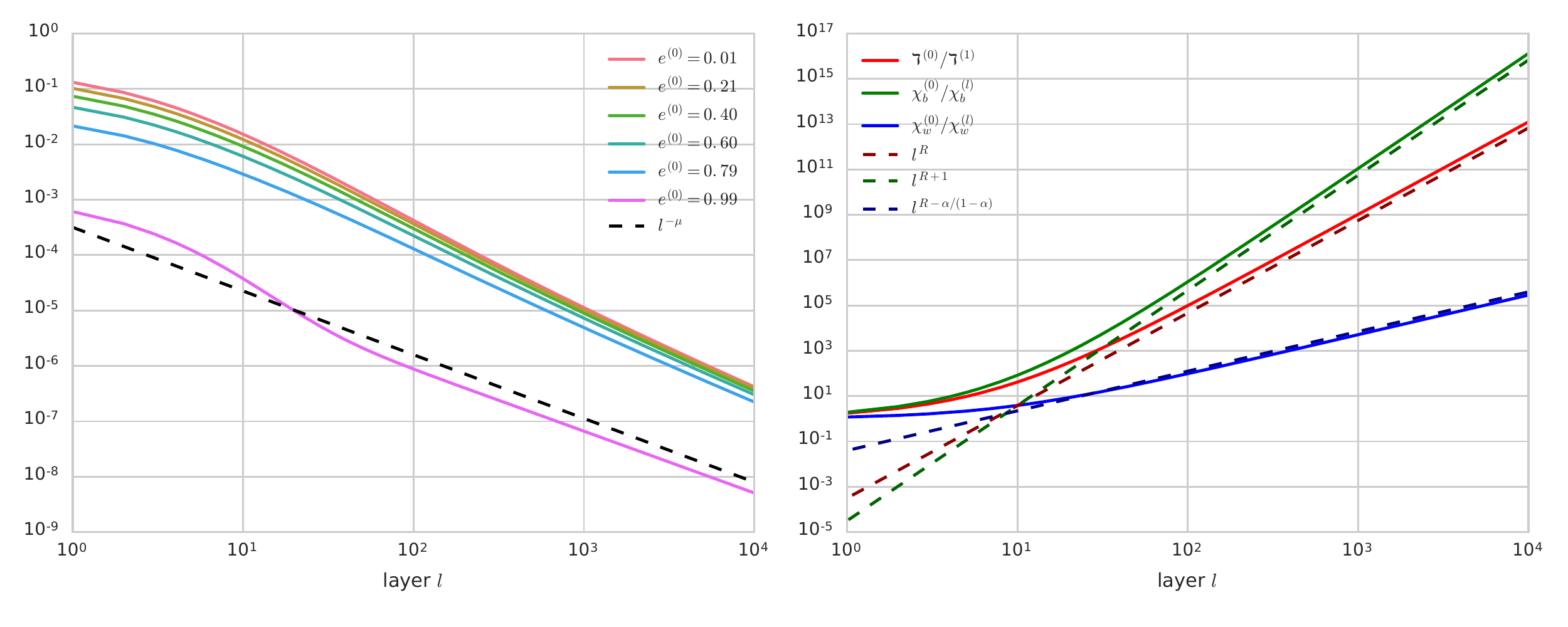}\\
	$\alpha=.6$ & \includegraphics[height=.08\textheight]{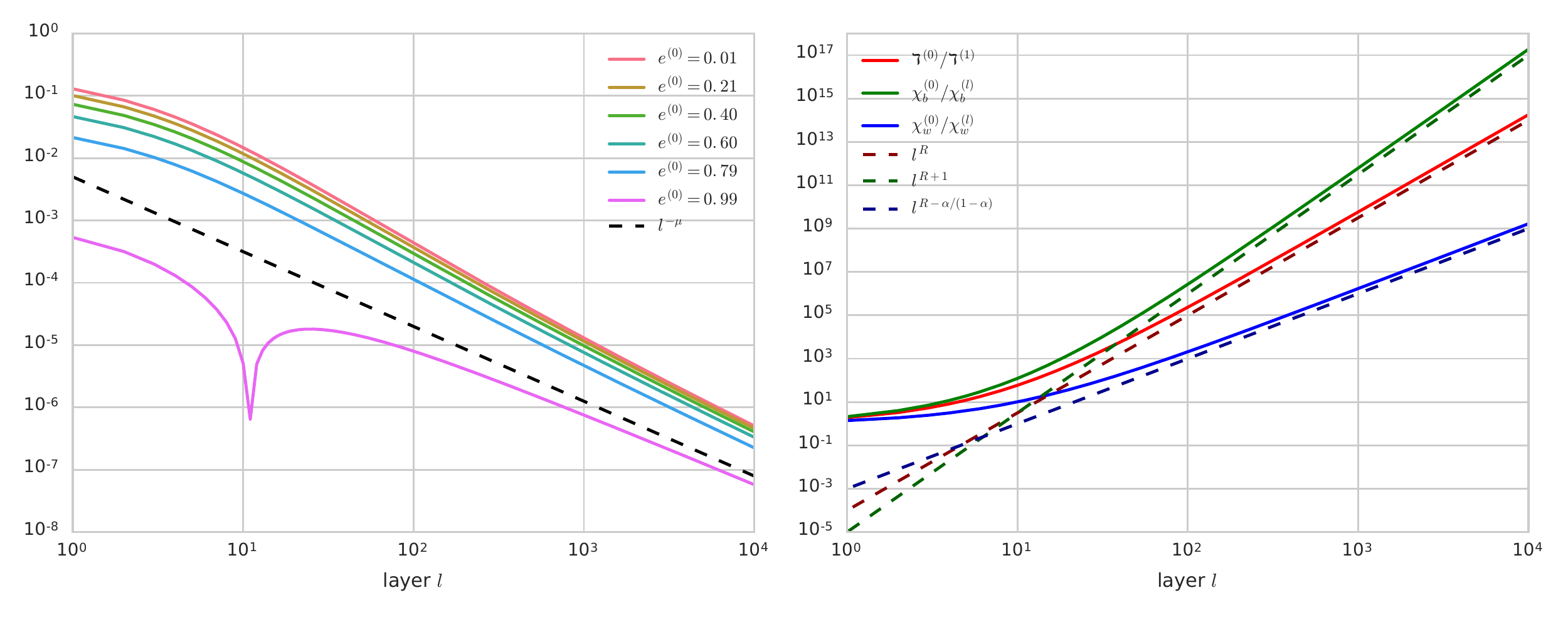}\\
	$\alpha=.55$ & \includegraphics[height=.08\textheight]{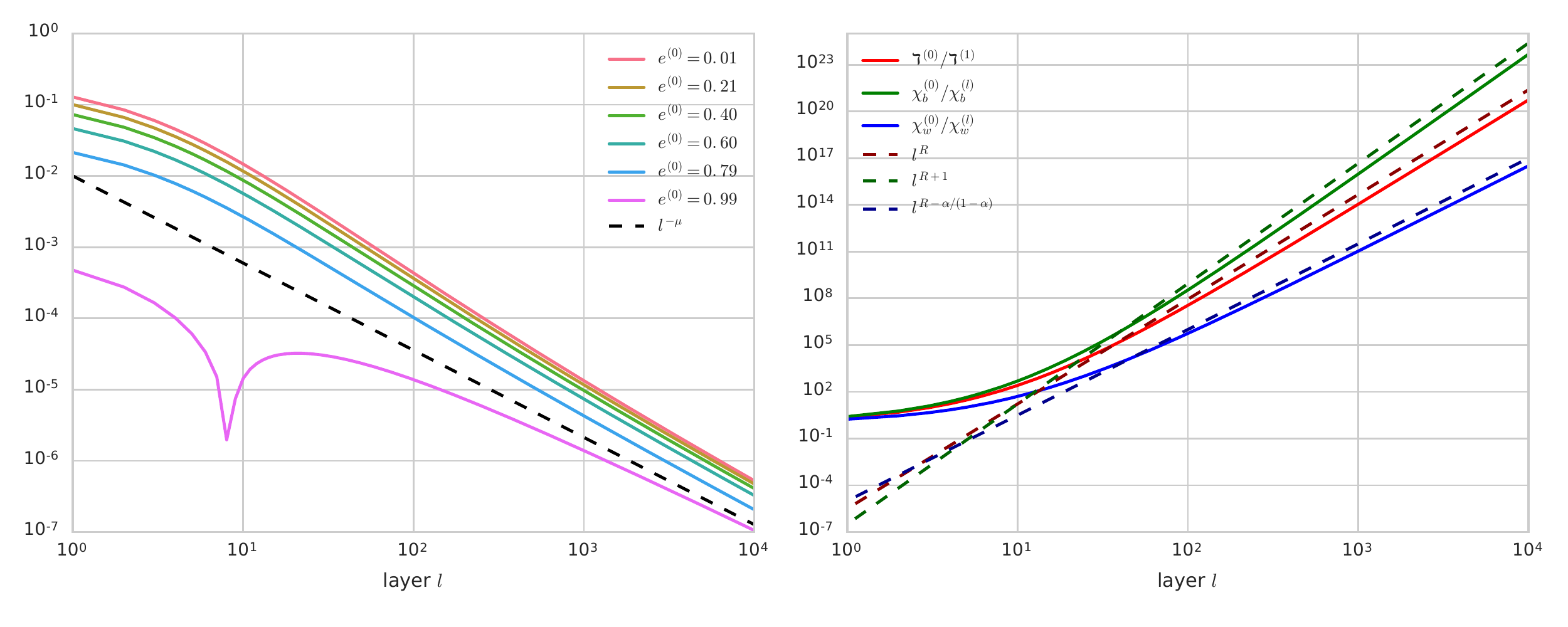}\\
	$\alpha=.51$ & \includegraphics[height=.08\textheight]{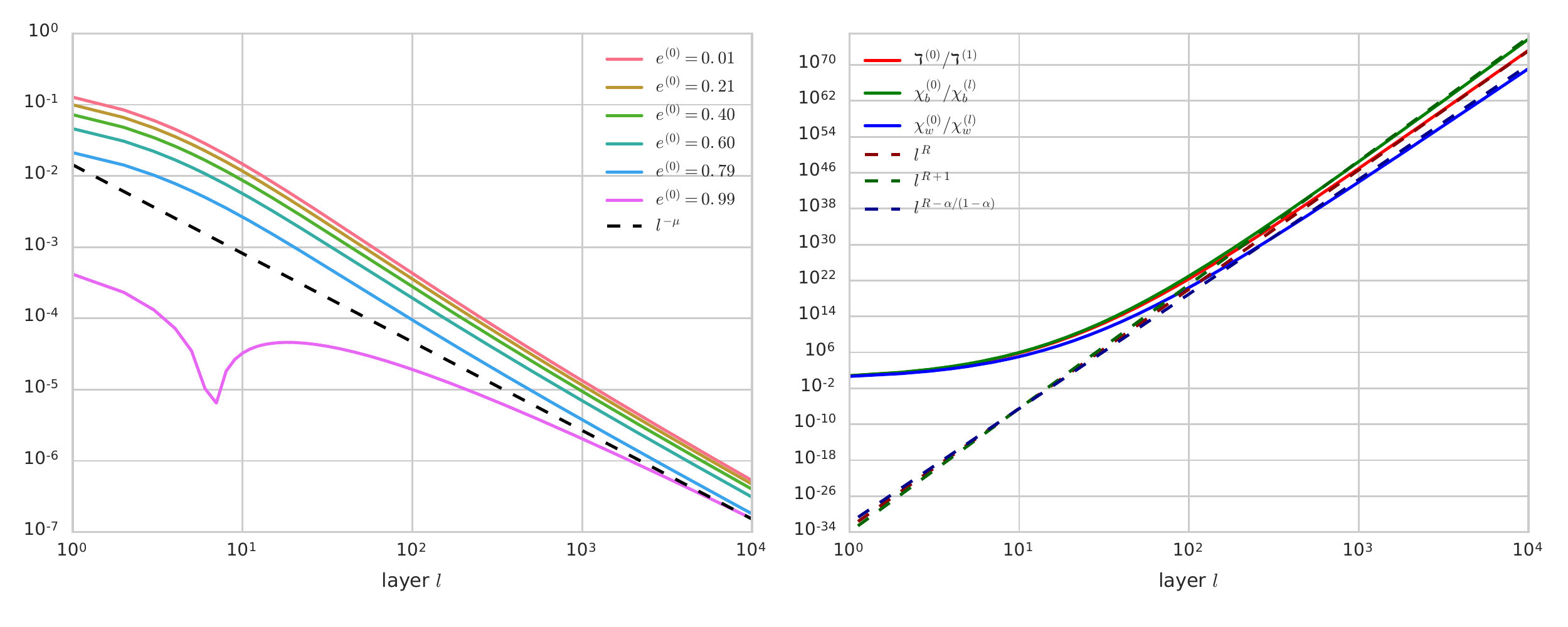}\\
\end{tabular}
\caption{We verify the exponents of the forward and backward dynamics for $\alpha$-ReLU FRN.
For each row, the figures are labeled (a) and (b) from left to right.
The format is the same as in \cref{fig:jjj_vs_id_main}.
All figures are in log-log scale.
{\bf (a)} We exhibit our theoretical dynamics of the cosine distance $\p \ee l$ based on the recurrences \cref{thm:fullResPQRec} and \cref{thm:full_res_l_g_recurr} for different initial conditions $\p \ee 0$.
We draw $|\p \ee l - \p \ee {l-1}|$ for each of these dynamics in colored solid lines.
We predict that each dynamic is $\TTheta(l^{-\mu})$, where $\mu = (1 - \dot\JJ_\alpha(\ee^*))/(1-\alpha)$, and the dashed line gives $l^{-\mu-1}$ (\cref{thm:alphaReLUeConvergence}), shifted vertically to better compare the slope in log scale (i.e. the exponent of the polynomial dynamics).
(See footnote \ref{footnote:plotDelta} for why we plot the dynamics this way).
We see that the our asymptotic prediction is very accurate for the sequence of $\p \ee l$ that starts with $\p \ee 0 = 0.99$, the closest to $\ee^*$ for each $\alpha$, while other lines only slowly converge to the same exponent (which is the slope in the log-log plot).
This is to be expected based on the proof of \cref{thm:alphaReLUeConvergence}.
For $\alpha = .9$, the $\p \ee 0 = .99$ line upticks at around $10^3$ and then turn into \texttt{NaN}s due to numerical instability.
{\bf (b)} Colored lines are $\p \bullet 0 / \p \bullet l$ for $\bullet = \daleth, \chi_b, \chi_w$ (we are not taking logs in addition to plotting in log-log scale like in \cref{fig:jjj_vs_id}).
The dashed lines are our asymptotic predictions for the dynamics with corresponding colors, based on \cref{thm:alphaReLUAllGradients}, again shifted appropriately to easily compare slope visually.
We see that for every alpha our asymptotic predictions are highly accurate.
For both (a) and (b), we did not show $\alpha = 1$ case as ReLU FRN runs into numerical issues quickly (i.e. with even for 100 layers) because of exponential explosions in $\p \pp l$ and $\p \daleth l$ as predicted by \cref{thm:pDynamicAlphaReLU,thm:dalethDynamicsAlphaReLU}, so we cannot expect to empirically verify the precise predicted asymptotics.
All plots are made with parameters $\sigma_v^2 = 1.5, \sigma_a^2 = .5, \sigma_w^2 = 1.69, \sigma_b^2 = .49$; only $\alpha$ is varied.
}
\label{fig:alphaReLUVerifyExponents}
\end{figure}

\begin{table}[t]
	\caption{Glossary of Symbols. ``Mean normalized'' is abbreviated ``m.n.''}
	\label{tab:glossary}
	\centering
	\begin{tabular}{lll}
		\toprule
		Symbol	&	Meaning	&	Ref\\
		\midrule
		$\sigma_\bullet$& standard deviation of trainable parameter $\bullet$	&\\
		$\p x l$		& activation vector/input vector					&\\
		$\p h l$		& hidden vector										&\\
		$N$				& width (same across all layers)					&\\
		$\p \pp l$		& m.n. squared length of activation vector $\p x l$	&	\ref{defn:length}\\
		$\p \qq l$		& m.n. squared length of hidden vector $\p h l$		&	\ref{defn:length}\\
		$\p \gamma l$	& m.n. dot product $\p x l \cdot \p x l{}'$			&	\ref{defn:corr}\\
		$\p \lambda l$	& m.n. dot product $\p h l \cdot \p h l{}'$			&	\ref{defn:corr}\\
		$\p \mfs l$		& m.n. squared distance $\|\p x l - \p x l {}'\|^2$	&	\ref{defn:corr}\\
		$\p \ee l$		& cosine distance $\p \gamma l / \sqrt{\p \pp l \p \pp l {}'}$	&	\ref{defn:corr}\\
		$\ee^*$			& limit value of $\p \ee l$ as $l \to \infty$	&	\\
		$\p \cc l$		& cosine distance $\p \lambda l / \sqrt{\p \qq l \p \qq l {}'}$ &	\ref{defn:corr}\\
		$\p \chi l$		& m.n. gradient squared norm w.r.t. $\p x l$	&	\ref{defn:grad}\\
		$\p {\chi_\bullet} l$	& m.n. gradient squared norm w.r.t. trainable parameter $\bullet$	&	\ref{defn:grad}\\
		$\phi$			& variable nonlinearity $\R \to \R$				&	\\
		$\psi_\alpha$	& $\alpha$-ReLU									&	\ref{defn:alphaReLU}\\
		$\Vt$			& variance integral transform					&	\ref{defn:integralTransform}\\
		$\Wt$			& covariance integral transform 				&	\ref{defn:integralTransform}\\
		$\delta^*$		& $\p \ee l$ converges like $\Theta(l^{-\delta^*})$ in tanh FRN	&	\ref{thm:eDynamicsFullResTanh}\\
		$\mathcal A$	& leading coeff of $\log \p \chi 0 / \p \chi L$ in tanh FRN	&	\ref{thm:dalethExpSqrtTanhFullRes}\\
		$R$				& $\log \p \chi 0 / \p \chi L \sim R \log L$ for $(\alpha<1)$-ReLU	&	\ref{thm:dalethDynamicsAlphaReLU}\\
		$\JJ_\alpha$	& kernel function of $\alpha$-ReLU				&	\ref{lemma:basicJalpha}\\
		\bottomrule
	\end{tabular}
\end{table}

\section{A Listing of Main Theorems}
\subsection{Tanh}

\subsubsection{Reduced Residual Network}
\begin{restatable}{lemma}{pqrecurrence}\label{lemma:p_q_recurrence}
Suppose $\phi$ is antisymmetric.
Then in an RRN, $\pp$ and $\qq$ satisfy the recurrence
\begin{align*}
	\qq &= \sigma_w^2 \prv \pp + \sigma_b^2\\
	\pp &= \Vt \phi( \qq) + \prv \pp.
\end{align*}
\end{restatable}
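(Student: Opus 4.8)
The plan is to obtain both recurrences by conditioning on the activation vector $\prv x$ of layer $l-1$ and then applying the central limit theorem over the fresh weights $w_{ij}\sim\Gaus(0,\sigma_w^2/N)$ and biases $b_i\sim\Gaus(0,\sigma_b^2)$ of layer $l$, which are independent of $\prv x$. All expectations $\la\cdot\ra$ are handled by the tower rule: first integrate out the layer-$l$ parameters with $\prv x$ held fixed, then take the outer expectation over the randomness of layers $0,\dots,l-1$, with $N\to\infty$ throughout.

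First I would compute $\qq$. Conditioned on $\prv x$, the preactivation $h_1 = \sum_j w_{1j}\prv x_j + b_1$ is a sum of $N$ independent mean-zero contributions plus an independent mean-zero Gaussian bias, so by the CLT it is, in the large-width limit, Gaussian with mean $0$ and variance $\sigma_w^2\,\frac1N\|\prv x\|^2 + \sigma_b^2$. By \cref{ass:symAct}(a) together with concentration of $\frac1N\|\prv x\|^2$ about its mean $\prv\pp$, this conditional variance converges to $\sigma_w^2\,\prv\pp + \sigma_b^2$; taking $\la\cdot\ra$ then gives $\qq = \la h_1^2\ra = \sigma_w^2\,\prv\pp + \sigma_b^2$.

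Next I would compute $\pp$. Using the RRN recurrence $x_1 = \phi(h_1) + \prv x_1$, expand
\[
 x_1^2 = \phi(h_1)^2 + 2\,\phi(h_1)\,\prv x_1 + (\prv x_1)^2 .
\]
Taking $\la\cdot\ra$, the last term equals $\prv\pp$ by \cref{defn:length}. For the first term, the Gaussianity of $h_1$ established above and \cref{defn:integralTransform} give $\la\phi(h_1)^2\ra = \Vt\phi(\qq)$ (using continuity of $\Vt\phi$ and the concentration of the conditional variance just noted). It remains to kill the cross term, and this is the single place the hypothesis is used: conditioned on $\prv x$ the law of $h_1$ is Gaussian centered at $0$, hence symmetric about $0$, so antisymmetry of $\phi$ forces $\EV[\phi(h_1)\mid\prv x]=0$; therefore $\la\phi(h_1)\,\prv x_1\ra = \la\prv x_1\,\EV[\phi(h_1)\mid\prv x]\ra = 0$. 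Adding the three contributions yields $\pp = \Vt\phi(\qq) + \prv\pp$.

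The calculation is short, so there is no deep obstacle. The only genuinely load-bearing step is the symmetry argument that annihilates $\la\phi(h_1)\,\prv x_1\ra$ — this is exactly what antisymmetry of $\phi$ buys, and it is why the lemma assumes it. The other potential ``obstacle,'' namely rigorously justifying the asymptotic Gaussianity of $h_1$ and the concentration of $\frac1N\|\prv x\|^2$ as $N\to\infty$, is precisely the mean-field formalism and \cref{ass:symAct} adopted throughout the paper, so under the paper's conventions it requires no additional work here.
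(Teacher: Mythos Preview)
Your proof is correct and follows essentially the same approach as the paper: compute $\qq$ from the independence of the fresh weights and biases from $\prv x$, expand $x_1^2$ into three terms, identify $\la\phi(h_1)^2\ra=\Vt\phi(\qq)$ via the Gaussianity of $h_1$, and kill the cross term using antisymmetry of $\phi$ together with the fact that $h_1$ is a centered Gaussian. The only cosmetic difference is that you annihilate $\la\phi(h_1)\,\prv x_1\ra$ by conditioning on $\prv x$ and applying the tower rule, whereas the paper argues that $h_i$ and $\prv x_i$ become asymptotically independent as $N\to\infty$ and then factors the expectation; your route is arguably cleaner since it sidesteps the asymptotic-independence step.
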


\begin{restatable}{thm}{pqlinear}\label{thm:p_q_linear}
Suppose $\phi$ is tanh-like.
Assume RRN architecture.
\begin{itemize}
	\item If $\sigma_w = 0$, then $\p \pp l = l \Vt \phi( \sigma_b^2) + \p \pp 0$ and $\p \qq l = \sigma_b^2$.
	\item If $\sigma_w > 0$, $\lim_{l \to \infty} \p \pp l/ l = 1$ and $\lim_{l \to \infty} \p \qq l /(\sigma_w^2 l) = 1$.
	If $\phi = \tanh$, then we can obtain more terms of the asymptotic expansions:
	\begin{align*}
	\p \pp l &= l - 2 C \sigma_w^{-1} l^{1/2} - C^2 \sigma_w^{-2} \log l + O(1)\\
	\p \qq l &= \sigma_w^2 l - 2 C \sigma_w l^{1/2} - C^2 \log l + O(1)
	\end{align*}
	as $l \to \infty$, where $C = \sqrt{2 / \pi}$.
\end{itemize}
\end{restatable}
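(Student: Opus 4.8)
The case $\sigma_w = 0$ is immediate: then $\p\qq l = \sigma_b^2$ for every $l\ge 1$, so $\Vt\phi(\p\qq l)=\Vt\phi(\sigma_b^2)$ is constant and the recurrence $\p\pp l = \Vt\phi(\p\qq l)+\p\pp{l-1}$ telescopes. For $\sigma_w>0$ I would first extract the leading order. The increments $\p\pp l - \p\pp{l-1} = \Vt\phi(\p\qq l)$ are nonnegative, so $\p\pp l$ is nondecreasing; if $\p\pp l$ converged to a finite limit $\pp^*$ then $\p\qq l\to\sigma_w^2\pp^*+\sigma_b^2$ and, by continuity of $\Vt\phi$, the increments would tend to $\Vt\phi(\sigma_w^2\pp^*+\sigma_b^2)>0$ (positive since $\phi\not\equiv 0$ and the Gaussian has full support), a contradiction; hence $\p\pp l,\p\qq l\to\infty$. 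Since $|\phi|\le 1$ and $\phi(x)^2\to 1$ as $|x|\to\infty$, dominated convergence gives $\Vt\phi(q)=\EV[\phi(\sqrt q\,u)^2:u\sim\Gaus(0,1)]\to 1$ as $q\to\infty$; thus the increments of $\p\pp l$ tend to $1$, and a Cesàro argument yields $\p\pp l/l\to 1$, whence $\p\qq l/(\sigma_w^2 l)=(\sigma_w^2\p\pp{l-1}+\sigma_b^2)/(\sigma_w^2 l)\to 1$.

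For the refined expansion with $\phi=\tanh$, the one analytic input I need is the decay rate of $1-\Vt\tanh(q)$. Using $1-\tanh^2=\sech^2$ we have $1-\Vt\tanh(q)=\f 1{\sqrt{2\pi q}}\int_{-\infty}^\infty\sech^2(z)\,e^{-z^2/(2q)}\,dz$; expanding the exponential in powers of $1/q$ and integrating term by term (a standard asymptotic expansion of Gaussian-smoothed integrals, using $\int\sech^2 = 2$ and finiteness of all $\int z^{2n}\sech^2$), the expansion runs in powers $q^{-1/2-n}$, so $1-\Vt\tanh(q)=Cq^{-1/2}+O(q^{-3/2})$ as $q\to\infty$ with $C=\sqrt{2/\pi}$ and, in particular, no $q^{-1}$ term.

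Now set $u^{(l)}:=l-\p\pp l$; telescoping gives $u^{(l)}=u^{(0)}+\sum_{k=1}^l\bigl(1-\Vt\tanh(\p\qq k)\bigr)$ with $u^{(0)}=-\p\pp 0=O(1)$, and I bootstrap. First, $\p\qq k\sim\sigma_w^2 k$ makes the summand $\sim\f C{\sigma_w}k^{-1/2}$, and $\sum_{k\le l}k^{-1/2}=2\sqrt l+O(1)$, so $u^{(l)}=\f{2C}{\sigma_w}\sqrt l+o(\sqrt l)$. Feeding this back, $\p\qq k=\sigma_w^2\p\pp{k-1}+\sigma_b^2=\sigma_w^2 k-2C\sigma_w\sqrt k+o(\sqrt k)$, so $C(\p\qq k)^{-1/2}=\f C{\sigma_w}k^{-1/2}+\f{C^2}{\sigma_w^2}k^{-1}+o(k^{-1})$; subtracting the already-accounted $\f C{\sigma_w}k^{-1/2}$ and summing $\sum_{k\le l}k^{-1}=\log l+O(1)$ gives $u^{(l)}=\f{2C}{\sigma_w}\sqrt l+\f{C^2}{\sigma_w^2}\log l+o(\log l)$. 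A final pass, now with $u^{(l)}-\f{2C}{\sigma_w}\sqrt l=O(\log l)$, turns the $o(k^{-1})$ error into $O(k^{-3/2}\log k)$, which is summable, upgrading $o(\log l)$ to $O(1)$. Hence $\p\pp l=l-\f{2C}{\sigma_w}\sqrt l-\f{C^2}{\sigma_w^2}\log l+O(1)$, and substituting into $\p\qq l=\sigma_w^2\p\pp{l-1}+\sigma_b^2$ (with $\sqrt{l-1}=\sqrt l+O(l^{-1/2})$ and $\log(l-1)=\log l+O(l^{-1})$) yields the stated expansion for $\qq$.

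The routine parts are the $\sigma_w=0$ case, the nondecreasing/divergence dichotomy, and the Cesàro limits. Two points need care: the Laplace-type expansion of $1-\Vt\tanh(q)$, where one must pin the constant $C=\sqrt{2/\pi}$ and rule out a $q^{-1}$ term; and, the main obstacle, the multi-round bootstrap, which must first identify the $\sqrt l$ coefficient with only $o(\sqrt l)$ slack, feed it back to obtain the $\log l$ coefficient with only $o(\log l)$ slack, and then feed that back once more to squeeze the remainder down to $O(1)$ — a single pass only yields an error of order $o(\log l)$ (or worse) on the logarithmic term.
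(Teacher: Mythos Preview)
Your proposal is correct and follows essentially the same route as the paper: both use the key input $1-\Vt\tanh(q)=Cq^{-1/2}+O(q^{-3/2})$ (the paper's \cref{lemma:vtanhSqrtConvergence}) and then iteratively feed the current asymptotic form of $\p\qq l$ back into this expansion, summing and refining until the remainder is $O(1)$. Your leading-order argument (divergence by contradiction, dominated convergence, Ces\`aro) is a slightly cleaner variant of the paper's explicit sandwich $\omega l \le \p\pp l \le l + \p\pp 0$, but the substance is identical.
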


\begin{restatable}{thm}{lambdagammarecurrence}\label{thm:lambda_gamma_recurrence}
Suppose $\phi$ is antisymmetric.
Then in an RRN, $\lambda$ and $\gamma$ satisfy the recurrence
\begin{align*}
	\lambda &= \sigma_w^2 \prv \gamma + \sigma_b^2\\
	\gamma &= \Wt \phi(\qq, \lambda) + \prv \gamma.
\end{align*}
\end{restatable}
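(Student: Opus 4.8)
The plan is to derive both recurrences by conditioning on the randomness of the layers below $l$ and then passing to the large-width limit, exactly as in the proof of \cref{lemma:p_q_recurrence}. Let $\mathcal F$ be the $\sigma$-algebra generated by all weights and biases of layers strictly below $l$, so that $\prv x,\prv x{}'$ are $\mathcal F$-measurable while $\p w l,\p b l$ are independent of $\mathcal F$. Since both inputs $\p x 0$ and $\p x 0 {}'$ are pushed through the \emph{same} network realization, $\p h l _i=\sum_j\p w l_{ij}\prv x_j+\p b l_i$ and $\p h l _i{}'=\sum_j\p w l_{ij}\prv x_j'+\p b l_i$ share weights, and likewise $\p x l _i=\phi(\p h l _i)+\prv x_i$, $\p x l _i{}'=\phi(\p h l _i{}')+\prv x_i'$.

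For $\lambda$, I would expand the product $\p h l _i\,\p h l _i{}'$ and take $\EV[\,\cdot\mid\mathcal F]$. Because $\p w l,\p b l$ are centered and independent of $\mathcal F$, with $\EV[\p w l_{ij}\p w l_{ik}]=\delta_{jk}\sigma_w^2/N$, every weight--bias term and every off-diagonal weight--weight term drops out, leaving $\EV[\p h l _i\,\p h l _i{}'\mid\mathcal F]=\sigma_w^2(\prv x\cdot\prv x')/N+\sigma_b^2$. Taking the outer expectation and using \cref{defn:corr} (via \cref{ass:symAct}, so that $\f1N\la\prv x\cdot\prv x{}'\ra=\prv\gamma$) gives $\lambda=\sigma_w^2\prv\gamma+\sigma_b^2$.

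For $\gamma$, I would substitute $\p x l _i=\phi(\p h l _i)+\prv x_i$ and expand $\p x l _i\,\p x l _i{}'$ into the four terms $\phi(\p h l _i)\phi(\p h l _i{}')$, $\phi(\p h l _i)\prv x_i'$, $\prv x_i\phi(\p h l _i{}')$, $\prv x_i\prv x_i'$, and again take $\EV[\,\cdot\mid\mathcal F]$. Conditioned on $\mathcal F$, the pair $(\p h l _i,\p h l _i{}')$ is exactly a centered bivariate Gaussian --- a linear image of the Gaussian vector $((\p w l_{ij})_j,\p b l_i)$ --- with conditional variances $\sigma_w^2\|\prv x\|^2/N+\sigma_b^2$ and $\sigma_w^2\|\prv x'\|^2/N+\sigma_b^2$ and conditional covariance $\sigma_w^2(\prv x\cdot\prv x')/N+\sigma_b^2$. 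The two mixed terms vanish because $\phi$ is antisymmetric and $\p h l _i$ is conditionally symmetric about $0$: $\EV[\phi(\p h l _i)\prv x_i'\mid\mathcal F]=\prv x_i'\,\EV[\phi(\p h l _i)\mid\mathcal F]=0$, and likewise for the other --- this is the only place antisymmetry enters. The term $\prv x_i\prv x_i'$ contributes $\prv\gamma$. For the surviving term, as $N\to\infty$ the empirical moments $\|\prv x\|^2/N$, $\|\prv x'\|^2/N$, $(\prv x\cdot\prv x')/N$ concentrate on $\prv\pp$, $\prv\pp{}'$, $\prv\gamma$; under the standing assumption $\p\pp 0=\p\pp 0 {}'$ (hence $\prv\pp=\prv\pp{}'$, so that both conditional variances converge to $\qq=\sigma_w^2\prv\pp+\sigma_b^2$), the conditional covariance matrix of $(\p h l _i,\p h l _i{}')$ converges to $\left(\begin{smallmatrix}\qq&\lambda\\\lambda&\qq\end{smallmatrix}\right)$, and $\EV[\phi(\p h l _i)\phi(\p h l _i{}')\mid\mathcal F]\to\Wt\phi(\qq,\lambda)$ by \cref{defn:integralTransform}. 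Adding the pieces gives $\gamma=\Wt\phi(\qq,\lambda)+\prv\gamma$.

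The product expansions and the cancellation of mean-zero cross terms are routine. The one genuine analytic point --- shared with \cref{lemma:p_q_recurrence} and handled identically throughout the mean-field literature --- is the $N\to\infty$ passage: that the layer-$(l-1)$ empirical second moments concentrate on their deterministic mean-field values, and that this limit commutes with the outer expectation and with the integral transform $\Wt\phi$ (unproblematic here since $\phi$ bounded makes $\Wt\phi$ continuous and bounded). I would import that argument verbatim from the proof of \cref{lemma:p_q_recurrence}, noting that antisymmetry of $\phi$ is used only to kill the $\phi(\p h l _i)\prv x_i'$ cross terms and plays no role in the concentration step.
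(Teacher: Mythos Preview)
Your proposal is correct and follows essentially the same approach as the paper, which simply writes ``Similar to \cref{lemma:p_q_recurrence}.'' You have fleshed out the details of that adaptation carefully, and your observation that conditioned on $\mathcal F$ the pair $(\p h l_i,\p h l_i{}')$ is an \emph{exact} bivariate Gaussian (as a linear image of the Gaussian $((\p w l_{ij})_j,\p b l_i)$) is in fact slightly cleaner than the CLT-style phrasing used in the proof of \cref{lemma:p_q_recurrence}.
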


\begin{restatable}{thm}{edynamics}\label{thm:edynamics}
Suppose $\phi$ is a tanh-like nonlinearity in an RRN.
Assume $\p \ee 0 < 1$.
\begin{itemize}
	\item If $\sigma_w = 0$, then $\p \gamma l = l \Wt \phi( \sigma_b^2, \sigma_b^2) + \p \gamma 0 = l \Vt \phi( \sigma_b^2) + \p \gamma 0$ and $\p \lambda l = \sigma_b^2$, so that $\p \ee l \to 1$ and $1 - \p \ee l = \Theta(l^{-1})$.
	As a result, $\p \mfs l = \p \pp l (1 - \p \ee l) = \Theta(1).$
	\item If $\sigma_w > 0$, then $\p \gamma l = \TTheta(l^{\f 2\pi})$, and $\p \ee l \to 0$ like $\TTheta(l^{\f 2 \pi - 1})$.
	Thus $\p \mfs l = \Theta(\p \pp l) = \Theta(l).$
\end{itemize}
\end{restatable}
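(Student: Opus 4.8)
The plan is to split on whether $\sigma_w=0$. When $\sigma_w=0$ everything collapses to a closed form: the recurrence of \cref{thm:lambda_gamma_recurrence} degenerates because $\p \lambda l=\sigma_b^2$ and (by \cref{lemma:p_q_recurrence}) $\p \qq l=\sigma_b^2$ are constant in $l$, and $\Wt\phi(\sigma_b^2,\sigma_b^2)=\EV[\phi(z)\phi(z')]$ for a perfectly correlated Gaussian pair equals $\EV[\phi(z)^2]=\Vt\phi(\sigma_b^2)$. Hence $\p \gamma l=\p \gamma 0+l\,\Vt\phi(\sigma_b^2)$, while $\p \pp l=\p \pp 0+l\,\Vt\phi(\sigma_b^2)$ by \cref{thm:p_q_linear}; as $\sigma_b>0$ and $\phi$ is tanh-like, $\Vt\phi(\sigma_b^2)>0$, so $\p \ee l=\p \gamma l/\p \pp l\to1$, and from $\p \ee 0<1$ (equivalently $\p \pp 0>\p \gamma 0$) we get $1-\p \ee l=(\p \pp 0-\p \gamma 0)/\p \pp l=\Theta(l^{-1})$ and $\p \mfs l=\p \pp l(1-\p \ee l)=\p \pp 0-\p \gamma 0=\Theta(1)$.

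For $\sigma_w>0$, import from \cref{thm:p_q_linear} that $\p \pp l/l\to1$ and $\p \qq l/(\sigma_w^2 l)\to1$, so $\p \qq l\to\infty$. The analytic core is the behaviour of the integral transforms at large first argument: for tanh-like $\phi$, as $q\to\infty$, $\Vt\phi(q)=1-o(1)$ and $\Wt\phi(q,\rho q)=\tfrac{2}{\pi}\arcsin\rho+o(1)$ uniformly over $\rho\in[-1,1]$. This is because for $z\sim\Gaus(0,q)$ with $q$ large, $\phi(z)$ concentrates near $\operatorname{sign}(z)$ (since $\phi(x)\to1$ for $x\to\infty$ and $|\phi|\le1$), together with the classical identity $\EV[\operatorname{sign}(z)\operatorname{sign}(z')]=\tfrac{2}{\pi}\arcsin\rho$ for a standard correlation-$\rho$ Gaussian pair; for $\phi=\tanh$ one even gets the quantitative rate $o(1)=O(q^{-1/2})$ from a Laplace-type estimate of $\EV[1-\tanh(|z|)]$ and $\EV[1-\tanh^2(|z|)]$, which is more precision than we need.

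Next, reduce the dynamics of $\ee$ to a scalar recurrence. With $\p \cc l:=\p \lambda l/\p \qq l=(\sigma_w^2\p \gamma {l-1}+\sigma_b^2)/(\sigma_w^2\p \pp {l-1}+\sigma_b^2)$, the additive $\sigma_b^2$-shift is lower order: $\p \cc l=\p \ee {l-1}+O(1/\p \pp {l-1})=\p \ee {l-1}+O(1/l)$. Feeding this into $\p \gamma l-\p \gamma {l-1}=\Wt\phi(\p \qq l,\p \lambda l)$ and using the transform asymptotics yields $\p \gamma l-\p \gamma {l-1}=\tfrac{2}{\pi}\arcsin(\p \ee {l-1})+o(1)$, while $\p \pp l-\p \pp {l-1}=\Vt\phi(\p \qq l)=1+o(1)$; expanding $\p \ee l=\p \gamma l/\p \pp l$ gives
\[
\p \ee l-\p \ee {l-1}=\frac{1}{\p \pp l}\left(\tfrac{2}{\pi}\arcsin\bigl(\p \ee {l-1}\bigr)-\p \ee {l-1}+o(1)\right).
\]
Now observe $\p \mfs l=\p \pp l-\p \gamma l$ is nondecreasing, since each increment equals $\Vt\phi(\p \qq l)-\Wt\phi(\p \qq l,\p \lambda l)=\tfrac12\EV[(\phi(z)-\phi(z'))^2]\ge0$, and $\p \mfs 0>0$ because $\p \ee 0<1$; hence $\p \ee l<1$ for all $l$. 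If $\p \ee 0\ge0$ then $\p \gamma l$ is nondecreasing (each increment $\Wt\phi(\p \qq l,\p \lambda l)\ge0$ by monotonicity of $\phi$ and $\p \lambda l\ge0$), so $\p \ee l\in[0,1)$; combined with the elementary inequality $\tfrac{2}{\pi}\arcsin(x)\le x$ on $[0,1]$ with equality only at $\{0,1\}$ (equivalently $\sin(\tfrac{\pi}{2}x)\ge x$ by concavity of $\sin$ on $[0,\tfrac{\pi}{2}]$), the recurrence shows $\p \ee l$ is eventually nonincreasing, hence convergent to some $\ee^*\in[0,1)$; and since $\sum_l 1/\p \pp l$ diverges, $\ee^*$ must be a zero of $x\mapsto\tfrac{2}{\pi}\arcsin(x)-x$, i.e. $\ee^*=0$. (The case $\p \ee 0<0$ is symmetric: on $[-1,0]$ the reversed inequality $\tfrac{2}{\pi}\arcsin(x)\ge x$ holds, so $\p \ee l$ is driven upward toward $0$.)

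Finally, the rate, which I expect to be the main obstacle. Linearizing at $0$, $x-\tfrac{2}{\pi}\arcsin(x)=(1-\tfrac{2}{\pi})x+O(x^3)$, the recurrence becomes $\p \ee l=\p \ee {l-1}\bigl(1-\tfrac{1-2/\pi+o(1)}{\p \pp l}\bigr)+o(1/\p \pp l)$; with $\p \pp l\sim l$, taking logarithms and telescoping gives $\log|\p \ee l|=-(1-\tfrac{2}{\pi})\log l+o(\log l)$, i.e. $\p \ee l=\TTheta(l^{2/\pi-1})$, whence $\p \gamma l=\p \pp l\,\p \ee l=\TTheta(l^{2/\pi})$ and $\p \mfs l=\p \pp l(1-\p \ee l)=\Theta(\p \pp l)=\Theta(l)$. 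The delicate step is tracking the several $o(1)$ errors --- from $\Wt\phi\approx\tfrac{2}{\pi}\arcsin$, from $\Vt\phi\approx1$, and from $\p \pp l\approx l$ --- through the nonlinear recurrence and checking that they perturb only sub-polynomial (e.g. $\log^k l$) factors of $\p \ee l$; this is precisely why the conclusion is $\TTheta$ rather than $\Theta$, and it is also why the convergence $\Wt\phi(q,\cdot)\to\tfrac{2}{\pi}\arcsin(\cdot)$ near $0$ must be uniform rather than merely pointwise, since the argument $\p \cc l$ is itself drifting toward $0$.
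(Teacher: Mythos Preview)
Your proposal is correct and follows essentially the same route as the paper: both reduce to the key analytic fact $\Wt\phi(\qq,c\qq)\to\tfrac{2}{\pi}\arcsin c$ as $\qq\to\infty$, then linearize near the fixed point $0$ to extract the exponent $1-\tfrac{2}{\pi}$. The paper carries out the argument in terms of $\gamma$ (obtaining $\gamma-\prv\gamma=\tfrac{2}{\pi}\,\prv\gamma/\prv\pp+O(l^{-1})$ and invoking \cref{lemma:alphaDeltaDynamics}), whereas you work directly with $\ee$; your monotonicity argument for $\ee\to 0$ (via $\mfs$ nondecreasing and $\tfrac{2}{\pi}\arcsin x\le x$ on $[0,1]$) is arguably cleaner than the paper's contradiction showing $\gamma\to\infty$.

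One point to tighten: in your rate step you write the recurrence as $\ee=\prv\ee\bigl(1-\tfrac{1-2/\pi+o(1)}{\pp}\bigr)+o(1/\pp)$ and then telescope. A bare $o(1/\pp)=o(1/l)$ additive error is not by itself enough for the $\TTheta(l^{2/\pi-1})$ conclusion: unwinding, the additive contribution is $l^{-(1-2/\pi)}\sum_{m\le l}\eta_m\,m^{-2/\pi}$ with $\eta_m=o(1)$, and since $\sum m^{-2/\pi}$ diverges this could dominate the homogeneous part. What saves you (and what the paper uses explicitly via \cref{lemma:Wt_le_arcsin}) is the \emph{quantitative} rate $\Wt\phi(\qq,c\qq)=\tfrac{2}{\pi}\arcsin c+O(\qq^{-1})$ for $c$ bounded away from $1$, together with $\cc-\prv\ee=O(l^{-1})$ from \cref{lemma:cExpansion}; these make the additive term $O(l^{-2})$, whose unwound contribution is $O(l^{2/\pi-1})$ as needed. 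You gesture at this (``for $\phi=\tanh$ one even gets the quantitative rate''), but the $O(q^{-1/2})$ you cite is the rate for $\Vt\phi\to1$, not for $\Wt\phi\to\tfrac{2}{\pi}\arcsin$; it is the latter, at rate $O(q^{-1})$, that closes the argument.
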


\begin{restatable}{thm}{dalethRecReduced}\label{thm:dalethRecReduced}
For any nonlinearity $\phi$ in an RRN, under assumptions \cref{ass:symAct} and \cref{ass:gradInd}, whenever $\dot \phi^2(\zeta)$ has finite variance for Gaussian variable $\zeta$,
\begin{align*}
\prv \daleth &= (\sigma_w^2  \Vt \dot \phi( \qq) + 1)\daleth,&
\chi_b &= \daleth \Vt\dot \phi( \qq),&
\chi_w &= \daleth \Vt \dot \phi( \qq) \prv \pp. 
\end{align*}

\end{restatable}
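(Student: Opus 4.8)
The plan is to run mean-field backpropagation directly from the chain rule and then collapse the resulting coordinate sums using \cref{ass:symAct}, \cref{ass:gradInd}, and the forward central-limit heuristic. First I would record the single-sample backward identities for the RRN recurrence $\p x l_i = \phi(\p h l_i) + \p x {l-1}_i$, $\p h l_i = \sum_j \p w l_{ij}\p x {l-1}_j + \p b l_i$: since $\p b l_i$ and $\p w l_{ij}$ feed only into $\p h l_i$, which feeds only into $\p x l_i$ among layer-$l$ activations, one gets $\pd E/\pd\p b l_i = (\pd E/\pd\p x l_i)\dot\phi(\p h l_i)$ and $\pd E/\pd\p w l_{ij} = (\pd E/\pd\p x l_i)\dot\phi(\p h l_i)\p x {l-1}_j$, while $\pd\p x l_i/\pd\p x {l-1}_j = \dot\phi(\p h l_i)\p w l_{ij} + \delta_{ij}$ gives $\pd E/\pd\p x {l-1}_j = \pd E/\pd\p x l_j + \sum_i (\pd E/\pd\p x l_i)\dot\phi(\p h l_i)\p w l_{ij}$ --- the ``$+1$'' in the claimed recurrence is exactly this $\delta_{ij}$ contributed by the skip connection.

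Next I would square the last identity and take $\la\cdot\ra$. By \cref{ass:gradInd}(a) the layer-$l$ backprop weights form a fresh i.i.d. $\Gaus(0,\sigma_w^2/N)$ family, independent of $\p h l$, $\p x {l-1}$, and (by \cref{ass:gradInd}(b)) of $\pd E/\pd\p x l$; hence the cross term is linear in these weights and has vanishing expectation, and in the quadratic term $\la\p w l_{ij}\p w l_{i'j}\ra = \delta_{ii'}\sigma_w^2/N$ kills all off-diagonal contributions, leaving $\sigma_w^2\cdot\tfrac1N\sum_i \la(\pd E/\pd\p x l_i)^2\dot\phi(\p h l_i)^2\ra$. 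Now \cref{ass:gradInd}(b) factors each summand as $\la(\pd E/\pd\p x l_i)^2\ra\la\dot\phi(\p h l_i)^2\ra = \p\daleth l\,\la\dot\phi(\p h l_i)^2\ra$; the forward CLT makes $\p h l_i \approx \Gaus(0,\p\qq l)$, so $\la\dot\phi(\p h l_i)^2\ra = \Vt\dot\phi(\qq)$ by \cref{defn:integralTransform}, and \cref{ass:symAct} lets me drop the index and collapse the average over the $N$ coordinates. Adding the $\p\daleth l$ from the $(\pd E/\pd\p x l_j)^2$ term yields $\prv\daleth = (\sigma_w^2\Vt\dot\phi(\qq)+1)\daleth$. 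The same factorization handles the parameter gradients: $\chi_b = \la(\pd E/\pd\p x l_1)^2\ra\,\la\dot\phi(\p h l_1)^2\ra = \daleth\,\Vt\dot\phi(\qq)$, and $\chi_w = \la(\pd E/\pd\p x l_1)^2\ra\,\la\dot\phi(\p h l_1)^2(\p x {l-1}_1)^2\ra$, where in the large-width limit $\p h l_1$ and $\p x {l-1}_1$ decouple (dropping the single term $\p w l_{11}\p x {l-1}_1$ perturbs the conditional variance of $\p h l_1$ only by $O(1/N)$), so the last expectation factors as $\Vt\dot\phi(\qq)\,\prv\pp$ and $\chi_w = \daleth\,\Vt\dot\phi(\qq)\,\prv\pp$.

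Finally I would indicate where the hypothesis that $\dot\phi^2(\zeta)$ has finite variance enters: it is precisely what guarantees the law of large numbers $\tfrac1N\sum_i\dot\phi(\p h l_i)^2 \to \Vt\dot\phi(\qq)$ with $O(N^{-1/2})$ fluctuations, so that the coordinate sums above concentrate on their integral-transform values. I expect the genuine work to be not any single computation but the bookkeeping of independence: all of the factorizations rest on treating $\pd E/\pd\p x l_i$ as independent of $\dot\phi(\p h l_i)$ and of the layer-$l$ backprop weights (the content of \cref{ass:gradInd}), and on the fact that, conditioned on $\p x {l-1}$, the preactivations $\p h l_i$ are Gaussian and independent across $i$ --- which is what makes off-diagonal terms drop and coordinatewise averages converge to $\Vt\dot\phi(\qq)$. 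Everything else is routine algebra.
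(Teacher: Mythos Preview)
Your proposal is correct and follows essentially the same route as the paper: chain-rule backpropagation identities for the RRN, squaring and taking expectations, killing cross terms via the independence of the (fresh) backprop weights from \cref{ass:gradInd}(a), factoring $\la\beth_i^2\dot\phi(h_i)^2\ra$ via \cref{ass:gradInd}(b), collapsing coordinate indices via \cref{ass:symAct}, and using the large-$N$ asymptotic independence of $h_j$ and $\prv x_i$ for $\chi_w$. Your explicit remark that the finite-variance hypothesis on $\dot\phi^2(\zeta)$ is what underwrites the concentration of $\tfrac1N\sum_i\dot\phi(h_i)^2$ on $\Vt\dot\phi(\qq)$ is a nice addition that the paper leaves implicit.
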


\begin{restatable}{thm}{dalethExpSqrtTanh}\label{thm:dalethExpSqrtTanh}
For $\phi = \tanh$ in an RRN,
\begin{itemize}
	\item If $\sigma_w = 0$, $\p \daleth m = \p \daleth l$ for all $l, m$.
	\item If $\sigma_w > 0$, \begin{align*}
	\log(\p \daleth {m}/\p \daleth l) &= \mathcal A (\sqrt l - \sqrt m) + \mathcal B (\log l - \log m) + O(1)
	\end{align*}
	where $\mathcal A = \f 4 3 \sqrt{\f 2 \pi} \sigma_w$ and $\mathcal B = \f 4 {3\pi} - \sigma_w^2 \f 4 {9\pi}$.
\end{itemize}
\end{restatable}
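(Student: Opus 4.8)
The plan is to combine the backward recurrence $\prv \daleth = (\sigma_w^2 \Vt \dot\phi(\qq) + 1)\daleth$ from \cref{thm:dalethRecReduced} with the precise asymptotic expansion of $\p\qq l$ from \cref{thm:p_q_linear}. The case $\sigma_w = 0$ is immediate: then $\p\qq l = \sigma_b^2$ is constant, but more to the point, $\prv\daleth = (\sigma_w^2 \Vt\dot\phi(\qq) + 1)\daleth = \daleth$, so $\p\daleth m$ is independent of $m$. For $\sigma_w > 0$, iterating the recurrence backward from layer $l$ to layer $m < l$ gives
\begin{align*}
\log(\p\daleth m / \p\daleth l) &= \sum_{k=m+1}^{l} \log\!\left(1 + \sigma_w^2 \Vt\dot\phi(\p\qq k)\right).
\end{align*}
So the whole problem reduces to estimating this sum, which in turn requires (i) a closed form or good asymptotic expansion of $\Vt\dot\phi(q) = \EV[\sech^4(z) : z \sim \Gaus(0,q)]$ as $q \to \infty$, and (ii) substituting the known expansion $\p\qq k = \sigma_w^2 k - 2C\sigma_w k^{1/2} - C^2\log k + O(1)$ with $C = \sqrt{2/\pi}$.

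The key analytic step is to show $\Vt\dot\phi(q) = \Theta(q^{-1/2})$ with an explicit leading constant as $q\to\infty$, and ideally the next-order term. Heuristically, $\sech^4(z)$ is concentrated near $z = 0$ on a window of width $O(1)$, so for large $q$ the Gaussian density $\approx \frac{1}{\sqrt{2\pi q}}$ there, giving $\Vt\dot\phi(q) \approx \frac{1}{\sqrt{2\pi q}}\int_{-\infty}^{\infty}\sech^4(z)\,dz = \frac{1}{\sqrt{2\pi q}}\cdot\frac{4}{3}$, since $\int\sech^4 = 4/3$. Thus $\sigma_w^2\Vt\dot\phi(q) \sim \frac{4}{3}\sqrt{\frac{2}{\pi}}\,\sigma_w^2 q^{-1/2} \cdot \frac{1}{\sqrt 2}$... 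I would carefully track the constants here; the target $\mathcal A = \frac43\sqrt{\frac2\pi}\sigma_w$ strongly suggests that after plugging $\qq \approx \sigma_w^2 k$ one gets $\sigma_w^2\Vt\dot\phi(\p\qq k) \approx \frac43\sqrt{\frac{2}{\pi}}\,\sigma_w\, k^{-1/2}$ to leading order, whence $\log(1 + \sigma_w^2\Vt\dot\phi(\p\qq k)) \approx \frac43\sqrt{\frac2\pi}\sigma_w k^{-1/2} - \frac12\left(\frac43\sqrt{\frac2\pi}\sigma_w\right)^2 k^{-1} + \cdots$, and the $\frac{8}{9\pi}\sigma_w^2$ coefficient of the $k^{-1}$ term is consistent with $\mathcal B$. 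Then I would use the Euler–Maclaurin-type estimates $\sum_{k=m+1}^l k^{-1/2} = 2(\sqrt l - \sqrt m) + O(1)$ and $\sum_{k=m+1}^l k^{-1} = \log l - \log m + O(1)$ to turn the sum into the claimed form; the $O(1)$ aggregates the tail of the Taylor expansion of $\log(1+x)$ (which is summable since $x = O(k^{-1/2})$ squared is $O(k^{-1})$... wait, $O(k^{-1})$ is not summable, so the quadratic term must be kept explicitly — this is exactly why $\mathcal B\log m$ appears — and only the cubic and higher terms, which are $O(k^{-3/2})$, get absorbed into $O(1)$).

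The main obstacle, and where I'd spend the most care, is getting the second-order correction right: one needs not just $\Vt\dot\phi(q) \sim \frac{4}{3\sqrt{2\pi q}}$ but the next term in its expansion (an $O(q^{-3/2})$ or $O(q^{-1}\cdot q^{-1/2})$ correction from Laplace/Watson's-lemma refinement of the Gaussian integral against $\sech^4$), AND one must propagate the subleading $-2C\sigma_w^{-1}k^{1/2}$ correction in $\p\qq k$ through $q^{-1/2} = (\sigma_w^2 k)^{-1/2}(1 + O(k^{-1/2}))^{-1/2}$. Both of these feed $O(k^{-1})$ contributions into the sum and hence affect $\mathcal B$. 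I would organize this as: (1) prove a two-term expansion $\Vt\dot\phi(q) = \frac{4}{3\sqrt{2\pi}}q^{-1/2} + O(q^{-3/2})$ rigorously via splitting the integral at $|z| = q^{1/4}$ and bounding the tail using exponential decay of $\sech^4$; (2) substitute the $\qq$-expansion and expand to get $\sigma_w^2\Vt\dot\phi(\p\qq k) = \frac43\sqrt{\frac2\pi}\sigma_w k^{-1/2} + c\, k^{-1} + O(k^{-3/2})$ for an explicit $c$; (3) take $\log(1+\cdot)$, expand, and note the $k^{-1}$ coefficient becomes $c - \frac12\mathcal A^2$, which should equal $\mathcal B = \frac{4}{3\pi} - \frac{4}{9\pi}\sigma_w^2$; (4) sum via Euler–Maclaurin. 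The consistency check in step (3) is the moment of truth — if the constants don't line up, the error is in step (1) or (2). Everything after step (1) is routine bookkeeping.
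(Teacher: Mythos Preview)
Your proposal is correct and follows essentially the same route as the paper: iterate the recurrence from \cref{thm:dalethRecReduced} to get $\log(\p\daleth m/\p\daleth l) = \sum_{k=m+1}^l \log(1+\sigma_w^2\Vt\dot\phi(\p\qq k))$, use the asymptotic $\Vt\dot\phi(q) = \tfrac{2}{3}\sqrt{\tfrac{2}{\pi}}\,q^{-1/2} + O(q^{-3/2})$ (the paper records this as \cref{lemma:Vt_dot_tanh_asymptotics}, derived from a general expansion of $\Vt\sech^d$ in powers of $q^{-1}$ rather than by your integral-splitting argument, but either works), feed in the $\p\qq k$ expansion from \cref{thm:p_q_linear}, expand $\log(1+\cdot)$ to second order, and sum via \cref{lemma:power_sum_asymptotics}.

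One arithmetic slip to flag: the leading coefficient of $\sigma_w^2\Vt\dot\phi(\p\qq k)$ is $\tfrac{2}{3}\sqrt{\tfrac{2}{\pi}}\,\sigma_w\,k^{-1/2}$, not $\tfrac{4}{3}\sqrt{\tfrac{2}{\pi}}\,\sigma_w\,k^{-1/2}$; the factor of $2$ producing $\mathcal A$ comes from $\sum k^{-1/2} \sim 2\sqrt{l}$. With this correction your step (3) consistency check works out: the $k^{-1}$ coefficient of $\sigma_w^2\Vt\dot\phi(\p\qq k)$ coming from the subleading $-2C\sigma_w k^{1/2}$ term of $\p\qq k$ is $c = \tfrac{2}{3}\sqrt{\tfrac{2}{\pi}}\cdot\sqrt{\tfrac{2}{\pi}} = \tfrac{4}{3\pi}$, and the $-\tfrac12 x^2$ term of $\log(1+x)$ contributes $-\tfrac12\bigl(\tfrac{2}{3}\sqrt{\tfrac{2}{\pi}}\sigma_w\bigr)^2 = -\tfrac{4}{9\pi}\sigma_w^2$, giving exactly $\mathcal B$.
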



\begin{restatable}{thm}{dalethExpSqrtTanhAllGrad}\label{thm:dalethExpSqrtTanhAllGrad}
Suppose $\phi = \tanh$.
Then in an RRN
\begin{itemize}
	\item If $\sigma_w = 0$, $\p \chi l _b = \p \daleth L \Vt \dot \phi( \sigma_b^2)$ and $\p \chi l _w = \p \daleth L \Vt \dot \phi( \sigma_b^2) ((l-1) \Vt \phi( \sigma_b^2) + \p \pp 0),$ where $L$ is the last layer.
	\item If $\sigma_w > 0$, \begin{align*}
	\log(\p\chi m _b / \p \chi l _b) &= \mathcal A (\sqrt l - \sqrt m) + \mathcal B_b (\log l - \log m) + O(1)\\
	\log(\p\chi m _w / \p \chi l _w) &= \mathcal A (\sqrt l - \sqrt m) + \mathcal B_w (\log l - \log m) + O(1)
	\end{align*}
	where $\mathcal A = \f 4 3 \sqrt{\f 2 \pi} \sigma_w$ (same as $\mathcal A$ in \cref{thm:dalethExpSqrtTanh}) and $\mathcal B_b = \mathcal B + \f 1 2, \mathcal B_w = \mathcal B - \f 1 2$, with $\mathcal B = \f 4 {3\pi} - \sigma_w^2 \f 4 {9\pi}$ (same as $\mathcal B$ in \cref{thm:dalethExpSqrtTanh}).
\end{itemize}
\end{restatable}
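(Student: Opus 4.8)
\textbf{Proof proposal for \cref{thm:dalethExpSqrtTanhAllGrad}.}
The plan is to reduce everything to the already-established asymptotics for $\p\daleth l$ in \cref{thm:dalethExpSqrtTanh} together with the exact identities $\chi_b = \daleth\,\Vt\dot\phi(\qq)$ and $\chi_w = \daleth\,\Vt\dot\phi(\qq)\,\prv\pp$ from \cref{thm:dalethRecReduced}. The $\sigma_w = 0$ case is immediate: then $\p\qq l = \sigma_b^2$ for all $l$, so $\Vt\dot\phi(\qq)$ is the constant $\Vt\dot\phi(\sigma_b^2)$, and $\p\daleth l = \p\daleth L$ is constant by \cref{thm:dalethExpSqrtTanh}; plugging into the two identities and using $\p\pp{l-1} = (l-1)\Vt\phi(\sigma_b^2) + \p\pp0$ from \cref{thm:p_q_linear} gives the stated closed forms directly.

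For $\sigma_w > 0$ the core task is to understand $\log\Vt\dot\phi(\p\qq l)$ as $l\to\infty$. Since $\phi = \tanh$, we have $\dot\phi = \sech^2$, and $\Vt\sech^2(q) = \EV[\sech^4(z): z\sim\Gaus(0,q)]$. For large $q$ this integral is dominated by the region near $0$ where $\sech$ is $O(1)$, and a standard Laplace/Watson-type expansion gives $\Vt\sech^2(q)\sim c\, q^{-1/2}$ for an explicit constant $c$ (this is the same kind of estimate that must already underlie the derivation of $\mathcal B$ in \cref{thm:dalethExpSqrtTanh}, since $\p\daleth{} $ accumulates a product of factors $\sigma_w^2\Vt\dot\phi(\qq)+1$). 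Combining with $\p\qq l = \sigma_w^2 l + O(l^{1/2})$ from \cref{thm:p_q_linear}, we get $\log\Vt\dot\phi(\p\qq l) = -\tfrac12\log l + O(1)$, hence
\begin{align*}
\log\bigl(\p\chi m_b/\p\chi l_b\bigr) &= \log\bigl(\p\daleth m/\p\daleth l\bigr) + \log\Vt\dot\phi(\p\qq m) - \log\Vt\dot\phi(\p\qq l)\\
&= \log\bigl(\p\daleth m/\p\daleth l\bigr) - \tfrac12(\log m - \log l) + O(1),
\end{align*}
which after substituting \cref{thm:dalethExpSqrtTanh} yields $\mathcal B_b = \mathcal B + \tfrac12$. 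For $\chi_w$ the extra factor $\prv\pp$ contributes $\log\p\pp{m-1} - \log\p\pp{l-1} = \log m - \log l + O(1)$ by \cref{thm:p_q_linear}, giving the opposite sign correction $\mathcal B_w = \mathcal B - \tfrac12$; the leading $\mathcal A(\sqrt l - \sqrt m)$ term is untouched in both cases because the corrections are only logarithmic.

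I expect the main obstacle to be the precise asymptotic expansion of $\Vt\dot\phi(q) = \EV[\sech^4 z : z\sim\Gaus(0,q)]$ to the accuracy needed — specifically, pinning down that the correction beyond the $q^{-1/2}$ leading term is $O(q^{-3/2})$ (equivalently $O(1/q)$ relative error) rather than something larger, so that the accumulated error in $\log\p\chi{}_b$ across layers stays $O(1)$ rather than growing. This requires care with the tail of the Gaussian (where $\sech^4 z$ decays exponentially and contributes negligibly) versus the bulk near the origin; the cleanest route is to write $\EV[\sech^4 z] = \frac{1}{\sqrt{2\pi q}}\int_{-\infty}^{\infty}\sech^4(z)e^{-z^2/(2q)}\,dz$ and Taylor-expand $e^{-z^2/(2q)} = 1 + O(z^2/q)$ inside, using that $\int \sech^4(z)\,dz$ and $\int z^2\sech^4(z)\,dz$ are finite. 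Once this lemma is in hand, everything else is bookkeeping with the already-proven \cref{thm:p_q_linear} and \cref{thm:dalethExpSqrtTanh}, so no genuinely new difficulty arises; one should also double-check the edge case $m=1$ and confirm the $O(1)$ constants do not depend on $l$ or $m$, only on $\sigma_w, \sigma_b$.
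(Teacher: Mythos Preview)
Your proposal is correct and follows essentially the same approach as the paper's proof: both reduce to the identities $\chi_b = \daleth\,\Vt\dot\phi(\qq)$ and $\chi_w = \daleth\,\Vt\dot\phi(\qq)\,\prv\pp$ from \cref{thm:dalethRecReduced}, invoke the asymptotic $\Vt\dot\tanh(\p\qq l) = \Theta(l^{-1/2})$, and read off the $\pm\tfrac12$ shifts in the log-coefficient. The asymptotic you flag as the main obstacle is already available as \cref{lemma:Vt_dot_tanh_asymptotics} (itself a corollary of \cref{lemma:Vt_sech_asymptotics}), which gives $\Vt\dot\tanh(q) = \tfrac23\sqrt{2/\pi}\,q^{-1/2} + \Theta(q^{-3/2})$ with exactly the error control you need, so no new work is required there.
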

\subsubsection{Full Residual Network}

\begin{restatable}{thm}{fullResPQRec} \label{thm:fullResPQRec}
For any nonlinearity $\phi$ in an FRN,
\begin{align*}
\qq &= \sigma_w^2 \prv \pp + \sigma_b^2\\
\pp &= \sigma_v^2 \Vt \phi( \qq) + \sigma_a^2 + \prv \pp
\end{align*}
\end{restatable}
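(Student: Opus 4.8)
The plan is to derive the two recurrences by the same central-limit-theorem mechanism sketched in the Overview, tracking the mean-normalized second moments layer by layer. First I would treat the hidden vector. By definition $h_i = \sum_j w_{ij}\prv x_j + b_i$ with $w_{ij}\sim\Gaus(0,\sigma_w^2/N)$ and $b_i\sim\Gaus(0,\sigma_b^2)$ all independent of each other and of the previous layer. Conditioned on $\prv x$, $h_i$ is a sum of $N$ independent mean-zero terms plus the bias, so $\la h_i^2\ra = \sigma_w^2 \cdot \f1N\sum_j \la(\prv x_j)^2\ra + \sigma_b^2$. Invoking \cref{defn:length} and \cref{ass:symAct}(a), $\f1N\sum_j\la(\prv x_j)^2\ra = \prv\pp$, which gives $\qq = \sigma_w^2\prv\pp + \sigma_b^2$. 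This step is identical in form for RRN and FRN since the $h$-update does not involve $v$ or $a$.

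Next I would handle the activation vector. In an FRN, $x_i = \sum_j v_{ij}\phi(h_j) + \prv x_i + a_i$ with $v_{ij}\sim\Gaus(0,\sigma_v^2/N)$, $a_i\sim\Gaus(0,\sigma_a^2)$, independent of the $w,b$ used to form $h$ and of $\prv x$. Squaring and taking expectations, the three groups of terms are uncorrelated: the $v$-term contributes $\sigma_v^2\cdot\f1N\sum_j\la\phi(h_j)^2\ra$, the passthrough contributes $\la(\prv x_i)^2\ra = \prv\pp$, and the bias contributes $\sigma_a^2$; all cross terms vanish because $v$ and $a$ have mean zero and are independent of everything else. The key point is to evaluate $\f1N\sum_j\la\phi(h_j)^2\ra$ in the large-width limit: since each $h_j$ is (by the CLT argument above) approximately $\Gaus(0,\qq)$, we get $\la\phi(h_j)^2\ra \to \EV[\phi(z)^2 : z\sim\Gaus(0,\qq)] = \Vt\phi(\qq)$ by \cref{defn:integralTransform}. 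Averaging over $j$ and using \cref{ass:symAct}(a) to say this is the same for all $j$, the sum collapses to $\Vt\phi(\qq)$, yielding $\pp = \sigma_v^2\Vt\phi(\qq) + \sigma_a^2 + \prv\pp$.

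The main obstacle is not any single algebraic manipulation but the justification of the Gaussian approximation for $h_j$ and the interchange of limit and expectation that lets $\f1N\sum_j\la\phi(h_j)^2\ra \to \Vt\phi(\qq)$. Rigorously this requires that $\phi(h)^2$ be integrable against the limiting Gaussian and that the $h_j$ become jointly Gaussian (marginally sufficient here) as $N\to\infty$; for tanh-like $\phi$ this is immediate from boundedness, and for $\alpha$-ReLU it holds for the relevant range of $\alpha$. Since the theorem is stated in the mean-field/large-width regime under \cref{ass:symAct}, I would state the CLT step as the standard propagation-of-Gaussianity argument (as in \cite{poole_exponential_2016}) and note that the only genuinely new ingredient relative to the RRN case is the appearance of the extra independent factors $\sigma_v^2$ and $\sigma_a^2$, which drop in cleanly because $v$ and $a$ are sampled independently of the forward pass up to layer $l-1$.
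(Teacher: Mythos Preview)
Your proposal is correct and follows essentially the same approach as the paper: expand $\la h_i^2\ra$ and $\la x_i^2\ra$, use independence of the freshly sampled weights/biases to kill cross terms, and identify $\la\phi(h_j)^2\ra$ with $\Vt\phi(\qq)$ via the CLT in the large-width limit. The paper's proof is terser but makes the same key observation you highlight at the end—that the independence of $v$ and $a$ is what eliminates the cross terms here, so antisymmetry of $\phi$ is not needed as it was in the RRN case.
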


\begin{restatable}{thm}{pIsLinearTanh}\label{thm:pIsLinearTanh}
Suppose $\phi$ is tanh-like.
Assume the FRN architecture.
\begin{itemize}
	\item If $\sigma_w = 0$, then $\p \pp l = (\sigma_v^2 \Vt \phi( \sigma_b^2) + \sigma_a^2)l +\p \pp 0$, and $\p \qq l = \sigma_b^2$.
	\item If $\sigma_w > 0$, then $\p \pp l = b_0 l + b_1 l^{1/2} + b_2 \log l + O(1)$, where
	\begin{align*}
	b_0 &= \sigma_v^2 + \sigma_a^2\\
	b_1 &= \f{-2C \sigma_v^2 \sigma_w^{-1}}{\sqrt{\sigma_v^2 + \sigma_a^2}}\\
	b_2 &= \f{-C^2 \sigma_v^4 \sigma_w^{-2}}{(\sigma_v^2 + \sigma_a^2)^2}
	\end{align*}
	and $C = \sqrt{\f 2 \pi}$.
	Additionally, $\p \qq l = \sigma_w^2 b_0 l + \sigma_w^2 b_1 l^{1/2} + \sigma_w^2 b_2 \log l + O(1)$.
\end{itemize}
\end{restatable}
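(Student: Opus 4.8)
The strategy mirrors that of \cref{thm:p_q_linear}: by \cref{thm:fullResPQRec} the FRN recurrence for $\p \pp l$ is the RRN one with an extra affine dressing, so I first eliminate $\qq$ by substituting $\p \qq l = \sigma_w^2 \p \pp {l-1} + \sigma_b^2$ into the $\pp$-recurrence, obtaining the scalar autonomous difference equation
\[
\p \pp l - \p \pp {l-1} = g(\p \pp {l-1}), \qquad g(u) := \sigma_v^2\, \Vt \phi(\sigma_w^2 u + \sigma_b^2) + \sigma_a^2 .
\]
If $\sigma_w = 0$ then $\qq$ is frozen at $\sigma_b^2$, so $g$ is the constant $\sigma_v^2 \Vt \phi(\sigma_b^2) + \sigma_a^2$ and the equation telescopes to the stated linear formula; so assume $\sigma_w > 0$, and (as in \cref{thm:p_q_linear}) take $\phi = \tanh$ for the detailed expansion --- the constant $C = \sqrt{2/\pi}$ is the tanh constant, and a generic tanh-like $\phi$ gives only the leading term $\p \pp l \sim b_0 l$ unless a decay rate on $1 - \phi^2$ is imposed. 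Once $\p \pp l$ has been resolved to the stated precision, $\p \qq l = \sigma_w^2 \p \pp {l-1} + \sigma_b^2$ yields the $\qq$-expansion immediately, after re-expanding $(l-1)^{1/2} = l^{1/2} + O(l^{-1/2})$ and $\log(l-1) = \log l + O(l^{-1})$, which only perturb the $O(1)$ term.

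\emph{Leading order.} Tanh-likeness gives $\phi^2 \le 1$ and $\phi(z)^2 \to 1$, so $\Vt \phi(q) \uparrow 1$ as $q \to \infty$; hence $g$ is increasing, $g(u) < b_0 := \sigma_v^2 + \sigma_a^2$, and $g(u) \uparrow b_0$. Thus $\p \pp l$ is strictly increasing and cannot converge (a finite limit would force $g(\p \pp {l-1}) \to b_0 > 0$), so $\p \pp l \to \infty$, and Stolz--Ces\`aro then gives $\p \pp l / l \to b_0$.

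\emph{Finer terms.} The engine is a Watson/Laplace expansion of $\Vt \phi$ at $q = \infty$: with $\phi = \tanh$, using $1 - \tanh^2 = \sech^2$ and $\int_{\R} \sech^2 z \, dz = 2$,
\[
1 - \Vt \tanh(q) = \EV[\sech^2 z : z \sim \Gaus(0, q)] = \frac{1}{\sqrt{2\pi q}} \int_{\R} \sech^2 z \, e^{-z^2/(2q)} \, dz = C q^{-1/2} + O(q^{-3/2}),
\]
with $C = 2/\sqrt{2\pi} = \sqrt{2/\pi}$ (expand $e^{-z^2/(2q)}$; each $\int_{\R} z^{2k}\sech^2 z \, dz$ is finite). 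Feeding $q = \sigma_w^2 \p \pp {l-1} + \sigma_b^2$ and $\p \pp {l-1} \sim b_0 l$ gives $b_0 - g(\p \pp {l-1}) = C\sigma_v^2 \sigma_w^{-1} (\p \pp {l-1})^{-1/2}(1 + o(1))$. Writing $\p \pp l = b_0 l + r_l$, we get $r_l - r_{l-1} = -(b_0 - g(\p \pp {l-1})) \sim -C\sigma_v^2 \sigma_w^{-1} b_0^{-1/2} l^{-1/2}$; summing against $\sum_{k \le l} k^{-1/2} = 2\sqrt l + O(1)$ yields $r_l = b_1 l^{1/2} + o(l^{1/2})$ with $b_1 = -2C\sigma_v^2 \sigma_w^{-1}/\sqrt{b_0}$. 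Iterating once more --- substitute $\p \pp {l-1} = b_0(l-1) + b_1(l-1)^{1/2} + \cdots$ into $(\p \pp {l-1})^{-1/2}$ to extract its $l^{-1}$ coefficient, and expand $l^{1/2} - (l-1)^{1/2} = \tfrac12 l^{-1/2} + O(l^{-3/2})$ and $\log l - \log(l-1) = l^{-1} + O(l^{-2})$ --- forces the coefficient of $\log l$ to be $b_2 = -b_1^2/(4 b_0) = -C^2 \sigma_v^4 \sigma_w^{-2}/b_0^2$, and leaves an increment of size $O(l^{-3/2})$ plus a self-feedback term (see below).

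\emph{Main obstacle: error bookkeeping.} The delicate part is pushing the hierarchy $b_0 l,\ b_1 l^{1/2},\ b_2 \log l,\ O(1)$ through the nonlinear recurrence while controlling accumulation over the $\sim l$ steps. The residual $F_{l-1} := \p \pp {l-1} - b_0(l-1) - b_1(l-1)^{1/2} - b_2 \log(l-1)$ feeds back on itself via $(\p \pp {l-1})^{-1/2}$, contributing $O(F_{l-1}\, l^{-3/2})$ to $F_l - F_{l-1}$ on top of a summable $O(l^{-3/2})$ floor. A short bootstrap closes this: $F_l = o(l^{1/2}) \Rightarrow$ feedback $= o(l^{-1}) \Rightarrow F_l = o(\log l) \Rightarrow$ feedback is summable $\Rightarrow F_l$ converges, so $F_l = O(1)$. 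The remaining lesser points are routine: verifying $1 - \Vt \tanh(q) = Cq^{-1/2} + O(q^{-3/2})$ rigorously by dominated convergence, and tracking the partial-sum asymptotics $\sum_{k \le l} k^{-1/2} = 2\sqrt l + O(1)$, $\sum_{k \le l} k^{-1} = \log l + O(1)$, $\sum_{k \le l} k^{-1}\log k = O((\log l)^2)$ so that the constants $b_1, b_2$ come out exactly as stated. This is the same machinery already used for \cref{thm:p_q_linear}.
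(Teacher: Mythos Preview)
Your proposal is correct and takes essentially the same approach as the paper: both use the key expansion $\Vt\tanh(q) = 1 - Cq^{-1/2} + O(q^{-3/2})$ (the paper's \cref{lemma:vtanhSqrtConvergence}), substitute $\p \qq l = \sigma_w^2 \p \pp {l-1} + \sigma_b^2$ into the FRN recurrence of \cref{thm:fullResPQRec}, and then iteratively extract the $l$, $l^{1/2}$, $\log l$ terms by summing the increments with the partial-sum estimates for $\sum k^{-1/2}$ and $\sum k^{-1}$. The paper presents this as an ansatz-and-match computation (positing $\p\pp l = b_0 l + b_1 l^{1/2} + b_2 \log l + O(1)$ and solving the self-consistency equations for $b_0,b_1,b_2$), leaning on the machinery already laid out in the proof of \cref{thm:p_q_linear}; you instead build the expansion up one term at a time and close the $O(1)$ remainder with an explicit bootstrap on the feedback $O(F_{l-1}\,l^{-3/2})$ --- a slightly more self-contained packaging of the same argument, arriving at identical coefficients (indeed your $b_2 = -b_1^2/(4b_0)$ matches the paper's $b_2 = \tfrac12 C\sigma_v^2\sigma_w^{-1} b_1 b_0^{-3/2}$).
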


\begin{figure}[]
	\centering
	\includegraphics[height=.16\textheight]{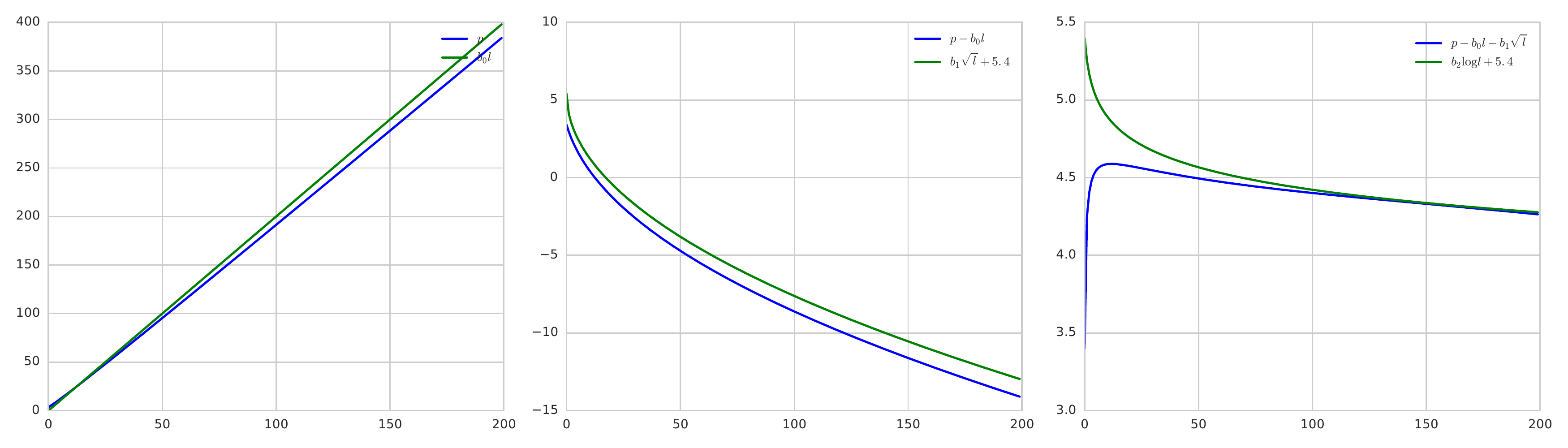}
	\caption{Empirical verification of \cref{thm:pIsLinearTanh}.}
\end{figure}

\begin{restatable}{thm}{LGRecFullRes}\label{thm:full_res_l_g_recurr}
For any nonlinearity $\phi$, in an FRN
\begin{align*}
\lambda &= \sigma_w^2 \prv \gamma + \sigma_b^2\\
\gamma &= \sigma_v^2 \Wt \phi(\qq, \lambda) + \sigma_a^2 + \prv \gamma
\end{align*}
\end{restatable}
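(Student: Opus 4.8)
The plan is to mirror the derivation of the FRN length recurrence (\cref{thm:fullResPQRec}) but for the cross-correlation quantities, using the central limit theorem heuristic described in the Overview together with \cref{ass:symAct,ass:gradInd}. First I would fix the two inputs $\p x 0$ and $\p x 0{}'$, and recall from the FRN definition that $\p h l _i = \sum_j \p w l_{ij}\p x {l-1}_j + \p b l _i$ and $\p x l_i = \sum_j \p v l_{ij}\phi(\p h l _j) + \p x {l-1}_i + \p a l _i$. The goal is to compute $\p \lambda l = \la \p h l _1 \p h l _1{}'\ra$ and $\p \gamma l = \la \p x l _1 \p x l _1{}'\ra$ in the large-width limit.

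For the $\lambda$ recurrence: expanding $\p h l _1 \p h l _1{}' = \big(\sum_j \p w l_{1j}\p x {l-1}_j + \p b l_1\big)\big(\sum_k \p w l_{1k}\p x {l-1}_k{}' + \p b l_1\big)$, I take the expectation over the layer-$l$ weights and biases, which are independent of the layer-$(l-1)$ activations. Cross terms involving a single $\p w l$ or a single $\p b l$ vanish by zero mean; the bias square term contributes $\sigma_b^2$; the weight terms contribute $\sum_{j,k}\la \p w l_{1j}\p w l_{1k}\ra \la \p x {l-1}_j \p x {l-1}_k{}'\ra = \tfrac{\sigma_w^2}{N}\sum_j \la \p x {l-1}_j \p x {l-1}_j{}'\ra = \sigma_w^2 \p \gamma {l-1}$ using $\la \p w l_{1j}\p w l_{1k}\ra = (\sigma_w^2/N)\delta_{jk}$ and \cref{ass:symAct} to replace the per-coordinate average by $\p \gamma {l-1}$. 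This gives $\lambda = \sigma_w^2\prv\gamma + \sigma_b^2$.

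For the $\gamma$ recurrence: expand $\p x l_1 \p x l_1{}' = \big(\sum_j \p v l_{1j}\phi(\p h l _j) + \p x {l-1}_1 + \p a l_1\big)\big(\sum_k \p v l_{1k}\phi(\p h l _k{}') + \p x {l-1}_1{}' + \p a l_1\big)$ and take expectations, again using that $\p v l, \p a l$ are fresh and independent of $\p h l, \p x {l-1}$. The $\p a l$ square term gives $\sigma_a^2$; the passthrough-passthrough term gives $\la \p x {l-1}_1 \p x {l-1}_1{}'\ra = \prv\gamma$; all cross terms linear in $\p v l$ or $\p a l$ vanish; and the $\p v l$-$\p v l$ term is $\tfrac{\sigma_v^2}{N}\sum_j \la \phi(\p h l _j)\phi(\p h l _j{}')\ra$. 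Here I invoke the CLT: $(\p h l _j, \p h l _j{}')$ is jointly approximately Gaussian with mean zero and covariance $\begin{pmatrix}\p \qq l & \p \lambda l\\ \p \lambda l & \p \qq l{}'\end{pmatrix}$, which by our standing assumption $\p\pp 0 = \p\pp 0{}'$ equals $\begin{pmatrix}\p\qq l & \p\lambda l \\ \p\lambda l & \p\qq l\end{pmatrix}$, so that $\la \phi(\p h l_j)\phi(\p h l_j{}')\ra = \Wt\phi(\p\qq l, \p\lambda l)$ by \cref{defn:integralTransform}, independent of $j$. Summing over the $N$ terms and dividing, the $\p v l$-$\p v l$ contribution is $\sigma_v^2 \Wt\phi(\qq,\lambda)$, giving $\gamma = \sigma_v^2\Wt\phi(\qq,\lambda) + \sigma_a^2 + \prv\gamma$.

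The main obstacle — and the point that the ``physical assumptions'' are designed to finesse — is justifying the Gaussianity of $(\p h l, \p h l{}')$ and the replacement of the empirical averages $\tfrac1N\sum_j \phi(\p h l_j)\phi(\p h l_j{}')$ by their expectation, since the summands $\phi(\p h l_j)\phi(\p h l_j{}')$ are not exactly independent across $j$ once one conditions on earlier layers; this is precisely where the large-$N$ mean field limit and \cref{ass:symAct} are used, exactly as in the proof of \cref{thm:fullResPQRec}, and I would simply note that the argument there carries over verbatim. The only mild new point relative to the length case is keeping track of the off-diagonal entry of the covariance matrix when applying the CLT, but no genuinely new estimate is required.
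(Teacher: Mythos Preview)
Your proposal is correct and matches the paper's approach exactly: the paper's proof is the single line ``Similar to \cref{thm:fullResPQRec},'' and what you have written is precisely that similarity spelled out, expanding the bilinear forms, killing the cross terms via the independence and zero mean of the fresh layer-$l$ weights, and invoking the CLT for the joint Gaussianity of $(h_j, h_j')$ to identify the $\Wt\phi(\qq,\lambda)$ term.
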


\begin{restatable}{thm}{eDynamicsFullResTanh} \label{thm:eDynamicsFullResTanh}
	Assume $\phi = \tanh$ in an FRN.
	Suppose $\p \ee 0 < 1$.
	\begin{itemize}
		\item If $\sigma_w = 0$, then $\p \lambda l = \sigma_b^2$ and $\p \gamma l = l (\sigma_v^2 \Wt \phi( \sigma_b^2, \sigma_b^2) + \sigma_a^2) + \p \gamma 0 = l (\sigma_v^2 \Vt \phi( \sigma_b^2) + \sigma_a^2) + \p \gamma 0$.
		Thus $\p \ee l \to 1$ and $1 - \p \ee l = \Theta(l^{-1})$.
		As a result, $\p \mfs l = \p \pp l (1 - \p \ee l) = \Theta(1).$
		\item If $\sigma_w > 0$, then $\p \ee l$ converges to the unique fixed point $\ee^* \not = 1$ determined by the equation
		$$\ee^* = \f 1 {\sigma_v^2 + \sigma_a^2}[\sigma_v^2 \f 2 \pi \arcsin\lp \ee^* \rp + \sigma_a^2].$$
		Furthermore, $\p \ee l$ converges to $\ee^*$ polynomially:
		$|\p \ee l - \ee^*|$ is $\TTheta(l^{-\delta^*})$, where
		$$\delta^* := 1 - \f 2 \pi \f 1 {\sqrt{1 - (\ee^*)^2}} \f{\sigma_v^2 }{\sigma_v^2 + \sigma_a^2} \in [\f 2 \pi - 1, \f 1 2)$$
		Since $\ee^* < 1$, $\p \mfs l = \Theta(\p \pp l) = \Theta(l).$
	\end{itemize}
\end{restatable}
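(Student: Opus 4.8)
The plan is to treat $\sigma_w = 0$ by a one-line computation and, for $\sigma_w > 0$, to collapse the coupled recurrences of \cref{thm:fullResPQRec} and \cref{thm:full_res_l_g_recurr} into a single scalar difference equation of the form $\p \ee l - \p \ee {l-1} = l^{-1}(g(\p \ee {l-1}) - \p \ee {l-1}) + O(l^{-3/2})$ and analyze it. When $\sigma_w = 0$ one has $\p \qq l = \p \lambda l = \sigma_b^2$ for all $l \ge 1$, and since a bivariate Gaussian whose diagonal and off-diagonal entries agree is supported on the diagonal, $\Wt \tanh(\sigma_b^2, \sigma_b^2) = \Vt \tanh(\sigma_b^2)$; hence $\p \pp l$ and $\p \gamma l$ are arithmetic progressions with the same common difference, differing only through the offsets $\p \pp 0$ and $\p \gamma 0$. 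Since $\p \pp 0 - \p \gamma 0 = (1 - \p \ee 0)\p \pp 0 > 0$, this gives $\p \mfs l = \p \pp l - \p \gamma l = \Theta(1)$ and $1 - \p \ee l = (\p \pp 0 - \p \gamma 0)/\p \pp l = \Theta(l^{-1})$, so $\p \ee l \to 1$.

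Now assume $\sigma_w > 0$. First I would record the large-argument behaviour of the transforms. Writing $z = \sqrt q\,\zeta$ with $\zeta \sim \Gaus(0,1)$, a change of variables together with $\int_0^\infty (1 - \tanh u)\,du = \log 2$ gives $\EV[\,|\tanh(\sqrt q\,\zeta) - \operatorname{sgn}(\zeta)|\,] \le 2\log 2 / \sqrt{2\pi q}$, and combined with Sheppard's identity $\EV[\operatorname{sgn}(\zeta)\operatorname{sgn}(\zeta')] = \f 2 \pi \arcsin \rho$ for unit Gaussians of correlation $\rho$ this yields, uniformly in $\nu$, $\Wt \tanh(q, \nu) = \f 2 \pi \arcsin(\nu/q) + O(q^{-1/2})$ and $\Vt \tanh(q) = 1 + O(q^{-1/2})$. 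Next, \cref{thm:pIsLinearTanh} gives $\p \pp l = b_0 l + O(l^{1/2})$ and $\p \qq l = \Theta(l)$ with $b_0 = \sigma_v^2 + \sigma_a^2 > 0$; from the recurrences one also reads off $\p \cc l = \p \lambda l / \p \qq l = \p \ee {l-1} + O(l^{-1})$ and $\p \pp l - \p \pp {l-1} = \sigma_v^2 \Vt \tanh(\p \qq l) + \sigma_a^2 = b_0 + O(l^{-1/2})$. Plugging these into $\p \gamma l = \sigma_v^2 \Wt \tanh(\p \qq l, \p \lambda l) + \sigma_a^2 + \p \gamma {l-1}$, dividing by $\p \pp l$, and substituting $\p \gamma {l-1} = \p \ee {l-1} \p \pp {l-1}$ produces the scalar recurrence above with $g(\ee) := b_0^{-1}(\sigma_v^2 \f 2 \pi \arcsin \ee + \sigma_a^2)$ --- the derivation being legitimate once $\p \ee {l-1}$ is known to stay bounded away from $1$, which is checked next.

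On $[0,1)$ the map $g$ is increasing and strictly convex, with $g(0) = \sigma_a^2/b_0 \ge 0$, $g(1) = 1$ and $g'(1^-) = +\infty$; hence $g(\ee) - \ee$, being convex, nonnegative at $0$, and vanishing at $1$ with infinite negative slope there, has exactly one further zero $\ee^* \in [0,1)$ --- the unique solution of the displayed fixed-point equation --- at which $g'(\ee^*) < 1$, while $\ee = 1$ is repelling from below. Setting $\delta^* := 1 - g'(\ee^*)$, the claimed range $\delta^* \in [1 - \f 2 \pi, \f 1 2)$ comes from a direct computation of $g'(\ee^*)$ along the fixed-point curve, the bound $\f 1 2$ being approached only as $\ee^* \to 1$. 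A trapping argument then gives $\p \ee l \to \ee^*$ for any $\p \ee 0 < 1$: off a neighbourhood of $\ee^*$ the drift $g(\ee) - \ee$ points toward $\ee^*$ and, even near $\ee = 1$ where it is only of order $-\sqrt{1 - \ee}$, dominates the $O(l^{-3/2})$ forcing whenever $1 - \ee \gg l^{-1}$, so $\p \ee l$ cannot escape toward $1$ and is eventually bounded away from it. For the rate, set $\eta_l := \p \ee l - \ee^*$ and Taylor-expand $g$ at $\ee^*$ to get $\eta_l = (1 - \delta^*/l)\eta_{l-1} + O(l^{-1}\eta_{l-1}^2) + O(l^{-3/2})$. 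Looking at the ratios of $w_l := \eta_l\, l^{\delta^*}$, one finds $w_l/w_{l-1} = 1 + O(l^{-1}\eta_{l-1}) + O(l^{-3/2}/\eta_{l-1}) + O(l^{-2})$, and the corresponding series is absolutely convergent --- $\sum_l l^{-1}\eta_{l-1}$ converges since $\eta_l = O(l^{-\delta^*})$ with $\delta^* > 0$, and $\sum_l l^{\delta^* - 3/2}$ converges \emph{precisely because $\delta^* < \f 1 2$} --- so $w_l$ tends to a finite limit and $|\eta_l| = O(l^{-\delta^*})$; checking that the limit is nonzero (the forcing and the homogeneous part do not cancel) upgrades this to $|\p \ee l - \ee^*| = \TTheta(l^{-\delta^*})$. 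Finally, $\ee^* < 1$ gives $\p \mfs l = (1 - \p \ee l)\p \pp l = \Theta(\p \pp l) = \Theta(l)$ by \cref{thm:pIsLinearTanh}.

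The hard part is the reduction to the scalar equation and its perturbative analysis: establishing the $O(q^{-1/2})$ error in $\Wt \tanh$ with enough uniformity in $\nu$ (done cleanly by the $\log 2$ estimate), and then showing that the $O(l^{-3/2})$ forcing and the $O(l^{-1}\eta_{l-1}^2)$ term disturb neither the limit $\ee^*$ nor the exponent $\delta^*$ --- the latter resting on $\delta^* < \f 1 2$ --- and in particular securing the lower bound $\Omega(l^{-\delta^*-\eps})$ near the repelling point $\ee = 1$, where $g'$ is infinite.
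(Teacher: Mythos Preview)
Your approach is correct and, in one respect, more elementary than the paper's. The paper spends considerable effort (\cref{lemma:WtPhiAntisymmetric}, \cref{lemma:WtExtension}, \cref{lemma:Wt_le_arcsin}) proving $\Wt\tanh(q,cq) = \tfrac{2}{\pi}\arcsin c + \Theta(q^{-1})$ via a polar-coordinate representation and a smooth-extension argument, with constants uniform only for $c$ bounded away from $0$ and $1$. Your route --- compare $\tanh$ to $\operatorname{sgn}$ in $L^1$ via the closed form $\int_0^\infty(1-\tanh u)\,du = \log 2$ and then invoke Sheppard's identity --- yields only $O(q^{-1/2})$ but \emph{uniformly} in $c$, and this weaker bound suffices: after dividing by $\p\pp l \sim b_0 l$ it contributes $O(l^{-3/2})$ to the forcing, the same order as the dominant $(1-\Vt\tanh)$ term the paper isolates in \cref{thm:TanhFullResConvergenceRate}. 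The uniformity also lets your trapping argument near $\ee=1$ bypass the abstract fixed-point lemma (\cref{lemma:timeDependentConvergence}) the paper uses for convergence. What the paper's sharper $\Theta(q^{-1})$ buys is a cleaner separation of error sources, but it is not needed for the $\TTheta$ conclusion.

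One step should be tightened. In your rate analysis you pass to the ratio $w_l/w_{l-1}$ and bound one error term by $O(l^{\delta^*-3/2})$; this silently assumes $|\eta_{l-1}| = \Omega(l^{-\delta^*})$, which is part of what you are proving. The paper avoids this circularity by sandwiching the recursion $\eps = \Theta(l^{-3/2}) + \prv\eps(1-\p\delta l/l)$ between two instances of \cref{lemma:alphaDeltaDynamics} (with $\alpha=3/2$, $\beta=1$, $\delta = -(\delta^*\pm\varepsilon)$), which solves the forced linear recursion directly. Your argument is easily repaired the same way: write the solution as homogeneous-plus-particular and note that both pieces are $O(l^{-\delta^*})$ because $\sum m^{\delta^*-3/2}$ converges --- which is exactly where $\delta^* < \tfrac12$ enters, as you say. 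Neither your argument nor the paper's actually nails down a sharp $\Theta$ on the lower side; both settle for the $\TTheta$ the theorem claims.
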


\begin{restatable}{thm}{dalethRecFull} \label{thm:dalethRecFull}
For any nonlinearity $\phi$ in an FRN, under assumptions \cref{ass:symAct} and \cref{ass:gradInd}, whenever $\dot \phi(\zeta)^2$ has finite variance for Gaussian variable $\zeta$,
\begin{align*}
\prv \daleth &= (\sigma_v^2\sigma_w^2  \Vt \dot \phi( \qq) + 1)\daleth,&
\chi_b &= \sigma_v^2\daleth \Vt\dot \phi( \qq),\\
\chi_w &= \sigma_v^2\daleth \Vt \dot \phi( \qq) \prv \pp,&
\chi_v &= \daleth\Vt \phi( \qq),&
\chi_a &= \daleth 
\end{align*}
\end{restatable}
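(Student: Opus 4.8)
For any nonlinearity $\phi$ in an FRN, under \cref{ass:symAct} and \cref{ass:gradInd}, whenever $\dot\phi(\zeta)^2$ has finite variance for Gaussian $\zeta$,
\[
\prv \daleth = (\sigma_v^2\sigma_w^2 \Vt\dot\phi(\qq) + 1)\daleth,\quad \chi_b = \sigma_v^2\daleth\Vt\dot\phi(\qq),\quad \chi_w = \sigma_v^2\daleth\Vt\dot\phi(\qq)\prv\pp,\quad \chi_v = \daleth\Vt\phi(\qq),\quad \chi_a = \daleth.
\]

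**Proof proposal.**

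The plan is to differentiate the FRN recurrence $x_i = \sum_j v_{ij}\phi(h_j) + \prv x_i + a_i$, $h_i = \sum_j w_{ij}\prv x_j + b_i$ by the chain rule, write each gradient squared norm as an expectation over the layer-$l$ randomness, and then push all expectations through using \cref{ass:gradInd} (independence of the layer-$l$ gradient from layer-$l$ activations and forward weights, plus a fresh i.i.d.\ copy of the weights for backprop) and \cref{ass:symAct} (symmetry, so the index does not matter). First I would compute $\pd E/\pd \prv x_j = \pd E/\pd x_j + \sum_i (\pd E/\pd x_i) v_{ij}\dot\phi(h_j) w_{ij}$, where the first term comes from the skip connection and the second from the path through $\phi$ and $h$. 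Squaring, taking $\la\cdot\ra$, and using that the backprop weights $v_{ij}, w_{ij}$ are independent mean-zero with variances $\sigma_v^2/N, \sigma_w^2/N$ and independent of $\pd E/\pd x_i$ (by \cref{ass:gradInd}(a,b)), the cross term vanishes in expectation and the sum over $i$ of the second term concentrates: $\la(\sum_i (\pd E/\pd x_i) v_{ij} w_{ij}\dot\phi(h_j))^2\ra \to \sigma_v^2\sigma_w^2 \la(\pd E/\pd x_1)^2\ra\,\la\dot\phi(h_1)^2\ra = \sigma_v^2\sigma_w^2\daleth\,\Vt\dot\phi(\qq)$, using $\la\dot\phi(h_1)^2\ra = \Vt\dot\phi(\qq)$ since $h_1\sim\Gaus(0,\qq)$ in the large-width limit, and the finite-variance hypothesis on $\dot\phi(\zeta)^2$ to justify the law-of-large-numbers step. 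Adding the $\la(\pd E/\pd x_j)^2\ra = \daleth$ term gives $\prv\daleth = (\sigma_v^2\sigma_w^2\Vt\dot\phi(\qq) + 1)\daleth$.

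For the parameter gradients I would use $\pd E/\pd a_i = \pd E/\pd x_i$ directly, giving $\chi_a = \daleth$; $\pd E/\pd b_i = \sum_k (\pd E/\pd x_k) v_{ki}\dot\phi(h_i)$, so $\chi_b = \la(\sum_k (\pd E/\pd x_k)v_{ki}\dot\phi(h_i))^2\ra = \sigma_v^2\daleth\Vt\dot\phi(\qq)$ by the same independence/concentration argument; $\pd E/\pd v_{ij} = (\pd E/\pd x_i)\phi(h_j)$, so $\chi_v = \la(\pd E/\pd x_1)^2\ra\la\phi(h_1)^2\ra = \daleth\Vt\phi(\qq)$ (here using \cref{ass:gradInd}(b) so that $\pd E/\pd x_i$ is independent of $h_j$, and $\la\phi(h_1)^2\ra = \Vt\phi(\qq)$); and $\pd E/\pd w_{ij} = (\pd E/\pd h_i)\prv x_j$ with $\pd E/\pd h_i = \sum_k(\pd E/\pd x_k)v_{ki}\dot\phi(h_i)$, giving $\chi_w = \la(\pd E/\pd h_1)^2\ra\la\prv x_1^2\ra = (\sigma_v^2\daleth\Vt\dot\phi(\qq))\prv\pp$, i.e.\ $\chi_w = \sigma_v^2\daleth\Vt\dot\phi(\qq)\prv\pp$. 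In each case the key mechanical point is that a sum $\sum_k (\pd E/\pd x_k)^2 (\text{backprop weight})^2$ concentrates to (variance of the weight)$\times N\times\la(\pd E/\pd x_1)^2\ra/N = \sigma_\bullet^2\daleth$, while cross terms die by mean-zero independence.

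The main obstacle is not any single calculation but the careful bookkeeping of \emph{which} independence is being invoked at each step: the derivation is only valid within the mean-field formalism, where \cref{ass:gradInd}(a) replaces the true backprop weights with an independent copy (so the forward $h_j$, $\prv x_j$ and the backward weights are genuinely independent) and \cref{ass:gradInd}(b) decouples $\pd E/\pd x^{(l)}$ from the layer-$l$ activations. I would state explicitly at the outset that all equalities are in the $N\to\infty$ limit and hold under these assumptions, note that $h_1\sim\Gaus(0,\qq)$ and $x_1$ has second moment $\prv\pp$ follow from \cref{thm:fullResPQRec} and the CLT, and then the finite-variance hypothesis on $\dot\phi(\zeta)^2$ is exactly what is needed so that $\frac1N\sum_j\dot\phi(h_j)^2\to\Vt\dot\phi(\qq)$ with the error terms being lower order. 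The RRN identities in \cref{thm:dalethRecReduced} are the special case $\sigma_v^2=1$, $\sigma_a^2=0$ (with $\chi_v,\chi_a$ absent), so the same argument specializes; I would remark on this to avoid repetition.
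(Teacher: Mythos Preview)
Your approach matches the paper's: backpropagate through the FRN recurrence, square, average, use \cref{ass:gradInd} to kill cross terms and factor expectations, and use \cref{ass:symAct} to replace every $\la(\partial E/\partial x_i)^2\ra$ by $\daleth$. Your derivations of $\chi_a$, $\chi_v$, $\chi_b$, and $\chi_w$ are correct as written.

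There is, however, a bookkeeping error in the $\prv\daleth$ step that, taken literally, breaks the argument. From $x_i=\sum_k v_{ik}\phi(h_k)+\prv x_i+a_i$ and $h_k=\sum_m w_{km}\prv x_m+b_k$ the chain rule gives a \emph{double} sum,
\[
\frac{\partial E}{\partial \prv x_j}=\frac{\partial E}{\partial x_j}+\sum_{i}\sum_{k}\frac{\partial E}{\partial x_i}\,v_{ik}\,\dot\phi(h_k)\,w_{kj},
\]
not the single sum $\sum_i(\partial E/\partial x_i)\,v_{ij}\dot\phi(h_j)w_{ij}$ you wrote (whose indices are also inconsistent: $w_{ij}$ connects $\prv x_j$ to $h_i$, not to $h_j$). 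The missing sum matters for the scaling: with only one sum of length $N$ but two independent weights of variance $\Theta(1/N)$, the surviving diagonal after squaring and averaging is $N\cdot\daleth\cdot(\sigma_v^2/N)(\sigma_w^2/N)\,\Vt\dot\phi(\qq)\to 0$, which would yield $\prv\daleth=\daleth$. With the correct double sum, the diagonal $(i=i',\,k=k')$ contributes $N^2$ such terms, the factors of $N$ cancel, and you recover $\sigma_v^2\sigma_w^2\,\daleth\,\Vt\dot\phi(\qq)$ as stated. This is exactly the paper's computation (it writes $\prv\beth_j=\sum_i\beth_i(\delta_{ji}+\sum_k v_{ik}\dot\phi(h_k)w_{kj})$ and then isolates the diagonal using the independence of the backprop $v$'s and $w$'s). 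Once you restore the sum over the hidden index $k$, your proof is complete and identical to the paper's.
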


\begin{restatable}{thm}{dalethExpSqrtTanhFullRes}\label{thm:dalethExpSqrtTanhFullRes}
Assume $\phi = \tanh$ in an FRN.
\begin{itemize}
	\item If $\sigma_w = 0$, $\p \daleth m = \p \daleth l$ for all $l, m$.
	\item If $\sigma_w > 0$, then for $l \ge m \ge 0,$
	$$\log(\p \daleth {m} / \p \daleth l) = \mathcal A (\sqrt l - \sqrt m) + \mathcal B (\log l - \log m) + O(1)$$
	where
	\begin{align*}
	\mathcal A &= \f 4 3 \sqrt{\f 2 \pi} \f{\sigma_v^2 \sigma_w}{\sqrt{\sigma_v^2 + \sigma_a^2}}\\
	\mathcal B &= \f 4 {9\pi}\f{ \sigma_v^4 }{\sigma_v^2 + \sigma_a^2}\lp \f 3 {\sigma_v^2 + \sigma_a^2} - \sigma_w^2\rp
	\end{align*}
\end{itemize}
\end{restatable}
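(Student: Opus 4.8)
The plan is to unroll the backward recurrence from \cref{thm:dalethRecFull}. For $\phi = \tanh$ we have $\dot\phi = \sech^2$, which is bounded, so the finite-variance hypothesis of \cref{thm:dalethRecFull} is met and the recurrence reads $\p\daleth{k-1} = \big(\sigma_v^2\sigma_w^2\,\Vt\dot\phi(\p\qq k) + 1\big)\,\p\daleth k$. All multipliers are strictly larger than $1$, so iterating and taking logarithms gives
\begin{align*}
\log\!\big(\p\daleth m/\p\daleth l\big) = \sum_{k=m+1}^{l}\log\!\big(1 + \sigma_v^2\sigma_w^2\,\Vt\dot\phi(\p\qq k)\big).
\end{align*}
If $\sigma_w = 0$ every summand is $\log 1 = 0$, which gives the first bullet; so assume $\sigma_w > 0$ and reduce to the asymptotic evaluation of this sum. (The edge cases $m\in\{0,1\}$, where $\log m$ is meaningless, are absorbed into the $O(1)$, since $\p\daleth 0$ and $\p\daleth 1$ differ only by the bounded factor $\sigma_v^2\sigma_w^2\Vt\dot\phi(\p\qq 1)+1$.)

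Next I would pin down the large-$q$ behavior of $\Vt\dot\phi(q) = \EV[\sech^4(z):z\sim\Gaus(0,q)] = (2\pi q)^{-1/2}\int_{\mathbb R}\sech^4(z)\,e^{-z^2/2q}\,dz$. Since $\int\sech^4 = 4/3$ and $\int z^2\sech^4\,dz < \infty$ (indeed $\sech^4$ decays exponentially), expanding the Gaussian weight and integrating term by term yields $\Vt\dot\phi(q) = \tfrac43(2\pi q)^{-1/2} + O(q^{-3/2})$. Now substitute the expansion $\p\qq k = \sigma_w^2 b_0 k + \sigma_w^2 b_1 k^{1/2} + O(\log k)$ from \cref{thm:pIsLinearTanh} (so $b_0 = \sigma_v^2 + \sigma_a^2$, $b_1 = -2C\sigma_v^2\sigma_w^{-1}/\sqrt{b_0}$, $C = \sqrt{2/\pi}$): factoring $k^{-1/2}$ out of $(\p\qq k)^{-1/2}$ and expanding the residual square root, then multiplying by $\sigma_v^2\sigma_w^2$, gives
\begin{align*}
\sigma_v^2\sigma_w^2\,\Vt\dot\phi(\p\qq k) = \tfrac12\mathcal A\,k^{-1/2} \;-\; \tfrac14\mathcal A\,\tfrac{b_1}{b_0}\,k^{-1} \;+\; O(k^{-3/2}\log k),
\end{align*}
where the leading coefficient works out to exactly $\tfrac12\mathcal A$ with $\mathcal A = \tfrac43\sqrt{2/\pi}\,\sigma_v^2\sigma_w/\sqrt{\sigma_v^2+\sigma_a^2}$. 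Applying $\log(1+x) = x - \tfrac12 x^2 + O(x^3)$ turns the summand into $\tfrac12\mathcal A\,k^{-1/2} + \mathcal B\,k^{-1} + O(k^{-3/2}\log k)$ with $\mathcal B = -\tfrac14\mathcal A\,b_1/b_0 - \tfrac18\mathcal A^2$; substituting the values of $\mathcal A, b_0, b_1$ and simplifying confirms $\mathcal B = \tfrac{4}{9\pi}\tfrac{\sigma_v^4}{\sigma_v^2+\sigma_a^2}\big(\tfrac{3}{\sigma_v^2+\sigma_a^2} - \sigma_w^2\big)$, matching the statement.

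Finally, I would sum from $k = m+1$ to $l$ using $\sum_{k=1}^{n}k^{-1/2} = 2\sqrt n + O(1)$, $\sum_{k=1}^{n}k^{-1} = \log n + O(1)$, and the convergence of $\sum_k k^{-3/2}\log k$, which produces $\mathcal A(\sqrt l - \sqrt m) + \mathcal B(\log l - \log m) + O(1)$. The only delicate part is the middle step: one must carry the expansions of both $\Vt\dot\phi$ and $\p\qq k$ to just enough orders that the $k^{-1}$ coefficient of $\log(1 + \sigma_v^2\sigma_w^2\Vt\dot\phi(\p\qq k))$ is determined exactly (the $b_1\sqrt k$ term of $\p\qq k$ is what generates the whole ``non-$\mathcal A^2$'' part of $\mathcal B$, via the $-\tfrac14\mathcal A\,b_1/b_0$ contribution), while simultaneously verifying that the $b_2\log k$ term of $\p\qq k$, the $O(q^{-3/2})$ tail of $\Vt\dot\phi$, and the quadratic remainder of the logarithm all sit at order $O(k^{-3/2}\log k)$ and so only feed into the $O(1)$.
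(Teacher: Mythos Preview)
Your proposal is correct and follows essentially the same route as the paper: unroll the recurrence of \cref{thm:dalethRecFull}, use the asymptotic $\Vt\dot\tanh(q)=\tfrac{2}{3}\sqrt{2/\pi}\,q^{-1/2}+O(q^{-3/2})$ (the paper's \cref{lemma:Vt_dot_tanh_asymptotics}), feed in the $\p\qq k$ expansion from \cref{thm:pIsLinearTanh}, expand the logarithm to second order, and sum via \cref{lemma:power_sum_asymptotics}. The paper packages these steps by pointing back to the computation in the proof of \cref{thm:dalethExpSqrtTanh} with $B=\sigma_v^2\sigma_w^2$ and the FRN values of $b_0,b_1$, but the content is identical to what you wrote out explicitly.
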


\cref{fig:tanhasymptoticsgrid} shows empirical verification of the asymptotic expansion of $\daleth$ for various values of $\sigma_\bullet$s.
\begin{figure}
	\centering
	\includegraphics[height=.3\textheight]{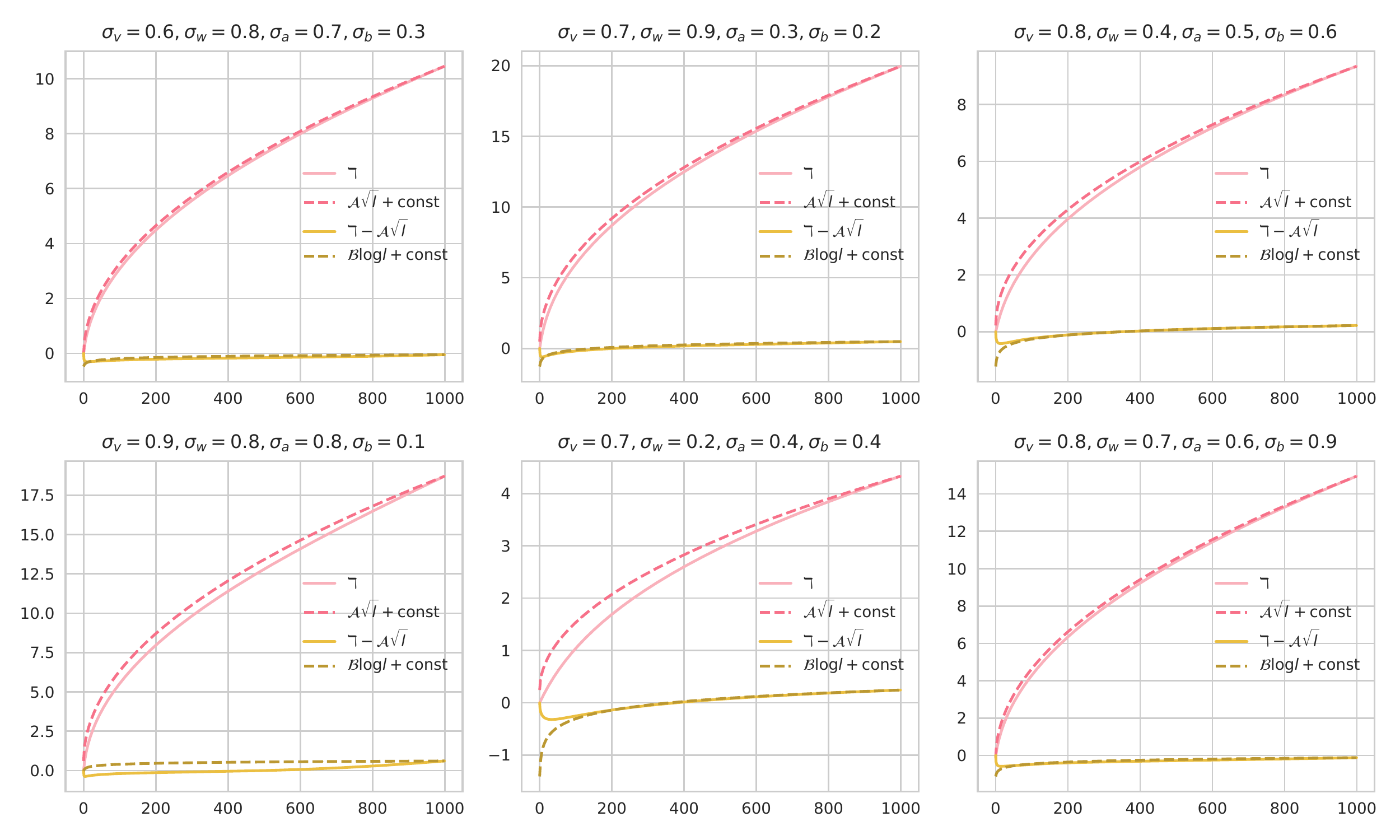}
	\caption{Empirical verification of the asymptotic expansion of $\daleth$ for various values of $\sigma_\bullet$s.
	Note that we have chosen all small values for $\sigma_\bullet$s.
	For larger values, the constant term in \cref{thm:dalethExpSqrtTanhFullRes} begins to dominate (primarily because of the expansion $\log(1+x) = x + \Theta(x^2)$ has large $\Theta$ term when $x$ is large), and $\daleth$ behaves more like $\exp(\Theta(l))$ up to depth 1000.}
	\label{fig:tanhasymptoticsgrid}
\end{figure}

\begin{restatable}{thm}{dalethExpSqrtTanhFullResAllGrad}\label{thm:dalethExpSqrtTanhFullResAllGrad}
Suppose $\phi = \tanh$ in an FRN.
\begin{itemize}
	\item If $\sigma_w = 0$, then
	\begin{align*}
	\p \chi l _b &= \sigma_v^2 \p \daleth L \Vt \dot \phi( \sigma_b^2)\\
	\p \chi l _w &= \sigma_v^2 \p \daleth L \Vt \dot \phi( \sigma_b^2) ( (\sigma_v^2 \Vt \phi( \sigma_b^2) + \sigma_a^2)(l-1) +\p \pp 0)\\
	\p \chi l _v &= \p \daleth L \Vt \phi( \sigma_b^2)\\
	\p \chi l _a &= \p \daleth L.
	\end{align*}
	\item If $\sigma_w > 0$, then for $l \ge m \ge 0,$
	
	\begin{align*}
	\log(\p\chi m _b / \p \chi l _b) &= \mathcal A (\sqrt l - \sqrt m) + \mathcal B_b (\log l - \log m) + O(1)\\
	\log(\p\chi m _w / \p \chi l _w) &= \mathcal A (\sqrt l - \sqrt m) + \mathcal B_w (\log l - \log m) + O(1)\\
	\log(\p\chi m _a / \p \chi l _a) &= \mathcal A (\sqrt l - \sqrt m) + \mathcal B (\log l - \log m) + O(1)\\
	\log(\p\chi m _v / \p \chi l _v) &= \mathcal A (\sqrt l - \sqrt m) + \mathcal B (\log l - \log m) + O(1)
	\end{align*}
	where $\mathcal A = \f 4 3 \sqrt{\f 2 \pi} \f{\sigma_v^2 \sigma_w}{\sqrt{\sigma_v^2 + \sigma_a^2}}$ and $\mathcal B = \f 4 {9\pi}\f{ \sigma_v^4 }{\sigma_v^2 + \sigma_a^2}\lp \f 3 {\sigma_v^2 + \sigma_a^2} - \sigma_w^2\rp$ are as in \cref{thm:dalethExpSqrtTanhFullRes} and $\mathcal B_b = \mathcal B + \f 1 2$ and $\mathcal B_w = \mathcal B - \f 1 2$.
	
\end{itemize}
\end{restatable}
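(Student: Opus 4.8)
\emph{Proof strategy.} The plan is to read everything off the closed forms of \cref{thm:dalethRecFull} once the asymptotics of $\daleth$, $\pp$, and $\qq$ are plugged in. Recall that for $\phi=\tanh$ in an FRN one has $\dot\phi=\sech^2$, so $\Vt\dot\phi(q)=\EV[\sech^4(z):z\sim\Gaus(0,q)]$, and
\begin{align*}
\p\chi l_a &= \p\daleth l, & \p\chi l_v &= \p\daleth l\,\Vt\phi(\p\qq l), & \p\chi l_b &= \sigma_v^2\,\p\daleth l\,\Vt\dot\phi(\p\qq l), & \p\chi l_w &= \p\chi l_b\cdot\p\pp{l-1}.
\end{align*}
When $\sigma_w=0$, the recurrence multiplier $\sigma_v^2\sigma_w^2\Vt\dot\phi(\qq)+1$ is identically $1$ and $\p\qq l=\sigma_b^2$ for all $l$, so $\p\daleth l\equiv\p\daleth L$; substituting this together with the $\sigma_w=0$ value of $\p\pp{l-1}$ from \cref{thm:pIsLinearTanh} into the four identities reproduces the first bullet verbatim. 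So I would henceforth assume $\sigma_w>0$.

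Taking logarithms converts the four identities into
\begin{align*}
\log(\p\chi m_a/\p\chi l_a) &= \log(\p\daleth m/\p\daleth l),\\
\log(\p\chi m_v/\p\chi l_v) &= \log(\p\daleth m/\p\daleth l) + \bigl(\log\Vt\phi(\p\qq m)-\log\Vt\phi(\p\qq l)\bigr),\\
\log(\p\chi m_b/\p\chi l_b) &= \log(\p\daleth m/\p\daleth l) + \bigl(\log\Vt\dot\phi(\p\qq m)-\log\Vt\dot\phi(\p\qq l)\bigr),\\
\log(\p\chi m_w/\p\chi l_w) &= \log(\p\chi m_b/\p\chi l_b) + \bigl(\log\p\pp{m-1}-\log\p\pp{l-1}\bigr).
\end{align*}
By \cref{thm:dalethExpSqrtTanhFullRes}, $\log(\p\daleth m/\p\daleth l)=\mathcal A(\sqrt l-\sqrt m)+\mathcal B(\log l-\log m)+O(1)$, so the whole problem reduces to estimating the increments of $\log\Vt\phi(\p\qq\cdot)$, of $\log\Vt\dot\phi(\p\qq\cdot)$, and of $\log\p\pp{\cdot-1}$.

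For those, I would first invoke \cref{thm:pIsLinearTanh}: $\p\pp l\sim(\sigma_v^2+\sigma_a^2)l$ and $\p\qq l\sim\sigma_w^2(\sigma_v^2+\sigma_a^2)l$; since $\pp$ is nondecreasing and $\p\qq l\ge\sigma_b^2>0$, the ratios $\p\pp{l-1}/l$ and $\p\qq l/l$ are pinched between two positive constants for every $l\ge1$, giving $\log\p\pp{l-1}=\log l+O(1)$ and $\log\p\qq l=\log l+O(1)$ uniformly. Next, since $\sech^2$ and $\sech^4$ are bounded, continuous at $0$, and integrable on $\R$, Laplace's method on the density $\tfrac1{\sqrt{2\pi q}}e^{-z^2/2q}$ yields $\EV[\sech^2(z):z\sim\Gaus(0,q)]=\Theta(q^{-1/2})$ and $\Vt\dot\phi(q)=\tfrac{4/3}{\sqrt{2\pi q}}(1+o(1))=\Theta(q^{-1/2})$ as $q\to\infty$. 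Hence $\log\Vt\phi(\p\qq l)=\log\bigl(1-\Theta(l^{-1/2})\bigr)=O(l^{-1/2})$ and $\log\Vt\dot\phi(\p\qq l)=-\tfrac12\log\p\qq l+O(1)=-\tfrac12\log l+O(1)$, both uniformly in $l\ge1$ (the bounded range of small indices only contributes to $O(1)$, using $\sigma_\bullet>0$ so that $\p\qq l$ stays away from $0$). Substituting the resulting increments $O(1)$, $\tfrac12(\log l-\log m)+O(1)$, and $-(\log l-\log m)+O(1)$ back into the displayed logarithmic identities gives $\log(\p\chi m_v/\p\chi l_v)=\log(\p\chi m_a/\p\chi l_a)=\mathcal A(\sqrt l-\sqrt m)+\mathcal B(\log l-\log m)+O(1)$, then $\mathcal B_b=\mathcal B+\tfrac12$, and finally $\mathcal B_w=\mathcal B+\tfrac12-1=\mathcal B-\tfrac12$, as claimed.

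The main obstacle, such as it is, is bookkeeping around uniformity: I must ensure the $o(1)$ and $O(1)$ errors above hold uniformly over the full range $0\le m\le l$ rather than merely in the limit — concretely that $\Vt\phi(\p\qq k)$, $k^{1/2}\Vt\dot\phi(\p\qq k)$, and $\p\pp{k-1}/k$ are each bounded above and below by positive constants for every admissible index $k$ — and I must use that the $O(1)$ supplied by \cref{thm:dalethExpSqrtTanhFullRes} is itself uniform in $(l,m)$. The underlying Laplace asymptotics for $\EV[\sech^2]$ and $\EV[\sech^4]$, and the arithmetic of collecting the coefficients of $\log l-\log m$, are routine.
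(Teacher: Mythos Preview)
Your proposal is correct and follows essentially the same approach as the paper: the paper's proof is literally ``Similar to \cref{thm:dalethExpSqrtTanhAllGrad},'' and that proof in turn plugs the asymptotics $\Vt\dot\phi(\p\qq l)=\Theta(l^{-1/2})$ and $\p\pp{l-1}=\Theta(l)$ into the closed forms of \cref{thm:dalethRecFull}, exactly as you do. Your treatment is in fact more careful about uniformity of the $O(1)$ terms over the full range of indices than the paper's sketch; the only cosmetic difference is that you rederive the $\Theta(q^{-1/2})$ behavior of $\Vt\dot\phi$ and $1-\Vt\phi$ via Laplace's method rather than citing \cref{lemma:Vt_dot_tanh_asymptotics} and \cref{lemma:vtanhSqrtConvergence}.
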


\subsection{$\alpha$-ReLU}
\begin{restatable}{lemma}{VtPsiAlpha}\label{lemma:VtPsiAlpha}
If $\alpha > -\f 1 2$, then
$\Vt\psi_\alpha( q) = \cV_\alpha q^{\alpha}$, where $\cV_\alpha = \f 1 {\sqrt \pi} 2^{\alpha - 1}  \Gamma \left(\alpha+ \f 1 2\right)$.
\end{restatable}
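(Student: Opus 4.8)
The plan is to compute the Gaussian integral $\Vt\psi_\alpha(q) = \EV[\psi_\alpha(z)^2 : z \sim \Gaus(0,q)]$ directly. Since $\psi_\alpha(z) = z^\alpha$ for $z > 0$ and $0$ otherwise, we have $\psi_\alpha(z)^2 = z^{2\alpha}\mathbbm{1}[z>0]$, so
\begin{align*}
\Vt\psi_\alpha(q) &= \frac{1}{\sqrt{2\pi q}}\int_0^\infty z^{2\alpha} e^{-z^2/(2q)}\,dz.
\end{align*}
First I would substitute $u = z^2/(2q)$, so that $z = (2q)^{1/2}u^{1/2}$ and $dz = (2q)^{1/2}\cdot\tfrac12 u^{-1/2}\,du$; this turns the integral into a Gamma function. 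Carrying out the substitution gives a factor $(2q)^{\alpha}$ out front from $z^{2\alpha}$, times $(2q)^{1/2}$ from $dz$, times the normalization $(2\pi q)^{-1/2}$, leaving $\int_0^\infty u^{\alpha - 1/2} e^{-u}\,du = \Gamma(\alpha + \tfrac12)$, which converges precisely when $\alpha + \tfrac12 > 0$, i.e. $\alpha > -\tfrac12$. Collecting the constants: $(2q)^{\alpha}\cdot(2q)^{1/2}\cdot(2\pi q)^{-1/2}\cdot\tfrac12 = q^\alpha\cdot 2^{\alpha}\cdot 2^{-1}\cdot\pi^{-1/2} = \tfrac{1}{\sqrt\pi}2^{\alpha-1}q^\alpha$, so $\Vt\psi_\alpha(q) = \tfrac{1}{\sqrt\pi}2^{\alpha-1}\Gamma(\alpha+\tfrac12)\,q^\alpha = \cV_\alpha q^\alpha$ as claimed.

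There is essentially no hard obstacle here; the only point requiring a little care is tracking the powers of $2$, $q$, and $\pi$ through the substitution, and noting the convergence condition $\alpha > -\tfrac12$ is exactly what makes the Gamma integral finite (for $\alpha \le -\tfrac12$ the integrand blows up too fast at $z = 0$). One could alternatively cite the standard formula for absolute moments of a Gaussian, $\EV|z|^{p} = q^{p/2}\,\tfrac{2^{p/2}}{\sqrt\pi}\Gamma(\tfrac{p+1}{2})$ for $z\sim\Gaus(0,q)$, and then use the symmetry of the Gaussian to write $\EV[z^{2\alpha}\mathbbm{1}[z>0]] = \tfrac12\EV|z|^{2\alpha}$ when $2\alpha > -1$; plugging $p = 2\alpha$ immediately yields $\tfrac12\cdot q^\alpha\tfrac{2^\alpha}{\sqrt\pi}\Gamma(\alpha+\tfrac12) = \cV_\alpha q^\alpha$. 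I would present the direct substitution as the main argument since it is self-contained, and mention the moment-formula route as a one-line alternative.
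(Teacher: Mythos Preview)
Your proposal is correct; the paper itself does not give a proof, merely remarking ``The following can be checked readily,'' so your direct Gamma-integral computation (with the substitution $u = z^2/(2q)$) is precisely the routine verification the authors had in mind.
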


Note that if $\alpha \le - \f 1 2$, then $\Vt \psi_\alpha( q)$ is not defined (its defining integral does not converge).

\subsubsection{Full Residual Network}
By \cref{thm:fullResPQRec} and \cref{lemma:VtPsiAlpha}, we have the length recurrences
\begin{align*}
\qq &= \sigma_w^2 \prv \pp + \sigma_b^2\\
\pp &= \sigma_v^2 \cV_\alpha \qq^\alpha + \sigma_a^2 + \prv \pp
\end{align*}

\begin{restatable}{thm}{pDynamicAlphaReLU}\label{thm:pDynamicAlphaReLU}
Suppose we have the nonlinearity $\phi = \psi_\alpha$.
The in an FRN:
If $\alpha = 1$, then $\p \pp l = \Theta((1 + \sigma_v^2 \sigma_w^2/2)^l)$, with the hidden constant depending on the initial condition.
If $0 < \alpha < 1$, then $\p \pp l = \Theta(l^{\f 1 {1- \alpha}})$.
More precisely,
$\lim_{l \to \infty} \pp/ l^{\f 1 {1-\alpha}} = [\sigma_v^2 \sigma_w^{2\alpha} \cV_\alpha (1 - \alpha)]^{\f 1 {1-\alpha}}$.
\end{restatable}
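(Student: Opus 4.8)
The plan is to analyze the difference equation $\pp - \prv\pp = \sigma_v^2\cV_\alpha\qq^\alpha + \sigma_a^2$ together with $\qq = \sigma_w^2\prv\pp + \sigma_b^2$, by treating it as a discrete analogue of the ODE $\dot P = C P^\alpha$ and extracting the leading asymptotics. First I would substitute the $\qq$-recurrence into the $\pp$-recurrence to get a self-contained scalar recurrence
\begin{align*}
\p\pp l - \p\pp{l-1} = \sigma_v^2\cV_\alpha(\sigma_w^2\p\pp{l-1} + \sigma_b^2)^\alpha + \sigma_a^2.
\end{align*}
Since all $\sigma_\bullet > 0$ and $\cV_\alpha > 0$, the sequence $\p\pp l$ is strictly increasing, so either it converges to a finite limit or diverges to $+\infty$; a fixed point would require $\sigma_v^2\cV_\alpha(\sigma_w^2\pp+\sigma_b^2)^\alpha + \sigma_a^2 = 0$, impossible for positive constants, so $\p\pp l \to\infty$. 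Once divergence is established, for large $l$ we have $\sigma_w^2\p\pp{l-1}+\sigma_b^2 \sim \sigma_w^2\p\pp{l-1}$, so the increment behaves like $\sigma_v^2\cV_\alpha\sigma_w^{2\alpha}\p\pp{l-1}^\alpha$ to leading order, with the additive $\sigma_a^2$ negligible compared to a diverging power when $\alpha>0$.

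For the case $\alpha = 1$ the recurrence is exactly linear in its leading term: $\p\pp l = (1+\sigma_v^2\sigma_w^2)\p\pp{l-1} + \sigma_v^2\sigma_b^2 + \sigma_a^2$, an affine recurrence whose solution is $\p\pp l = \Theta((1+\sigma_v^2\sigma_w^2/2)^l)$ — wait, here one must be careful: the stated base is $1 + \sigma_v^2\sigma_w^2/2$, which means $\cV_1 = 1/2$ (indeed $\cV_1 = \frac{1}{\sqrt\pi}2^0\Gamma(3/2) = \frac12$ by \cref{lemma:VtPsiAlpha}), so the affine recurrence has multiplier $1 + \sigma_v^2\sigma_w^2\cV_1 = 1 + \sigma_v^2\sigma_w^2/2$, and the standard solution of a first-order affine recurrence gives the claimed $\Theta$ with the constant absorbing the initial condition and the fixed point of the affine map. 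For $0 < \alpha < 1$, I would guess the ansatz $\p\pp l \approx A l^{1/(1-\alpha)}$ and verify it: if $\p\pp l = A l^\beta$ with $\beta = 1/(1-\alpha)$, then $\p\pp l - \p\pp{l-1} \approx A\beta l^{\beta - 1}$ while $\sigma_v^2\cV_\alpha\sigma_w^{2\alpha}\p\pp{l-1}^\alpha \approx \sigma_v^2\cV_\alpha\sigma_w^{2\alpha}A^\alpha l^{\alpha\beta}$; matching powers gives $\beta - 1 = \alpha\beta$, i.e. $\beta = 1/(1-\alpha)$, and matching constants gives $A\beta = \sigma_v^2\cV_\alpha\sigma_w^{2\alpha}A^\alpha$, hence $A^{1-\alpha} = \sigma_v^2\cV_\alpha\sigma_w^{2\alpha}/\beta = \sigma_v^2\cV_\alpha\sigma_w^{2\alpha}(1-\alpha)$, which is exactly the claimed limit.

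To turn the heuristic into a proof I would use a sandwiching/comparison argument: show $\liminf$ and $\limsup$ of $\p\pp l/l^{1/(1-\alpha)}$ both equal $[\sigma_v^2\sigma_w^{2\alpha}\cV_\alpha(1-\alpha)]^{1/(1-\alpha)}$. A clean route is to pass to the transformed quantity $u_l := \p\pp l^{\,1-\alpha}$ and show $u_l - u_{l-1} \to \sigma_v^2\cV_\alpha\sigma_w^{2\alpha}(1-\alpha)$: by the mean value theorem $u_l - u_{l-1} = (1-\alpha)\xi_l^{-\alpha}(\p\pp l - \p\pp{l-1})$ for some $\xi_l\in[\p\pp{l-1},\p\pp l]$, and since $\p\pp l/\p\pp{l-1}\to 1$ (the increment is $o(\p\pp{l-1})$ as $\alpha<1$) we get $\xi_l^{-\alpha} \sim \p\pp{l-1}^{-\alpha}$, so $u_l - u_{l-1} \sim (1-\alpha)\p\pp{l-1}^{-\alpha}\cdot\sigma_v^2\cV_\alpha\sigma_w^{2\alpha}\p\pp{l-1}^\alpha = (1-\alpha)\sigma_v^2\cV_\alpha\sigma_w^{2\alpha}$. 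Then Cesàro/Stolz gives $u_l/l \to (1-\alpha)\sigma_v^2\cV_\alpha\sigma_w^{2\alpha}$, i.e. $\p\pp l/l^{1/(1-\alpha)}\to[(1-\alpha)\sigma_v^2\cV_\alpha\sigma_w^{2\alpha}]^{1/(1-\alpha)}$, as desired; the claim for $\p\qq l$ then follows immediately from $\qq = \sigma_w^2\prv\pp + \sigma_b^2$. The main obstacle is the very first step — rigorously controlling the error terms, specifically justifying that replacing $\sigma_w^2\p\pp{l-1}+\sigma_b^2$ by $\sigma_w^2\p\pp{l-1}$ and dropping the $\sigma_a^2$ contribute only lower-order corrections uniformly enough that the MVT/Stolz argument goes through; this requires knowing $\p\pp l\to\infty$ at a controlled rate first, which creates a mild bootstrapping issue that I would resolve by first proving crude polynomial upper and lower bounds $c_1 l^{1/(1-\alpha)} \le \p\pp l \le c_2 l^{1/(1-\alpha)}$ before refining the constant.
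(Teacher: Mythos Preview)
Your approach is correct but genuinely different from the paper's. The paper reduces the recurrence $\pp - \prv\pp = A(\prv\pp + B)^\alpha + C$ (with $A = \sigma_v^2\cV_\alpha\sigma_w^{2\alpha}$, $B = \sigma_b^2/\sigma_w^2$, $C = \sigma_a^2$) to a general ``dynamics zoo'' lemma (\cref{lemma:polyDynamics} / \cref{lemma:polyDynamicsConstant}), which proceeds by comparing the difference equation to the ODE $\dot x = A(x+B)^\alpha$ with explicit solution and then applying a monotone sandwiching lemma (\cref{lemma:simpleDynamics}) with parameter $A$ perturbed to $A-\epsilon$, finally sending $\epsilon\to 0$. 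Your route---transforming via $u_l = \p\pp l^{\,1-\alpha}$, using the mean value theorem to show $u_l - u_{l-1}$ converges to a constant, and concluding by Stolz--Ces\`aro---is more elementary and arguably cleaner for extracting just the leading coefficient. The paper's machinery pays off elsewhere: it also delivers the subleading $-K_2 l^{\alpha/(1-\alpha)}\log l$ term (\cref{thm:pDynamicLT1ReLU}), which is later needed in the proof of \cref{thm:dalethDynamicsAlphaReLU}; your method as stated gives only the leading term, though that is all \cref{thm:pDynamicAlphaReLU} itself claims.

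One remark: the ``main obstacle'' you flag at the end is not actually an obstacle. Your argument does not require a preliminary crude polynomial bound. Once you know $\p\pp l\to\infty$ (which follows immediately from strict increase and absence of a fixed point), the increment $\p\pp l - \p\pp{l-1} = \Theta(\p\pp{l-1}^{\,\alpha}) = o(\p\pp{l-1})$ gives $\p\pp l/\p\pp{l-1}\to 1$ directly, and hence $\xi_l^{-\alpha}\sim\p\pp{l-1}^{-\alpha}$; the replacement of $(\sigma_w^2\p\pp{l-1}+\sigma_b^2)^\alpha$ by $\sigma_w^{2\alpha}\p\pp{l-1}^{\,\alpha}$ and the suppression of $\sigma_a^2$ both use only $\p\pp{l-1}\to\infty$, not any quantitative rate. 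So the Stolz--Ces\`aro step goes through without bootstrapping.
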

\cref{fig:reluverifypasymptotics} empirically verifies the asymptotics for $\alpha=1$ for various $\sigma_v$ and $\sigma_w$.
\begin{figure}
	\centering
	\includegraphics[height=.2\textheight]{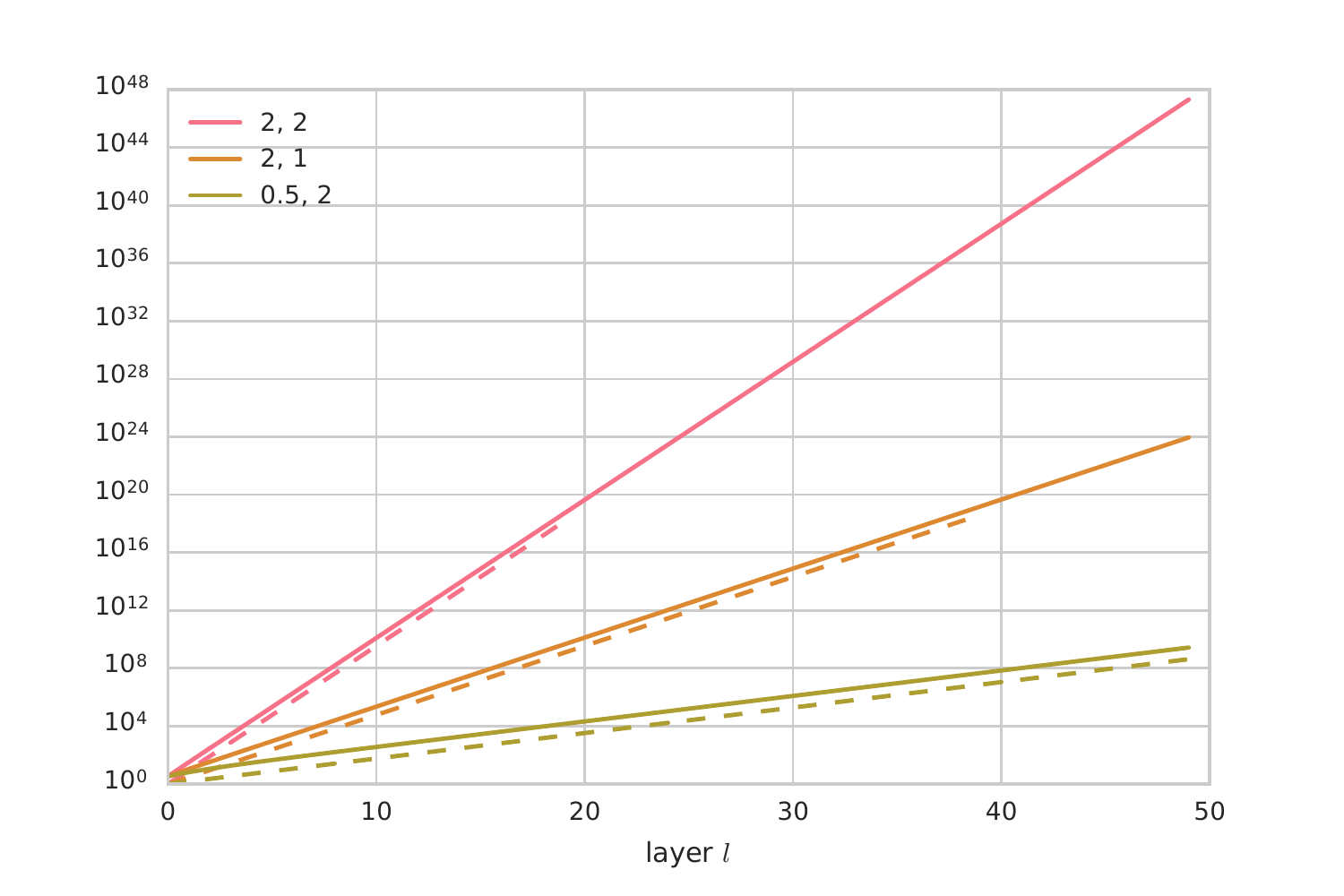}
	\caption{Verification of the exponential asymptotics of $\p \pp l$ when $\alpha=1$.
	The lines of each color correspond to different $(\sigma_w, \sigma_v)$ pairs, which are given in the legend.
	The solid lines are given by the recurrences \cref{thm:fullResPQRec}, and the dashed lines are given by our asymptotics $(1+\sigma_v^2\sigma_w^2/2)^l$ (\cref{thm:pDynamicAlphaReLU}).
	Note that the y-axis is in log-scale.}
	\label{fig:reluverifypasymptotics}
\end{figure}

Similarly, by \cref{thm:full_res_l_g_recurr}, if $\qq = \qq'$, then
\begin{align*}
\lambda &= \sigma_w^2 \prv \gamma + \sigma_b^2\\
\gamma &= \sigma_v^2 \qq^\alpha \Wt \psi_\alpha( 1, \cc) + \sigma_a^2 + \prv \gamma
\end{align*}

\begin{restatable}{thm}{ReLUSquaredConvergence}\label{thm:ReLUSquaredConvergence}
Suppose $\phi = \psi_1$.
Then in an FRN, $\p \ee l \to 1$ and $1 - \p \ee l \sim [\f 1 4 \sigma_v^2 \sigma_w^2 \inv B U l]^{-2}$ for $B = 1 + \sigma_v^2 \sigma_w^2/2$ and $U = \f {2\sqrt 2}{3\pi}$.
As a result, $\p \mfs l = (1 - \p \ee \l) \p \pp l = \Theta(l^{-2}\exp(\Theta(l))) = \exp(\Theta(l)).$
\end{restatable}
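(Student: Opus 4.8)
The plan is to reduce the convergence-rate claim to a single scalar recurrence for $\p u l := 1 - \p\ee l$, solve it by comparison with the ODE $\dot u = -D u^{3/2}$, and then read off the $\mfs$ claim immediately. First I would collect the ingredients. By \cref{thm:fullResPQRec} and \cref{lemma:VtPsiAlpha} (which gives $\cV_1 = \tfrac12$, so $\Vt\psi_1(\qq) = \tfrac{\qq}{2}$), the length recurrences read $\qq = \sigma_w^2\prv\pp + \sigma_b^2$ and $\pp = \tfrac{\sigma_v^2}{2}\qq + \sigma_a^2 + \prv\pp = B\prv\pp + c_0$ with $B = 1 + \sigma_v^2\sigma_w^2/2 > 1$ and $c_0 = \sigma_v^2\sigma_b^2/2 + \sigma_a^2 > 0$; in particular $\p\pp l = \Theta(B^l)$ (the $\alpha=1$ case of \cref{thm:pDynamicAlphaReLU}), so $\p\pp l, \p\qq l \to \infty$ exponentially. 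By \cref{thm:full_res_l_g_recurr} and the first-order arc-cosine kernel identity $\Wt\psi_1(\qq, \cc\qq) = \Vt\psi_1(\qq)\JJ_1(\cc) = \tfrac{\qq}{2}\JJ_1(\cc)$ of \cite{cho_kernel_2009}, the correlation recurrences read $\lambda = \sigma_w^2\prv\gamma + \sigma_b^2$ and $\gamma = \tfrac{\sigma_v^2}{2}\qq\JJ_1(\cc) + \sigma_a^2 + \prv\gamma$, where for $\cc\in[-1,1]$, writing $\theta := \arccos\cc$, $\JJ_1(\cc) = \tfrac1\pi\bigl(\sin\theta + (\pi-\theta)\cos\theta\bigr)$. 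I will use two elementary facts about $\JJ_1$, both immediate from this closed form: (i) $\dot\JJ_1(\cc) = 1 - \tfrac1\pi\arccos\cc \in (0,1)$ for $\cc\in(-1,1)$, with $\dot\JJ_1(1) = 1$, so $\JJ_1(\cc) > \cc$ for every $\cc\in[-1,1)$; and (ii) using $\arccos(1-v) = \sqrt{2v}\,(1+O(v))$, near $\cc = 1$ one has $1 - \JJ_1(\cc) = (1-\cc) - U(1-\cc)^{3/2} + O\bigl((1-\cc)^2\bigr)$ with $U = \tfrac{2\sqrt2}{3\pi}$.

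Next I would show $\p\ee l \to 1$ (assuming, as is implicit in the statement, $\p\ee 0 < 1$). Dividing the $\gamma$-recurrence by the $\pp$-recurrence exhibits $\ee$ as a convex combination $\ee = w_1\JJ_1(\cc) + w_2\cdot 1 + w_3\prv\ee$ with strictly positive weights $w_1 = \tfrac{\sigma_v^2\qq/2}{\pp}$, $w_2 = \tfrac{\sigma_a^2}{\pp}$, $w_3 = \tfrac{\prv\pp}{\pp}$ summing to $1$. Since $\cc = \tfrac{\sigma_w^2\prv\gamma + \sigma_b^2}{\sigma_w^2\prv\pp + \sigma_b^2} \ge \tfrac{\prv\gamma}{\prv\pp} = \prv\ee$ whenever $\prv\ee \le 1$ — and $\ee \le 1$ always, by Cauchy--Schwarz on the mean-field Gaussians — fact (i) gives $\JJ_1(\cc) \ge \JJ_1(\prv\ee) \ge \prv\ee$, strictly when $\prv\ee < 1$; hence $\p\ee l$ is strictly increasing and bounded above by $1$, so $\p\ee l \to \ee^\infty \le 1$. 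Passing to the limit in the convex combination, using $\prv\pp\to\infty$ so that $w_1\to\tfrac{\sigma_v^2\sigma_w^2/2}{B}$, $w_2\to 0$, $w_3\to\tfrac1B$, $\cc\to\ee^\infty$, together with $B-1 = \sigma_v^2\sigma_w^2/2$, yields $\ee^\infty = \JJ_1(\ee^\infty)$, which by (i) forces $\ee^\infty = 1$.

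For the rate, subtracting the $\gamma$- and $\pp$-recurrences gives $\p u l = \tfrac1\pp\bigl(\tfrac{\sigma_v^2}{2}\qq\,(1-\JJ_1(\cc)) + \prv\pp\,\prv u\bigr)$. Using $\tfrac{\qq}{\pp} = \tfrac{\sigma_w^2}{B}(1 + O(1/\prv\pp))$, $\tfrac{\prv\pp}{\pp} = \tfrac1B(1 + O(1/\prv\pp))$, and $1-\cc = \prv u\,(1 + O(1/\prv\pp))$, then substituting expansion (ii) and invoking $\tfrac{\sigma_v^2\sigma_w^2}{2B} + \tfrac1B = 1$, all the exponentially small corrections (controlled by $1/\prv\pp = O(B^{-l})$) collapse, leaving the clean scalar recurrence $\p u l = \p u {l-1} - D\,(\p u {l-1})^{3/2} + O\bigl((\p u {l-1})^2\bigr)$ with $D := \tfrac{\sigma_v^2\sigma_w^2 U}{2B} > 0$. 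Now set $s^{(l)} := (\p u l)^{-1/2}$; since $1-\ee\to 0$ we have $s^{(l)}\to\infty$, and expanding the recurrence gives $s^{(l)} = s^{(l-1)} + \tfrac D2 + O\bigl(1/s^{(l-1)}\bigr)$. Once $s^{(l)}$ is large the increment lies in $[\tfrac D4,\tfrac{3D}{4}]$, so $s^{(l)} = \Theta(l)$; feeding this back, $\sum_{k\le l} O(1/s^{(k)}) = O(\log l)$, hence $s^{(l)} = \tfrac D2 l + O(\log l)$ and therefore $\p u l = (s^{(l)})^{-2} \sim (\tfrac D2 l)^{-2}$.

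Since $\tfrac D2 = \tfrac14\sigma_v^2\sigma_w^2 B^{-1}U$, this is exactly $1 - \p\ee l \sim \bigl[\tfrac14\sigma_v^2\sigma_w^2 B^{-1}U\,l\bigr]^{-2}$; and then $\p\mfs l = \p u l\cdot\p\pp l = \Theta(l^{-2})\cdot\Theta(B^l) = \Theta\bigl(l^{-2}\exp(\Theta(l))\bigr) = \exp(\Theta(l))$, as claimed. I expect the main obstacle to be the final step — extracting the sharp leading constant rather than merely $\TTheta(l^{-2})$: this requires the two-stage bootstrap through $s^{(l)}$ (first a crude $s^{(l)} = \Theta(l)$, then the refinement $s^{(l)} = \tfrac D2 l + O(\log l)$), together with a careful verification that the $\JJ_1$-expansion error $O((1-\cc)^2)$ and the exponentially small ratio corrections genuinely fold into a single $O((\p u {l-1})^2)$ term whose accumulated contribution to the $s$-recurrence is only $O(\log l)$.
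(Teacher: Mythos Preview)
Your proof is correct and follows essentially the same route as the paper: establish $\p\ee l \nearrow 1$ by monotonicity (your convex-combination phrasing is in fact cleaner than the paper's), derive the scalar recurrence $u^{(l)} = u^{(l-1)} - D\,(u^{(l-1)})^{3/2}(1+o(1))$ with $D = \sigma_v^2\sigma_w^2 U/(2B)$, and extract the rate $u^{(l)} \sim (Dl/2)^{-2}$. The only cosmetic difference is that you solve this recurrence via the substitution $s = u^{-1/2}$ with a two-stage bootstrap, whereas the paper invokes \cref{lemma:polyDynamicsPosAlpha} (comparison with the ODE $\dot x = -Dx^{3/2}$); these are equivalent, and your caveat at the end about carefully absorbing the $O(1/\prv\pp)$ corrections is exactly the point the paper addresses with its brief contradiction argument that $\prv\eps = \omega(\prv\pp^{-1})$.
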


\begin{restatable}{thm}{alphaReLUeConvergence} \label{thm:alphaReLUeConvergence}
Suppose $\phi = \psi_\alpha$ for $0 < \alpha < 1$ in an FRN.
Then $\ee$ converges to the unique nonunit fixed point $\ee^*$ of $\JJ_\alpha$, and $|\ee^* - \p \ee l|$ is $\TTheta(l^{-\mu})$, where $\mu = (1-\dot\JJ_\alpha(\ee^*))/(1-\alpha)$.
Additionally, $\p \mfs l = \Theta(\p \pp l) = \Theta(l^{1/(1-\alpha)}).$
\end{restatable}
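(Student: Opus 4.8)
The plan is to collapse the four coupled recurrences into a single scalar recurrence for the cosine distance $\p \ee l = \p \gamma l/\p \pp l$ and to treat it as a perturbed Euler discretization of an autonomous ODE. Substituting $\Vt\psi_\alpha(q)=\cV_\alpha q^\alpha$ (\cref{lemma:VtPsiAlpha}) and the Cho--Saul identity $\Wt\psi_\alpha(\qq,\cc\qq)=\cV_\alpha\qq^\alpha\JJ_\alpha(\cc)$ \cite{cho_kernel_2009} into the FRN recurrences (\cref{thm:fullResPQRec,thm:full_res_l_g_recurr}), using $\p \pp l=\p \pp l{}'$, $\p \qq l=\p \qq l{}'$, $\cc=\lambda/\qq$, and eliminating $\prv\pp$ via $\prv\pp=\pp-\sigma_v^2\cV_\alpha\qq^\alpha-\sigma_a^2$, one obtains
\begin{align*}
\ee-\prv\ee=\frac{\sigma_v^2\cV_\alpha\qq^\alpha}{\pp}\bigl(\JJ_\alpha(\cc)-\prv\ee\bigr)+\frac{\sigma_a^2(1-\prv\ee)}{\pp},\qquad
\cc-\prv\ee=\frac{\sigma_b^2(1-\prv\ee)}{\qq}.
\end{align*}
By \cref{thm:pDynamicAlphaReLU}, $\p \pp l=\Theta(l^{1/(1-\alpha)})$ with $\p \pp l/l^{1/(1-\alpha)}$ converging to an explicit constant; combined with $\pp-\prv\pp=\sigma_v^2\cV_\alpha\qq^\alpha+\sigma_a^2$ this gives $\sigma_v^2\cV_\alpha\qq^\alpha/\pp=(1+o(1))/((1-\alpha)l)$ and $1/\pp,\,1/\qq=\Theta(l^{-1/(1-\alpha)})$. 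Substituting, and using $\JJ_\alpha(\cc)=\JJ_\alpha(\prv\ee)+O(\cc-\prv\ee)$, the $\ee$-recurrence reduces to
\begin{align*}
\p \ee l-\p \ee{l-1}=\frac{1+o(1)}{(1-\alpha)l}\bigl(\JJ_\alpha(\p \ee{l-1})-\p \ee{l-1}\bigr)+O\bigl(l^{-1/(1-\alpha)}\bigr),
\end{align*}
a discretization of $l\,\dot e=\tfrac1{1-\alpha}(\JJ_\alpha(e)-e)$ with lower-order forcing (note the forcing is $o(1/l)$, which is why it does not shift the limit, unlike the tanh case).

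Second, I establish the needed facts about $\JJ_\alpha$ on $[0,1]$ (\cref{lemma:basicJalpha} and companions): $\JJ_\alpha$ is increasing and convex, $\JJ_\alpha(0)>0$, $\JJ_\alpha(1)=1$, and differentiating its defining Gaussian integral in the correlation (with \cref{lemma:VtPsiAlpha}) gives $\dot\JJ_\alpha(1)=\alpha^2/(2\alpha-1)$ for $\alpha\in(\tfrac12,1)$, and $+\infty$ for $\alpha\le\tfrac12$, which is $>1$ for every $\alpha<1$. Convexity with $\JJ_\alpha(0)>0$ and $\dot\JJ_\alpha(1)>1$ forces a \emph{unique} non-unit fixed point $\ee^*\in(0,1)$, at which $0<\dot\JJ_\alpha(\ee^*)<1$, with $\JJ_\alpha(e)-e>0$ on $[0,\ee^*)$ and $<0$ on $(\ee^*,1)$. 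Since $\p \gamma l$ is strictly increasing with increments $\ge\sigma_a^2$ (so $\p \ee l>0$ for large $l$), and the sign of the drift term dominates the $O(l^{-1/(1-\alpha)})$ forcing whenever $\p \ee l$ stays bounded away from $\ee^*$, a trapping/monotonicity argument yields $\p \ee l\to\ee^*$; the other fixed point $e=1$ is unstable for $\alpha<1$ and is excluded by the hypothesis $\p \ee 0<1$.

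Third, for the rate put $u_l=\p \ee l-\ee^*$ and Taylor-expand $\JJ_\alpha(\ee^*+u)-(\ee^*+u)=(\dot\JJ_\alpha(\ee^*)-1)u+O(u^2)$, so $u_l=u_{l-1}\bigl(1-\mu(1+o(1))/l\bigr)+O(u_{l-1}^2/l)+O(l^{-1/(1-\alpha)})$ with $\mu=(1-\dot\JJ_\alpha(\ee^*))/(1-\alpha)>0$. Sandwiching $|u_l|$ between solutions of the exactly solvable $w_l=w_{l-1}(1-(\mu\mp\eps)/l)$ --- whose telescoped products are $\Theta(l^{-\mu\pm\eps})$ --- and absorbing the quadratic and additive remainders via a crude a priori decay bound yields $|u_l|=\TTheta(l^{-\mu})$; the unquantified $o(1)$ in the drift coefficient, inherited because \cref{thm:pDynamicAlphaReLU} controls $\pp$ only to leading order, is precisely what downgrades $\Theta$ to $\TTheta$. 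Finally $\p \mfs l=(1-\p \ee l)\p \pp l$ and $1-\p \ee l\to1-\ee^*>0$, so $\p \mfs l=\Theta(\p \pp l)=\Theta(l^{1/(1-\alpha)})$ by \cref{thm:pDynamicAlphaReLU}.

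The main obstacle is the rate step: showing the additive forcing of size $\Theta(l^{-1/(1-\alpha)})$ (nonzero since $\sigma_a,\sigma_b>0$) is genuinely subdominant to the $l^{-\mu}$ relaxation rather than producing a slower ``quasi-equilibrium'' floor of order $l^{-\alpha/(1-\alpha)}$. This requires comparing $\mu$ with $1/(1-\alpha)$ and $\alpha/(1-\alpha)$ via bounds on $\dot\JJ_\alpha(\ee^*)$ (one always has $\mu<1/(1-\alpha)$ since $\dot\JJ_\alpha(\ee^*)>0$), together with a bootstrap: from bare convergence extract $u_l=o(1)$, feed it back through the recurrence to get $|u_l|=O(l^{-\mu+\eps})$, and iterate to obtain the matching lower bound, all while separating the transient regime (where $\p \ee l$ is still far from $\ee^*$, so the exponent is not yet visible) from the local regime in which $\mu$ is read off. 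Keeping the sign bookkeeping exact in the global convergence step --- so the forcing never pushes $\p \ee l$ across $\ee^*$ or out of $[0,1]$ --- is the other delicate piece.
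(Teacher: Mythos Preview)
Your approach matches the paper's: reduce to a scalar recurrence for $\ee$ via the Cho--Saul factorization $\Wt\psi_\alpha=\Vt\psi_\alpha\cdot\JJ_\alpha$, establish the unique non-unit fixed point $\ee^*$ of $\JJ_\alpha$ (the paper does this in \cref{thm:stableFixedPointsJJ} via \cref{lemma:basicJalpha,lemma:JalphaGrad}), linearize there, and sandwich by the model recurrence $u_l=u_{l-1}(1-(\mu\pm\eps)/l)$. The paper packages the reduction as \cref{lemma:separableDynamics} and finishes with \cref{lemma:alphaDeltaDynamics}; you carry out the same reduction by hand.

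There is one substantive difference, and you are on the more careful side. You retain the additive term $\sigma_a^2(1-\prv\ee)/\pp=\Theta(l^{-1/(1-\alpha)})$ and correctly flag that the rate step hinges on showing it is subdominant to $l^{-\mu}$, i.e.\ on the inequality $\dot\JJ_\alpha(\ee^*)>1-\alpha$. The paper's \cref{lemma:separableDynamics} arrives at a forcing of only $\Vt\phi(\qq)\Theta(\gamma^{-1}\pp^{-1})=\Theta(l^{-1-1/(1-\alpha)})$; tracing its derivation, the passage from $(\ee^*+\eps)\pp=\sigma_v^2 V\KK(\cc)+\sigma_a^2+(\ee^*+\prv\eps)\prv\pp$ to $\eps\pp=\sigma_v^2 V(\delta(\prv\eps+\Theta(\gamma^{-1}))+O(\prv\eps^2))+\prv\eps\prv\pp$ drops the constant $\sigma_a^2(1-\ee^*)$ (indeed $\eps\pp-\prv\eps\prv\pp=(\gamma-\prv\gamma)-\ee^*(\pp-\prv\pp)=\sigma_v^2 V(\KK(\cc)-\ee^*)+\sigma_a^2(1-\ee^*)$). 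With that smaller forcing the paper's final comparison $\mu<1/(1-\alpha)$ is immediate from $\dot\JJ_\alpha(\ee^*)>0$; with the correct forcing your proposed route---bound $\dot\JJ_\alpha(\ee^*)$ from below and bootstrap---is exactly what is needed to close the argument.
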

\cref{fig:6reluverifyestar} verifies empirically that $\ee^*$ is indeed the fixed point of $\p \ee l$.
\cref{fig:alphaReLUVerifyExponents} verifies empirically the convergence rate $l^{-\mu}$.
\cref{fig:MuPlots} plots $\dot \JJ_\alpha(\ee^*)$ and $\mu$ versus $\alpha$.
It certainly looks like $\mu = \f 1 2 (1 - \alpha)$, but we have no proof for it.
Based on this conjecture, we see there is a ``discontinuity'' of $\mu$ at $\alpha = 1$: $\mu \to 0$ as $\alpha \to 1$, but for $\alpha = 1$, the actual convergence dynamics has exponent $-2$ by \cref{thm:ReLUSquaredConvergence}.

\begin{figure}
	\centering
	\includegraphics[height=.2\textheight]{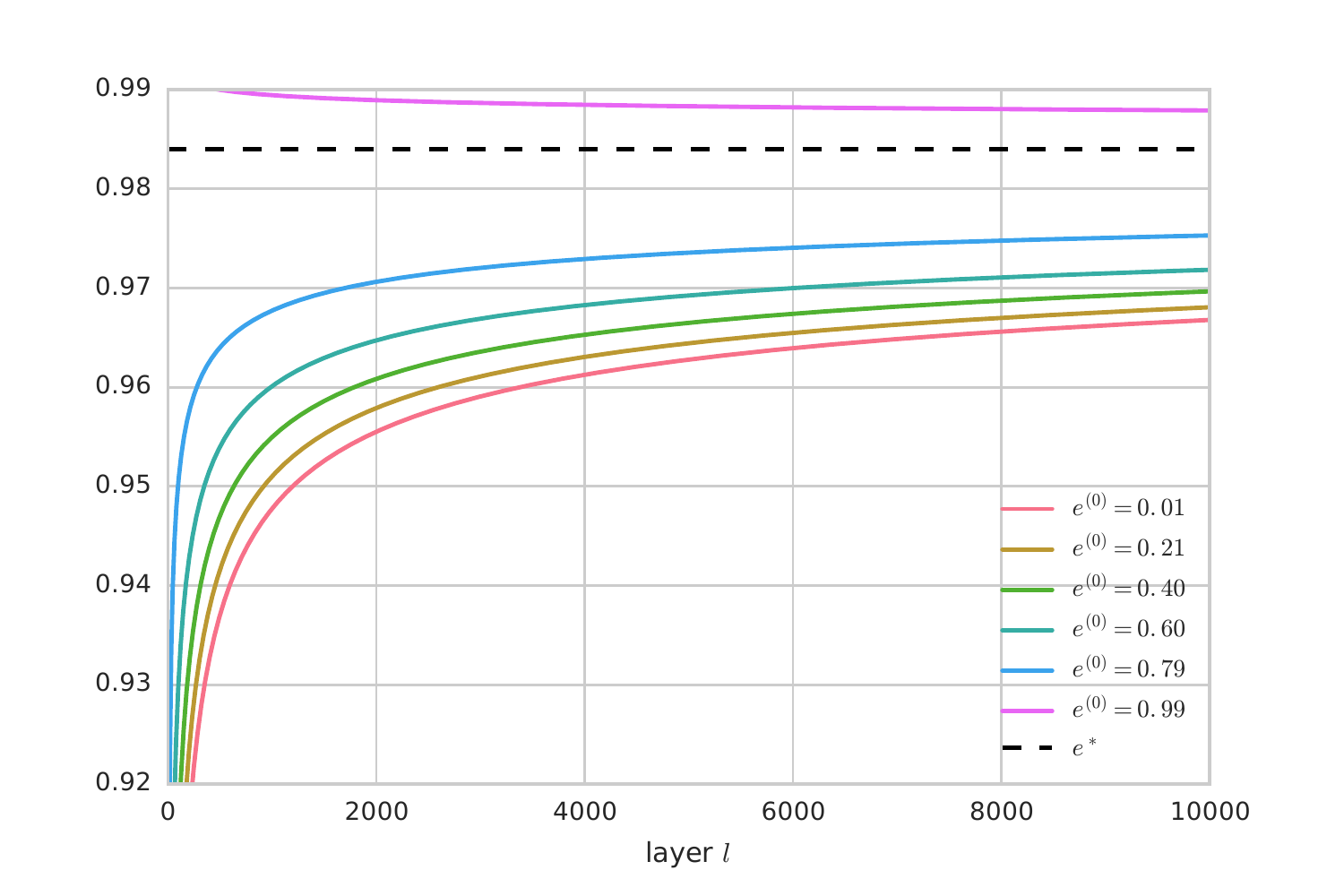}
	\caption{Verification of fixed point $\ee^*$ in \cref{thm:alphaReLUeConvergence} for $\alpha = .6$.
	Different colors correspond to different initial conditions $\p \ee 0$, and the dashed line gives the fixed point.}
	\label{fig:6reluverifyestar}
\end{figure}
\begin{figure}
\includegraphics[height=.2\textheight]{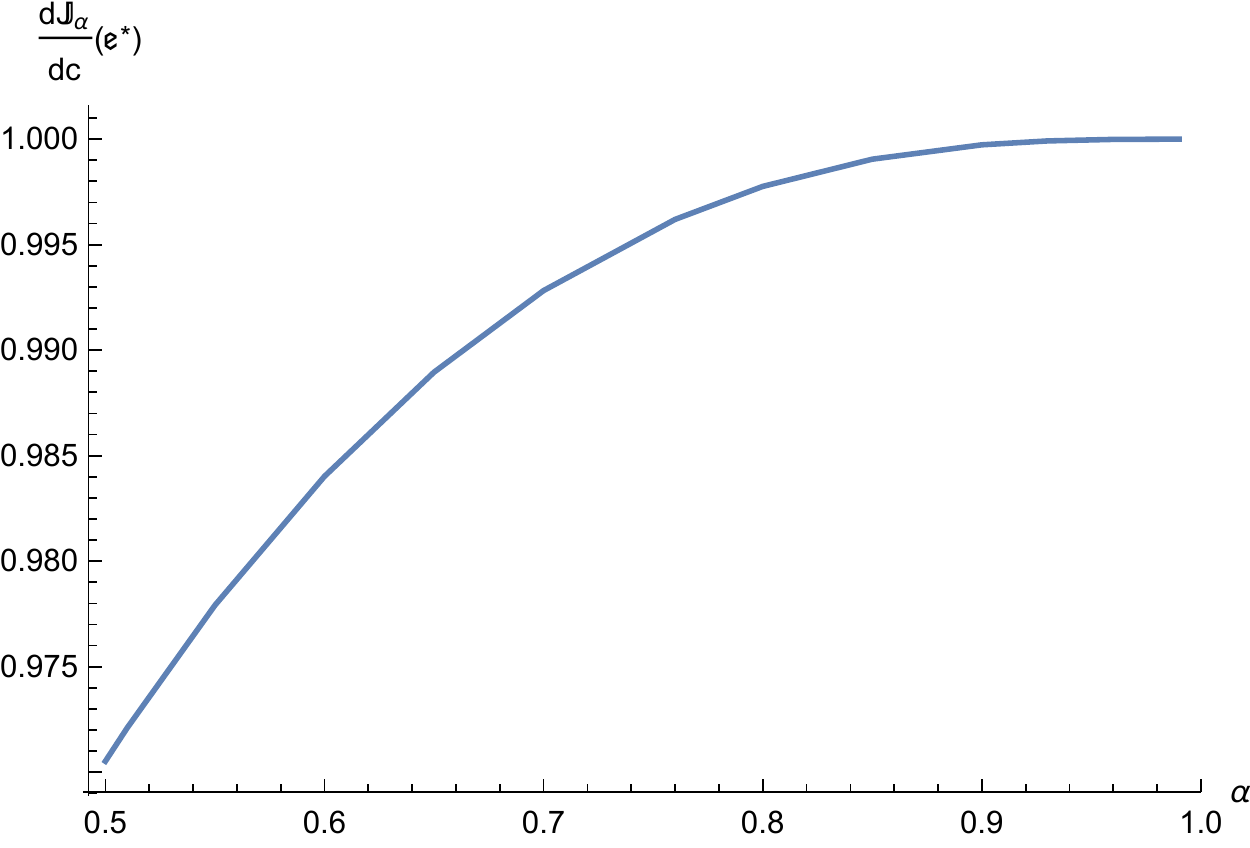}
\includegraphics[height=.2\textheight]{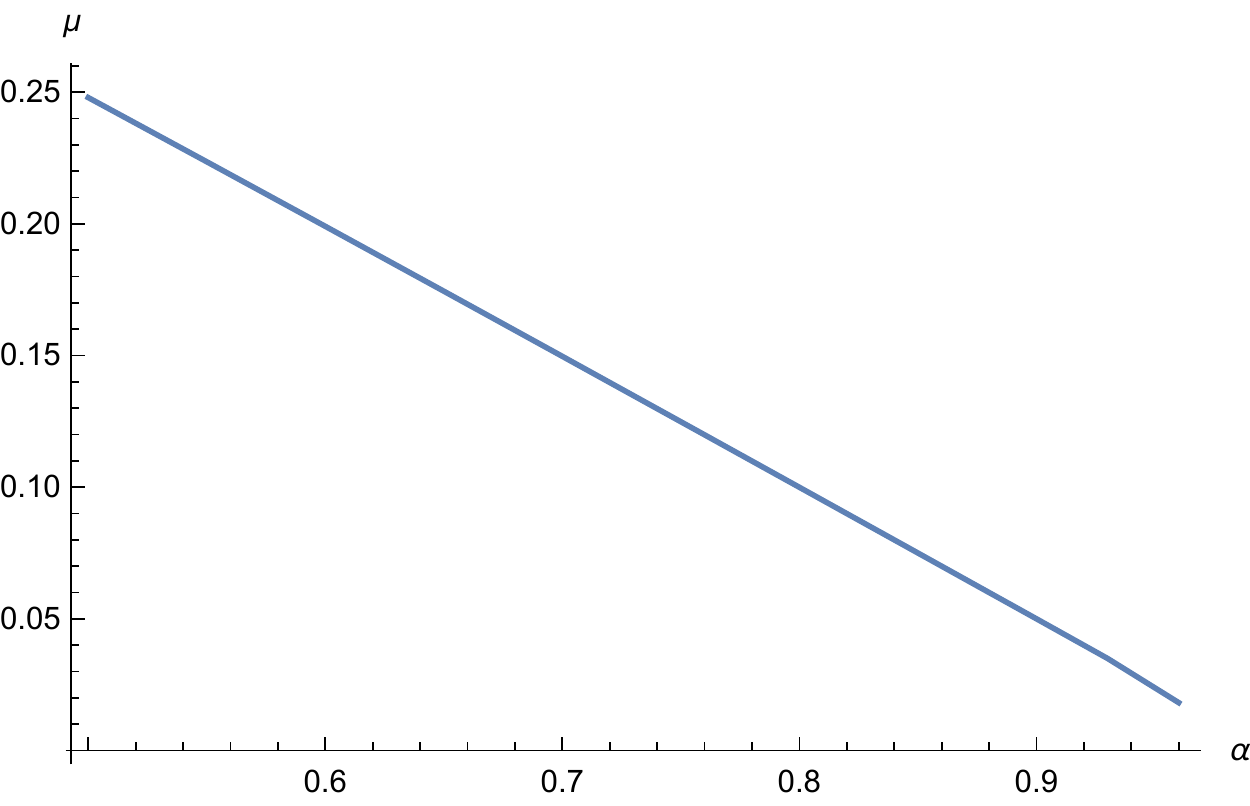}
\caption{{\bf(a)} A plot of $\dot \JJ_\alpha(\ee^*)$ versus $\alpha$.
	{\bf(b)} A plot of the exponent $\mu$ of the dynamics of $|\ee^* - \p \ee l|$ (see \cref{thm:alphaReLUeConvergence})
}
\label{fig:MuPlots}
\end{figure}

Because of the following theorem, we cannot expect the equations of \cref{thm:dalethRecFull} to hold for $\alpha \le \f 3 4$.
\begin{restatable}{thm}{dalethInfVarAlphaReLU} \label{thm:dalethInfVarAlphaReLU}
	Suppose we have the nonlinearity $\psi_\alpha$ in an FRN.
	$\Var(\dot \psi_\alpha(\zeta)^2)$ diverges for any Gaussian variable $\zeta$ with mean 0 if $\alpha \le \f 3 4$ but is finite if $\alpha > \f 3 4$.
\end{restatable}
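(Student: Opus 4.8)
The plan is to compute $\Var(\dot\psi_\alpha(\zeta)^2)$ directly, reducing everything to whether a single Gaussian integral $\EV[\dot\psi_\alpha(\zeta)^4]$ converges. First I would record that $\dot\psi_\alpha(x) = \alpha x^{\alpha-1}$ for $x > 0$ and $0$ for $x \le 0$, so $\dot\psi_\alpha(x)^2 = \alpha^2 x^{2(\alpha-1)}\mathbbm{1}[x>0]$ and $\dot\psi_\alpha(x)^4 = \alpha^4 x^{4(\alpha-1)}\mathbbm{1}[x>0]$. Since $\Var(\dot\psi_\alpha(\zeta)^2) = \EV[\dot\psi_\alpha(\zeta)^4] - (\EV[\dot\psi_\alpha(\zeta)^2])^2$ and the subtracted term is a finite constant exactly when $\EV[\dot\psi_\alpha(\zeta)^2]$ is finite, the whole question is: for $\zeta \sim \Gaus(0, q)$ with $q > 0$, when are the integrals
\begin{align*}
\EV[\dot\psi_\alpha(\zeta)^2] &= \frac{\alpha^2}{\sqrt{2\pi q}} \int_0^\infty z^{2\alpha - 2} e^{-z^2/2q}\, dz, &
\EV[\dot\psi_\alpha(\zeta)^4] &= \frac{\alpha^4}{\sqrt{2\pi q}} \int_0^\infty z^{4\alpha - 4} e^{-z^2/2q}\, dz
\end{align*}
finite? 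The Gaussian factor $e^{-z^2/2q}$ guarantees convergence at $z \to \infty$ for any exponent, so convergence is governed entirely by the behavior at $z \to 0^+$: the integral $\int_0^1 z^{p} \, dz$ converges iff $p > -1$.

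Applying this: $\EV[\dot\psi_\alpha(\zeta)^2]$ is finite iff $2\alpha - 2 > -1$, i.e. $\alpha > 1/2$ (this is the same threshold already flagged in the footnote about the mean field formalism breaking at $\alpha = 1/2$); and $\EV[\dot\psi_\alpha(\zeta)^4]$ is finite iff $4\alpha - 4 > -1$, i.e. $\alpha > 3/4$. For $\alpha > 3/4$ both are finite, so $\Var(\dot\psi_\alpha(\zeta)^2)$ is finite. For $\alpha \le 3/4$, the fourth-moment integral diverges at the origin. I would then argue that the variance genuinely diverges rather than becoming an indeterminate $\infty - \infty$: when $1/2 < \alpha \le 3/4$, the second moment is finite so $(\EV[\dot\psi_\alpha(\zeta)^2])^2$ is a finite constant while $\EV[\dot\psi_\alpha(\zeta)^4] = +\infty$, giving $\Var = +\infty$; when $\alpha \le 1/2$, the second moment already diverges and one can either note $\Var(X) \ge 0$ with $\EV[X^2] = \infty$ forcing $\Var(X) = \infty$ (since $\EV[X]^2 \le \EV[X^2]$, and if $\EV[X]$ is finite then $\Var = \infty$; if $\EV[X]$ is also infinite the variance is conventionally $+\infty$), or simply observe that $X = \dot\psi_\alpha(\zeta)^2 \ge 0$ with $X \not\in L^2$.

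The only mildly delicate point — and the one I'd expect to need a sentence of care — is the case $\alpha \le 1/2$ where the random variable $\dot\psi_\alpha(\zeta)^2$ is not even integrable, so "variance" has to be interpreted as $\EV[(X - \EV X)^2] = \infty$; I would just state that since $X \ge 0$ and $\EV[X^2] = \infty$, the variance is infinite by convention, which matches the claim. Everything else is the elementary observation that a power singularity $z^p$ at the origin is integrable iff $p > -1$, combined with the Gaussian tail handling the other end; no estimates beyond that are needed. The boundary exponent arithmetic — $2\alpha - 2 = -1 \iff \alpha = 1/2$ and $4\alpha - 4 = -1 \iff \alpha = 3/4$ — is what produces the stated thresholds.
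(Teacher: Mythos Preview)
Your proof is correct and, if anything, more direct than the paper's. Both arguments reduce to the same integrability question, but the paper routes through an auxiliary lemma: it writes $\dot\psi_\alpha \propto \psi_{\alpha-1}$, so $\dot\psi_\alpha(\zeta)^2 = \psi_{2\alpha-2}(\zeta)$, then changes variables to $Y = \psi_{-\beta}(X)$ with $\beta = 2-2\alpha$, computes the density of $Y$ explicitly, and reads off that the tail $f_Y(\xi) \sim c\,\xi^{-1/\beta - 1}$ as $\xi \to \infty$ makes $\EV[Y^2]$ finite iff $\beta < 1/2$. Your approach bypasses the density computation entirely and just checks integrability of $z^{4\alpha-4}e^{-z^2/2q}$ at the origin, which is the same criterion after the change of variables $\xi = z^{-\beta}$. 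What the paper's route buys is an explicit density formula for $\psi_{-\beta}(X)$ that could be reused elsewhere; what your route buys is that no auxiliary lemma is needed at all and the $3/4$ threshold falls out of the one-line arithmetic $4\alpha - 4 > -1$. Your handling of the $\alpha \le 1/2$ edge case is also a bit more careful than the paper's, which only checks the second moment of $Y$ and leaves the passage from ``second moment infinite'' to ``variance infinite'' implicit.
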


\begin{restatable}{thm}{dalethDynamicsAlphaReLU} \label{thm:dalethDynamicsAlphaReLU}
Suppose we have the nonlinearity $\psi_\alpha$ in an FRN.
If $\alpha = 1$, then $\p\daleth{l-m} = \p \daleth{l} \left(\f 1 2 \sigma_v^2 \sigma_w^2 + 1\right)^m$.
If $\alpha \in (\f 3 4, 1)$, then $\p\daleth{l-m} = \Theta(1) \p \daleth{l} (l/(l-m))^R$ for $R = \f{\alpha^2}{(1-\alpha)(2 \alpha - 1)}$, where the constants in $\Theta(1)$ do not depend on $l$ or $m$.
\end{restatable}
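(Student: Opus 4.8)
\textbf{Proof proposal for \cref{thm:dalethDynamicsAlphaReLU}.}
The plan is to convert the one-step recurrence $\prv\daleth = (\sigma_v^2\sigma_w^2\Vt\dot\psi_\alpha(\qq) + 1)\daleth$ of \cref{thm:dalethRecFull} into a telescoping product and read off its growth. The preliminary step is to evaluate $\Vt\dot\psi_\alpha$. Since $\dot\psi_\alpha(x) = \alpha x^{\alpha-1}$ for $x>0$ and $0$ otherwise, we have $\dot\psi_\alpha = \alpha\psi_{\alpha-1}$ off a measure-zero set, so $\Vt\dot\psi_\alpha(q) = \alpha^2\Vt\psi_{\alpha-1}(q) = \alpha^2\cV_{\alpha-1}q^{\alpha-1}$ by \cref{lemma:VtPsiAlpha}; this needs $\alpha-1 > -\f 1 2$, and the stronger hypothesis $\alpha > \f 3 4$ enters only to guarantee the finite-variance condition in \cref{thm:dalethRecFull} (\cref{thm:dalethInfVarAlphaReLU}). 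I would also record the identity $\cV_{\alpha-1}/\cV_\alpha = 1/(2\alpha-1)$, which follows at once from $\Gamma(\alpha+\f 1 2) = (\alpha-\f 1 2)\Gamma(\alpha-\f 1 2)$; this is exactly what makes the eventual exponent come out as $R = \f{\alpha^2}{(1-\alpha)(2\alpha-1)}$.

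For $\alpha = 1$ this is all that is needed: $\Vt\dot\psi_1(q) = 1^2\cdot\cV_0 = \f 1 2$ is constant in $q$, so the recurrence reads $\prv\daleth = (1 + \sigma_v^2\sigma_w^2/2)\daleth$ and telescopes directly to $\p\daleth{l-m} = \p\daleth l(1 + \sigma_v^2\sigma_w^2/2)^m$. For $\alpha\in(\f 3 4,1)$ the plan is to substitute the forward asymptotics. By \cref{thm:pDynamicAlphaReLU}, $\p\pp l \sim A\,l^{1/(1-\alpha)}$ with $A = [\sigma_v^2\sigma_w^{2\alpha}\cV_\alpha(1-\alpha)]^{1/(1-\alpha)}$, hence $\p\qq l = \sigma_w^2\p\pp{l-1} + \sigma_b^2 \sim \sigma_w^2 A\,l^{1/(1-\alpha)}$, so that $\p\qq l^{\alpha-1} \sim (\sigma_w^2 A)^{\alpha-1}l^{-1} = \f{l^{-1}}{\sigma_v^2\sigma_w^2\cV_\alpha(1-\alpha)}$ using $A^{1-\alpha} = \sigma_v^2\sigma_w^{2\alpha}\cV_\alpha(1-\alpha)$. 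Plugging back, the layerwise factor obeys
\[
1 + \sigma_v^2\sigma_w^2\Vt\dot\psi_\alpha(\p\qq l) \;=\; 1 + \f{\alpha^2\cV_{\alpha-1}}{(1-\alpha)\cV_\alpha}\cdot\f{1+r_l}{l} \;=\; 1 + \f{R}{l} + \f{R r_l}{l},
\]
with $r_l\to 0$. Iterating from layer $l$ back to layer $l-m$ gives $\p\daleth{l-m}/\p\daleth l = \prod_{j=l-m+1}^l\left(1 + \f{R}{j} + \f{R r_j}{j}\right)$. I would compare this to $\prod_{j=l-m+1}^l(1+R/j)$, which via $\prod_{j=1}^n(1+R/j) = \f{\Gamma(n+1+R)}{\Gamma(n+1)\Gamma(1+R)} = \f{n^R}{\Gamma(1+R)}(1+O(1/n))$ equals $(l/(l-m))^R(1+O(1/(l-m)))$; the $r_j$ terms then contribute an extra factor $\exp\left(\sum_{j=l-m+1}^l O(r_j/j)\right)$ up to an $O(\sum_j j^{-2}) = O(1)$ correction.

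The main obstacle is making that last sum $O(1)$ uniformly in $m$: the bare statement of \cref{thm:pDynamicAlphaReLU} only gives $r_j = o(1)$, which would yield the weaker prefactor $(l/(l-m))^{o(1)}$ rather than a genuine $\Theta(1)$, and says nothing when $l-m$ is bounded. I would close this by strengthening the forward estimate — bootstrapping the difference equation $\pp - \prv\pp = \sigma_v^2\cV_\alpha\qq^\alpha + \sigma_a^2$ (with $\qq = \sigma_w^2\prv\pp + \sigma_b^2$) one order further, writing $\p\pp l = A\,l^{1/(1-\alpha)}(1+u_l)$ and extracting the leading Euler--Maclaurin-type correction, which gives $r_j = O(\log j/j)$ (or any power-law rate), whence $\sum_j |r_j|/j < \infty$. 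For the remaining regime where $l - m$ stays below a fixed constant, each of the finitely many factors $1+\sigma_v^2\sigma_w^2\Vt\dot\psi_\alpha(\p\qq j)$ lies between two positive constants (since $\p\qq j$ does), so $\p\daleth{l-m}/\p\daleth l$ and $(l/(l-m))^R$ agree there up to a bounded factor as well. Combining the two regimes yields $\p\daleth{l-m} = \Theta(1)\,\p\daleth l\,(l/(l-m))^R$ with constants independent of $l$ and $m$, as claimed.
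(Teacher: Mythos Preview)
Your proposal is correct and mirrors the paper's proof almost exactly: the paper also reduces to the one-step factor $\prv\daleth/\daleth - 1 = \sigma_v^2\sigma_w^2\alpha^2\cV_{\alpha-1}\qq^{\alpha-1}$, substitutes the refined forward asymptotic $\p\pp l = K_1 l^{1/(1-\alpha)} - K_2 l^{\alpha/(1-\alpha)}\log l + o(\cdots)$ (this is exactly the ``bootstrap one order further'' you propose; in the paper it is recorded as \cref{thm:pDynamicLT1ReLU}) to obtain $\prv\daleth/\daleth - 1 = R\,l^{-1} + \Theta(l^{-2}\log l)$, and then telescopes in the exponent. Your Gamma-ratio handling of $\prod_j(1+R/j)$ and your separate treatment of the regime where $l-m$ is bounded are cosmetic variations on the same argument; if anything, your version is slightly more explicit about uniformity in $m$ than the paper's.
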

This exponent $\f{\alpha^2}{(1 - \alpha)(2\alpha - 1)}$ is minimized at $\alpha = \f 3 4$ on $\alpha \in (3/4, 1)$, where the value is $\f 9 2$ (and at $\alpha = \f 2 3$ on $\alpha \in (1/2, 1)$, where the value achieved is 4) (\cref{fig:backprop_exponent_alpha-relu}(a)).

As a corollary,
\begin{restatable}{thm}{alphaReLUAllGradients}\label{thm:alphaReLUAllGradients}
If $\phi = \psi_1$ in an FRN, then for $l \ge m \ge 0,$
\begin{align*}
\p \chi {l-m} _b &= \Theta(1) \p \daleth l B^m,&
\p \chi {l-m} _w &= \Theta(1) \p \daleth l B^l,\\
\p \chi {l-m} _v&= \Theta(1) \p \daleth l B^l,&
\p \chi {l-m} _a &= \Theta(1) \p \daleth{l} B^m.
\end{align*}
where $B = 1 + \sigma_v^2\sigma_w^2/2$.

If $\phi = \psi_\alpha$ in an FRN, for $\alpha < 1$, then for $l \ge m \ge 0,$
\begin{align*}
\p \chi {l-m} _b &= \Theta(1) \p \daleth l l^R (l-m)^{-R-1},&
\p \chi {l-m} _w &= \Theta(1) \p \daleth l l^R (l-m)^{\f \alpha {1-\alpha} - R},\\
\p \chi {l-m} _v &= \Theta(1) \p \daleth l l^R (l-m)^{\f \alpha {1-\alpha} - R},&
\p \chi {l-m} _a &= \Theta(1) \p \daleth{l} (l/(l-m))^R.
\end{align*}
\end{restatable}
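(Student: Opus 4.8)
The paper itself labels this as a corollary, and indeed the plan is to treat it as a pure substitution into \cref{thm:dalethRecFull}, \cref{lemma:VtPsiAlpha}, \cref{thm:pDynamicAlphaReLU}, and \cref{thm:dalethDynamicsAlphaReLU}. By \cref{thm:dalethRecFull}, in an FRN every parameter gradient is $\p\daleth{l-m}$ times a power of $\p\qq{l-m}$ (plus one extra factor $\p\pp{l-m-1}$ for $\chi_w$): $\chi_b=\sigma_v^2\daleth\Vt\dot\phi(\qq)$, $\chi_w=\sigma_v^2\daleth\Vt\dot\phi(\qq)\prv\pp$, $\chi_v=\daleth\Vt\phi(\qq)$, $\chi_a=\daleth$. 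So once the asymptotics of $\p\daleth l$, $\p\pp l$, $\p\qq l$ are plugged in, the theorem reduces to bookkeeping of exponents. The one preliminary computation is the transform of the derivative: since $\dot\psi_\alpha(x)=\alpha x^{\alpha-1}$ for $x>0$ and $0$ otherwise, i.e. $\dot\psi_\alpha=\alpha\psi_{\alpha-1}$ off the origin, \cref{lemma:VtPsiAlpha} gives $\Vt\dot\psi_\alpha(q)=\alpha^2\Vt\psi_{\alpha-1}(q)=\alpha^2\cV_{\alpha-1}q^{\alpha-1}$, whose defining Gaussian integral converges exactly when $2(\alpha-1)>-1$, i.e. $\alpha>\tfrac12$; in particular $\Vt\psi_1(q)=\tfrac12 q$ and $\Vt\dot\psi_1\equiv\tfrac12$.

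For $\alpha=1$ I would use $\p\pp l,\p\qq l=\Theta(B^l)$ with $B=1+\sigma_v^2\sigma_w^2/2$ (\cref{thm:pDynamicAlphaReLU}) and $\p\daleth{l-m}=\p\daleth l B^m$ (\cref{thm:dalethDynamicsAlphaReLU}), whence $\p\chi{l-m}_b=\tfrac{\sigma_v^2}{2}\p\daleth l B^m$, $\p\chi{l-m}_w=\tfrac{\sigma_v^2}{2}\p\daleth l B^m\p\pp{l-m-1}=\Theta(1)\p\daleth l B^l$, $\p\chi{l-m}_v=\p\daleth{l-m}\tfrac12\p\qq{l-m}=\Theta(1)\p\daleth l B^l$, and $\p\chi{l-m}_a=\p\daleth{l-m}=\p\daleth l B^m$. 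Taking $l-m=0$ in \cref{thm:dalethDynamicsAlphaReLU} gives $\p\daleth l B^l=\Theta(1)\p\daleth 0$, so the $\chi_w,\chi_v$ formulas exhibit the layer-independent weight gradients — the headline claim.

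For $\alpha\in(\tfrac34,1)$ I would use $\p\pp l,\p\qq l=\Theta(l^{1/(1-\alpha)})$ (\cref{thm:pDynamicAlphaReLU}) and $\p\daleth{l-m}=\Theta(1)\p\daleth l\,l^R(l-m)^{-R}$ with $R=\tfrac{\alpha^2}{(1-\alpha)(2\alpha-1)}$ (\cref{thm:dalethDynamicsAlphaReLU}). Then $\Vt\dot\psi_\alpha(\p\qq{l-m})=\Theta((l-m)^{-1})$ since $(\alpha-1)/(1-\alpha)=-1$, giving $\p\chi{l-m}_b=\Theta(1)\p\daleth l\,l^R(l-m)^{-R-1}$; multiplying by the extra $\p\pp{l-m-1}=\Theta((l-m)^{1/(1-\alpha)})$ and using $\tfrac1{1-\alpha}-1=\tfrac\alpha{1-\alpha}$ yields $\p\chi{l-m}_w=\Theta(1)\p\daleth l\,l^R(l-m)^{\alpha/(1-\alpha)-R}$; $\Vt\psi_\alpha(\p\qq{l-m})=\Theta((l-m)^{\alpha/(1-\alpha)})$ gives the same exponent for $\chi_v$; and $\p\chi{l-m}_a=\p\daleth{l-m}=\Theta(1)\p\daleth l(l/(l-m))^R$. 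These match the stated formulas verbatim.

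I do not expect a genuinely hard step; the substance is entirely inherited from the three cited theorems. The two places needing care are: (i) the identity $\Vt\dot\psi_\alpha(q)=\alpha^2\cV_{\alpha-1}q^{\alpha-1}$ in spite of the singularity of $\dot\psi_\alpha$ at the origin — this is precisely the $\alpha>\tfrac12$ convergence condition, and is the reason the statement cannot be pushed below $\alpha=\tfrac12$ (while \cref{thm:dalethInfVarAlphaReLU} separately controls whether the mean-field predictions match empirics for $\alpha\le\tfrac34$); and (ii) checking that the $\Theta(\cdot)$ constants carried over from \cref{thm:pDynamicAlphaReLU,thm:dalethDynamicsAlphaReLU} are uniform in the free indices $l,m$, which they are, being single-sequence asymptotics, with the range of small $l-m$ absorbed into the constants.
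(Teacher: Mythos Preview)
Your proposal is correct and is exactly the approach the paper intends: its own proof reads in full ``similar to that of \cref{thm:dalethExpSqrtTanhAllGrad}'', i.e.\ plug the formulas of \cref{thm:dalethRecFull} together with the asymptotics of $\pp,\qq$ from \cref{thm:pDynamicAlphaReLU}, of $\daleth$ from \cref{thm:dalethDynamicsAlphaReLU}, and the closed forms $\Vt\psi_\alpha(q)=\cV_\alpha q^\alpha$, $\Vt\dot\psi_\alpha(q)=\alpha^2\cV_{\alpha-1}q^{\alpha-1}$ (your derivation of the latter is precisely the paper's \cref{lemma:Vt_dotpsi_alpha}). Your care about the $\alpha>\tfrac12$ integrability threshold and about uniformity of the $\Theta$-constants in $l,m$ goes slightly beyond what the paper spells out but is consistent with it.
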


\cref{fig:alphaReLUVerifyExponents} verifies the backward asymptotic dynamics empirically for different $\alpha < 1$.
\cref{fig:backprop_exponent_alpha-relu}(b) graphs the exponent $\f \alpha {1-\alpha} - R$ in terms of $\alpha$.
We see that on $[0.5, 1]$, the maximum of this exponent is at $\alpha = 1$.
\begin{figure}
\centering
\includegraphics[width=.4\textwidth]{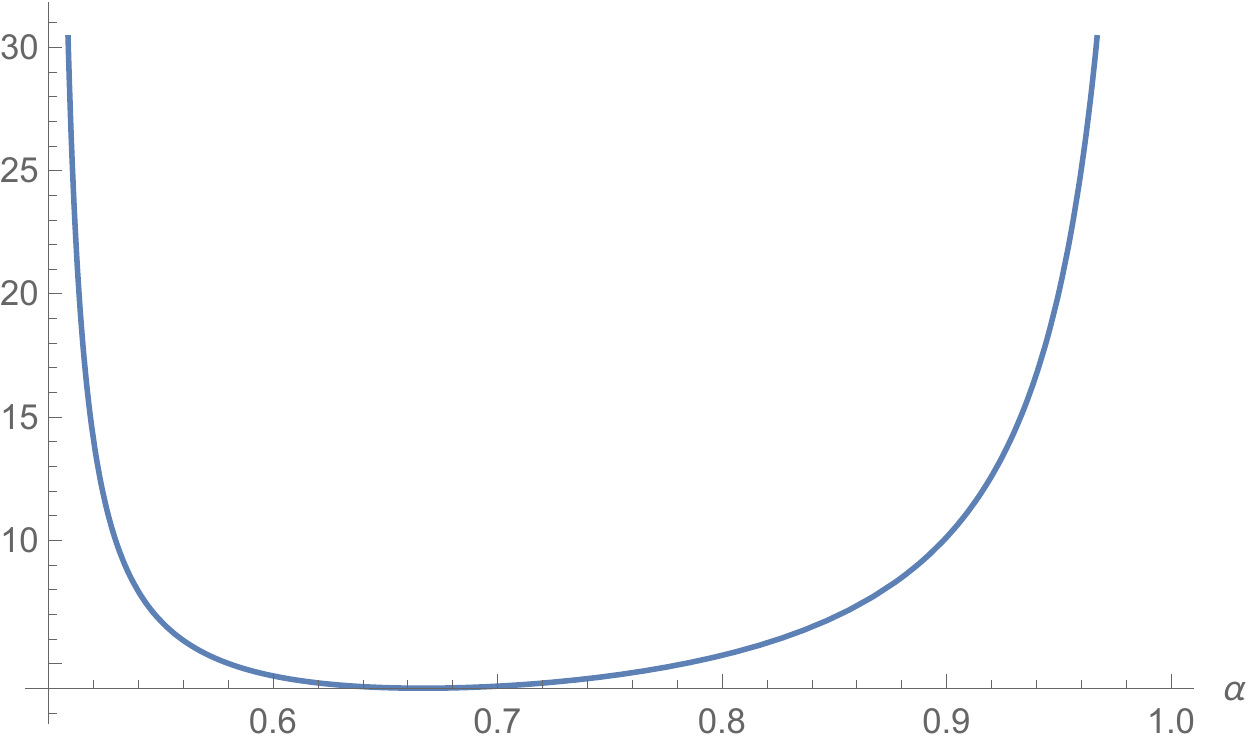}
\includegraphics[width=.4\textwidth]{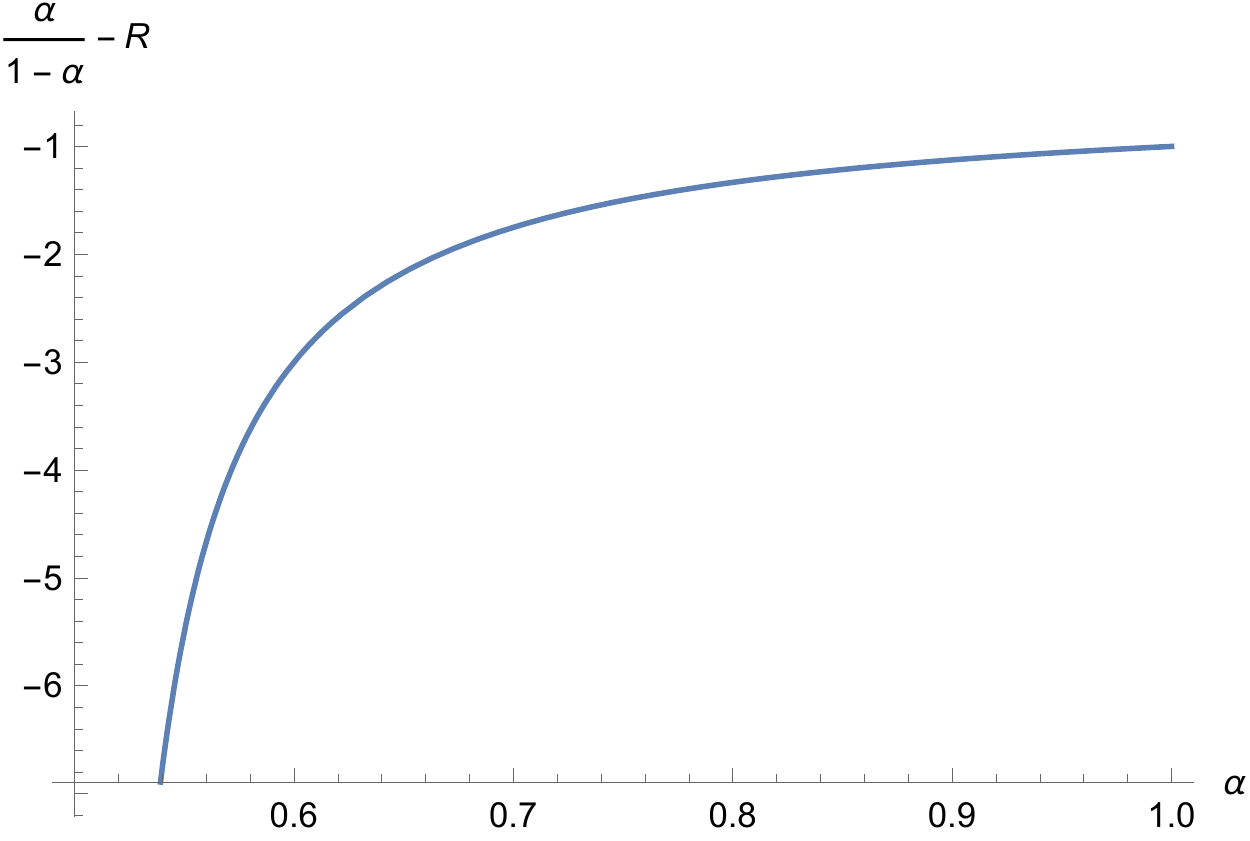}
\caption{\textbf{(a)} The exponent of the polynomial gradient dynamics with respect to $\alpha$-ReLU versus $\alpha$.
\textbf{(b)} The exponent of the dynamics of $\chi_v$ and $\chi_w$.}
\label{fig:backprop_exponent_alpha-relu}
\end{figure}

\section{Proofs}

A brief note about notation: We use $\sim$ to denote both how a random variable is sampled (ex: $x \sim \Gaus(0, 1)$ for a Gaussian $x$) and how a function behaves asymptotically, i.e. $f(x) \sim g(x)$ as $x \to a$ iff $\lim_{x \to a} f(x)/g(x) = 1$.
Context should be enough to differentiate between these two cases.
We in addition use $\simeq$ to denote asymptotic expansion.
For example, if $\{\alpha_i\}_{i \ge 0}$ is a sequence of strictly decreasing reals and $\{\beta_i\}_{i \ge 0}$ is a sequence of nonzero reals, then
$$f(x) \simeq \sum_{i \ge 0} \beta_i (x - \xi)^{\alpha_i}$$
means that as $x \to \xi$, $f(x) - \sum_{i = 0}^N \beta_i (x - \xi)^{\alpha_i} = \Theta((x- \xi)^{\alpha_{N+1}})$.
\subsection{Preliminary Lemmas}

\begin{lemma}\label{lemma:cExpansion}
We have
$$\f{\sigma_w^2  \gamma + \sigma_b^2}{\sigma_w^2  \pp + \sigma_b^2} =  \ee (1 + O( \gamma^{-1})).$$
regardless of whether $\p \ee l = \p \gamma l /\p \pp l$ converges.

But suppose $\p \ee l = \p \gamma l /\p \pp l \to \ee^*$.
If $\ee^* < 1$, then
$$\f{\sigma_w^2  \gamma + \sigma_b^2}{\sigma_w^2  \pp + \sigma_b^2} =  \ee (1 + \Theta( \gamma^{-1})).$$
If $\ee^* = 1$, then
$$\f{\sigma_w^2  \gamma + \sigma_b^2}{\sigma_w^2  \pp + \sigma_b^2} =  \ee (1 + \Theta( \eps  \pp^{-1})),$$
where $\eps = 1 - \ee$.
\end{lemma}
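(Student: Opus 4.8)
The plan is to write everything in terms of $\ee = \gamma/\pp$ and the ``error'' $\eps = 1-\ee$, and then expand the ratio as a geometric-type series. First I would factor the denominator out of the numerator:
\begin{align*}
\f{\sigma_w^2\gamma + \sigma_b^2}{\sigma_w^2\pp + \sigma_b^2}
&= \ee\cdot\f{\sigma_w^2\pp + \sigma_b^2/\ee}{\sigma_w^2\pp + \sigma_b^2}
= \ee\lp 1 + \f{\sigma_b^2(1/\ee - 1)}{\sigma_w^2\pp + \sigma_b^2}\rp
= \ee\lp 1 + \f{\sigma_b^2\,\eps}{\ee(\sigma_w^2\pp + \sigma_b^2)}\rp.
\end{align*}
So the correction term is exactly $\f{\sigma_b^2\eps}{\ee(\sigma_w^2\pp+\sigma_b^2)}$, and the whole lemma reduces to estimating this quantity in the three regimes. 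Here $\sigma_b,\sigma_w$ are fixed positive constants, $\pp\to\infty$ (indeed $\pp = \Theta(l)$ or faster by the $\pp$-recurrence theorems, and in all cases $\pp\to\infty$, so $\sigma_w^2\pp+\sigma_b^2 = \Theta(\pp) = \Theta(\gamma/\ee)$), and $\ee\in(0,1]$.

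For the first (unconditional) claim: since $\sigma_w>0$, $\sigma_w^2\pp+\sigma_b^2 = \Theta(\pp)$, and $\gamma = \ee\pp$, we get $\f{1}{\sigma_w^2\pp+\sigma_b^2} = O(\pp^{-1}) = O(\ee^{-1}\gamma^{-1}) = O(\gamma^{-1})$ once we absorb $\eps/\ee \le 1/\ee$ — but wait, without a lower bound on $\ee$ this last step needs care. Actually $\eps/\ee \cdot \f{1}{\sigma_w^2\pp+\sigma_b^2}$: bounding $\eps\le 1$ and noting $\ee\pp = \gamma$, so $\f{\eps}{\ee(\sigma_w^2\pp+\sigma_b^2)} \le \f{1}{\ee\sigma_w^2\pp} = \f{1}{\sigma_w^2\gamma} = O(\gamma^{-1})$. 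Good — this gives the $O(\gamma^{-1})$ bound with no convergence hypothesis. For the second claim, assume $\ee\to\ee^*<1$; then $\ee$ and $\eps$ are both bounded away from $0$ for large $l$, so the correction is $\Theta(1)\cdot\Theta(\pp^{-1}) = \Theta(\gamma^{-1})$ (using $\gamma = \ee\pp = \Theta(\pp)$), upgrading $O$ to $\Theta$. For the third claim, assume $\ee\to 1$, so $\ee = \Theta(1)$ and $\eps\to 0$; then the correction is $\f{\sigma_b^2}{\ee}\cdot\f{\eps}{\sigma_w^2\pp+\sigma_b^2} = \Theta(\eps\pp^{-1})$, which is the claimed $\Theta(\eps\,\pp^{-1})$.

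The only real subtlety — the ``main obstacle'' such as it is — is keeping the asymptotic bookkeeping honest in the unconditional case, where $\ee$ could in principle be small: one must route the estimate through $\gamma = \ee\pp$ rather than through $\pp$ directly so that the stray factor of $1/\ee$ cancels cleanly and one genuinely lands on $O(\gamma^{-1})$ rather than $O(\ee^{-1}\gamma^{-1})$. Everything else is elementary algebra plus the already-established facts that $\sigma_w>0 \Rightarrow \sigma_w^2\pp+\sigma_b^2 = \Theta(\pp)$ and $\pp\to\infty$. I would also remark that the identity is exact before any asymptotics, which makes all three regimes fall out of the single displayed factorization above.
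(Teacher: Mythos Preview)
Your proof is correct and follows essentially the same approach as the paper: factor out $\ee$ from the ratio and analyze the multiplicative correction. The paper writes $M = \sigma_b^2/\sigma_w^2$ and expands $\ee\cdot\frac{1+M\gamma^{-1}}{1+M\pp^{-1}} = \ee(1 + M(\gamma^{-1}-\pp^{-1}) + O(\pp^{-1}(\gamma^{-1}-\pp^{-1})))$ via a geometric series, whereas you obtain the \emph{exact} identity with correction $\frac{\sigma_b^2\eps}{\ee(\sigma_w^2\pp+\sigma_b^2)}$; this is marginally cleaner since no Taylor remainder needs to be tracked, and your routing through $\ee\pp=\gamma$ to absorb the $1/\ee$ in the unconditional case is exactly the right move (the paper's analogue is the observation $\gamma^{-1}-\pp^{-1}=O(\gamma^{-1})$ since $\gamma\le\pp$).
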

\begin{proof}
Write $M = \sigma_b^2/\sigma_w^2$.
\begin{align*}
\f{\sigma_w^2  \gamma + \sigma_b^2}{\sigma_w^2  \pp + \sigma_b^2} &=  \ee (1 + \f{1 + M \gamma^{-1}}{1 + M \pp^{-1}})\\
&=  \ee (1 + M(\inv \gamma - \inv \pp) + O(\inv \pp (\inv \gamma - \inv \pp))).
\end{align*}
In any situation, $\inv \gamma - \inv \pp = O(\inv \gamma)$ because $\gamma \le \pp$, so this gives the first statement.
If $\ee^*$ exists and $\ee^* < 1$, then $\inv \gamma - \inv \pp = \Theta(\inv \gamma)$, which yields the second statement.
If $\ee^*$ exists and $\ee^* = 1$, then $\inv \gamma - \inv \pp = \inv \pp((1-\eps)^{-1} - 1) = \inv \pp (\eps + O(\eps^2)) = \Theta(\eps \inv \pp)$.
\end{proof}

For any function $f$ that is $(k+1)$-times differentiable in a neighborhood of $0$, we have the asymptotic expansion
$$f(z) = \sum_{n = 0}^k \f{d^n f}{dz^n}(0) \f{z^n}{n!} + O(z^{k+1}), \text{as } z \to 0.$$
Since
\begin{align*}
	\left.\f{d^n}{d(1/q)^n}q^{1/2}\Vt \phi( q)\right\rvert_{q \to \infty} &= \f {(-1)^n} {2^n\sqrt{2\pi}}\int_{-\infty}^\infty \phi^2(z) z^{2n} \dd z
\end{align*}
whenever the RHS is integrable, we have
\begin{lemma}\label{lemma:Vt_asymptotic}
	Suppose $\phi^2(z) z^{2n}$ is integrable over $z \in \R$ for all $0 \le n \le N + 1$.
	Then $\Vt \phi( q) = q^{-1/2} (\sum_{n = 0}^N C_n q^{-n} + O(q^{-N-1}))$ as $q \to \infty$, where
	$$C_n := \f {(-1)^n} {2^n n! \sqrt{2\pi}}\int_{-\infty}^\infty \phi^2(z) z^{2n} \dd z.$$
\end{lemma}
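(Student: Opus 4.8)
The plan is to derive the claimed expansion from the Gaussian integral definition $\Vt\phi(q) = \EV[\phi(z)^2 : z \sim \Gaus(0,q)] = \frac{1}{\sqrt{2\pi q}}\int_{-\infty}^\infty \phi^2(z) e^{-z^2/(2q)}\,\dd z$ by expanding the exponential factor as a function of the small parameter $u := 1/q$ near $u = 0$. Concretely, I would set $g(u) := \sqrt{q}\,\Vt\phi(q) = \frac{1}{\sqrt{2\pi}}\int_{-\infty}^\infty \phi^2(z) e^{-u z^2/2}\,\dd z$ and show that $g$ is $(N+1)$-times differentiable at $u = 0$ with $g^{(n)}(0) = \frac{(-1)^n}{2^n\sqrt{2\pi}}\int \phi^2(z) z^{2n}\,\dd z$, matching the displayed formula for the derivatives of $q^{1/2}\Vt\phi(q)$ in $1/q$ quoted just before the lemma. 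Granting this, the finite Taylor expansion $g(u) = \sum_{n=0}^N \frac{g^{(n)}(0)}{n!}u^n + O(u^{N+1})$ as $u \to 0^+$ is exactly the asserted expansion once we divide back by $\sqrt{q}$ and substitute $u = 1/q$, with $C_n = g^{(n)}(0)/n!$.

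The key steps, in order, are: (i) justify differentiating under the integral sign $n$ times for $0 \le n \le N+1$ — each differentiation in $u$ brings down a factor $-z^2/2$, so the $n$-th derivative integrand is (up to sign and powers of $2$) $\phi^2(z) z^{2n} e^{-u z^2/2}$; for $u$ in a neighborhood of $0$ (say $u \ge 0$) this is dominated by $\phi^2(z) z^{2n}$, which is integrable by hypothesis for all $n \le N+1$, so dominated convergence / the standard differentiation-under-the-integral theorem applies; (ii) evaluate at $u = 0$ to get $g^{(n)}(0) = \frac{(-1)^n}{2^n\sqrt{2\pi}}\int \phi^2(z) z^{2n}\,\dd z$; (iii) apply Taylor's theorem with remainder to $g$ at $0$ through order $N$, using that $g^{(N+1)}$ exists and is bounded near $0$ (again by the domination in (i), since $|g^{(N+1)}(u)| \le \frac{1}{2^{N+1}\sqrt{2\pi}}\int \phi^2(z) z^{2(N+1)}\,\dd z < \infty$ uniformly for $u \ge 0$), which gives the $O(u^{N+1})$ error; (iv) substitute $u = 1/q$ and multiply through by $q^{-1/2}$.

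The main obstacle — really the only nontrivial point — is the uniform integrable domination needed in step (i) over a neighborhood of $u = 0$: one must be careful that the neighborhood is taken to be $[0,\eps)$ (or at worst $(-\eps,\eps)$ with $\eps$ small enough that $e^{\eps z^2/2}\phi^2(z)z^{2n}$ is still integrable, which may fail if $\phi$ does not decay — e.g.\ for $\phi$ bounded like $\tanh$ it fails for negative $u$). Since $q \to \infty$ corresponds to $u \to 0^+$, restricting to $u \ge 0$ is harmless and makes the domination by $\phi^2(z)z^{2n}$ immediate; I would state the argument for the one-sided limit, which is all that is needed. Everything else is a routine application of Taylor's theorem with the Lagrange or integral form of the remainder.
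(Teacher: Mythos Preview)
Your proposal is correct and follows essentially the same approach as the paper: the paper also sets up $q^{1/2}\Vt\phi(q)$ as a function of $1/q$, computes its $n$-th derivative at $1/q = 0$ to be $\frac{(-1)^n}{2^n\sqrt{2\pi}}\int\phi^2(z)z^{2n}\,\dd z$, and then invokes Taylor's theorem. Your write-up is in fact more careful than the paper's sketch, explicitly justifying differentiation under the integral via domination by $\phi^2(z)z^{2n}$ and noting that the expansion must be taken one-sidedly at $u=0^+$ (a point the paper leaves implicit).
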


Note that $\sech^d(z) = \Theta(e^{-d|z|})$ for $z \to \infty$ as long as $d > 0$, so that $C_n$ from the above result converges when $\phi = \sech^d$.
Therefore 
\begin{lemma}\label{lemma:Vt_sech_asymptotics}
	Let $d > 0$.
	We have $\Vt \sech^d( q) \simeq q^{-1/2} \sum_{n \ge 0} C_n q^{-n}$, where
	$$C_n := \f {(-1)^n} {2^n n! \sqrt{2\pi}}\int_{-\infty}^\infty \sech^{2d}(z) z^{2n} \dd z.$$
\end{lemma}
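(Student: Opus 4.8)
The plan is to derive this as an immediate corollary of \cref{lemma:Vt_asymptotic}, whose only hypothesis --- integrability of $\phi^2(z) z^{2n}$ over $\R$ for $0 \le n \le N+1$ --- I would verify here for \emph{every} $N$ simultaneously. The one elementary ingredient is the tail estimate $\sech^d(z) = \Theta(e^{-d|z|})$ as $|z| \to \infty$ (valid because $d > 0$, as already remarked just before the statement), which gives $\sech^{2d}(z) z^{2n} = \Theta(e^{-2d|z|} z^{2n})$; this is integrable on $\R$ for all $n \ge 0$ because exponential decay dominates any polynomial. So \cref{lemma:Vt_asymptotic} applies with $\phi = \sech^d$ and arbitrary $N$, yielding for each fixed $N$
\begin{align*}
\Vt\sech^d(q) = q^{-1/2}\Bigl(\sum_{n=0}^{N} C_n q^{-n} + O(q^{-N-1})\Bigr) \quad\text{as } q \to \infty,
\end{align*}
with $C_n$ exactly as in the statement.

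Next I would confirm that this conforms to the precise meaning of $\simeq$ fixed in the preamble to this section: the exponents $-1/2 - n$ are strictly decreasing, and it remains to show the $C_n$ are nonzero and that truncating after the $n = N$ term leaves a $\Theta(q^{-N-3/2})$ remainder (the order of the next term). Nonvanishing is immediate: $\int_{-\infty}^\infty \sech^{2d}(z) z^{2n} \dd z > 0$ because the integrand is continuous, nonnegative, and not identically zero, so $C_n \neq 0$. For the remainder, apply the displayed expansion with $N$ replaced by $N+1$ and subtract its first $N+1$ terms; what is left equals $C_{N+1} q^{-N-3/2} + O(q^{-N-5/2}) = \Theta(q^{-N-3/2})$, using $C_{N+1} \neq 0$. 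This is exactly the assertion $\Vt\sech^d(q) \simeq q^{-1/2}\sum_{n \ge 0} C_n q^{-n}$.

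There is no substantive obstacle here: the lemma is a routine specialization of \cref{lemma:Vt_asymptotic}. The only step needing a moment's care is the last one --- checking that ``$O$-accuracy to every finite order'' plus ``all coefficients nonzero'' upgrades to the $\Theta$-type remainder built into the definition of $\simeq$ --- and even that reduces to the one-line computation above once nonvanishing of the $C_n$ is established.
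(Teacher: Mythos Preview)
Your proposal is correct and follows essentially the same approach as the paper: both reduce the lemma to \cref{lemma:Vt_asymptotic} by noting that the exponential tail decay $\sech^d(z) = \Theta(e^{-d|z|})$ makes $\sech^{2d}(z)z^{2n}$ integrable for every $n$. Your write-up is in fact more careful than the paper's --- you explicitly check that each $C_n \neq 0$ and that the $O$-remainder upgrades to the $\Theta$-remainder required by the definition of $\simeq$, whereas the paper leaves these verifications implicit.
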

As corollaries, we obtain the following asymptotics.
\begin{lemma}\label{lemma:Vt_dot_tanh_asymptotics}
	$\Vt \dot \tanh( q) = \f 2 3 \sqrt{\f 2 \pi} q^{-1/2} + \Theta(q^{-3/2})$ as $q \to \infty$.
\end{lemma}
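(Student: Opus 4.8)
The plan is to reduce this to \cref{lemma:Vt_sech_asymptotics} and then pin down the leading coefficient by an elementary integral. First I would use $\dot\tanh(z) = 1 - \tanh^2(z) = \sech^2(z)$, so the function whose $\Vt$-transform is in question is literally $\sech^2$; hence $\Vt\dot\tanh(q) = \Vt\sech^2(q)$, and we are exactly in the setting of \cref{lemma:Vt_sech_asymptotics} with $d = 2$. That lemma then gives the full asymptotic expansion $\Vt\sech^2(q) \simeq q^{-1/2}\sum_{n\ge 0} C_n q^{-n}$ with $C_n = \frac{(-1)^n}{2^n n!\sqrt{2\pi}}\int_{-\infty}^\infty \sech^4(z)\,z^{2n}\,\dd z$; in particular, truncating the $\simeq$ after the $n=1$ term yields $\Vt\dot\tanh(q) = C_0 q^{-1/2} + C_1 q^{-3/2} + \Theta(q^{-5/2})$ as $q\to\infty$.

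It then remains to (i) identify $C_0$ with $\frac 2 3\sqrt{\frac 2\pi}$ and (ii) observe $C_1\ne 0$, so that $C_1 q^{-3/2}$ together with the $\Theta(q^{-5/2})$ remainder collapses to exactly $\Theta(q^{-3/2})$. For (i), I would compute $\int_{-\infty}^\infty \sech^4 z\,\dd z$ by writing $\sech^4 z = \sech^2 z\,(1-\tanh^2 z)$ and using $\frac{d}{dz}\tanh z = \sech^2 z$, so an antiderivative is $\tanh z - \frac 1 3\tanh^3 z$; evaluating between $\pm\infty$ gives $(1-\tfrac13) - (-1+\tfrac13) = \frac 4 3$. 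Hence $C_0 = \frac{4/3}{\sqrt{2\pi}} = \frac{4}{3\sqrt{2\pi}}$, and the one-line simplification $\frac{4}{3\sqrt{2\pi}} = \frac{2\sqrt 2}{3\sqrt\pi} = \frac 2 3\sqrt{\frac 2\pi}$ matches the claimed leading term. For (ii), $C_1 = -\frac{1}{2\sqrt{2\pi}}\int_{-\infty}^\infty z^2\sech^4 z\,\dd z$, whose integrand is nonnegative, continuous, not identically zero, and integrable (since $z^2\sech^4 z$ decays like $z^2 e^{-4|z|}$), so $C_1 < 0$ and in particular $C_1\ne 0$.

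There is essentially no obstacle here: the substantive content is entirely packaged in \cref{lemma:Vt_sech_asymptotics}, and all that is left is the elementary evaluation $\int\sech^4 = \frac 4 3$ and the arithmetic reconciling $\frac{4}{3\sqrt{2\pi}}$ with $\frac 2 3\sqrt{\frac 2\pi}$. If one preferred to sidestep even the check that $C_1\ne 0$, the $n=0$ truncation of \cref{lemma:Vt_sech_asymptotics} already gives $\Vt\dot\tanh(q) = C_0 q^{-1/2} + O(q^{-3/2})$, which with the explicit $C_0$ would suffice for the statement read with $O$; since the paper writes $\Theta(q^{-3/2})$, the short sign computation of $C_1$ above is what makes the remainder genuinely of that order.
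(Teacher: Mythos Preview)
Your approach is essentially identical to the paper's: invoke \cref{lemma:Vt_sech_asymptotics} with $d=2$ via $\dot\tanh=\sech^2$, then compute $\int\sech^4=\tfrac43$ to identify $C_0=\tfrac23\sqrt{2/\pi}$. The only addition is your explicit check that $C_1<0$ to justify the $\Theta$ rather than $O$ on the remainder, which the paper's proof leaves implicit; otherwise the arguments match.
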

\begin{proof}
	Use \cref{lemma:Vt_sech_asymptotics} along with the fact that $\dot \tanh(z) = \sech^2(z)$ and $\int \sech^4 z \dd z = \f 2 3 \tanh z + \f 1 2 \sech^2 z \tanh z$.
\end{proof}
\begin{lemma}\label{lemma:vtanhSqrtConvergence}
	$1 - \Vt \tanh( q) = \sqrt{\f 2 \pi} q^{-1/2} + \Theta(q^{-3/2})$ as $q \to \infty$.
\end{lemma}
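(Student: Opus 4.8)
The plan is to reduce this to the already-established expansion \cref{lemma:Vt_sech_asymptotics}. The key observation is the identity $1 - \tanh^2(z) = \sech^2(z)$: for $z \sim \Gaus(0,q)$ this gives
$$1 - \Vt\tanh( q) = 1 - \EV[\tanh^2(z)] = \EV[\sech^2(z)] = \Vt \sech^1( q),$$
where on the right we recognize exactly the transform of \cref{lemma:Vt_sech_asymptotics} with $d = 1$, since $(\sech^1(z))^2 = \sech^2(z)$ and $\sech^2(z) z^{2n}$ is integrable over $\R$ for every $n$ (exponential decay of $\sech^2$).

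First I would apply \cref{lemma:Vt_sech_asymptotics} at $d=1$, obtaining $\Vt\sech^1( q) \simeq q^{-1/2}\sum_{n \ge 0} C_n q^{-n}$ with $C_n = \f{(-1)^n}{2^n n!\sqrt{2\pi}}\int_{-\infty}^\infty \sech^2(z) z^{2n} \dd z$, and then truncate at $N = 1$ to get $\Vt\sech^1( q) = C_0 q^{-1/2} + C_1 q^{-3/2} + O(q^{-5/2})$. It then remains to compute $C_0$ and to verify $C_1 \ne 0$. We have
$$C_0 = \f 1 {\sqrt{2\pi}}\int_{-\infty}^\infty \sech^2(z)\,\dd z = \f 1 {\sqrt{2\pi}}\bigl[\tanh z\bigr]_{-\infty}^{\infty} = \f 2 {\sqrt{2\pi}} = \sqrt{\tfrac 2 \pi},$$
which is the claimed leading coefficient. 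For $C_1$ one uses the standard value $\int_{-\infty}^\infty z^2\sech^2(z)\,\dd z = \pi^2/6 > 0$ (e.g. integrate by parts to turn it into $\int_{-\infty}^\infty |z|\tanh z\,\dd z$ and sum the resulting series, or cite it), so $C_1 = -\f 1 {2\sqrt{2\pi}}\cdot\f{\pi^2}6 \ne 0$. Hence $C_1 q^{-3/2} + O(q^{-5/2}) = \Theta(q^{-3/2})$, and altogether $1 - \Vt\tanh( q) = \sqrt{2/\pi}\,q^{-1/2} + \Theta(q^{-3/2})$.

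The argument is essentially routine once the reduction via $1 - \tanh^2 = \sech^2$ is made. The only point requiring a little care is the second integral: we need it merely to be nonzero (the asymptotic notation in the paper is sign-less) so that the remainder is genuinely $\Theta(q^{-3/2})$ rather than just $O(q^{-3/2})$; I would therefore sign-check $\int z^2\sech^2 z\,\dd z$ explicitly rather than leave it implicit.
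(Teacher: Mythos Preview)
Your proof is correct and follows essentially the same approach as the paper: reduce via $1-\tanh^2=\sech^2$ to $\Vt\sech^1$, invoke \cref{lemma:Vt_sech_asymptotics}, and evaluate $\int\sech^2 z\,\dd z=2$ for the leading coefficient. Your extra care in checking $C_1\ne 0$ to justify the $\Theta$ (rather than merely $O$) is a welcome addition the paper leaves implicit; note however that your parenthetical integration-by-parts hint is slightly off ($\int_{-\infty}^\infty |z|\tanh z\,\dd z$ diverges), though this is irrelevant since positivity of $\int z^2\sech^2 z\,\dd z$ is immediate from the integrand being nonnegative.
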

\begin{proof}
	Use \cref{lemma:Vt_sech_asymptotics} along with the fact that $1 - \tanh^2(z) = \sech^2(z)$ and $\int \sech^2 z  \dd z = \tanh z$.
\end{proof}

\begin{lemma}\label{lemma:sech_lower_bound}
$\sech^2(t) \ge \exp(-t^2)$ for all $t$, with equality iff $t = 0$.
\end{lemma}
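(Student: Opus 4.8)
The plan is to reduce the claimed inequality to the classical bound $\cosh t \le e^{t^2/2}$. Since $\sech^2 t = 1/\cosh^2 t > 0$ and $e^{-t^2} > 0$, taking reciprocals and positive square roots shows that $\sech^2(t) \ge e^{-t^2}$ is equivalent to $e^{t^2} \ge \cosh^2 t$, i.e.\ to $e^{t^2/2} \ge \cosh t$, and equality holds in one exactly when it holds in the other. So it suffices to prove $\cosh t \le e^{t^2/2}$ for all $t$, with equality iff $t = 0$.

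The quickest route is a power-series comparison: $\cosh t = \sum_{n\ge 0} \frac{t^{2n}}{(2n)!}$ and $e^{t^2/2} = \sum_{n\ge 0} \frac{t^{2n}}{2^n n!}$, both absolutely convergent for every $t$. Termwise it is enough that $(2n)! \ge 2^n n!$, which holds because $(2n)!/(2^n n!) = 1\cdot 3 \cdot 5 \cdots (2n-1) \ge 1$; this ratio equals $1$ only for $n = 0, 1$ and is at least $3$ for $n \ge 2$. Summing gives $\cosh t \le e^{t^2/2}$, and when $t \ne 0$ the $n = 2$ terms are strictly different, so the inequality is strict. Hence equality holds iff $t = 0$.

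Alternatively — and this is the version I would actually write out, as it is self-contained — set $f(t) := \frac{t^2}{2} - \log\cosh t$ on $\R$. Then $f$ is even with $f(0) = 0$, so it suffices to show $f(t) > 0$ for $t > 0$. We compute $f'(t) = t - \tanh t$; writing $g(t) := t - \tanh t$ we have $g(0) = 0$ and $g'(t) = 1 - \sech^2 t = \tanh^2 t > 0$ for $t \ne 0$, so $g(t) > 0$, i.e.\ $f'(t) > 0$, for all $t > 0$. Integrating from $0$ yields $f(t) > 0$ for $t > 0$. Exponentiating $f(t) \ge 0$ gives $\cosh t \le e^{t^2/2}$ with equality iff $t = 0$, and undoing the reduction above finishes the proof.

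There is essentially no obstacle here; the only point that needs a little care is the equality case, which in the series argument is pinned down by the first strictly decreasing term ($n = 2$), and in the calculus argument by the strict positivity of $g'$ away from the origin.
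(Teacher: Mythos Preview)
Your proof is correct, and both of your arguments are cleaner than the paper's. The paper makes the same reduction to $\cosh t \le e^{t^2/2}$, but writes it as $2 \ge e^{t-t^2/2} + e^{-t-t^2/2}$ and then locates critical points of the right-hand side: differentiating gives $(1-t)e^{t-t^2/2} - (1+t)e^{-t-t^2/2}$, setting this to zero yields $\frac{1-t}{1+t} = e^{-2t}$, and the paper then expands $\log\frac{1-t}{1+t} = -2t - 2t^3 - \cdots$ to conclude $t=0$ is the only solution and checks it is a maximum. Your calculus route is the same idea after a logarithm --- working with $f(t) = t^2/2 - \log\cosh t$ instead of $2e^{-t^2/2}\cosh t$ --- but the derivative $f'(t) = t - \tanh t$ is much more transparent, and the sign of $f'' = \tanh^2 t$ settles everything without any series manipulation. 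Your power-series comparison is more elementary still and gives the strict inequality for $t \ne 0$ directly from the $n=2$ term. Either of your arguments would be a strict improvement over the paper's proof.
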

\begin{proof}
The lower bound is equivalent to
\begin{align*}
2 & \ge e^{t-t^2/2} + e^{-t -t^2/2}
\end{align*}
The RHS has derivative $(1 - t)e^{t - t^2/2} - (1 + t)e^{-t-t^2/2}$.
This is 0 iff
\begin{align*}
\f{1 - t}{1 + t} = e^{-2t}
\end{align*}
which has a solution 0 and in general can only have solution $t\in (-1, 1)$ (by considering the sign of the LHS).
Since each side is analytic in $t \in (-1, 1)$, we expand
\begin{align*}
\log \f{1 - t}{1 + t} &= \log e^{-2t}\\
\log(1-t) - \log(1+t) &= -2t\\
(-t-t^2-\cdots) - (t-t^2+\cdots) &= -2t\\
-2t-2t^3-\cdots &= -2t
\end{align*}
which shows that the only solution is $t = 0$.
A simple plot shows that $t=0$ is a maximum, where the bound in question achieves equality. 

\end{proof}

\begin{lemma} \label{lemma:V_dot_tanh_lower_bound}
Suppose $\phi = \tanh$.
Then $\Vt \dot \phi( q) \ge \f 1 {\sqrt{4q+1}}$.
\end{lemma}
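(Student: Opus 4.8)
The plan is to reduce the statement to a single Gaussian integral, using the pointwise bound already proved in \cref{lemma:sech_lower_bound}. First I would rewrite the left-hand side: since $\dot\tanh(z) = \sech^2(z)$, \cref{defn:integralTransform} gives
\[
\Vt \dot\tanh( q) = \EV[\dot\tanh(z)^2 : z \sim \Gaus(0,q)] = \EV[\sech^4(z) : z \sim \Gaus(0,q)].
\]

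Next, I would square the inequality of \cref{lemma:sech_lower_bound}: both $\sech^2(t)$ and $e^{-t^2}$ are positive, so $\sech^2(t) \ge e^{-t^2}$ yields $\sech^4(t) \ge e^{-2t^2}$ for all $t$. Taking expectations over $z \sim \Gaus(0,q)$ preserves the inequality, so $\Vt\dot\tanh(q) \ge \EV[e^{-2z^2} : z \sim \Gaus(0,q)]$.

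Finally, I would evaluate this last expectation by completing the square in the exponent. Writing out the Gaussian density $\f{1}{\sqrt{2\pi q}}e^{-z^2/(2q)}$, the integrand is $e^{-z^2(2 + 1/(2q))} = e^{-z^2 (4q+1)/(2q)}$, and using $\int_{\R} e^{-a z^2}\, dz = \sqrt{\pi/a}$ with $a = (4q+1)/(2q)$ gives
\[
\EV[e^{-2z^2}] = \f{1}{\sqrt{2\pi q}}\sqrt{\f{2\pi q}{4q+1}} = \f{1}{\sqrt{4q+1}},
\]
which combined with the previous display proves the bound. There is essentially no obstacle here beyond bookkeeping; the real work — establishing the elementary inequality $\sech^2(t) \ge e^{-t^2}$ — was already carried out in \cref{lemma:sech_lower_bound}, and everything downstream is a routine moment-generating-function computation.
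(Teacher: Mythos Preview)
Your proof is correct and is essentially the same as the paper's: both invoke \cref{lemma:sech_lower_bound} to get $\sech^4(t)\ge e^{-2t^2}$ pointwise, push this through the Gaussian expectation defining $\Vt\dot\phi(q)$, and evaluate the resulting Gaussian integral to $\tfrac{1}{\sqrt{4q+1}}$. The only cosmetic difference is that the paper first rescales to a standard Gaussian (writing $\dot\phi^2(\sqrt q\,z)$ against $\Gaus(0,1)$) before integrating, whereas you keep $z\sim\Gaus(0,q)$ throughout.
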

As a sanity check, \cref{lemma:Vt_dot_tanh_asymptotics} shows that $\Vt \dot \phi( q) \sim C_0 q^{1/2}$ where $C_0 \approx .5319$, which is above the .5 in this lemma.
\begin{proof}
	By \cref{lemma:sech_lower_bound},
	\begin{align*}
	\Vt\dot\phi( q) &= \int \dd \mu(z) \dot \phi^2(\sqrt q z)\\
	&\ge \f 1 {\sqrt{2\pi}} \int \dd z \exp(-z^2/2 - 2qz^2)\\
	&= \f 1 {\sqrt{2\pi}} \int \dd z \exp(-(4q + 1)z^2/2)\\
	&= \f 1 {\sqrt{4q + 1}}.
	\end{align*}
\end{proof}

\cref{fig:V_dot_tanh_lower_bound} demonstrates \cref{lemma:V_dot_tanh_lower_bound}.

\begin{figure}
\centering
\includegraphics[width=.4\textwidth]{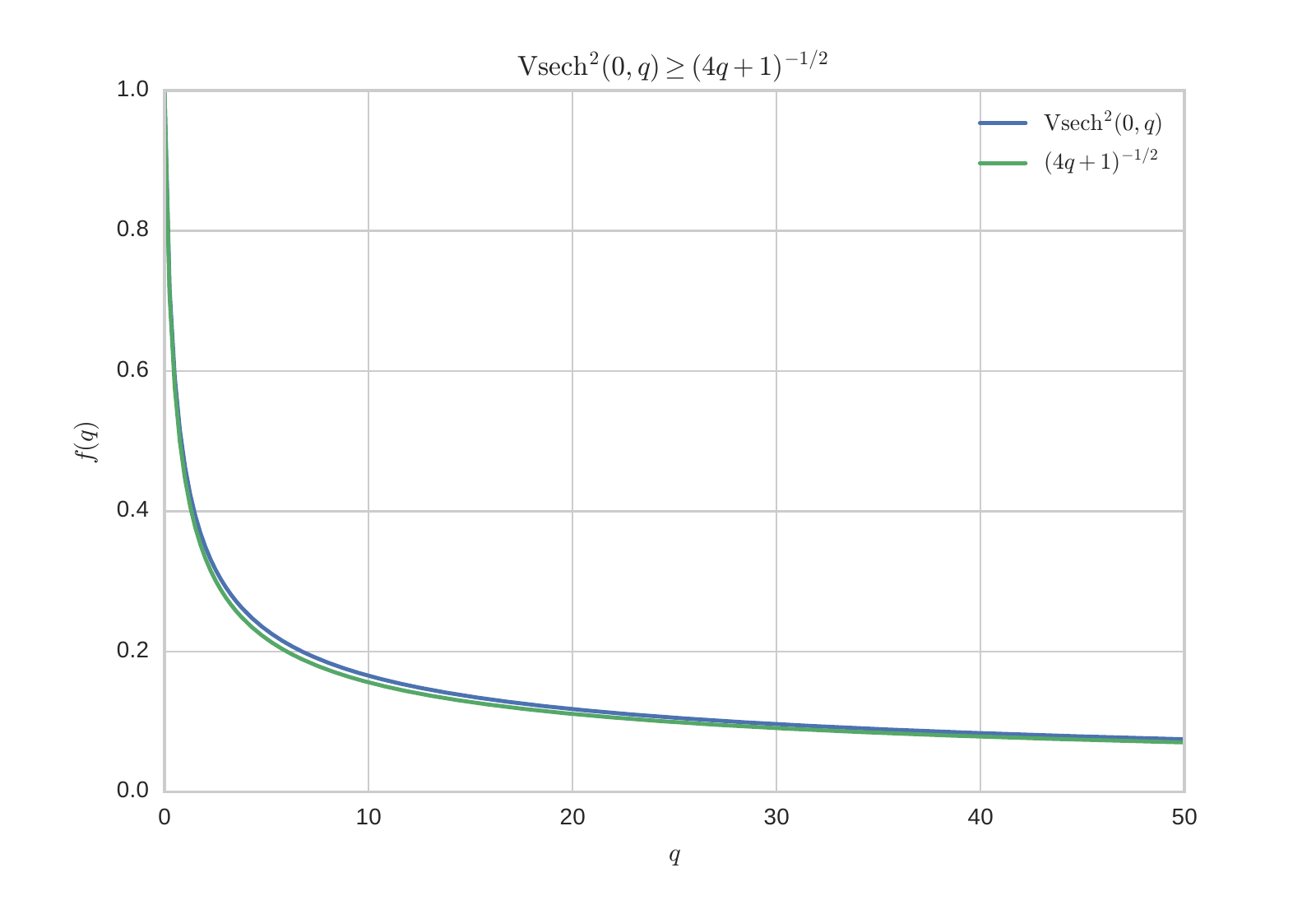}
\caption{Illustration of \cref{lemma:V_dot_tanh_lower_bound}:
$\Vt \dot \phi( q)$ vs $\f 1 {\sqrt{4q+1}}$ for $\phi = \tanh$.
This bound is very tight, and for most purposes, $\f 1 {\sqrt{4q+1}}$ can be taken as a good approximation of $\Vt \dot \phi( q)$.}
\label{fig:V_dot_tanh_lower_bound}
\end{figure}

\begin{lemma}\label{lemma:power_sum_asymptotics}
Let $d \in \R$ and $1 < M < N$ with $N - M \in \Z^{\ge 0}$.
Set $\Sigma(M, N, d) := \sum_{a=M}^N a^d$.
If we fix $M$ and let $N \to \infty$,
$$\Sigma(M, N, d) = \begin{cases}
	\Theta(1) & \text{if $d < -1$}\\
	\log N + O(1) & \text{if $d = -1$}\\
	\f{N^{d+1}}{d+1} + O(1) & \text{if $-1 < d < 0$}\\
	N - M + 1 & \text{if $d =0$}\\
	\f 1 {d+1}N^{d+1} + \f 1 2 N^d + O(N^{\max(0, d-1)}) & \text{if $d > 0$}
\end{cases}$$
\end{lemma}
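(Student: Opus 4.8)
The plan is to compare the partial sum $\Sigma(M,N,d)=\sum_{a=M}^N a^d$ with the integral $\int x^d\,dx$, treating the sign of $d$ separately; throughout, $M$ is fixed and $N\to\infty$. The case $d=0$ is immediate, since $\Sigma(M,N,0)=N-M+1$ exactly, so assume $d\neq 0$ henceforth. (If $M\notin\Z$ the "integers" in the sum are really the points $M,M{+}1,\dots,N$; every argument below goes through verbatim after replacing $\{x\}$ by $\{x-M\}$, so I will present the case $M\in\Z$ for readability.)

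\textbf{Negative exponents.} For $d<0$ the map $x\mapsto x^d$ is strictly decreasing on $(0,\infty)$, so $\int_a^{a+1}x^d\,dx\le a^d\le\int_{a-1}^a x^d\,dx$ for every integer $a\ge M>1$; summing over $M\le a\le N$ gives the sandwich
$$\int_M^{N+1}x^d\,dx\ \le\ \Sigma(M,N,d)\ \le\ M^d+\int_M^N x^d\,dx .$$
I would then evaluate the two integrals, using $\int_M^N x^d\,dx=\tfrac{N^{d+1}-M^{d+1}}{d+1}$ for $d\neq-1$ and $=\log N-\log M$ for $d=-1$, together with $(N+1)^{d+1}=N^{d+1}+O(N^d)=N^{d+1}+O(1)$. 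When $d<-1$, the upper integral is bounded by $\int_M^\infty x^d\,dx<\infty$ while $\Sigma(M,N,d)\ge M^d>0$, and $\Sigma(M,N,d)$ is nondecreasing in $N$, so $\Sigma=\Theta(1)$; when $d=-1$, both ends equal $\log N+O(1)$; when $-1<d<0$, both ends equal $\tfrac{N^{d+1}}{d+1}+O(1)$. In each subcase the sandwich pinches to the stated asymptotics.

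\textbf{Positive exponents.} For $d>0$ I would invoke the first-order Euler--Maclaurin formula: with $f(x):=x^d$,
$$\Sigma(M,N,d)=\int_M^N f(x)\,dx+\f{f(M)+f(N)}{2}+\int_M^N\Bigl(\{x\}-\tfrac12\Bigr)f'(x)\,dx .$$
The integral term is $\tfrac{N^{d+1}}{d+1}+O(1)$ and the midpoint term is $\tfrac{N^d}{2}+O(1)$, giving the two leading terms of the claimed expansion. For the error integral I would integrate by parts once against a bounded periodic antiderivative $P$ of $\{x\}-\tfrac12$ (the periodized second Bernoulli polynomial), obtaining $\bigl[P(x)f'(x)\bigr]_M^N-\int_M^N P(x)f''(x)\,dx$. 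Since $f'(x)=d\,x^{d-1}$ the boundary term is $O(N^{d-1})$, and since $P$ is bounded and $f''(x)=d(d-1)x^{d-2}$ the remaining integral is $O\!\bigl(\int_M^N x^{d-2}\,dx\bigr)$, which is $O(1)$ for $d<1$ and $O(N^{d-1})$ for $d>1$; altogether the error is $O(N^{\max(0,d-1)})$. The borderline $d=1$ is cleanest read off directly: $\Sigma(M,N,1)=\tfrac{N(N+1)}{2}-\tfrac{M(M-1)}{2}=\tfrac{N^2}{2}+\tfrac N2+O(1)$, matching the general formula.

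I expect the only genuinely delicate step to be the sharpness of this last remainder estimate: the naive bound $\int_M^N(\{x\}-\tfrac12)f'(x)\,dx=O\!\bigl(\int_M^N x^{d-1}\,dx\bigr)=O(N^d)$ is too weak to reach $O(N^{\max(0,d-1)})$, so the single extra integration by parts against $P$ is what actually does the work (and one must be slightly careful near $d=1$, where the general bound would produce a spurious $\log N$ but the exact identity above shows the error is in fact $O(1)$). Everything else reduces to routine estimation of elementary integrals.
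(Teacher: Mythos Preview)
Your proposal is correct. For $d\le 0$ it is essentially identical to the paper's proof: both sandwich the sum between two integrals of $x^d$ and read off the asymptotics from the antiderivative.

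For $d>0$ the two arguments diverge in style. The paper avoids Euler--Maclaurin and instead writes $\int_a^{a+1}z^d\,dz - a^d = \tfrac d2 a^{d-1}+\Theta(a^{d-2})$ via a Taylor expansion of $(a+1)^{d+1}$, then sums to get $\Sigma(M,N,d)=\int_M^{N+1}z^d\,dz-\tfrac d2\Sigma(M,N,d-1)+\Theta(\Sigma(M,N,d-2))$ and bootstraps using the already-established cruder bounds at exponents $d-1$ and $d-2$. You instead appeal to the first-order Euler--Maclaurin formula and do one integration by parts against the bounded periodic antiderivative of $\{x\}-\tfrac12$ to land directly on the $O(N^{\max(0,d-1)})$ remainder. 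Your route is the textbook one and gets the sharp error in a single stroke; the paper's route is slightly more elementary (no Bernoulli machinery, just binomial expansion) but mildly recursive. The paper even remarks at the end that Euler--Maclaurin would give further terms, so it is aware of your approach and simply opted for the self-contained version. One small note: your caution about a ``spurious $\log N$'' at $d=1$ is unnecessary, since $f''\equiv 0$ there kills the remainder integral outright; your direct check of $d=1$ is fine but not strictly needed.
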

\begin{proof}
	Consider the integrals $A = \int_M^{N+1} a^d \dd a$ and $B = \int^N_{M-1} a^d \dd a $.
	They evaluate to $A = \f 1 {d+1}((N+1)^{d+1} - M^{d+1})$ and $B = \f 1 {d+1}(N^{d+1} - (M-1)^{d+1})$ when $d \not = -1$ and to $A = \log(N+1) - \log M$ and $B = \log N - \log(M-1)$ when $d = -1$.
	When $d \le 0$, we have $A \le B$ and $\Sigma(M, N, d) \in [A, B]$; when $d > 0$, $B \le A$ and $\Sigma(M, N, d) \in [B, A].$
	Thus, as $N \to \infty$ with $M$ fixed, when $d < -1$, $\Sigma(M, N, d) = \Theta(1)$;
		when $d = -1$, $\Sigma(M, N, -1) = \log N + O(1)$;
		and when $d > -1$, we have $\Sigma(M, N, d) = \f{N^{d+1}}{d+1} + O(N^{d})$.
	
	Now for $a > 0$ and $d > -1$ and $d \not = 0, 1$,
	\begin{align*}
	\int_a^{a+1} z^d - a^d \dd z &= \f 1 {d+1} ( (a+1)^{d+1} - a^d )\\
		&= (a^d + \f d 2 a^{d-1} + \cdots ) - a^d\\
		&= \f d 2 a^{d-1} + \Theta(a^{d-2}).
	\end{align*}
	where the hidden constants in $\Theta$ depend only on $d$ (and in fact this term vanishes if $d = 1$).
	Thus
	\begin{align*}
	\Sigma(M, N, d) &= \int_M^{N+1} z^d \dd z - \sum_{a=M}^N [ \f d 2 a^{d-1} + \Theta(a^{d-2})]\\
		&= \f 1 {d+1} ((N+1)^{d+1} - M^{d+1}) - \f d 2 \Sigma(M, N, d-1) + \Theta(\Sigma(M, N, d-2))
	\end{align*}
	If $-1 < d < 0$, then $\Sigma(M, N, d - 1) = \Theta(1)$, so that $\Sigma(M, N, d) = \f{(N+1)^{d+1}}{d+1} + O(1) = \f{N^{d+1}}{d+1} + O(1).$
	If $d > 0$ and $d \not = 1$, then $\Sigma(M, N, d - 1) = \f{N^d}d$, so that 
	\begin{align*}
	\Sigma(M, N, d) &= \f 1 {d+1} N^{d+1} + N^{d} + \Theta(N^{\max(0, d-1)}) - \f 1 2 N^d + \Theta(\Sigma(M, N, d-2)) \\
				&= \f 1 {d+1}N^{d+1} + \f 1 2 N^d + O(N^{\max(0, d-1)}).
	\end{align*}
\end{proof}
We can obtain more terms in the expansion for higher $d$ via the Euler-Maclaurin formula, but this suffices for our purposes.

\subsection{Dynamics Zoo}
\renewcommand{\cc}{c}
This section deduces the asymptotic behaviors of some sequences governed by recurrence equations.
For the most part, the leading term of their asymptotic expansions is as one would expect from the corresponding differential equation.
However, in some cases we need subleading terms for later results.
They require slightly more nuanced reasoning.
First we present a technical lemma.
\begin{lemma}\label{lemma:simpleDynamics}
	Let $F: \R \times \N \to \R$ be a function such that for a subset $U \sbe \R$, and for all $z, z' \in U, z \ge z' \implies F(z, n) \ge F(z', n)$ for every $n$.
	Suppose sequences $\p a l, \p b l, \p \cc l$ satisfy 
	\begin{itemize}
		\item $\p a {l+1} = F(\p a l, l)$ for all $l$;
		\item $\p b {l+1} \le F(\p b l, l)$ for all $l$ above a constant $K_b$.
		\item $\p \cc {l+1} \ge F(\p \cc l, l)$ for all $l$ above a constant $K_c$.
	\end{itemize}
	and furthermore, $\p a l, \p b l, \p \cc l$ all fall into $U$ for $l$ above a constant $K_U$.

	If for some $m \ge \max(K_b, K_U)$, $\p b m \le \p a m$, then $\p b l \le \p a l, \forall l \ge m$.
	Similarly, if for some $n \ge \max(K_c, K_U)$, $\p \cc n \ge \p a n$, then $\p \cc l \ge \p a l, \forall l \ge n$.
\end{lemma}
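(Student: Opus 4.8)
The plan is to prove both halves by a single straightforward induction on $l$, driven entirely by the monotonicity of $F$ in its first argument; I will carry out the $\p b \bullet$ case in detail and obtain the $\p \cc \bullet$ case by reversing all inequalities.

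First I would set up the induction on $l \ge m$ with base case $l = m$, which is exactly the hypothesis $\p b m \le \p a m$. For the inductive step, suppose $\p b l \le \p a l$ for some $l \ge m$. Since $l \ge m \ge \max(K_b, K_U)$, two facts hold simultaneously at index $l$: (i) because $l \ge K_U$, both $\p a l$ and $\p b l$ lie in $U$, so the monotonicity hypothesis applies to the ordered pair $\p b l \le \p a l$, giving $F(\p b l, l) \le F(\p a l, l)$; and (ii) because $l \ge K_b$, the sub-recurrence $\p b {l+1} \le F(\p b l, l)$ is in force. Combining these with the defining recurrence for $\p a \bullet$,
\[ \p b {l+1} \le F(\p b l, l) \le F(\p a l, l) = \p a {l+1}, \]
which closes the induction. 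Since $l+1 > m \ge \max(K_b, K_U)$, the threshold conditions remain satisfied at the next index, so the conclusion $\p b l \le \p a l$ propagates to all $l \ge m$.

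The $\p \cc \bullet$ case runs identically starting from $n \ge \max(K_c, K_U)$: assuming $\p \cc l \ge \p a l$ with $l \ge n$, monotonicity of $F$ gives $F(\p \cc l, l) \ge F(\p a l, l) = \p a {l+1}$, and the sub-recurrence $\p \cc {l+1} \ge F(\p \cc l, l)$ then yields $\p \cc {l+1} \ge \p a {l+1}$. There is no genuinely hard step here; the only point requiring care is the bookkeeping of the three constants, ensuring that at \emph{every} index visited by the induction the relevant one-sided recurrence \emph{and} the $U$-membership (hence the applicability of the monotonicity of $F$) hold at once. This is precisely why the hypotheses demand $m \ge \max(K_b, K_U)$ and $n \ge \max(K_c, K_U)$ rather than merely $m \ge K_b$ or $n \ge K_c$.
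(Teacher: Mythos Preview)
Your proof is correct and follows essentially the same approach as the paper: a straightforward induction using the monotonicity of $F$ together with the sub/super-recurrence, chaining $\p b {l+1} \le F(\p b l, l) \le F(\p a l, l) = \p a {l+1}$ (and the reverse for $\p \cc \bullet$). Your version is in fact more careful than the paper's about the role of the thresholds $K_b, K_c, K_U$.
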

\begin{proof}[]
	For the first claim: $\p b m \le \p a m \implies \p b {m+1} \le F(\p b m, m) \le F(\p a m, m) = \p a {m+1}$.
	Here the last inequality used the monotonicity of $F$.
	Induction gives the desired result.
	
	It's similar for the second claim, where the inductive step is $\p \cc m \ge \p a m \implies \p \cc {m+1} \ge F(\p \cc m, m) \ge F(\p a m, m) = \p a {m+1}$.
\end{proof}

\begin{lemma} \label{lemma:alphaDeltaRec}
	Suppose $\p \eps l$ satisfies the recurrence
	$$\p \eps l = \p \eps {l-1} (1 + \f \delta {l^\beta}).$$
	for some nonzero constant $\delta \in \R$ independent of $l$.
	\begin{itemize}
		\item If $\beta > 1$, then $\p \eps l = \Theta(1)$.
		\item If $\beta = 1$, then $\p \eps l = \Theta(l^{\delta})$.
		\item If $0 < \beta < 1$, then $\p \eps l = \exp(\f{\delta}{1 - \beta}l^{1-\beta} + \tilde\Theta(l^{\psi_1(1-2\beta)}))$, where $\psi_1(x) = \max(0, x)$ is the ReLU function.
	\end{itemize}
\end{lemma}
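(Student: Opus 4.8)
The plan is to solve the recurrence in closed form as a product and then take logarithms. Unrolling gives $\p \eps l = \p \eps 0 \prod_{k=1}^{l}\left(1 + \f \delta {k^\beta}\right)$ (and since the claim concerns magnitudes and is vacuous if the sequence vanishes, I may assume $\p \eps 0 \ne 0$). Because $\f \delta {k^\beta} \to 0$, fix a constant $l_0$ with $1 + \f \delta {k^\beta} > 0$ for all $k > l_0$; then the sign of $\p \eps l$ is eventually constant and, for $l \ge l_0$,
$$\log\left|\p \eps l\right| = \log\left|\p \eps {l_0}\right| + \sum_{k=l_0+1}^{l} \log\left(1 + \f \delta {k^\beta}\right).$$
Since the summand is $O(k^{-\beta})$, the expansion $\log(1+x) = x - \f 1 2 x^2 + O(x^3)$ gives $\log\left(1 + \f \delta {k^\beta}\right) = \delta k^{-\beta} - \f{\delta^2}{2} k^{-2\beta} + O(k^{-3\beta})$. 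I would then sum term by term, feeding each power sum into \cref{lemma:power_sum_asymptotics} with exponents $d = -\beta, -2\beta, -3\beta$, and exponentiate at the end.

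If $\beta > 1$, every power sum converges, so $\log|\p \eps l|$ tends to a finite limit and $\p \eps l = \Theta(1)$. If $\beta = 1$, \cref{lemma:power_sum_asymptotics} gives $\sum_k k^{-1} = \log l + O(1)$ while $\sum_k k^{-2}$ and the higher sums are $O(1)$, hence $\log|\p \eps l| = \delta \log l + O(1)$ and $\p \eps l = \Theta(l^\delta)$. If $0 < \beta < 1$, the leading sum is $\sum_k \delta k^{-\beta} = \f \delta {1-\beta} l^{1-\beta} + O(1)$; the quadratic correction $-\f{\delta^2}{2}\sum_k k^{-2\beta}$ is $\Theta(l^{1-2\beta})$ when $\beta < \f 1 2$, $\Theta(\log l)$ when $\beta = \f 1 2$, and $O(1)$ when $\beta > \f 1 2$; and the cubic-and-higher remainder $\sum_k O(k^{-3\beta})$, together with the $O(1)$ slack from the leading sum and the additive constant $\log|\p \eps {l_0}|$, is of strictly smaller order than $l^{\max(0,\,1-2\beta)}$ in every case. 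Collecting, the exponent equals $\f \delta {1-\beta} l^{1-\beta} + \tilde\Theta(l^{\psi_1(1-2\beta)})$, and since $1-2\beta < 1-\beta$ this correction is genuinely subdominant; exponentiating yields the stated form.

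The argument is routine bookkeeping once \cref{lemma:power_sum_asymptotics} is in hand; the only step needing care is the last case, where one must check that the $k^{-2\beta}$ term controls the correction \emph{two-sidedly} rather than merely as an upper bound --- this is exactly where the hypothesis $\delta \ne 0$ (so that $-\delta^2/2 \ne 0$) is used --- and one must treat the boundary $\beta = \f 1 2$ separately, observing that the resulting $\log l$ correction still lies inside $\tilde\Theta(l^0) = \tilde\Theta(1)$ under our convention for $\tilde\Theta$. I do not anticipate any genuine obstacle beyond this.
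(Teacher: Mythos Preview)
Your proposal is correct and takes essentially the same approach as the paper: unroll the recurrence into a product, take logarithms, Taylor-expand $\log(1+\delta/k^\beta)$, and evaluate the resulting power sums. The paper's proof is terser (it stops the expansion at the $\Theta(k^{-2\beta})$ term rather than going to $O(k^{-3\beta})$, and does not separately discuss the sign issue or the $\beta=\tfrac12$ boundary), but the substance is identical; your version is if anything a bit more careful. One tiny quibble: your claim that the cubic-and-constant contributions are of \emph{strictly} smaller order than $l^{\max(0,1-2\beta)}$ is not literally true when $\beta>\tfrac12$ (they are $O(1)$, same as $l^0$), but this does not affect the conclusion since everything is still $\tilde\Theta(1)$ in that regime.
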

\begin{proof}
	We have
	\begin{align*}
	\log \p \eps l &= \log \p \eps {l-1} + \log(1 + \delta/l^\beta)\\
	&= \log \p \eps {l-1} + \delta/l^\beta + \Theta(\delta^2/l^{2\beta})
	\end{align*}
	for large $l$.
	If $\beta > 1$, then $\sum_l l^{-\beta}$ converges, and
	\begin{align*}
	\log \p \eps l &= \log \p \eps 0 - \Theta(1)\\
	\p \eps l &= \Theta(1).
	\end{align*}
	If $\beta = 1$, then
	\begin{align*}
	\log \p \eps l &= \log \p \eps 0 + \delta \log l + \Theta(1)\\
	\p \eps l &= \Theta(l^{\delta}).
	\end{align*}
	If $\beta < 1$, then
	\begin{align*}
	\log \p \eps l &= \log \p \eps 0 + \f\delta{1 - \beta} l^{1 - \beta} + \tilde\Theta(l^{1 - 2\beta})\\
	\p \eps l &= \exp( \f\delta{1 - \beta} l^{1 - \beta} + \tilde\Theta(l^{\psi_1(1 - 2\beta)})).
	\end{align*}
	
\end{proof}

\begin{lemma} \label{lemma:alphaDeltaDynamics}
	Suppose $\p \eps l = C l^{-\alpha} + \p \eps{l-1} (1 + \delta/l^{\beta})$ for $\alpha \in \R$, $C\not=0$, and $\delta \not = 0$.
	Then
	\begin{itemize}
		\item If $\beta > 1$, then
		\begin{itemize}
			\item $\p \eps l = \Theta(l^{1 - \alpha})$ if $\alpha \in (0, 1)$;
			\item $\p \eps l = \Theta(\log l)$ if $\alpha = 1$;
			\item $\p \eps l = \Theta(1)$ if $\alpha > 1$.
		\end{itemize}
		\item If $\beta = 1$, then
		\begin{itemize}
			\item $\p \eps l = \Theta(l^{\max(\delta, 1-\alpha)})$ if $1-\delta \not = \alpha$.
			\item $\p \eps l = \Theta(l^{\delta} \log l)$ if $1-\delta = \alpha$.
		\end{itemize}
	\end{itemize}
	Furthermore, for $\beta = -\delta = 1$, $\p \eps l \sim l^{-1}$ if $\alpha > 2$, $\p \eps l \sim l^{1-\alpha}$ if $\alpha < 2$, and $\p \eps l \sim l^{\delta} \log l$ if $\alpha = 2$.
\end{lemma}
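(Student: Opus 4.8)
The plan is to treat the recurrence as a linear first-order difference equation, solve it with a summation factor, and then reduce everything to the power-sum asymptotics of \cref{lemma:power_sum_asymptotics}. Fix an integer $K > \max(1,-\delta)$, so that $1 + \delta/k^\beta > 0$ for all $k \ge K$, and set $P_l := \prod_{k=K}^{l}\bigl(1 + \delta/k^\beta\bigr)$ with $P_{K-1} := 1$. Since $P_l/P_{l-1} = 1 + \delta/l^\beta$, dividing the recurrence by $P_l$ and telescoping from $K$ to $l$ gives
\[
\frac{\p \eps l}{P_l} = \p \eps {K-1} + C\sum_{k=K}^{l}\frac{k^{-\alpha}}{P_k},
\qquad\text{i.e.}\qquad
\p \eps l = P_l\left(\p \eps {K-1} + C\sum_{k=K}^{l}\frac{k^{-\alpha}}{P_k}\right).
\]
By \cref{lemma:alphaDeltaRec}, $P_l = \Theta(1)$ when $\beta > 1$ and $P_l = \Theta(l^{\delta})$ when $\beta = 1$, so the summand $k^{-\alpha}/P_k$ is $\Theta(k^{-\alpha})$ in the first regime and $\Theta(k^{-\alpha-\delta})$ in the second.

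What remains is to feed the exponent $d = -\alpha$ (case $\beta>1$) or $d = -\alpha-\delta$ (case $\beta=1$) into \cref{lemma:power_sum_asymptotics}. For $\beta > 1$: the partial sum is $\Theta(1)$, $\log l + O(1)$, or $\Theta(l^{1-\alpha})$ according as $\alpha > 1$, $\alpha = 1$, or $\alpha < 1$, and since $P_l = \Theta(1)$ this is also the order of $\p \eps l$ (here $C \ne 0$ ensures the $\log l$ term is genuinely unbounded). For $\beta = 1$: the partial sum is $\Theta(1)$, $\log l + O(1)$, or $\Theta(l^{1-\alpha-\delta})$ according as $\alpha+\delta > 1$, $\alpha+\delta = 1$, or $\alpha+\delta < 1$; multiplying by $P_l = \Theta(l^{\delta})$ turns these into $\Theta(l^{\delta})$, $\Theta(l^{\delta}\log l)$, and $\Theta(l^{1-\alpha})$ respectively, which is exactly $\Theta(l^{\max(\delta,1-\alpha)})$ off the threshold and $\Theta(l^{\delta}\log l)$ on the threshold $1 - \delta = \alpha$.

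For the sharpened ``$\sim$'' statements when $\beta = -\delta = 1$, I would exploit that the summation factor telescopes to a closed form: with $K = 2$, $P_l = \prod_{k=2}^{l}(1 - 1/k) = 1/l$, so $u_l := l\,\p \eps l$ satisfies the plain recursion $u_l = u_{l-1} + C\,l^{1-\alpha}$, hence $u_l = u_1 + C\sum_{k=2}^{l}k^{1-\alpha}$. Plugging $d = 1-\alpha$ into \cref{lemma:power_sum_asymptotics} yields the sharp leading term of this sum --- $\sim \tfrac{l^{2-\alpha}}{2-\alpha}$ for $\alpha < 2$, $\sim \log l$ for $\alpha = 2$, and a finite limit $c$ for $\alpha > 2$ --- and dividing by $l$ gives $\p \eps l \sim \tfrac{C}{2-\alpha}\,l^{1-\alpha}$, $\p \eps l \sim C\,l^{-1}\log l$, and $\p \eps l \sim c\,l^{-1}$, matching the claimed orders $l^{1-\alpha}$, $l^{\delta}\log l$, and $l^{-1}$.

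The only delicate point --- the nearest thing to an obstacle --- is the two regimes in which the series $\sum_{k} k^{-\alpha}/P_k$ converges, namely $\alpha > 1$ with $\beta > 1$ and $\alpha+\delta > 1$ with $\beta = 1$ (and, in the refinement, $\alpha > 2$). There the displayed formula reads $\p \eps l \sim P_l\bigl(\p \eps {K-1} + C\sum_{k=K}^{\infty}k^{-\alpha}/P_k\bigr)$, and the bracket could in principle vanish for exceptional initial data, which would make $\p \eps l$ of strictly smaller order than stated. In every use of this lemma the relevant $\p \eps l$ is sign-definite with $C$ of the same sign, so the bracket is automatically nonzero; I would simply record that hypothesis (or assume $\p \eps l > 0$, $C > 0$) rather than chase degenerate initial conditions, leaving the rest as the routine bookkeeping of matching each $(\alpha,\beta,\delta)$ regime to the corresponding line of \cref{lemma:power_sum_asymptotics}.
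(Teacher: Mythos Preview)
Your approach is essentially the same as the paper's: both unwind the recurrence via the product $\prod_k(1+\delta/k^\beta)$, invoke \cref{lemma:alphaDeltaRec} to size that product, and then reduce to power-sum asymptotics case by case. Your presentation is somewhat more careful---the shifted starting index $K$, the explicit closed form $P_l=1/l$ in the $\beta=-\delta=1$ refinement, and the flagged degeneracy when the summand series converges---but the underlying argument is identical.
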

\begin{proof}
	We can unwind the recurrence to get
	\begin{align*}
	\p \eps l &= \sum_{m=1}^l m^{-\alpha} \prod_{n=m+1}^l (1 + \f \delta {n^\beta}) + \p \eps 0 \prod_{n=1}^l (1 + \f \delta {n^\beta})
	\end{align*}
	
	Suppose $\beta > 1$.
	By \cref{lemma:alphaDeltaRec}, we get
	\begin{align*}
	\p \eps l &= \Theta(1)\sum_{m=1}^l m^{-\alpha} + \p \eps 0 \Theta(1)\\
	&= \begin{cases}
	\Theta(l^{1 - \alpha}) & \text{if $\alpha \in (0, 1)$}\\
	\Theta(\log l) & \text{if $\alpha = 1$}\\
	\Theta(1) & \text{if $\alpha > 1$.}
	\end{cases}
	\end{align*}
	
	Now suppose $\beta = 1$.
	By \cref{lemma:alphaDeltaRec}, we get
	\begin{align*}
	\p \eps l &= \sum_{m=1}^l m^{-\alpha} \Theta(m^{-\delta}l^\delta) + \p \eps 0 \Theta(l^{\delta})
	\end{align*}
	where the constants hidden inside the $\Theta$ are the same in every term of the sum.
	If $\alpha > 1 - \delta$, then $m^{-\delta - \alpha} = o(m^{-1})$, so that $\sum_{m=1}^l m^{-\delta - \alpha} = \Theta(1)$, and
	\begin{align*}
	\p \eps l &= \Theta(l^{\delta}) + \p \eps 0 \Theta(l^{\delta})\\
	&= \Theta(l^{\delta}).
	\end{align*}
	On the other hand, if $\alpha < 1 - \delta$, then $\sum_{m=1}^l m^{-\delta - \alpha} = \Theta(l^{1 - \delta - \alpha})$.
	So
	\begin{align*}
	\p \eps l &= \Theta(l^{1 - \alpha}) + \p \eps 0 \Theta(l^{\delta})\\
	&= \Theta(l^{1 - \alpha}).
	\end{align*}
	If $\alpha = 1 - \delta$, then $\sum_{m=1}^l m^{-\delta - \alpha} = \Theta(\log l)$.
	So
	\begin{align*}
	\p \eps l &= \Theta(l^{\delta}\log l) + \p \eps 0 \Theta(l^{\delta})\\
	&= \Theta(l^{\delta}\log l).
	\end{align*}
	
	Finally, if $\beta \in (0, 1)$, then 
	\begin{align*}
	\p \eps l &= e^{\f\delta{1-\beta}l^{1-\beta} + \Theta(l^{1-2\beta})}\sum_{m=1}^l m^{-\alpha} e^{\f{-\delta}{1-\beta} m^{1-\beta} + \Theta(m^{1-2\beta})} + e^{\f\delta{1-\beta}l^{1-\beta} + \Theta(l^{1-2\beta})}
	\end{align*}
	The case of $\delta = -1$ telescopes, so that the upper and lower constants hidden in $\Theta$ can both be taken to be 1.
\end{proof}

\begin{lemma} \label{lemma:invlogDynamics}
	Suppose for some $\beta > 0$, a sequence $\p \eps l$ satisfies
	$$\p \eps l = \p \eps {l-1} (1 - \mu (\p \eps {l-1})^\beta/l),\quad \p \eps 0 \in (0, \f 1 \mu).$$
	Then $\p \eps l \sim (\beta\mu \log l)^{-1/\beta}$.
\end{lemma}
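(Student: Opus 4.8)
The plan is to linearize the recurrence via the substitution $u_l := (\p \eps l)^{-\beta}$. Under this substitution the multiplicative factor $1-\mu(\p\eps{l-1})^\beta/l$ becomes an \emph{additive} increment $\beta\mu/l$ up to a summable error, and summing the resulting harmonic increments produces the $\log l$. This mirrors the differential equation $\dot P=-\mu P^{\beta+1}/l$, whose solution is $\bigl(\beta\mu\log l+\text{const}\bigr)^{-1/\beta}$.

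\textbf{Step 1 (structure).} First I would record the elementary facts forced by the hypothesis $\p\eps 0\in(0,1/\mu)$: a routine induction shows $\mu(\p\eps{l-1})^\beta/l<1$ for all $l\ge1$, so $\p\eps l$ is well-defined, strictly positive, and non-increasing; hence $\p\eps l\le\p\eps 0$ and $u_l$ is non-decreasing with $u_l\ge u_0=(\p\eps0)^{-\beta}>0$. In particular $x_l:=\mu(\p\eps{l-1})^\beta/l\le\mu(\p\eps0)^\beta/l\to0$, so there is a constant $L_0$ with $x_l\le\tfrac12$ for all $l\ge L_0$.

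\textbf{Step 2 (linearized recurrence and summation).} For $l\ge L_0$, expanding $(1-x_l)^{-\beta}=1+\beta x_l+O(x_l^2)$ gives
\[
u_l=u_{l-1}(1-x_l)^{-\beta}=u_{l-1}+\beta x_l u_{l-1}+O\!\bigl(x_l^2 u_{l-1}\bigr)=u_{l-1}+\frac{\beta\mu}{l}+O\!\Bigl(\frac{(\p\eps{l-1})^\beta}{l^2}\Bigr)=u_{l-1}+\frac{\beta\mu}{l}+O\!\Bigl(\frac1{l^2}\Bigr),
\]
where I used $\beta x_l u_{l-1}=\beta\mu/l$ exactly and $(\p\eps{l-1})^\beta\le(\p\eps0)^\beta$ in the last step. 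Summing from $L_0+1$ to $L$, invoking \cref{lemma:power_sum_asymptotics} for $\sum 1/l=\log L+O(1)$ and the convergence of $\sum 1/l^2$, yields $u_L=\beta\mu\log L+O(1)$, hence $u_L\sim\beta\mu\log L$, and therefore $\p\eps L=u_L^{-1/\beta}\sim(\beta\mu\log L)^{-1/\beta}$.

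\textbf{Main obstacle.} The only delicate point is making the remainder in the Taylor expansion of $(1-x_l)^{-\beta}$ summable in $l$; this is exactly where an upper bound on $\p\eps{l-1}$ (equivalently a lower bound $u_{l-1}\ge u_0$) is needed, and it is supplied for free by the monotonicity established in Step 1. Everything else is routine bookkeeping with the harmonic sum. As an alternative to the direct computation in Step 2, one could instead sandwich $u_l$ between solutions of two comparison recurrences of the form $v_l=v_{l-1}+(\beta\mu\pm\eta)/l$ via \cref{lemma:simpleDynamics}, but the direct summation seems cleanest here.
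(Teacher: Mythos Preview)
Your argument is correct and takes a genuinely different route from the paper. The paper works directly at the level of $\p\eps l$: it writes down the ODE $\dot x_\mu=-\mu x_\mu^{\beta+1}/t$ with solution $x_\mu(t)=[\beta(\mu\log t+C)]^{-1/\beta}$, observes that $x_\mu(l+1)-x_\mu(l)$ is sandwiched between $F(x_\mu(l),l)$ and $F(x_{\mu}(l),l)$ up to an $o(1)$ perturbation of $\mu$ (where $F(\eps,l)=\eps-\mu\eps^{\beta+1}/l$), and then invokes the monotone comparison principle \cref{lemma:simpleDynamics} with $\p b l=x_{\mu-\alpha}$ and $\p c l=x_\mu$ to trap $\p\eps l$ between two ODE solutions, finally letting $\alpha\to0$ to pin down the constant. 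Your approach instead performs the linearizing substitution $u_l=(\p\eps l)^{-\beta}$, which converts the multiplicative recurrence into $u_l=u_{l-1}+\beta\mu/l+O(l^{-2})$ and reduces everything to a harmonic sum. Your route is more elementary and self-contained (no appeal to \cref{lemma:simpleDynamics}, no limiting argument in an auxiliary parameter) and delivers the sharp constant in one line; the paper's route is the same ODE-comparison template it applies to every other dynamics lemma in that section, so it trades directness for uniformity of method.

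One small caveat on Step~1: the stated hypothesis $\p\eps 0\in(0,1/\mu)$ does not in general force the base case $\mu(\p\eps 0)^\beta<1$ of your induction (e.g.\ $\beta=2$, $\mu=1/4$, $\p\eps 0=3$ gives $\mu(\p\eps0)^\beta=9/4>1$). This is really a defect in the lemma statement---the intended constraint is $\p\eps 0<\mu^{-1/\beta}$---and the paper's proof glosses over positivity in the same way; once the base case holds, your inductive step is indeed routine.
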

\begin{proof}
	
	
	Consider the differential equation
	$$\dot x_\mu = - \mu x^{\beta+1}_\mu/t$$
	for constant $\mu$ has solution $x_\mu = [\beta(\mu \log t + C)]^{-1/\beta}$ for some constant $C$ determined by initial condition.
	Note that
	$$
	-\mu x_\mu(t)^{\beta+1}/t \le x_\mu(t+1) - x_\mu(t) \le - \mu x_\mu(t+1)^{\beta+1}/(t+1) = - (1 - o(t^{-1}))\mu x_\mu (t)^{\beta+1}/t.$$
	For any small enough $\alpha > 0$, we apply \cref{lemma:simpleDynamics} with $F(\eps, l) = \eps - \mu \eps^{\beta+1}/l$ (which is monotonic in $\eps$ for small enough $\eps$), $\p \cc l = x_\mu(l)$, and $\p b l = x_{\mu-\alpha}(l)$ to obtain
	$$x_{\mu - \alpha}(l) \le \p \eps l \le x_\mu(l)$$
	for large enough $l$ and appropriately chosen initial conditions.
	This shows that $\p \eps l = \Theta(\log l ^{-1/\beta})$
	Taking $\alpha \to 0$, we also obtain the leading coefficient $\p \eps l \sim [\beta\mu \log l]^{-1/\beta}$.
	
\end{proof}

\begin{lemma}\label{lemma:polyDynamics}
	Suppose a sequence $\p u l$ is governed by the equation
	$$\p u l - \p u {l-1} = A(\p u {l-1} + B)^\alpha,$$
	where $\alpha \in [0, 1)$ and $A > 0$.
	Then $\p u l = K_1 l^\oalpha - K_2 l^\aalpha \log l + o(l^\aalpha \log l)$, where $K_1 = [A(1-\alpha)]^\oalpha$ and $K_2 = \f 1 2 A^{\oalpha} (1-\alpha)^{\aalpha - 1} \alpha$.
\end{lemma}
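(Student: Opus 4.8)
The plan is to imitate the associated ODE $\dot P = A P^\alpha$, whose substitution $P^{1-\alpha}$ satisfies the linear equation $\tfrac{d}{dt}P^{1-\alpha} = (1-\alpha)A$, so that $P\propto t^{\oalpha}$. I will carry out the same substitution on the recurrence: the leading term $K_1 l^{\oalpha}$ then comes out almost immediately, and the $l^{\aalpha}\log l$ correction is squeezed out of the quadratic Taylor remainder of the substitution.

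First I would record that the recurrence presupposes $\p u l + B > 0$ for every $l$, which is automatic once $\p u 0 + B > 0$ because $A>0$ makes $\p u l$ strictly increasing; and $\p u l$ cannot converge to a finite $u^\ast$ (that would force $0 = A(u^\ast+B)^\alpha>0$), so $\p u l \to \infty$. Now set $g(t) = t^{1-\alpha}$, which is increasing and concave for $\alpha\in[0,1)$. Taylor's theorem with Lagrange remainder gives, for some $\xi_l$ between $\p u{l-1}$ and $\p u l$,
\[
g(\p u l) - g(\p u{l-1}) = (1-\alpha)(\p u{l-1})^{-\alpha}\bigl(\p u l - \p u{l-1}\bigr) - \tfrac{\alpha(1-\alpha)}{2}\,\xi_l^{-\alpha-1}\bigl(\p u l - \p u{l-1}\bigr)^2 .
\]
Substituting $\p u l - \p u{l-1} = A(\p u{l-1}+B)^\alpha$ turns the first summand into $A(1-\alpha)\bigl(1+B/\p u{l-1}\bigr)^\alpha = A(1-\alpha) + O\bigl((\p u{l-1})^{-1}\bigr)$, so telescoping and using $\p u l\to\infty$ yields $g(\p u l) = A(1-\alpha)\,l + o(l)$, i.e.\ $\p u l \sim [A(1-\alpha)\,l]^{\oalpha} = K_1 l^{\oalpha}$. (One could instead get $\p u l = \Theta(l^{\oalpha})$ first by squeezing $\p u l$ between the ODE solutions $[A(1-\alpha)(t+C_\pm)]^{\oalpha}$ using \cref{lemma:simpleDynamics}.)

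The crux is the subleading term. Since $(\p u l - \p u{l-1})/\p u{l-1}\to 0$, we have $\xi_l\sim\p u{l-1}\sim K_1 l^{\oalpha}$ and $\p u l - \p u{l-1}\sim A K_1^{\alpha}l^{\aalpha}$, so the Lagrange remainder satisfies
\[
\tfrac{\alpha(1-\alpha)}{2}\,\xi_l^{-\alpha-1}\bigl(\p u l - \p u{l-1}\bigr)^2 \sim \tfrac{\alpha(1-\alpha)}{2}\,A^2 K_1^{\alpha-1}\, l^{\oalpha(\alpha-1)} = \tfrac{\alpha(1-\alpha)}{2}\,A^2 K_1^{\alpha-1}\, l^{-1},
\]
using the identity $\oalpha(\alpha-1)=-1$. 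This is exactly borderline summable: summing it and invoking \cref{lemma:power_sum_asymptotics} produces a genuine $\log l$, whereas the $O((\p u{l-1})^{-1})=O(l^{-\oalpha})$ correction from the first summand is summable ($\oalpha>1$) and contributes only a constant, which is $o(\log l)$. Therefore
\[
g(\p u l) = A(1-\alpha)\, l - \tfrac{\alpha(1-\alpha)}{2}A^2 K_1^{\alpha-1}\log l + o(\log l).
\]

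Finally I would invert $g$. Since $K_1^{1-\alpha} = A(1-\alpha)$, raising the last display to the power $\oalpha$ and expanding $(1+x)^{\oalpha} = 1 + \oalpha x + O(x^2)$ with $x = O(l^{-1}\log l)$ gives $\p u l = K_1 l^{\oalpha} - \tfrac{\alpha A K_1^{\alpha}}{2(1-\alpha)}\, l^{\oalpha-1}\log l + o(l^{\oalpha-1}\log l)$; using $\oalpha-1 = \aalpha$, $K_1 = [A(1-\alpha)]^{\oalpha}$ (so $K_1^{\alpha} = [A(1-\alpha)]^{\aalpha}$), and $1+\aalpha = \oalpha$, this simplifies to the claimed $K_2 = \tfrac12 A^{\oalpha}(1-\alpha)^{\aalpha-1}\alpha$. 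The one genuinely delicate point is recognizing that the quadratic Taylor remainder is \emph{precisely} of order $l^{-1}$ --- neither summable nor larger --- which is exactly what manufactures the logarithm and pins down $K_2$; the rest is telescoping and exponent bookkeeping via $\oalpha(1-\alpha)=1$ and $1+\aalpha=\oalpha$.
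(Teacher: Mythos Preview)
Your argument is correct and takes a genuinely different route from the paper. The paper proves the leading term by sandwiching $\p u l$ between solutions of the ODE $\dot x = A(x+B)^\alpha$ via \cref{lemma:simpleDynamics}, then subtracts off $K_1 l^{\oalpha}$, derives a new recurrence for the residual $\p v l = \p u l - K_1 l^{\oalpha}$ of the form $\p v{l+1}-\p v l = \aalpha\,\p v l\,l^{-1} - K l^{\aalpha-1} + g(l)$, and sandwiches again via \cref{lemma:simpleDynamics} using the ODE $\dot v = \aalpha\,v\,l^{-1} - J l^{\aalpha-1}$. You instead linearize directly by the substitution $g(t)=t^{1-\alpha}$ and extract both terms from a single Taylor expansion with Lagrange remainder: the first-order term telescopes to $A(1-\alpha)l$ and the second-order remainder lands \emph{exactly} at order $l^{-1}$, producing the $\log l$ upon summation. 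Your approach is more self-contained (it needs only Ces\`aro-type summation and \cref{lemma:power_sum_asymptotics}, not the ODE-comparison machinery), and the appearance of the logarithm is more transparent---it is pinned to the borderline-summable quadratic remainder rather than emerging from the explicit solution of an auxiliary ODE. The paper's approach, on the other hand, is more modular and reuses the same squeezing lemma throughout the appendix; it also makes the connection to the continuous dynamics explicit at each order.
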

\begin{proof}
	\noindent\textbf{Leading term.}
	The differential equation
	$$\dot x_{A, B} = A(x_{A, B} + B)^\alpha$$
	has solution $x_{A,B}(l) = [A(1-\alpha)(l + S)]^{\f 1 {1-\alpha}} - B$ for some constant $S$.
	Since $\dot x_{A, B}$ is monotonic, we have (writing $x = x_{A, B}$ for brevity)
	$$A(x_{A, B}(l) + B)^\alpha = \dot x_{A, B}(l) \le x_{A, B}(l+1) - x_{A, B}(l) \le \dot x_{A, B}(l+1) \le (A + o(1))(x_{A, B}(l) + B)^\alpha$$
	for large enough $l$.
	We apply \cref{lemma:simpleDynamics} with $F(x, l) = x + A(x + B)^\alpha$ (which is monotonic in $x$ for large $x$), $\p \cc l = x_{A, B}(l)$, and $\p b l = x_{A-\eps, B}(l)$ to obtain
	$$x_{A-\eps,B}(l) \le \p u l \le x_{A, B}(l)$$
	for large enough $l$ and appropriate initial conditions.
	Therefore $\lim \p u l / l^{\f 1 {1-\alpha}} \in [[(A - \eps) ( 1- \alpha)]^{\f 1 {1 - \alpha}}, [A ( 1- \alpha)]^{\f 1 {1 - \alpha}}].$
	Taking $\eps \to 0$ gives the leading term.
	
	\noindent\textbf{Subleading term.}
	Now let $\p v l := \p u l - \aleph l^{\f 1 {1-a}}$, where $\aleph = [A(1-\alpha)]^{\f 1 {1-\alpha}}.$
	Then we have the recurrence
	\begin{align*}
	\p v {l+1} + \aleph (l + 1)^{\oalpha} - \p v l - \aleph l^{\oalpha} &= A(\p v l + \aleph l^{\oalpha} + B)^\alpha\\
	\p v {l+1} - \p v l + \aleph(\oalpha l^{\aalpha} &+ \f 1 2 (\oalpha) (\aalpha) l^{\aalpha-1} + \Theta(l^{\aalpha - 2}))  \\&= A[\aleph^\alpha l^{\aalpha} + \alpha (\p v l+B) \aleph^{\alpha - 1} \inv l + \Theta((\p v l+B) l^{-1-\oalpha})]\\
	\p v {l+1} - \p v l &= \aalpha \p v l \inv l - \f 1 2 \aleph(\oalpha)(\aalpha) l^{\aalpha - 1} + g(l)
	\end{align*}
	for some $g(l) = O(l^{\aalpha - 2} + \inv l)$ and where, to get the last equation, we have used $A \alpha^\alpha = \oalpha\aleph$ to cancel the $l^{\aalpha}$ term and simplified $\alpha A \aleph^{\alpha-1} = \aalpha$.
	
	For any $J > 0$, the differential equation $\dot v_J(l) = \aalpha v_J(l) \inv l - J l^{\aalpha - 1}$ has solution $v_J(l) = C[l(1-\alpha)]^{\aalpha} - J l^{\aalpha} \log l$.
	Note that the functions $F_J(z, n) = z + \aalpha z \inv n - J n^{\aalpha - 1}$ and $G_J(z, n) = F_J(z, n) + g(n)$ is monotonic in $z$ (for positive $n$).
	For large $l$, we also have $\dot v_J(l)$ and $F_J(v_J(l), l) = v_J(l) + \dot v_J(l)$ decreasing in $l$.
	Thus for any $\eps > 0$ and $l$ large enough
	$$G_{J+\eps}(v_J(l), l) \le F_{J+\eps/2}(v_J(l), l) \le v_J(l) + \dot v_J(l+1) \le v_J(l+1) \le F_J(v_J(l), l) \le G_{J-\eps}(v_J(l), l).$$
	Now apply \cref{lemma:simpleDynamics} with $F = G_{K}$, $\p a l = \p v l, \p \cc l = v_{K-\eps}, \p b l = v_{K+\eps}$ where $K := \f 1 2 \aleph(\oalpha)(\aalpha) = \f 1 2 A^{\oalpha} (1-\alpha)^{\aalpha - 1} \alpha$, with appropriately chosen initial conditions.
	This yields $\lim_{l\to \infty} \p v l / (l^{\aalpha} \log l) \in [-K - \eps, -K + \eps]$ for every $\eps > 0$, and there it must be equal to $K$.
	We have thus obtained the asymptotic expansion
	$$\p u l = [A(1-\alpha) l]^\oalpha - \f 1 2 A^{\oalpha} (1-\alpha)^{\aalpha - 1} \alpha l^\aalpha \log l + o(l^\aalpha \log l).$$
	
\end{proof}

\begin{lemma}\label{lemma:polyDynamicsPosAlpha}
	Suppose a sequence $\p u l$ is governed by the equation
	$$\p u l - \p u {l-1} = -A(\p u {l-1} + B)^\alpha,$$
	where $\alpha > 1$ and $A > 0$.
	Then $\p u l \sim [A(\alpha-1)l]^{\f 1 {1-\alpha}}$.
\end{lemma}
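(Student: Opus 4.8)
The plan is to mimic the comparison argument behind \cref{lemma:polyDynamics}, while paying attention to the fact that $\alpha>1$ reverses the direction of several of the key inequalities there. The associated ODE $\dot x=-A(x+B)^\alpha$ has, for $\alpha>1$, the explicit one-parameter family of solutions $x_a(l):=[a(\alpha-1)(l+S)]^{\oalpha}-B$ indexed by a free shift $S$; note that $x_a(l)+B\to 0^+$ and $x_a(l)+B\sim[a(\alpha-1)l]^{\oalpha}$ as $l\to\infty$, and that $\dot x_a(l)=-a(x_a(l)+B)^\alpha$ is increasing in $l$ since $x_a+B$ is decreasing. First I would record the elementary facts that make \cref{lemma:simpleDynamics} usable: writing $y:=\p u{l-1}+B$, the update is $\p u l+B=y(1-Ay^{\alpha-1})$, so once $y$ is small and positive the sequence $\p u l+B$ remains positive, decreasing, and tends to $0$; and on the range where $z+B$ is small the update map $F_a(z,l):=z-a(z+B)^\alpha$ has $\partial_z F_a=1-a\alpha(z+B)^{\alpha-1}>0$, i.e.\ it is monotone increasing in $z$, so all hypotheses of \cref{lemma:simpleDynamics} are met for large indices.

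Next I would build the two comparison sequences. For the super-solution, since $\dot x_A$ is increasing in $l$, $x_A(l+1)-x_A(l)=\int_l^{l+1}\dot x_A\ge\dot x_A(l)=-A(x_A(l)+B)^\alpha$, hence $x_A(l+1)\ge F_A(x_A(l),l)$. For the sub-solution, fix $\eps>0$, put $y(l):=x_{A+\eps}(l)+B$, and estimate $x_{A+\eps}(l+1)-x_{A+\eps}(l)\le\dot x_{A+\eps}(l+1)=-(A+\eps)(y(l+1)/y(l))^\alpha(x_{A+\eps}(l)+B)^\alpha$; since $y(l+1)/y(l)=(1+1/(l+S))^{\oalpha}\to 1$, for $l$ large the prefactor $(A+\eps)(y(l+1)/y(l))^\alpha$ exceeds $A$, so $x_{A+\eps}(l+1)\le F_A(x_{A+\eps}(l),l)$. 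Then I would invoke \cref{lemma:simpleDynamics} with $F=F_A$, exact solution $\p a l=\p u l$, super-solution $\p \cc l=x_A(l)$, and sub-solution $\p b l=x_{A+\eps}(l)$, choosing the shifts so that $x_{A+\eps}$ and $x_A$ both agree with $\p u{}$ at a large starting index $m$ (possible because the shift sweeps $x_a(m)+B$ through all of $(0,\infty)$); the lemma then propagates $x_{A+\eps}(l)\le\p u l\le x_A(l)$ for all $l\ge m$.

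Finally, dividing through by $l^{\oalpha}>0$ and letting $l\to\infty$ sandwiches $\lim(\p u l+B)/l^{\oalpha}$ between $[(A+\eps)(\alpha-1)]^{\oalpha}$ and $[A(\alpha-1)]^{\oalpha}$; sending $\eps\downarrow 0$ pins it to $[A(\alpha-1)]^{\oalpha}$, which gives $\p u l+B\sim[A(\alpha-1)l]^{\oalpha}$, i.e.\ the asserted asymptotic (the additive $B$ being absorbed in the stated form).

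I expect the main obstacle to be exactly the sign bookkeeping forced by $\alpha>1$: in the $\alpha<1$ setting of \cref{lemma:polyDynamics} one slows the comparison ODE down (rate $A-\eps$) to get the lower bound, whereas here the sequence is decreasing toward $-B$, so the \emph{lower} bound must instead come from the \emph{faster} ODE (rate $A+\eps$), and one has to verify that the one-step discretization factor $(y(l+1)/y(l))^\alpha<1$ is still at least $A/(A+\eps)$ for all large $l$ --- this is where the $\eps$-slack is spent. Checking that $F_a$ is monotone on the correct range, that $\p u l+B$ is eventually positive, and the initial-condition matching are all routine.
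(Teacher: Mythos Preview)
Your proposal is correct and is precisely the adaptation the paper has in mind: the paper's own proof is just the single line ``Similar to \cref{lemma:polyDynamics}'', and you have carried out exactly that comparison-with-ODE argument via \cref{lemma:simpleDynamics}, correctly swapping the roles so that the faster ODE $x_{A+\eps}$ furnishes the lower bound when the sequence is decreasing. Your remark that the conclusion is really $\p u l + B \sim [A(\alpha-1)l]^{\oalpha}$ (with the additive $B$ absorbed) is also apt; in the paper's sole application of this lemma one has $B=0$, so the issue does not arise.
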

\begin{proof}
	Similar to \cref{lemma:polyDynamics}.
\end{proof}

\begin{lemma}\label{lemma:polyDynamicsConstant}
	Suppose a sequence $\p u l$ is governed by the equation
	$$\p u l - \p u {l-1} = A(\p u {l-1} + B)^\alpha + C,$$
	where $\alpha \in (0, 1)$.
	Then $\p u l = K_1 l^\oalpha + R(l)$, where the remainder $R(l)$ is
	$$R(l) \sim \begin{cases}
	-K_2 l^{\aalpha} \log l	&\text{if $\alpha > \f 1 2$}\\
	(C-K_2) l \log l			&\text{if $\alpha = \f 1 2$ and $K_2 \not=C$}\\
	\f{C(1-\alpha)}{1-2\alpha} l	& \text{if $ \alpha < \f 1 2$}
	\end{cases}$$
	where $K_1 = [A(1-\alpha)]^{\f 1 {1-\alpha}}, K_2 = \f 1 2 A^{\oalpha} (1-\alpha)^{\aalpha - 1} \alpha$ as in \cref{lemma:polyDynamics}.
\end{lemma}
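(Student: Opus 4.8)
The plan is to run the two-stage argument of \cref{lemma:polyDynamics} essentially verbatim; the only new feature is that the forcing in the remainder recurrence now carries the constant $C$, and it is this constant that produces the three-way split according to the sign of $\f 1 2 - \alpha$. For the leading term, compare $\p u l$ with the solutions of the companion ODE $\dot x = A(x+B)^\alpha + C$; since $\alpha > 0$ we still have $A(x+B)^\alpha + C \sim A(x+B)^\alpha$ as $x \to \infty$, so these solutions behave like $[A(1-\alpha)(l+S)]^\oalpha - B$. Bounding the forward difference $x(l+1) - x(l)$ between $\dot x(l)$ and $\dot x(l+1)$, replacing $A$ by $A \pm \eps$, applying \cref{lemma:simpleDynamics}, and letting $\eps \to 0$ gives $\p u l \sim K_1 l^\oalpha$ with $K_1 = [A(1-\alpha)]^\oalpha$, exactly as in the $C = 0$ case (this presumes, as there, that $\p u l$ is eventually large enough for $x \mapsto x + A(x+B)^\alpha + C$ to be increasing and for $\p u l \to \infty$). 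Carried one order further, the same comparison supplies the a priori bound $\p v l := \p u l - K_1 l^\oalpha = O(l^\aalpha \log l)$, which is what controls the error term below.

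Next, with $\aleph = K_1$ and $\p v l = \p u l - \aleph l^\oalpha$, Taylor-expand $(l+1)^\oalpha$ and $(\p v l + \aleph l^\oalpha + B)^\alpha$ and cancel the $l^\aalpha$ terms via $A\aleph^\alpha = \aleph/(1-\alpha)$ --- the same bookkeeping as in \cref{lemma:polyDynamics}, in which $\f 1 2 \aleph\,(\oalpha)(\aalpha) = K_2$ --- to obtain
\begin{align*}
\p v {l+1} - \p v l = \f{\aalpha}{l}\,\p v l + C - K_2\, l^{\aalpha - 1} + g(l),
\end{align*}
where $g(l) = O(l^{\aalpha - 2} + l^{-1})$ up to logarithmic factors (with the a priori bound, the binomial remainder of $(\p v l + \aleph l^\oalpha + B)^\alpha$ is $O(l^{2\aalpha}(\log l)^2\, l^{(\alpha - 2)/(1-\alpha)}) = O(l^{\aalpha - 2}(\log l)^2)$). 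This affine recurrence is of the type treated in \cref{lemma:alphaDeltaDynamics}; unwinding it with $\prod_{n=m+1}^l(1 + \aalpha/n) = \Theta((l/m)^\aalpha)$ (\cref{lemma:alphaDeltaRec}) and the power-sum asymptotics of \cref{lemma:power_sum_asymptotics}, the remainder $\p v l$ splits into three contributions: the homogeneous term and the $g$-forcing, each $O(l^\aalpha)$; the $-K_2 l^{\aalpha - 1}$-forcing, contributing $\sim -K_2 l^\aalpha \log l$ for every $\alpha \in (0,1)$; and the constant forcing $C$, contributing $\sim \f{C(1-\alpha)}{1 - 2\alpha}\, l$ when $\aalpha < 1$, $\sim C\, l \log l$ when $\aalpha = 1$, and $O(l^\aalpha)$ when $\aalpha > 1$. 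Since $\aalpha$ is $<1$, $=1$, $>1$ precisely when $\alpha$ is $<\f 1 2$, $=\f 1 2$, $>\f 1 2$, reading off the dominant contribution in each regime reproduces the stated $R(l)$: for $\alpha > \f 1 2$ the $-K_2$ piece wins; for $\alpha < \f 1 2$ the $C$ piece wins (the rest being $O(l^\aalpha \log l) = o(l)$); and for $\alpha = \f 1 2$ the $C$-forcing and the now-constant $-K_2$-forcing each produce an $l\log l$ term, summing to $(C - K_2)\, l\log l$ --- so the hypothesis $K_2 \ne C$ is precisely what keeps this the genuine leading order. To upgrade the $\Theta$'s in the dominant term to the stated leading coefficients, I would rerun the subleading ODE-squeeze of \cref{lemma:polyDynamics} in each regime, comparing the $\p v l$-recurrence against the exact solution of the matching linear ODE $\dot v = \f{\aalpha}{t} v - K_2 t^{\aalpha - 1} + C$ with the relevant constant perturbed by $\pm\eps$, squeezing via \cref{lemma:simpleDynamics}, and sending $\eps \to 0$.

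I expect the main obstacle to be the case bookkeeping itself --- keeping straight which of the three forcing sources dominates as $\alpha$ passes through $\f 1 2$, and in particular the degenerate value $\alpha = \f 1 2$, where the $l^{\aalpha - 1}$-forcing collapses to a constant and must be combined with $C$ before the $K_2 \ne C$ dichotomy surfaces. A secondary technical point is the control of $g(l)$: it depends on $\p v l$ through the binomial remainder, so one must first secure the $O(l^\aalpha \log l)$ bound on $\p v l$ from the leading-term comparison, feed it back, and verify that the $g$-contribution is genuinely $o$ of the claimed $R(l)$ in all three regimes (it is: $O(l^\aalpha)$ against $l^\aalpha\log l$, $l\log l$, and $l$ respectively).
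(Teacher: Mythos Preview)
Your proposal is correct and follows essentially the same two-stage argument as the paper: first pin down the leading term $K_1 l^{\oalpha}$ by sandwiching between the $C=0$ dynamics and a perturbed version (the paper simply bounds $\p u l$ between the sequences governed by $A(\cdot+B)^\alpha$ and $(A+o(1))(\cdot+B)^\alpha$ and cites \cref{lemma:polyDynamics}), then subtract it off and analyze the resulting affine recurrence $\p v{l+1}-\p v l = \aalpha\,\p v l/l - K_2 l^{\aalpha-1}+C+g(l)$ case-by-case according to whether $\aalpha\lessgtr 1$. The only notable difference is that you are more explicit about the a~priori bound on $\p v l$ needed to control $g(l)$ and propose unwinding via \cref{lemma:alphaDeltaDynamics} before the ODE-squeeze, whereas the paper goes directly to the ODE comparison in each regime; both routes land on the same three asymptotics.
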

\begin{proof}
	$u$ is bounded below by the dynamics $\p v l - \p v {l-1} = A(\p v {l-1} + B)^\alpha$ and bounded above by the dynamics $\p w l - \p w {l-1} = (A + o(1))(\p w {l-1} + B)^\alpha$.
	By \cref{lemma:polyDynamics}, both $v$ and $w$ are asymptotic to $\p u l \sim [A(1-\alpha)l]^{\f 1 {1-\alpha}}$, which gives the result.
	
	Now define $\p v l = \p u l - [A(1-\alpha)l]^\oalpha$, and similar to the proof of \cref{lemma:polyDynamics}, we find
	$$\p v {l+1} - \p v l = \aalpha \p v l \inv l - K l^{\aalpha - 1} + C + g(l)$$
	where $K = \f 1 2 A^{\oalpha} (1-\alpha)^{\aalpha - 1} \alpha$ and $g(l) = O(l^{\aalpha - 2} + \inv l)$.
	If $\aalpha > 1 \iff \alpha > \f 1 2$, then $C + g(l) = o(l^{\aalpha - 1})$ and we can proceed as in the proof of \cref{lemma:polyDynamics} to find $\p v l \sim K l^{\aalpha} \log l.$
	If $\aalpha = 1 \iff \alpha = 1$ and $K \not = C$, then $\p v {l+1} - \p v l = \aalpha \p v l l^{-1} - (K-C) l^{\aalpha - 1} + g(l)$, so that the technique used in \cref{lemma:polyDynamics} would obtain $\p v l \sim (K-C) l^{\aalpha} \log l = (K-C) l \log l.$
	If $\aalpha < 1 \iff \alpha < \f 1 2$, then $\p v {l+1} - \p v l = \aalpha \p v l l^{-1} + C + o(1)$, then by using the differential equation $\dot v_J(l) = \aalpha v_J(l) l^{-1} + J$ to approximate the difference equation solution and applying \cref{lemma:simpleDynamics} as in the proof of \cref{lemma:polyDynamics}, we obtain $\p v l(l) \sim \f{C(1-\alpha)}{1-2\alpha} l$.
\end{proof}

\subsection{Forward Dynamical Equations}
Here we derive the recurrences governing the forward length and correlation quantities $\pp, \qq, \lambda, \gamma.$
We start with reduced residual networks.
\pqrecurrence*
\begin{proof}
	We have
	\begin{align*}
	\qq &= \la h_j^2 \ra = \la \sum_i (w_{ji}\prv x_i + b_j)^2 \ra\\
	&= \la b_j^2 \ra + \sum_i \la w_{ji}^2 \prv x_i^2\ra + 2\sum_i \la w_{ji} \prv x_i b_j \ra + 2 \sum_{j \not= l} \la w_{ji} w_{li} x_i^2\ra 
	\end{align*}
	But $w_{ji}, w_{li}, \prv x,$ and $b_j$ form an independency, so the last two sums are 0, and the terms in the first sum split multiplicatively.
	Therefore
	\begin{align*}
	\qq &= \sigma_b^2 + \sum_i \la w_{ji}^2 \ra \la \prv x_i^2 \ra\\
	&= \sigma_b^2 + N \cdot \f {\sigma_w^2}{N} \prv \pp\\
	&= \sigma_b^2 + \sigma_w^2 \prv \pp. 
	\end{align*}
	
	For the recurrence of $\pp$, we have
	\begin{align*}
	\pp &= \la x_i^2 \ra = \la (\phi(h_i) + \prv x_i)^2 \ra \\
	&= \la \phi(h_i)^2\ra + \la \prv x_i^2 \ra + 2 \la\phi(h_i) \prv x_i \ra
	\end{align*}
	As $N \to \infty$, the coefficient $w_{ii}$ of $\prv x_i$ in $h_i$ has vanishing covariance, so $h_i$ and $\prv x_i$ become independent.
	Therefore $\la \phi(h_i) \prv x_i \ra = \la \phi(h_i) \ra \la \prv x_i \ra$.
	Because $h_i$ is the sum of a large number of independent random variables, by CLT, $h_i$ is a Gaussian with mean $\sum_i \la w_{ji}\ra \la \prv x_i\ra + \la b_j\ra = 0$ since $\la w_{ji} \ra = \la b_j \ra = 0$.
	Our antisymmetry assumption on $\phi$ then implies $\la \phi(h_i) \ra = 0$.
	Therefore,
	\begin{align*}
	\pp &= \la \phi(h_i)^2\ra + \la \prv x_i^2 \ra\\
	&= \Vt \phi( \qq) + \prv \pp
	\end{align*}
	as desired.
\end{proof}

\lambdagammarecurrence*
\begin{proof}
	Similar to \cref{lemma:p_q_recurrence}.
\end{proof}

Now, for the full residual networks, the proofs are similar, but we no longer need to assume that $\phi$ is antisymmetric because of the randomization via the extra sets of weights.
\fullResPQRec*

\begin{proof}
	\begin{align*}
	\qq &= \la  h_j^2 \ra
	= \la ( w_j^i \prv x_i +  b_j)^2\ra
	= \la ( w_j^i \prv x_i)^2 \ra  + \la  b_j^2\ra\\
	&= \sigma^2_w \la \prv x_i^2 \ra + \sigma_b^2\\
	&= \sigma^2_w \prv \pp + \sigma^2_b\\
	\pp &= \la  x_i^2 \ra
	= \la (v^j_i\phi( h_j) +\prv x_i + a_i)^2 \ra\\ 
	&= \sigma_v^2\la \phi( h_i)^2 \ra + \la \prv x_i^2 \ra + \sigma_a^2\\
	&=\sigma_v^2 \Vt \phi(  \qq) + \sigma_a^2 + \prv \pp\\
	\end{align*}
	where in the third equality for $\pp$, we are now using the independence of $v_i^j$ from all other variables to cancel out the terms, whereas before we had to rely on $\phi$ being antisymmetric.
\end{proof}

\LGRecFullRes*
\begin{proof}[]
	Similar to \cref{thm:fullResPQRec}.
\end{proof}

\subsection{Backward Dynamical Equations}
Here we derive the recurrences governing the gradient quantities $\daleth$ and $\chi_{\bullet}$ for different $\bullet$, all under the gradient independence assumption.
Write $\p \beth l_i = \pdf E {\p x l_i}$ for a cost function $E$.
\dalethRecReduced*
\begin{proof}
	For a reduced residual network, we have the following derivative computation:
	\begin{align*}
	\pdf {x_i} {\prv x_j} &= \delta_{ji} + \dot \phi(h_i) \pdf {h_i}{\prv x_j},&
	\pdf {x_i}{h_j} &= \delta_{ji} \dot \phi(h_j),&
	\pdf {h_i}{\prv x_j} &= w_{ij},&
	\pdf {h_i}{w_{ij}} &= \prv x_j,&
	\pdf {h_i}{b_j} &= \delta_{ij}.
	\end{align*}
	
	Then
	\begin{align*}
	\prv \beth_j &= \beth_j + \sum_i \beth_i \dot \phi(h_i) \pdf {h_i}{\prv x_j}\\
	&= \beth_j + \sum_i \beth_i\dot \phi(h_i) w_{ij}\\
	\la \prv \beth_j^2 \ra &= \la [\beth_j + \sum_i \beth_i\dot \phi(h_i) w_{ij}]^2 \ra\\
	&= \la \beth_j^2 \ra + \sum_i \la \beth_i^2 \dot \phi^2(h_i) (w_{ij})^2\ra \\
	&\phantom{={}}+ 2\sum_{i < k} \la \beth_i \beth_k\dot\phi(h_i)w_{ij}\dot\phi(h_k)w_{kj}\ra + 2 \sum_i\la \beth_j \beth_i \dot\phi(h_i)w_{ij} \ra
	\end{align*}
	The last two terms of the above vanish as $w_{ij}$ is independent from $w_{kj}$, $h_i, h_k$ and $\beth_i, \beth_j, \beth_k$ by \cref{ass:gradInd}, and $\la w_{ij} \ra = 0$.
	
	Therefore, applying \cref{ass:symAct},
	\begin{align*}
	\la \prv \beth_j^2 \ra &=  \sigma_w^2\la \beth_j^2\ra\la\dot \phi^2(h_i)\ra + \la \beth_j^2\ra\\
	&= (\sigma_w^2  \Vt \dot \phi( \qq) + 1)\la \beth_j^2 \ra
	\end{align*}
	
	We similarly have
	\begin{align*}
	\pdf E {b_j} &= \sum_i \pdf E {x_i} \pdf {x_i}{h_j}
	= \beth_j \dot \phi(h_j), & \text{since $\pdf {x_i}{h_j} = \delta_{ji} \dot \phi(h_j)$}\\
	\la \lp\pdf E {b_j}\rp^2 \ra &= \la \beth_j^2 \dot \phi(h_j)^2 \ra
	= \la \beth_j^2 \ra \Vt \dot \phi( \qq), & \text{by \cref{ass:gradInd}(b)};\\
	\pdf E {w_{ji}} &= \sum_i \pdf E {x_i} \pdf {x_i}{h_j}\pdf {h_j}{w_{ji}}
	= \beth_j \dot \phi(h_j)\prv x_i, & \text{since $\pdf {x_i}{h_j} = \delta_{ji} \dot \phi(h_j)$}\\
	\la \lp\pdf E {w_{ji}}\rp^2 \ra &= \la \beth_j^2 \dot \phi^2(h_j) \prv x_i^2 \ra
	= \la \beth_j^2 \ra \Vt \dot \phi( \qq) \prv \pp , & \text{by \cref{ass:gradInd}(b)}
	\end{align*}
	In the last equation we have also used the fact that as $N \to \infty$, $h_j$ and $x_i$ become independent (they are jointly Gaussian and their correlation $\la w_{ji}^2 \ra$ goes to 0 with $N$).
\end{proof}

\dalethRecFull*
\begin{proof}
	For the full residual network, we have the following derivative computations:
	\begin{align*}
	\pdf {x_i} {\prv x_j} &= \delta_{ji} + \sum_k v_{ik}\dot \phi(h_k) \pdf {h_k}{\prv x_j},&
	\pdf {x_i}{h_j} &= v_{ij} \dot \phi(h_j),&
	\pdf {h_i}{\prv x_j} &= w_{ij},&
	\pdf {h_i}{w_{ij}} &= \prv x_j,&
	\pdf {h_i}{b_i} &= 1,\\
	&&
	\pdf {x_i}{v_{ik}} &= \phi(h_k),&
	\pdf {x_i}{a_i} &= 1.
	\end{align*}
	
	Again let $\beth_j = \pdf E {x_j}$.
	Then
	\begin{align*}
	\prv \beth_j &= \sum_i \beth_i (\delta_{ji} + \sum_k v_{ik}\dot \phi(h_k) \pdf {h_k}{\prv x_j})\\
	&= \sum_i \beth_i (\delta_{ji} + \sum_k v_{ik}\dot \phi(h_k) w_{kj})
	\end{align*}
	
	Thus,
	\begin{align*}
	\la \prv \beth_j^2 \ra &= \la [\sum_i \beth_i (\delta_{ji} + \sum_k v_{ik}\dot \phi(h_k) w_{kj})]^2 \ra\\
	&= \la \beth_j^2 \ra + \sum_{i, k} \la v_{ik}^2 \ra \la w_{kj}^2\ra \Vt \dot \phi( \qq) \la \beth_i^2 \ra\\
	&= \la \beth_j^2 \ra (1 + \sigma_v^2 \sigma_w^2 \Vt \dot \phi( \qq))
	\end{align*}
	where in the second equality we applied the independence argument as in the proof of \cref{thm:dalethRecReduced}, leveraging \cref{ass:gradInd}, and in the third equality we used \cref{ass:symAct} to get $\la \beth_i^2\ra = \la \beth_j^2 \ra$.
	
	The other computations are similar to the proof of \cref{thm:dalethRecFull}.

\end{proof}

\subsection{Tanh: Reduced Residual Network}
\subsubsection{Forward Dynamics}
\pqlinear*
\begin{proof}
The case with $\sigma_w = 0$ is trivial.
We assume $\sigma_w > 0$ from here on.

\noindent\textbf{$\pp$ and $\qq$ are asymptotically linear with $l$.}
We first show that, for any $\omega < 1$, 
$$l + \p \pp 0 \ge \p \pp l \ge \omega l$$
and
$$\sigma_w^2 (l + \p \pp 0) + \sigma_b^2 \ge \p \qq l \ge \sigma_w^2 \omega (l-1) + \sigma_b^2,$$
so that $\p \pp l \sim l$ and $\p \qq l \sim \sigma_w^2 l$.

The upper bounds are trivial, given $\Vt \phi( \qq) \le 1$ for any $\qq$.
We show the lower bounds for any $\omega < 1$.

For any $\eps > 0$, define $\aleph_\eps$ by $\phi^2(\aleph_\eps) = \exp(-\eps)$.
Then
\begin{align*}
\Vt \phi( \qq) &\ge \exp(-\eps) \Pr[z \not \in [-\aleph_\eps, \aleph_\eps]: z \sim \Gaus(0, \qq)]\\
	&\ge \exp(-\eps)\lp 1 - \f{2 \aleph_\eps}{\sqrt{2\pi \qq}} \rp
\end{align*}
where the second inequality follows from an overestimate of the $\Pr[z \in [-\aleph_\eps, \aleph_\eps]]$ via the mode of $\Gaus(0, \qq)$.

For any $\qq \ge \p \qq 0$, $\Vt \phi( \qq)$ is then lower bounded by 
$$\phi^2\lp \sqrt{\p \qq 0} \rp
\lp1 - \f{2\sqrt{\p \qq 0}}{\sqrt{2 \pi \p \qq 0}}\rp
= \phi^2\lp\sqrt{\p \qq 0}\rp
\lp1 - \sqrt{\f 2 \pi}\rp > 0.$$
Thus $\p \pp l$ and $\p \qq l$ are unbounded with $l$.

Furthermore, as $\qq \to \infty$, the lower bound $\exp(-\eps)\lp 1 - \f{2 \aleph_\eps}{\sqrt{2\pi \qq}} \rp$ goes to $\exp(-\eps)$, for any $\eps$.
Therefore, for any $\omega < 1, \p \pp l \ge \omega l$ and $\p \qq l \ge \sigma_w^2 \omega (l-1) + \sigma_b^2$.

\noindent\textbf{Asymptotic expansion.}
Now we repeat the following to get each successive asymptotic term of $\p \pp l$ and $\p \qq l$:
We plug in the current asymptotic form of $\p \qq l$ into $\Vt \tanh( \qq) = 1 - C \qq^{-1/2} + \Theta(\qq^{-3/2})$ (\cref{lemma:vtanhSqrtConvergence}), where $C = \sqrt{2/\pi}$.
Next we take the sum $\p \qq l = \sum_{r=1}^l \Vt \tanh( \p \qq r)$, which yields one more term in the asymptotic expansion of $\pp$ than the last round.
We then repeat until we get only constant terms.

The following exhibits a trace of this procedure, where in the summation step for $\p \qq l$, we implicitly apply 
\begin{align*}
\qq &= \sigma_w^2 l + o(l) = \sigma_w^2 l (1 + o(1))\\
\qq^{-1/2} &= \sigma_w^{-1} l^{-1/2}(1 + o(1)) = \sigma_w^{-1} l^{-1/2} + o(l^{-1/2})\\
\pp &= \sum_{r=1}^l 1 - C (\p \qq r)^{-1/2} + \Theta((\p \qq r)^{-3/2}) \\
	&= \sum_{r=1}^l 1 - C (\sigma_w^{-1} r^{-1/2} + o(r^{-1/2})) + \Theta(r^{-3/2}) \\
	&= l - 2C \sigma_w^{-1} l^{1/2} + o(l^{1/2})\\
\qq &= \sigma_w^2 l - 2 C \sigma_w l^{1/2} + o(l^{1/2}) = \sigma_w^2 l (1 - 2 C \sigma_2^{-1} l^{-1/2} + o(l^{-1/2}))\\
\qq^{-1/2} &= \sigma_w^{-1} l^{-1/2} ( 1 + C \sigma_w^{-1} l^{-1/2} + o(l^{-1/2}))= \sigma_w^{-1} l^{-1/2} + C \sigma_w^{-2} l^{-1} + o(l^{-1})\\
\pp &= \sum_{r=1}^l 1 - C(\sigma_w^{-1} l^{-1/2} + C \sigma_w^{-2} l^{-1} + o(l^{-1})) + \Theta(l^{-3/2})\\  
	&= l - 2 C \sigma_w^{-1} l^{1/2} - C^2 \sigma_w^{-2} \log l + o(\log l)\\
\qq &= \sigma_w^2 l (1 - 2 C \sigma_w^{-1} l^{-1/2} - C^2 \sigma_w^{-2} \f{\log l}{l} + o(\f{\log l}{l}))\\
\qq^{-1/2} &=\sigma_w^{-1} l^{-1/2}(1 + C \sigma_w^{-1} l ^{-1/2} + \f 1 2 C^2 \sigma_w^{-2} \f{\log l}{l} + o (\f{\log l}{l}))\\
\pp &= \sum_{r=1}^l 1 - C(\sigma_w^{-1} r^{-1/2} + C \sigma_w^{-2} r ^{-1} + \f 1 2 C^2 \sigma_w^{-3} \f{\log r}{r^{3/2}} + o (\f{\log r}{r^{3/2}}))) + \Theta(r^{-3/2})\\
	&= l - 2 C \sigma_w^{-1} l^{1/2} - C^2 \sigma_w^{-2} \log l + O(1)
\end{align*}
which is what we want.
\end{proof}

\begin{lemma}\label{lemma:WtPhiAntisymmetric}
	\renewcommand{\qq}{q}
	Let $\phi$ is antisymmetric.
	Then for $\tau \in [0, \pi/2]$,
	\begin{align*}
	\Wt\phi( \qq, \qq\cos \tau) &= \lim_{t \to \tau}\f 1 {\pi \sin t} \int_{w' \ge |w|}  \dd w \dd w' \Upsilon(w, w';\tau) \phi(\f {\sqrt{\qq}} {\sqrt{2}}(w + w')) \phi(\f {\sqrt \qq} {\sqrt{2}}(w' - w))\\
		&= \f 1 \pi \int_{0}^\infty r\dd r e^{-r^2/2} \int_{0}^{\pi}\dd \theta  \Sigma(\sqrt \qq r, \theta; \tau)\\
		&= \f 1 \pi \int_0^\infty s \dd s \inv \qq e^{-s^2 \inv \qq /2} \int_0^\pi \dd \theta \Sigma(s, \theta; \tau)\\
		&= \f 1 \pi \int_0^\pi \dd \theta \int_0^\infty \dd s e^{-s^2 \inv \qq /2} \pdf{}{s} \Sigma(s, \theta; \tau)
	\end{align*}
	where $\Upsilon(w, w';\tau) :=e^{-\f 1 2(\f{w^2}{1-c} + \f{(w')^2}{1+c})} - e^{-\f 1 2(\f{(w')^2}{1-c} + \f{w^2}{1+c})}$ with $c = \cos \tau$, and $\Sigma(s, \theta; \tau) := \phi(s \sin \theta) \phi(s \sin(\theta - \tau)).$
\end{lemma}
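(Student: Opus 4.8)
The plan is to derive all four displayed expressions directly from the definition $\Wt\phi(\rho,\nu)=\EV[\phi(z)\phi(z'):(z,z')\sim\Gaus(0,\left(\begin{smallmatrix}\rho&\nu\\\nu&\rho\end{smallmatrix}\right))]$ by first diagonalizing the $2\times 2$ covariance and then performing only elementary operations: folding the plane into the sector $\{w'\ge|w|\}$, passing to polar coordinates, rescaling, and a single integration by parts. I would fix $q>0$ and $\tau\in(0,\pi/2)$ throughout and recover the degenerate endpoints $\tau\in\{0,\pi/2\}$ at the end from the continuity of $\tau\mapsto\Wt\phi(q,q\cos\tau)$ (dominated convergence, using $|\phi|\le 1$); this continuity is the reason the first line is written with $\lim_{t\to\tau}$, the limit being genuinely needed only at $\tau=0$, where $\sin t\to 0$.

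For the diagonalization, since the correlation matrix $\left(\begin{smallmatrix}1&c\\c&1\end{smallmatrix}\right)$ with $c=\cos\tau$ has eigenvectors $(1,1)$ and $(1,-1)$ with eigenvalues $1\pm c$, one can write a mean-zero Gaussian $(z,z')$ with $\Var z=\Var z'=q$ and correlation $c$ as $z=\f{\sqrt q}{\sqrt 2}(w+w')$, $z'=\f{\sqrt q}{\sqrt 2}(w'-w)$ with $w\sim\Gaus(0,1-c)$ and $w'\sim\Gaus(0,1+c)$ independent. Writing out the product density and using $\sqrt{(1-c)(1+c)}=\sin\tau$ yields
$$\Wt\phi(q,qc)=\f 1 {2\pi\sin\tau}\int_{\R^2}\dd w\,\dd w'\,\exp\!\Bigl(-\f12\Bigl(\f{w^2}{1-c}+\f{(w')^2}{1+c}\Bigr)\Bigr)\,\phi\!\Bigl(\f{\sqrt q}{\sqrt 2}(w+w')\Bigr)\phi\!\Bigl(\f{\sqrt q}{\sqrt 2}(w'-w)\Bigr).$$
To get the first identity I would split $\R^2$ (up to a null set) into the four sectors $\{w'>|w|\}$, $\{w'<-|w|\}$, $\{w>|w'|\}$, $\{w<-|w'|\}$. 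On the second, $(w,w')\mapsto(-w,-w')$ leaves the whole integrand fixed (each $\phi$-factor changes sign, so the product does not). On the third and fourth, $(w,w')\mapsto(w',w)$ and $(w,w')\mapsto(-w',-w)$ respectively carry the sector onto $\{w'>|w|\}$, swap the roles of $w$ and $w'$ in the Gaussian weight (turning it into $\exp(-\f12(\f{(w')^2}{1-c}+\f{w^2}{1+c}))$), and — by antisymmetry of $\phi$ — multiply the $\phi$-product by $-1$. Summing the four contributions replaces $\int_{\R^2}$ by $2\int_{\{w'\ge|w|\}}$ against the weight $\exp(-\f12(\f{w^2}{1-c}+\f{(w')^2}{1+c}))-\exp(-\f12(\f{(w')^2}{1-c}+\f{w^2}{1+c}))=\Upsilon(w,w';\tau)$, which is the first displayed formula.

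For the remaining three identities I would instead standardize the Gaussian above by $w=\sqrt{1-c}\,\beta$, $w'=\sqrt{1+c}\,\alpha$ and use $\sqrt{1+c}=\sqrt2\cos(\tau/2)$, $\sqrt{1-c}=\sqrt2\sin(\tau/2)$ to get $z=\sqrt q(\cos(\tau/2)\alpha+\sin(\tau/2)\beta)$, $z'=\sqrt q(\cos(\tau/2)\alpha-\sin(\tau/2)\beta)$ with $\alpha,\beta\sim\Gaus(0,1)$ independent. Polar coordinates $\alpha=r\cos\psi$, $\beta=r\sin\psi$ rewrite the two arguments as $\sqrt q\,r\cos(\psi-\tau/2)$ and $\sqrt q\,r\cos(\psi+\tau/2)$, and the substitution $\theta=\psi+\tau/2+\pi/2$ turns them into $\sqrt q\,r\sin(\theta-\tau)$ and $\sqrt q\,r\sin\theta$, so the angular integrand is exactly $\Sigma(\sqrt q\,r,\theta;\tau)$; since $\Sigma(s,\cdot\,;\tau)$ is $\pi$-periodic (antisymmetry again), the $\psi$-integral over a length-$2\pi$ interval equals $2\int_0^\pi\dd\theta$, giving the second identity. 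Rescaling $s=\sqrt q\,r$, so $r\,\dd r=\inv q\,s\,\dd s$ and $e^{-r^2/2}=e^{-s^2\inv q/2}$, gives the third. Finally $\inv q\,s\,e^{-s^2\inv q/2}=-\f{\dd}{\dd s}e^{-s^2\inv q/2}$, so one integration by parts in $s$ produces the fourth; the boundary terms vanish because $\Sigma(0,\theta;\tau)=\phi(0)^2=0$ (antisymmetry) and $e^{-s^2\inv q/2}\to0$.

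The one genuinely delicate step is the folding argument for the first identity: for each of the three nontrivial sectors one must simultaneously keep track of the sign picked up from the antisymmetry of $\phi$ and of how the anisotropic Gaussian weight transforms, and verify that the four pieces recombine into the difference of exponentials $\Upsilon$. Everything else is routine calculus, the only caveat being that the fourth identity tacitly assumes enough regularity on $\phi$ (e.g.\ absolute continuity with bounded $\dot\phi$, as holds for tanh-like $\phi$) for the integration by parts and for $\int_0^\infty e^{-s^2\inv q/2}\,\partial_s\Sigma\,\dd s$ to converge.
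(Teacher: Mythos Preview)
Your proof is correct, and for the first, third, and fourth identities it is essentially the same as the paper's argument (your four-sector folding is a minor reorganization of the paper's ``swap $w\leftrightarrow w'$, then restrict by symmetry'' maneuver, but the content is identical).

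The one genuine difference is in how you reach the second identity. The paper obtains it \emph{from} the first: it writes the sector integral as $\mathbf A-\mathbf B$ corresponding to the two exponentials in $\Upsilon$, applies a \emph{different} polar substitution to each piece (standardizing $e^A$ versus $e^B$), finds that $\mathbf A$ produces $\theta\in[\tau/2,\pi-\tau/2]$ and $\mathbf B$ produces $\theta\in[-\tau/2,\tau/2]$, and then observes that the two angular ranges concatenate into $[-\tau/2,\pi-\tau/2]$ which shifts to $[0,\pi]$. You instead return to the full-plane standardized Gaussian, go to polar coordinates once, and invoke the $\pi$-periodicity of $\Sigma(s,\cdot\,;\tau)$ (an immediate consequence of antisymmetry) to collapse $\int_0^{2\pi}$ to $2\int_0^\pi$. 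Your route is shorter and avoids the bookkeeping of tracking how the region $\{w'\ge|w|\}$ maps under two different ellipse-to-circle substitutions; the paper's route, on the other hand, makes it transparent that the second formula really is a rewriting of the first (not just an independent computation of the same quantity), which matters because downstream lemmas (e.g.\ the monotonicity argument in \cref{lemma:Wt_le_arcsin}) use the first formula and the polar formulas in tandem. Either way the conclusions are the same. Your remark that the limit in the first line is only genuinely needed at $\tau=0$ is correct.
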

Of course, in the above lemma, the limit in the first equation is only necessary when $\tau = 0$ or $\tau = \pi/2$.
\begin{proof}[]
	\renewcommand{\qq}{q}
	Let $c := \cos \tau$ and
	$$\Gamma := \Wt \phi( \qq, cq) = \f 1 {2\pi \qq\sqrt{1 - c^2}} \int \dd \mathbf z \exp(-\mathbf z^T \Sigma^{-1} \mathbf z /2) \phi(z)\phi(z'),$$
	where $\Sigma = \begin{pmatrix}\qq & cq \\ cq & \qq \end{pmatrix}$.
	
	Our proof will have two portions: Symmetrization of the $\Gamma$ integral and trigonometric change of variables for evaluation.
	
	\noindent\textbf{Symmetrization.} $\Sigma$ is diagonalized by $\Omega = \f 1 {\sqrt{2q}}\begin{pmatrix}-1 & 1 \\ 1 & 1 \end{pmatrix}$,
	$$\Sigma = \Omega^T \Diag(1 - c, 1 + c) \Omega.$$
	By a change of variable $\mathbf w = \Omega \mathbf z$, so that $\dd \mathbf w = \inv \qq \dd \mathbf z$, we have
	\begin{align*}
	\Gamma &= \f 1 {2\pi \sqrt{1 - c^2}}\int \dd \mathbf w \exp(-\mathbf w^T \Diag(1 - c, 1 + c)^{-1}\mathbf w/2) \phi(\f {\sqrt \qq} {\sqrt{2}}(w' - w)) \phi(\f {\sqrt \qq} {\sqrt{2}}(w + w'))\\
	&= \f 1 {2\pi \sqrt{1 - c^2}} \int  \dd w \dd w' e^{-\f 1 2(\f{w^2}{1-c} + \f{(w')^2}{1+c})} \phi(\f {\sqrt{\qq}} {\sqrt{2}}(w' - w)) \phi(\f {\sqrt \qq} {\sqrt{2}}(w + w'))
	\end{align*}
	
	By a change of variable swapping $w$ with $w'$, we get
	
	\begin{align*}
	\Gamma &= -\f 1 {2\pi \sqrt{1 - c^2}} \int  \dd w \dd w' e^{-\f 1 2(\f{(w')^2}{1-c} + \f{w^2}{1+c})} \phi(\f {\sqrt{\qq}} {\sqrt{2}}(w + w')) \phi(\f {\sqrt \qq} {\sqrt{2}}(w' - w))
	\end{align*}
	
	Thus
	\begin{align*}
	2 \Gamma &= \f 1 {2\pi \sqrt{1 - c^2}} \int  \dd w \dd w' \Upsilon(w, w';\tau) \phi(\f {\sqrt{\qq}} {\sqrt{2}}(w + w')) \phi(\f {\sqrt \qq} {\sqrt{2}}(w' - w))
	\end{align*}
	where
	$$\Upsilon(w, w';\tau) = e^{-\f 1 2(\f{w^2}{1-c} + \f{(w')^2}{1+c})} - e^{-\f 1 2(\f{(w')^2}{1-c} + \f{w^2}{1+c})}.$$
	
	\begin{figure}
		\centering
		\includegraphics[height=.2\textheight]{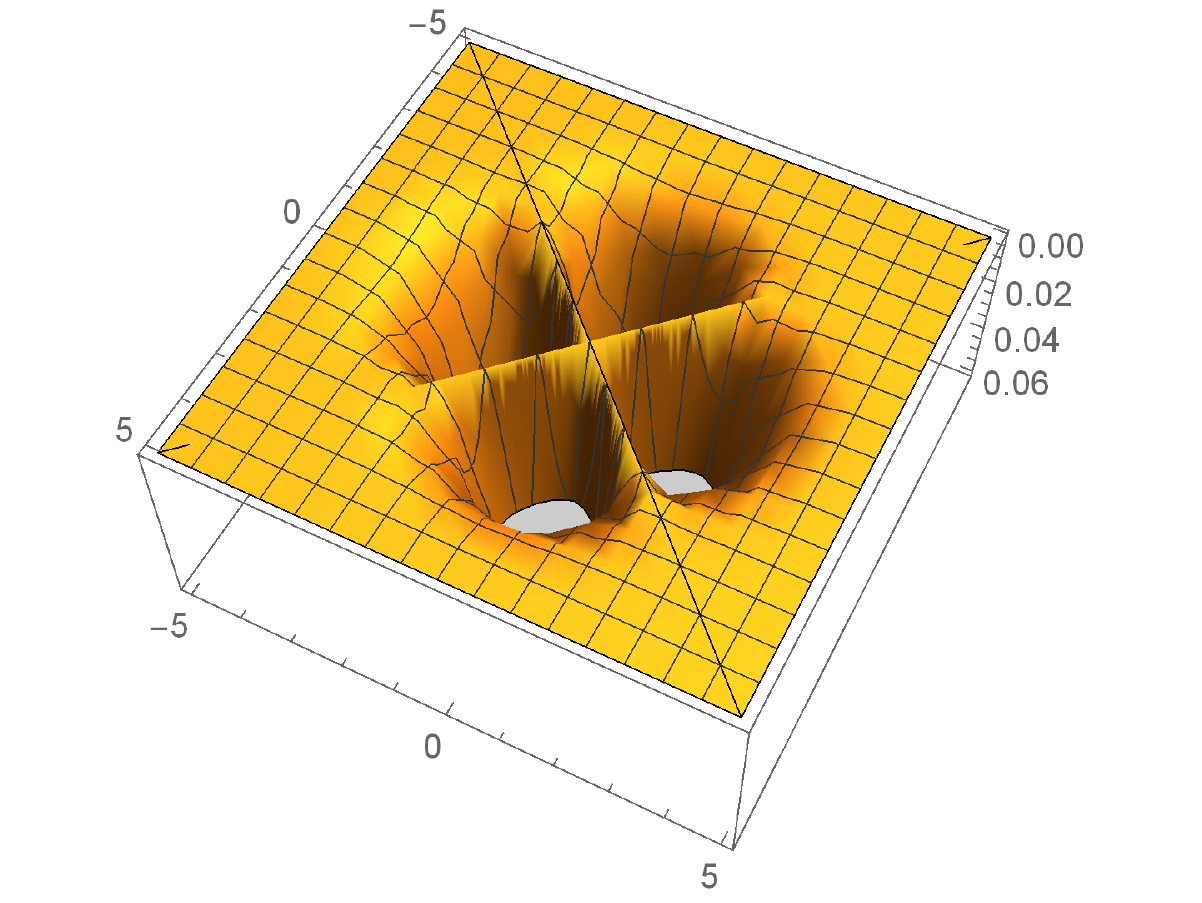}
		\caption{The integrand of $\Gamma$ after symmetrization. Here $c = .2$ and $\qq = 100$ and $\phi = \tanh$.}
		\label{fig:symmetrization_of_Gamma}
	\end{figure}
	Note that, by the antisymmetry of $\phi$, the integrand $K := \Upsilon(w, w';\tau)\phi(\ldots)\phi(\ldots)$ above has the symmetries $K(w, w') = K(w', w) = K(w, -w')$, and is everywhere nonnegative.
	\cref{fig:symmetrization_of_Gamma} displays a contour plot of $K$ for typical values of $\qq$ and $c$.
	So
	\begin{align*}
	\Gamma &= \f 1 {\pi \sqrt{1 - c^2}} \int_{w' \ge |w|} \dd w \dd w' K(w, w').
	\end{align*}
	This gives the first equation in the lemma.
	
	\noindent\textbf{Polar Coordinates.}
	Let $\f w {\sqrt{1 - c}} = r \cos \theta, \f {w'} {\sqrt{1 + c}} = r \sin \theta$, so that
	\begin{align*}
	w &= r \cos \theta \sqrt{1-c} = \sqrt 2 r \cos \theta \sin \f \tau 2\\
	w' &= r \sin \theta \sqrt{1+c} = \sqrt 2 r \sin \theta \cos \f \tau 2\\
	\dd w \dd w' &= \sqrt{1-c^2}r \dd r \dd \theta = (\sin^2 \tau) r \dd r \dd \theta.
	\end{align*}
	Then
	\begin{align*}
	\mathbf A &:= \int_{w' \ge |w|} e^{-(\f{w^2}{1-c} + \f{(w')^2}{1 +c})/2}\phi(\sqrt{\qq/2}(w+w'))\phi(\sqrt{\qq/2}(w'-w)) \dd w \dd w'\\
	&= \sin^2 \tau \int_{0}^\infty r\dd r e^{-r^2/2} \int_{\tau/2}^{\pi - \tau/2}\dd \theta  \phi(\sqrt \qq r \sin(\theta + \tau/2))\phi(\sqrt \qq r \sin(\theta - \tau/2)).
	\end{align*}
	Similarly, let $\f w {\sqrt{1 + c}} = r \cos \theta, \f {w'} {\sqrt{1 - c}} = r \sin \theta$, so that
	\begin{align*}
	w &= r \cos \theta \sqrt{1+c} = \sqrt 2 r \cos \theta \cos \f \tau 2\\
	w' &= r \sin \theta \sqrt{1-c} = \sqrt 2 r \sin \theta \sin \f \tau 2\\
	\dd w \dd w' &= \sqrt{1-c^2}r \dd r \dd \theta = (\sin^2 \tau) r \dd r \dd \theta,
	\end{align*}
	and
	\begin{align*}
	\mathbf B &= \int_{w' \ge |w|} e^{-(\f{w^2}{1+c} + \f{(w')^2}{1 - c})/2}\phi(\sqrt{\qq/2}(w+w'))\phi(\sqrt{\qq/2}(w'-w)) \dd w \dd w'\\
	&= -\sin^2 \tau \int_{0}^\infty r\dd r e^{-r^2/2} \int_{\pi/2 - \tau/2}^{\pi/2 + \tau/2}\dd \theta  \phi(\sqrt \qq r \cos(\theta + \tau/2))\phi(\sqrt \qq r \cos(\theta - \tau/2))\\
	&= -\sin^2 \tau \int_{0}^\infty r\dd r e^{-r^2/2} \int_{- \tau/2}^{\tau/2}\dd \theta  \phi(\sqrt \qq r \sin(\theta + \tau/2))\phi(\sqrt \qq r \sin(\theta - \tau/2)).
	\end{align*}
	Thus
	\begin{align*}
	\Gamma &= \f 1 {\pi \sqrt{1-c^2}} (\mathbf A - \mathbf B)\\
		&= \f 1 \pi \int_{0}^\infty r\dd r e^{-r^2/2} \int_{- \tau/2}^{\pi - \tau/2}\dd \theta  \phi(\sqrt \qq r \sin(\theta + \tau/2))\phi(\sqrt \qq r \sin(\theta - \tau/2))\\
		&= \f 1 \pi \int_{0}^\infty r\dd r e^{-r^2/2} \int_{0}^{\pi}\dd \theta  \phi(\sqrt \qq r \sin(\theta))\phi(\sqrt \qq r \sin(\theta - \tau)).
	\end{align*}
	This gives the second equation in the lemma, and a change of variables $s = \sqrt \qq r$ gives the third.
	
	For the fourth equality, we start from the third equality, and apply integration by parts:
	\begin{align*}
	&\phantom{{}={}}\f 1 \pi \int_0^\infty s \dd s \inv \qq e^{-s^2 \inv \qq /2} \int_0^\pi \dd \theta \Sigma(s, \theta; \tau)\\
	&= \f 1 \pi \int_0^\pi \dd \theta \int_0^\infty \dd s s \inv \qq e^{-s^2 \inv \qq /2}  \Sigma(s, \theta; \tau)\\
	&= \f 1 \pi \int_0^\pi \dd \theta \biggl(-e^{-s^2 \inv \qq /2} \Sigma(s, \theta; \tau)\biggr\rvert_{s=0}^\infty + \int_0^\infty \dd s e^{-s^2 \inv \qq /2} \pdf{}{s} \Sigma(s, \theta; \tau) \biggr)\\
	&= \f 1 \pi \int_0^\pi \dd \theta \int_0^\infty \dd s e^{-s^2 \inv \qq /2} \pdf{}{s} \Sigma(s, \theta; \tau).
	\end{align*}
	where the last equality follows because $\Sigma(0, \theta; \tau) = 0$ and $e^{-s^2\inv \qq / 2} \to 0$ as $s \to \infty$.
\end{proof}

In the following lemmas, the ``2'' is not important, and can be any arbitrary finite or infinite value.
\begin{lemma}
	Suppose a function $f: (0, 2) \to \R$ is $C^k$ on $(0,2)$.
	If $\lim_{x \downarrow 0} \p f i (x)$ exists and is finite for every $i \in [0, k]$, then $f$ can be extended to $[0, 2)$ such that one sided $i$th derivatives exist at 0 for all $i \in [0, k]$.
\end{lemma}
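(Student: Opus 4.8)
The plan is to first fix the value of $f$ at $0$ and then bootstrap the derivatives one order at a time, using the mean value theorem to promote ``$f^{(i)}$ has a finite limit at $0$'' into ``$f^{(i-1)}$ is one-sided differentiable at $0$ with that limit as its derivative.'' Since $\lim_{x \downarrow 0} f(x)$ exists and is finite (the $i=0$ hypothesis), I would define $f(0)$ to be this limit; then $f$ is continuous on $[0,2)$, and trivially its zeroth one-sided derivative exists at $0$.

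The one nontrivial ingredient is the following elementary fact, which I would state and prove first. Let $g$ be continuous on $[0,\delta)$ and differentiable on $(0,\delta)$, and suppose $L := \lim_{x \downarrow 0} g'(x)$ exists and is finite. Then $g$ has a right derivative at $0$ equal to $L$, and consequently the function $g'$, extended by setting $g'(0) := L$, is continuous at $0$. Indeed, for $x \in (0,\delta)$ the mean value theorem produces $\xi_x \in (0,x)$ with $\tfrac{g(x)-g(0)}{x} = g'(\xi_x)$; since $\xi_x \to 0$ as $x \downarrow 0$, the right-hand side tends to $L$, which shows $g'_+(0) = L = \lim_{x\downarrow 0} g'(x)$, i.e. the extended $g'$ is continuous at $0$.

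I would then run an induction on $i$: for each $i \in \{0,1,\dots,k\}$, $f$ extends to $[0,2)$ so that $f^{(j)}_+(0)$ exists and equals $\lim_{x\downarrow 0} f^{(j)}(x)$ for all $j \le i$, and moreover $f^{(i)}$ is continuous at $0$. The base case $i=0$ is handled above. For the step from $i$ to $i+1$ (with $i+1 \le k$), apply the ingredient with $g = f^{(i)}$: by the inductive hypothesis $g$ is continuous on $[0,2)$ and differentiable on $(0,2)$ with $g' = f^{(i+1)}$ there, and $\lim_{x\downarrow 0} f^{(i+1)}(x)$ exists and is finite by hypothesis; hence $f^{(i+1)}_+(0)$ exists, equals that limit, and $f^{(i+1)}$ is continuous at $0$. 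Taking $i = k$ yields the lemma.

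There is essentially no obstacle beyond bookkeeping; the only point that requires care is that the inductive hypothesis must carry forward the continuity of $f^{(i)}$ at $0$, not merely the existence of the derivative, because that continuity is exactly what licenses the mean value theorem argument at the next order. This is automatic because we \emph{define} the one-sided derivative $f^{(i)}_+(0)$ to be the right-hand limit, which forces continuity of the extended $f^{(i)}$ at $0$.
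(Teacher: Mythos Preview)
Your proof is correct, and it takes a genuinely different route from the paper's. The paper argues via the fundamental theorem of calculus: it defines the extension by $f^{(i)}(0) := f^{(i)}(\varepsilon) - \int_0^\varepsilon f^{(i+1)}(x)\,dx$ (which is independent of $\varepsilon$), and then verifies the one-sided derivative by writing the difference quotient as an average $\tfrac{1}{\varepsilon}\int_0^\varepsilon f^{(i+1)}(x)\,dx$ and using that $f^{(i+1)}$ is bounded near $0$ with limit $f^{(i+1)}(0)$. You instead use the mean value theorem: continuity of $f^{(i)}$ on $[0,\varepsilon]$ and differentiability on $(0,\varepsilon)$ give a point $\xi_\varepsilon$ with $\tfrac{f^{(i)}(\varepsilon)-f^{(i)}(0)}{\varepsilon} = f^{(i+1)}(\xi_\varepsilon)$, and $\xi_\varepsilon \to 0$ forces the limit. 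Your approach is the textbook one and is slightly cleaner---it avoids any appeal to integration and dominated convergence---while the paper's integral formulation dovetails with the next lemma (the Taylor-type expansion), where the iterated-integral remainder is used explicitly. You were right to flag that the induction must propagate continuity of $f^{(i)}$ at $0$, not just existence of the one-sided derivative; that is exactly the hypothesis the MVT needs at the next step.
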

\begin{proof}
	Consider $\overline{\p f i}(0) := \p f i (1) - \int_0^1 \p f {i+1}(x) \dd x$ for $i \in [0, k-1]$, which naturally is also equal to $\p f i (\eps) - \int_0^\eps \p f {i+1}(x)\dd x$ for any $\eps > 0$.
	Certainly $\p f i (x) \to \overline{\p f i}(0)$ as $x \to 0$ if this limit exists --- and by assumption it does, for $0 \le i \le k -1$.
	Therefore, we can define the extension of $\p f i$ to $x = 0$ to be $\p f i (0) := \overline{\p f i}(0)$.
	But we need to check that for $i \in [0, k-1]$.
	\begin{align*}
	\lim_{\eps \to 0} \f 1 \eps(\p f i(\eps) - \p f i (0)) = \p f {i+1}(0)
	\end{align*}
	so that all one sided $i$th derivatives exist.
	But
	\begin{align*}
	\f 1 \eps(\p f i(\eps) - \p f i (0)) &= \f 1 \eps \int_0^\eps \p f {i+1}(x)\dd x\\
		&=\p f {i+1}(0) + \int_0^1 (\p f {i+1}(x) - \p f i (0)) \I(x \in [0, \eps]) \dd x
	\end{align*}
	Since $\lim_{x \downarrow 0} \p f {i+1}(x) = \p f {i+1} (0)$, $\p f {i+1}(x) - \p f {i+1}(0)$ is bounded for small $x$, and by dominated convergence, $\int_0^1 (\p f {i+1}(x) - \p f i (0)) \I(x \in [0, \eps]) \dd x \to \int_0^1 0 \dd x = 0$ as $\eps \to 0$.
	Thus
	\begin{align*}
	\lim_{\eps \to 0} \f 1 \eps(\p f i(\eps) - \p f i (0)) &= \p f {i+1}(0)
	\end{align*}
	as desired.
\end{proof}

\begin{lemma}
	If $f: [0, 2) \to \R$ is $C^k$ on $(0, 2)$ and has one sided derivatives at 0 up to order $k$, then
	\begin{align*}
	f(\eps) &= f(0) + \eps \p f 1 (0) + \cdots + \f{\eps^{i-1}}{(i-1)!} \p f {i-1} (0) + O(\eps^i)
	\end{align*}
	for any $i \le k$.
\end{lemma}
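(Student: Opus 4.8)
The plan is to induct on $i$, at each stage building an expansion of $f$ from an expansion of $\p f 1$ via the fundamental theorem of calculus. For the base case $i = 1$ the claim reads $f(\eps) = f(0) + O(\eps)$, which is immediate: since the one-sided derivative $\p f 1(0)$ exists, $(f(\eps)-f(0))/\eps \to \p f 1(0)$ as $\eps \downarrow 0$, so this difference quotient is bounded on some $(0,\eps_0)$, giving $f(\eps) - f(0) = O(\eps)$.

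For the inductive step, fix $2 \le i \le k$ and assume the statement at order $i-1$ for every function that is $C^{k-1}$ on $(0,2)$ with one-sided derivatives at $0$ up to order $k-1$. The function $g := \p f 1$ satisfies these hypotheses, with $\p g j(0) = \p f {j+1}(0)$ for $0 \le j \le k-1$, so by the inductive hypothesis
\begin{align*}
\p f 1(\eps) = \p f 1(0) + \eps\,\p f 2(0) + \cdots + \f{\eps^{i-2}}{(i-2)!}\p f{i-1}(0) + O(\eps^{i-1}).
\end{align*}
In particular $\p f 1$ is bounded on some $(0,\eps_0)$. For $\eps < \eps_0$, the function $f$ is continuous on $[0,\eps]$ (it is even differentiable at $0$), differentiable on $(0,\eps)$, and has bounded derivative there; by the mean value theorem $f$ is Lipschitz on $[0,\eps]$, hence absolutely continuous, so $f(\eps) - f(0) = \int_0^\eps \p f 1(t)\,dt$. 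Substituting the displayed expansion and integrating term by term, using $\int_0^\eps O(t^{i-1})\,dt = O(\eps^i)$, yields
\begin{align*}
f(\eps) = f(0) + \eps\,\p f 1(0) + \f{\eps^2}{2}\,\p f 2(0) + \cdots + \f{\eps^{i-1}}{(i-1)!}\p f{i-1}(0) + O(\eps^i),
\end{align*}
which is the statement at order $i$, completing the induction.

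The only delicate points are at the left endpoint $0$: one uses that each $\p f j$ with $j \le k-1$ is continuous at $0$ (it is differentiable there, since $\p f{j+1}(0)$ exists), and one must justify the fundamental theorem of calculus on $[0,\eps]$ even though $\p f 1$ is only assumed continuous on the open interval — handled above via boundedness of $\p f 1$ near $0$ and the resulting Lipschitz/absolute continuity. I expect this endpoint bookkeeping to be the main (mild) obstacle. Note that $\p f k$ itself never needs to be controlled near $0$, which is exactly why this one-step-at-a-time induction is cleaner than invoking the integral form of Taylor's remainder directly.
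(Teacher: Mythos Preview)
Your proof is correct and follows essentially the same strategy as the paper's: both iterate the fundamental theorem of calculus, with the paper unrolling this into an explicit nested integral $\int_0^\eps\!dx_1\cdots\int_0^{x_{i-1}}\!dx_i\,\p f i(x_i)$ and bounding the innermost integrand via $\p f i(x)\to\p f i(0)$, while you package the same iteration as an induction whose base case bounds the difference quotient $(f(\eps)-f(0))/\eps$ directly. Your version is in fact slightly more robust at $i=k$, since it never needs $\p f k$ to be bounded near $0$ (only that $\p f k(0)$ exists as a one-sided limit of difference quotients), whereas the paper's bound on the nested integral tacitly uses continuity of $\p f i$ at $0$, which for $i=k$ is supplied by the preceding lemma rather than by the stated hypotheses alone.
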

\begin{proof}
	We have
	\begin{align*}
	f(\eps) &= f(0) + \int_0^\eps \p f 1 (x) \dd x\\
		&= f(0) + \eps \p f 1(0) + \int_0^\eps \p f 1(x) - \p f 1 (0) \dd x\\
		&= f(0) + \eps \p f 1(0) + \int_0^\eps \int_0^{x_0} \p f 2 (x_2) \dd x_2 \dd x_1\\
		&= f(0) + \eps \p f 1(0) + \f{\eps^2}{2} \p f 2(0) + \int_0^\eps \int_0^{x_1} \p f 2 (x_2) - \p f 2 (0) \dd x_2 \dd x_1\\
		&\ \ \mathbin{\vdots}\ \\
	f(\eps) &= f(0) + \eps \p f 1 (0) + \cdots + \f{\eps^{i-1}}{(i-1)!} \p f {i-1} (0) + \int_0^\eps \dd x_1 \int_0^{x_1} \dd x_2 \cdots \int_0^{x_{i-1}} \dd x_{i} \p f {i}(x_{i})
	\end{align*}
	for any $i\le k$.
	It suffices then to bound the size of the integral.
	Since $\p f i (x) \to \p f i (0)$ as $x \downarrow 0$ by assumption, $|\p f i (x_i)|$ is bounded by some constant $C$ on the integration region $\mathbb{A} := \{(x_1, \ldots, x_i): \eps \ge x_1 \ge \cdots \ge x_i\}$ for small enough $\eps$.
	Therefore,
	\begin{align*}
	&\phantom{{}={}}\int_0^\eps \dd x_1 \int_0^{x_1} \dd x_2 \cdots \int_0^{x_{i-1}} \dd x_{i} \p f {i}(x_{i})\\
	&= \int \p f {i}(x_i) \I(\vec x \in \mathbb A) \dd \vec x\\
	&\le C |\mathbb A|\\
	&= \Theta(\eps^i).
	\end{align*}
		
\end{proof}

As a corollary,
\begin{lemma}\label{lemma:smoothExtensionBoundary}
	If $f: (0, 2) \to \R$ is smooth on $(0, 2)$ and $\lim_{x \to 0}\p f i (x)$ exists and is finite for all $i$, then $f$ can be extended to $[0, 2)$ and be {\it one-sided smooth} at 0, and 
	\begin{align*}
	f(\eps) &= f(0) + \eps \p f 1 (0) + \cdots + \f{\eps^{i-1}}{(i-1)!} \p f {i-1} (0) + O(\eps^i)
	\end{align*}
	for any $i$.
\end{lemma}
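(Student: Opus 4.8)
The plan is to deduce this corollary directly from the two immediately preceding lemmas, applied for every order $k$ at once. First I would note that since $f$ is smooth on $(0,2)$ it is in particular $C^k$ on $(0,2)$ for every $k$, and the hypothesis supplies $\lim_{x\downarrow 0}\p f i(x)$ finite for every $i$. Hence for each fixed $k$ the hypotheses of the first preceding lemma hold, and we obtain an extension of $f$ to $[0,2)$ whose one-sided derivatives at $0$ exist up to order $k$. The one point that needs a remark is that these extensions, for different $k$, agree: but the extension produced there is given by the explicit formula $\p f i(0):=\p f i(1)-\int_0^1 \p f{i+1}(x)\,\dd x$ (equivalently $\p f i(0)=\lim_{x\downarrow 0}\p f i(x)$), which involves no reference to $k$. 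So there is a single extension $f:[0,2)\to\R$ that is one-sided smooth at $0$, i.e. all one-sided derivatives $\p f i(0)$ exist and are finite.

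Next, with this extension fixed, $f$ satisfies the hypotheses of the second preceding lemma for every $k$ (it is $C^k$ on $(0,2)$ and has one-sided derivatives at $0$ up to order $k$). Applying that lemma then yields, for every $i$,
$$f(\eps) = f(0) + \eps\,\p f 1(0) + \cdots + \f{\eps^{i-1}}{(i-1)!}\p f{i-1}(0) + O(\eps^i)$$
as $\eps\downarrow 0$, which is precisely the asserted asymptotic expansion. Since $i$ was arbitrary, this finishes the argument.

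There is essentially no genuine obstacle here: all the analytic content — the existence of one-sided derivatives (via antiderivatives and a dominated-convergence step) and the remainder bound (via iterated integration and a volume estimate on the order simplex) — is already carried out in the two lemmas above, so the proof of the corollary amounts to quantifying those two statements over all $k$. The only thing worth stating explicitly is the consistency of the extension across different $k$, which is immediate from the closed-form definition of $\p f i(0)$.
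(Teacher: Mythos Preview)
Your proposal is correct and matches the paper's approach: the paper itself presents this lemma with the words ``As a corollary,'' giving no further argument, so the intended proof is exactly to invoke the two preceding lemmas for every $k$. Your added remark on the consistency of the extension across different $k$ (via the closed-form $\p f i(0)=\lim_{x\downarrow 0}\p f i(x)$) is a nice clarification that the paper leaves implicit.
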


\begin{lemma}\label{lemma:WtExtension}
	\renewcommand{\qq}{q}
	Let $\phi = \tanh$.
	For any fixed $c$, $\Wt \phi( \qq, cq)$ is smooth (infinitely differentiable) on $\qq \in (0, \infty)$.
	As a function of $Q := \inv \qq$, it can be extended smoothly to the point $Q = 0$, so that
	\begin{align*}
	\Wt \phi( \qq, cq ) &= \lim_{\qq' \to \infty} \Wt \phi( \qq', cq') + \inv \qq \lim_{\qq' \to \infty} \pd \Wt \phi( \qq', cq')/\pd (\qq')^{-1} + \cdots \\
		&\qquad+ \f{\qq^{-i+1}}{(i-1)!} \lim_{\qq' \to \infty} \pd^{i-1} \Wt \phi( \qq', cq')/\pd (\qq')^{-i+1} + O(\qq^{-i})
	\end{align*}
	for any $i \ge 0$.
	Furthermore, for $c$ bounded away from $1$, the constants hidden $O$ can be taken independent of $c$.
\end{lemma}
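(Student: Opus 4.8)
I would work from the rescaled representation $\Wt\tanh(q,cq)=\EV[\tanh(\sqrt q\,\zeta)\,\tanh(\sqrt q\,\zeta')]$, where $(\zeta,\zeta')$ is a standard bivariate normal with correlation $c$ (this is \cref{defn:integralTransform} for $\Wt\tanh$ with the Gaussian rescaled by $\sqrt q$), and reparametrize by $Q:=1/q$, writing $g(Q):=\Wt\tanh(q,cq)$ so that $\sqrt q=Q^{-1/2}$. The goal becomes: $g$ is $C^\infty$ on $[0,\infty)$ (one-sided at $0$) with the stated Taylor-with-remainder expansion at $Q=0$. The difficulty sits entirely at $Q=0$: the $Q$-derivatives of $\tanh(Q^{-1/2}\zeta)$ carry negative powers of $Q$, so one cannot differentiate under the expectation there, and a priori only an expansion in powers of $Q^{1/2}$ is visible. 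The remedy is to peel off the sign function and exploit the oddness of $\tanh$. Write $\tanh(x)=\operatorname{sign}(x)-r(x)$ with $r(x):=\operatorname{sign}(x)\,\tfrac{2}{e^{2|x|}+1}$, which is odd, bounded by $1$, and satisfies $|r(x)|\le 2e^{-2|x|}$, so $\int|u|^{k}|r(u)|\,du<\infty$ for every $k$. Expanding the product gives, for all $Q\ge 0$,
\[ g(Q)=T_0-T_1(Q)-T_2(Q)+T_3(Q),\qquad T_0=\EV[\operatorname{sign}(\zeta)\operatorname{sign}(\zeta')]=\tfrac 2\pi\arcsin c, \]
where $T_1(Q)=\EV[\operatorname{sign}(\zeta)\,r(Q^{-1/2}\zeta')]$, $T_2$ is $T_1$ with $\zeta\leftrightarrow\zeta'$, and $T_3(Q)=\EV[r(Q^{-1/2}\zeta)\,r(Q^{-1/2}\zeta')]$; the value of $T_0$ is the classical Grothendieck identity and is the known $\lim_{q\to\infty}\Wt\tanh(q,cq)$.

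The plan is to show $T_1,T_2,T_3$ are each $C^\infty$ on $[0,\infty)$ and vanish at $Q=0$; with the decomposition this gives both that $g$ is smooth on $(0,\infty)$ and that it extends smoothly (one-sidedly) to $Q=0$. For $T_1$, substitute $\zeta'=Q^{1/2}u'$ to obtain $T_1(Q)=Q^{1/2}\int r(u')\,\Pi(Q^{1/2}u')\,du'$, where $\Pi(v):=\int\operatorname{sign}(\zeta)\,p_c(\zeta,v)\,d\zeta$ and $p_c$ is the bivariate normal density. One checks $\Pi$ is smooth, odd, and rapidly decreasing, with $\Pi(0)=0$ --- indeed $\Pi(v)\propto e^{-v^2/2}\erf\!\big(cv/\sqrt{2(1-c^2)}\big)$. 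Since $r$ is odd, only the odd part of $\Pi(Q^{1/2}u')$ in $u'$ survives; writing $\Pi(v)=v\,\hat\Pi(v^2)$ with $\hat\Pi\in C^\infty([0,\infty))$ having bounded derivatives (an even $C^\infty$ function is a smooth function of its square, and rapid decrease of $\Pi$ forces all derivatives of $\hat\Pi$ to be bounded), we get
\[ T_1(Q)=Q\int r(u')\,u'\,\hat\Pi\!\big(Q(u')^2\big)\,du'. \]
For $Q$ in any bounded interval the $Q$-derivatives of this integrand are bounded by a polynomial in $u'$ times a constant, which is integrable against $|r(u')\,u'|$ because $r$ kills all polynomials; hence $T_1\in C^\infty([0,\infty))$ and $T_1(0)=0$, and likewise for $T_2$.

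For $T_3$, substitute in both variables: $T_3(Q)=Q\iint r(u)\,r(u')\,p_c\big(Q^{1/2}u,Q^{1/2}u'\big)\,du\,du'$. Since $r$ is odd in each slot, only the part of $p_c$ odd in each argument separately contributes, and that part factors as $\zeta\zeta'\,\check q\big(\zeta^2,(\zeta')^2\big)$ with $\check q$ smooth --- explicitly it is $\tfrac{1}{2\pi\sqrt{1-c^2}}\,e^{-(\zeta^2+(\zeta')^2)/(2(1-c^2))}\,\sinh\!\big(\tfrac{c\zeta\zeta'}{1-c^2}\big)$, and $\sinh$ is odd with $\sinh(t)/t$ analytic in $t^2$. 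Thus $T_3(Q)=Q^2\iint r(u)\,u\,r(u')\,u'\,\check q\big(Qu^2,Q(u')^2\big)\,du\,du'$, which by the same domination argument is $C^\infty$ in $Q$ on $[0,\infty)$ and vanishes to order $Q^2$.

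With $g\in C^\infty([0,\infty))$ established, \cref{lemma:smoothExtensionBoundary} applied to $g$ as a function of $Q=1/q$ yields, for every $i$,
\[ \Wt\tanh(q,cq)=\sum_{j=0}^{i-1}\frac{q^{-j}}{j!}\,\lim_{q'\to\infty}\frac{\partial^{j}\Wt\tanh(q',cq')}{\partial(q')^{-j}}+O(q^{-i}), \]
which is the asserted expansion. Uniformity of the $O(q^{-i})$-constant for $c$ bounded away from $1$ follows because every constant produced above ($T_0$, the fixed moments $\int u^k r(u)\,du$, and the derivatives at $0$ of $\Pi,\hat\Pi,\check q$) depends on $c$ only through $p_c$, whose exponent and normalization involve $1-c^2$; this stays bounded below away from $c=\pm 1$, so all constants --- and hence the Taylor remainder --- are uniform. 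The one genuinely delicate point is the parity bookkeeping above: it is the oddness of $\tanh$, inherited by $r$, that forces the expansion into integer powers of $1/q$ rather than $1/\sqrt q$ and simultaneously annihilates exactly the terms that would be singular at $Q=0$.
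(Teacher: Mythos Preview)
Your proposal is correct and takes a genuinely different route from the paper. The paper works from the polar-coordinate representation of \cref{lemma:WtPhiAntisymmetric}: writing $\Wt\tanh(q,cq)=\pi^{-1}\int_0^\pi d\theta\int_0^\infty ds\,e^{-s^2Q/2}\,\partial_s\Sigma(s,\theta;\tau)$, it differentiates in $Q$ to produce factors $(-s^2/2)^k$ and then shows, by splitting the $\theta$-interval and using the exponential decay of $\dot\tanh=\sech^2$, that the remaining $\theta$-integral is $\exp(-\Theta_+(s))$, whence each $Q$-derivative converges as $Q\downarrow 0$ by monotone convergence and \cref{lemma:smoothExtensionBoundary} applies. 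Your argument instead stays in Cartesian coordinates, decomposes $\tanh=\operatorname{sign}-r$ with $r$ exponentially small, and uses the oddness of $r$ together with the Whitney-type fact that a smooth odd function of $v$ is $v$ times a smooth function of $v^2$ to eliminate the half-integer powers of $Q$ that would otherwise appear. This makes the parity mechanism --- why the expansion is in integer powers of $1/q$ --- completely transparent, and it is self-contained (it does not invoke \cref{lemma:WtPhiAntisymmetric}). The paper's route, on the other hand, reuses machinery already developed for other purposes and avoids the somewhat delicate step of checking that the two-variable kernel $\check q(a,b)$ has bounded partial derivatives of all orders on $[0,\infty)^2$; you assert this ``by the same domination argument,'' and it is true --- the Gaussian factor $e^{-(a+b)/2(1-c^2)}$ beats the growth of $\sinh(c\sqrt{ab}/(1-c^2))$ since $\sqrt{ab}\le(a+b)/2$ leaves a net decay rate $1/(2(1+|c|))$ --- but filling this in carefully is where most of the remaining work in your write-up would sit. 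Both arguments handle the uniformity for $c$ bounded away from $1$ in the same way, by tracking the $(1-c^2)^{-1}$ factors in the bivariate density.
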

\begin{proof}
	\renewcommand{\qq}{q}
	\textbf{Smoothness on $(0, \infty)$.}
	By the third equation of \cref{lemma:WtPhiAntisymmetric}, for $Q \in (0, \infty) \iff \qq \in (0, \infty)$,
	\begin{align*}
	&\phantom{{}={}}\f 1 \pi \int_0^\infty s \dd s \left|\pdf{^n}{Q^n}\lp Q e^{-s^2 Q/2}\rp\right| \int_0^\pi \dd \theta |\phi(s \sin \theta)\phi(s \sin(\theta - \tau))|\\
	&\le \int_0^\infty s \dd s \left|\pdf{^n}{Q^n}\lp Q e^{-s^2 Q/2}\rp\right| < \infty,
	\end{align*}
	so by Leibniz's integral rule and a simple induction, all derivatives of $\Wt \phi( \qq, cq)$ against $Q$ exists for any $Q \in (0, \infty)$.
	
	\textbf{Extension to $Q = 0$.}
	By \cref{lemma:smoothExtensionBoundary}, it suffices to show that the limit of $
	\pdf{^k\Wt\phi( \qq, cq)}{Q^k}$ exists and is finite as $Q \to 0$, for all $k$.
	Let $\tau = \arccos c$.
	By the fourth equation of \cref{lemma:WtPhiAntisymmetric}, we have explicitly
	\begin{align*}
	\pdf{^k\Wt\phi( \qq, cq)}{Q^k}
	&= \f 1 \pi \int_0^\pi \dd \theta \int_0^\infty \dd s (\nicefrac{-s^2}{2})^k e^{-s^2 Q /2} \pdf{}{s} \Sigma(s, \theta; \tau)\\
	&= \f {(-2)^{-k}} \pi \int_0^\pi \dd \theta \int_0^\infty \dd s\ s^{2k} e^{-s^2 Q /2} \pdf{}{s} \Sigma(s, \theta; \tau)
	\end{align*}
	for any $Q \in (0, \infty)$.
	Note that for $\phi = \tanh$, $\dot \phi = \sech^2$,
	\begin{align*}
	\pdf{}{s} \Sigma(s, \theta; \tau)
		&= \sin \theta \dot \phi(s \sin \theta)\phi(s \sin (\theta - \tau)) + \sin(\theta-\tau)\phi(s \sin \theta)\dot \phi(s \sin (\theta - \tau)).\\
	\end{align*}
	We split the integral of $\pdf{^k \Wt \phi}{Q^k}$ as follows:
	\begin{align*}
	\pdf{^k\Wt\phi( \qq, cq)}{Q^k}
	&= \f {(-2)^{-k}} \pi \int_0^\pi \dd \theta \int_0^\infty \dd s\ s^{2k} e^{-s^2 Q /2} \sin \theta \dot \phi(s \sin \theta)\phi(s \sin (\theta - \tau))\\
	&\qquad + \f {(-2)^{-k}} \pi \int_0^\pi \dd \theta \int_0^\infty \dd s\ s^{2k} e^{-s^2 Q /2} \sin(\theta-\tau)\phi(s \sin \theta)\dot \phi(s \sin (\theta - \tau))
	\end{align*}
	We show that for each piece, the limit as $Q \to 0$ exists and is finite, for any $k$.
	This will prove the smooth extendability of $\Wt \phi$ to $Q = 0$.
	We will do this for the first piece; the second is similar.
	
	For $Q > 0$, the integrand is absolutely integrable, so we may switch the integrals.
	\begin{align*}
	&\phantom{{}={}} \int_0^\pi \dd \theta \int_0^\infty \dd s\ s^{2k} e^{-s^2 Q /2} \sin \theta \dot \phi(s \sin \theta)\phi(s \sin (\theta - \tau))\\
	&= \int_0^\infty \dd s s^{2k} e^{-s^2 Q/2}\int_0^\pi \dd \theta \sin \theta \dot \phi(s \sin\theta) \phi(s \sin (\theta - \tau))
	\end{align*}
	We now try to bound the inner integral by an exponentially decreasing term $e^{-s \mu}$ for some $\mu$; clearly, by monotone convergence on the outer integral as $Q \to 0$, this would show the limit of the integral exists and is finite.
	
	Because $\phi$ is odd and $\dot \phi$ is even, the inner integrand is negative on $\theta \in [0, \tau)$ and positive on $\theta \in (\tau, \pi]$.
	We will break up the inner integral as follows, for some fixed $\eps > 0$ satisfying $\tau - \eps > 0$ independent of $s$ (recall $\tau \in (0, \pi/2]$).
	\begin{align*}
	&\phantom{{}={}} \int_0^\pi \dd \theta \sin \theta \dot \phi(s \sin\theta) \phi(s \sin (\theta - \tau))\\
	&= \left(\int_0^\eps + \int_{\pi - \eps}^\pi \right) \dd \theta \sin \theta \dot \phi(s \sin\theta) \phi(s \sin (\theta - \tau)) + \int_\eps ^{\pi - \eps} \dd \theta \sin \theta \dot \phi(s \sin\theta) \phi(s \sin (\theta - \tau))
	\end{align*}
	Now because $\dot \phi(z) = \sech^2(z) \le 2 e^{-z}$, and $\sin \theta \ge \sin \eps$ on $\theta \in [\eps, \pi - \eps]$,
	\begin{align*}
	&\phantom{{}={}}\left|\int_\eps ^{\pi - \eps} \dd \theta \sin \theta \dot \phi(s \sin\theta) \phi(s \sin (\theta - \tau))\right|\\
	&\le 2\int_\eps ^{\pi - \eps} \dd \theta \exp(-s \sin\eps) \\
	&= 2(\pi - 2\eps) \exp(-s \sin\eps).
	\end{align*}
	
	For the other part:
	\begin{align*}
	&\phantom{{}={}}\int_{\pi - \eps}^\pi \dd \theta \sin \theta \dot \phi(s \sin\theta) \phi(s \sin (\theta - \tau))\\
		&= \int_\eps^{0} \sin(\pi - \theta) \dot \phi(s \sin \pi - \theta) \phi(s \sin (\pi - \theta - \tau)) \dd(\pi - \theta)\\
		&= \int_0^\eps \dd \theta \sin\theta \dot \phi(s \sin \theta) \phi(s \sin \theta + \tau)
	\end{align*}
	so that
	\begin{align*}
	&\phantom{{}={}}\left(\int_0^\eps + \int_{\pi - \eps}^\pi \right) \dd \theta \sin \theta \dot \phi(s \sin\theta) \phi(s \sin (\theta - \tau))\\
	&= \int_0^\eps \dd \theta \sin \theta \dot \phi(s \sin\theta) [\phi(s \sin (\tau + \theta)) - \phi(s \sin(\tau - \theta))]
	\end{align*}
	But by intermediate value theorem, $\phi(s \sin (\tau + \theta)) - \phi(s \sin(\tau - \theta)) = 2 \theta \pd \phi(s \sin(\tau + \theta))/\pd \theta |_{\theta = \psi} = 2 \theta \dot \phi(s \sin (\tau + \psi))s \cos(\tau + \psi)$ for some $\psi \in [- \theta, \theta]$.
	By the assumption on $\eps$, $\phi(s \sin (\tau + \theta)) - \phi(s \sin(\tau - \theta)) \le 2\eps \dot \phi(s \sin(\tau - \eps))s \cos(\tau - \eps).$
	Then
	\begin{align*}
	&\phantom{{}={}} \int_0^\eps \dd \theta \sin \theta \dot \phi(s \sin\theta) [\phi(s \sin (\tau + \theta)) - \phi(s \sin(\tau - \theta))]\\
	&\le \int_0^\eps \dd \theta \sin \theta \dot \phi(s \sin\theta) 2 \eps \dot \phi(s \sin(\tau - \eps))s \cos(\tau - \eps)\\
	&\le 2 \eps \dot \phi(s \sin(\tau - \eps))s \cos(\tau - \eps) O(1)
	\end{align*}
	Because $\tau -\eps > 0$ by assumption on $\eps$, and because $\dot \phi(z) = \exp(-\Theta_+(z))$, this quantity is $\exp(-\Theta_+(z))$, as desired (here $\Theta_+$ denotes a positive quantity).
	
	Thus
	\begin{align*}
	&\phantom{{}={}} \int_0^\pi \dd \theta \sin \theta \dot \phi(s \sin\theta) \phi(s \sin (\theta - \tau))\\
	&= \left(\int_0^\eps + \int_{\pi - \eps}^\pi \right) \dd \theta \sin \theta \dot \phi(s \sin\theta) \phi(s \sin (\theta - \tau)) + \int_\eps ^{\pi - \eps} \dd \theta \sin \theta \dot \phi(s \sin\theta) \phi(s \sin (\theta - \tau))\\
	&= \exp(-\Theta_+(s))
	\end{align*}
	and similarly for the other piece of $\pdf{^k \Wt \phi}{Q^k}$, so that
	\begin{align*}
	&\phantom{{}={}} \int_0^\infty \dd s s^{2k} e^{-s^2 Q/2}\int_0^\pi \dd \theta \sin \theta \dot \phi(s \sin\theta) \phi(s \sin (\theta - \tau))\\
	&= \int_0^\infty \dd s s^{2k} e^{-s^2\f Q 2 - \Theta_+(z)}\\
	&\to \int_0^\infty \dd s s^{2k} e^{-\Theta_+(z)}
	\end{align*}
	is finite as $Q \to 0$, by monotone convergence.
	
	\textbf{Independence of constant hidden in $O((\qq')^{-i})$.}
	The constant hidden is a function of the $\eps$ chosen above, which depend on $\tau$, but only to the extent that it must satisfy $\tau - \eps > 0$.
	As long as we are interested in a set $\mathcal C$ of $c$ that is bounded away from 1, the corresponding set of $\tau$ is bounded away from $0$, so $\eps$ can be taken to be some number smaller than all of the corresponding $\tau$.

\end{proof}

\begin{lemma} \label{lemma:Wt_le_arcsin}
	
	\renewcommand{\qq}{q}
	Suppose $\phi$ is tanh-like.
	Then for $c \in [0, 1]$,
	$$\Wt \phi( \qq, cq) \le \f 2 \pi \arcsin(c),$$
	and weakly increases to this upper bound as $\qq \to \infty$.
	Furthermore, 
	\begin{itemize}
		\item If $c = 0$ or 1, then equality holds regardless of $\qq$.
		\item If $c \in (0, 1)$ is held constant, $\f 2 \pi \arcsin(c) - \Wt \phi( \qq, cq) = \Theta(\inv \qq)$, where the hidden constants in $\Theta$ depend on $c$.
		But the constants can be made independent of $c$ if $c \in [\eps, 1-\eps]$ for some $\eps > 0$.
	\end{itemize}
\end{lemma}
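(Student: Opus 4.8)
The plan is to reduce everything to a one-dimensional monotonicity statement. Let $(u,u')$ be a standard bivariate Gaussian with correlation $c\in[0,1]$, and set $g(t):=\mathbb{E}[\phi(tu)\phi(tu')]$, so that $\Wt\phi(q,cq)=g(\sqrt q)$. I would first establish two facts: (i) $g$ is nondecreasing on $[0,\infty)$; (ii) $g(t)\to\tfrac2\pi\arcsin c$ as $t\to\infty$. Together these yield $\Wt\phi(q,cq)\le\tfrac2\pi\arcsin c$ for every $q$, together with the ``weakly increases to this bound'' assertion. For (ii): since $\phi$ increases to $1$, $\phi(tu)\to\operatorname{sign}(u)$ pointwise for $u\neq0$, and $|\phi|\le1$, so dominated convergence gives $g(t)\to\mathbb{E}[\operatorname{sign}(u)\operatorname{sign}(u')]$, which equals $\tfrac2\pi\arcsin c$ by Sheppard's arcsine identity ($\Pr[\operatorname{sign}(u)=\operatorname{sign}(u')]=\tfrac12+\tfrac1\pi\arcsin c$).

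For (i) I would avoid differentiating $\phi$ (tanh-like functions need not be smooth), and instead, for $0<t_1<t_2$, write
\[
g(t_2)-g(t_1)=\mathbb{E}\bigl[\phi(t_2u)\{\phi(t_2u')-\phi(t_1u')\}\bigr]+\mathbb{E}\bigl[\{\phi(t_2u)-\phi(t_1u)\}\phi(t_1u')\bigr].
\]
Conditioning the first term on $u'$: since $\phi$ is odd and nondecreasing and $c\ge0$, the conditional mean $\mathbb{E}[\phi(t_2u)\mid u']$ is an odd nondecreasing function of $u'$, hence has the sign of $u'$; and $\phi(t_2u')-\phi(t_1u')=\operatorname{sign}(u')\bigl(\phi(t_2|u'|)-\phi(t_1|u'|)\bigr)$ also has the sign of $u'$. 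So the integrand is $\ge0$; the second term is symmetric (condition on $u$). Hence $g(t_2)\ge g(t_1)$, strictly once $c\in(0,1)$, giving $\Wt\phi(q,cq)<\tfrac2\pi\arcsin c$ there. The boundary cases follow by direct computation: $\Wt\phi(q,0)=\mathbb{E}[\phi(\sqrt qu)]^2=0$ by oddness, and $\Wt\phi(q,q)=\Vt\phi(q)$, matching the claimed (in)equalities.

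For the rate $\tfrac2\pi\arcsin c-\Wt\phi(q,cq)=\Theta(1/q)$ (fixed $c\in(0,1)$, uniformly for $c\in[\eps,1-\eps]$) I would take $\phi=\tanh$ and expand $\tfrac2\pi\arcsin c-\Wt\tanh(q,cq)=\mathbb{E}[\operatorname{sign}(u)\operatorname{sign}(u')-\tanh(\sqrt qu)\tanh(\sqrt qu')]$ using $\operatorname{sign}(u)\operatorname{sign}(u')-\tanh(\sqrt qu)\tanh(\sqrt qu')=\operatorname{sign}(u)e_q(u')+e_q(u)\tanh(\sqrt qu')$, where $e_q(x):=\operatorname{sign}(x)-\tanh(\sqrt qx)=\operatorname{sign}(x)(1-\tanh(\sqrt q|x|))$ and $1-\tanh(y)\in[e^{-2y},2e^{-2y}]$. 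Conditioning on $u'$ (resp.\ $u$) as in step (i) shows both resulting expectations are nonnegative. The first equals $\mathbb{E}\bigl[\,|2\Phi(\tfrac{c|u'|}{\sqrt{1-c^2}})-1|\,(1-\tanh(\sqrt q|u'|))\bigr]$ ($\Phi$ the standard normal CDF), squeezed between constant multiples of $\mathbb{E}[|u'|e^{-2\sqrt q|u'|}]=\Theta(1/q)$ using $2\Phi(y)-1=\Theta(y)$ on bounded $y$-ranges; the second is $O(1/q)$ because $u\mapsto\mathbb{E}[\tanh(\sqrt qu')\mid u]=\mathbb{E}_\xi[\tanh(\sqrt qcu+\sqrt q\sqrt{1-c^2}\xi)]$ (with $\xi\sim\mathcal N(0,1)$) is uniformly Lipschitz — its derivative is $\sqrt qc\,\mathbb{E}_\xi[\sech^2(\sqrt qcu+\sqrt q\sqrt{1-c^2}\xi)]\le\sqrt qc\,\Vt\dot\tanh(q(1-c^2))=O(1)$ by \cref{lemma:Vt_dot_tanh_asymptotics} and unimodality of the Gaussian-smoothed $\sech^2$ — so it vanishes at $0$ and is $\le L_\eps|u|$, giving $O(\mathbb{E}[|u|e^{-2\sqrt q|u|}])=O(1/q)$. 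Summing gives $\Theta(1/q)$. (Alternatively one could invoke \cref{lemma:WtExtension}, which already supplies the $1/q$-expansion with $c$-uniform error, and then argue the coefficient $-\partial_{Q}\Wt\phi|_{Q=0}$ is strictly positive and controlled on $[\eps,1-\eps]$.)

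The hard part will be the matching lower bound in the $\Theta(1/q)$ estimate together with making every constant uniform over $c\in[\eps,1-\eps]$: this needs the local linear lower bound $2\Phi(\tfrac{c|x|}{\sqrt{1-c^2}})-1\ge c_\eps|x|$ near $x=0$ and the uniform Lipschitz bound on the conditional mean, both of which rely precisely on $c$ staying bounded away from $1$ (so $\sqrt{1-c^2}$ is bounded below) and away from $0$ (so the linear coefficient does not degenerate) — exactly the regime the lemma restricts to.
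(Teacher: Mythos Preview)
Your approach is correct and takes a genuinely different route from the paper. The paper leans on the polar--coordinate representation of $\Wt\phi$ (\cref{lemma:WtPhiAntisymmetric}): monotonicity in $q$ is read off from the symmetrized integral formula there, the limit $\tfrac2\pi\arcsin c$ comes by dominated convergence in the second formula, the $O(1/q)$ upper bound is supplied by the smooth--extension machinery of \cref{lemma:WtExtension}, and the matching lower bound by explicitly checking $\partial_Q\Wt\phi<0$ via the symmetrized formula. You bypass all of that structure: monotonicity by a two--term telescope plus a conditioning/sign argument, the limit via Sheppard's identity, and the rate via the decomposition $\operatorname{sign}\cdot\operatorname{sign}-\tanh\cdot\tanh=\operatorname{sign}\cdot e_q+e_q\cdot\tanh$ together with the tail bound $1-\tanh(y)\asymp e^{-2y}$. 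Your argument is more elementary and self--contained; the paper's, on the other hand, actually delivers more --- \cref{lemma:WtExtension} gives a full asymptotic expansion in $1/q$, not just the leading $\Theta(1/q)$, and the uniformity in $c$ on $[\eps,1-\eps]$ falls out of that expansion plus a monotonicity--in--$c$ observation for the derivative.

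One small slip to fix: in bounding the Lipschitz constant of $u\mapsto\mathbb E_\xi[\tanh(\sqrt q c u+\sqrt{q(1-c^2)}\xi)]$ you write $\mathbb E_\xi[\sech^2(\cdot)]\le \Vt\dot\tanh(q(1-c^2))$. But $\Vt\dot\tanh(r)=\mathbb E[\sech^4(Z)]$ with $Z\sim\mathcal N(0,r)$, and since $\sech^4\le\sech^2$ the inequality points the wrong way. What you actually need is $\mathbb E[\sech^2(Z)]=\Theta(r^{-1/2})$ for $Z\sim\mathcal N(0,r)$, which follows from $\sech^2(z)\le 4e^{-2|z|}$ (or the same argument as \cref{lemma:Vt_sech_asymptotics} with $d=1$); combined with your unimodality observation this still gives $h'(u)\le \sqrt q\,c\cdot O\bigl((q(1-c^2))^{-1/2}\bigr)=O(c/\sqrt{1-c^2})$, bounded uniformly for $c\in[\eps,1-\eps]$. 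With that correction the rest of your rate argument goes through as written.
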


\begin{proof}[]
	\renewcommand{\qq}{q}
	The cases of $c = 0$ or 1 are obvious by the definition of $\Wt$.
	So from here on we assume $c \in (0, 1)$.
	
	Let $\tau := \arccos c$.
	By the first equation of \cref{lemma:WtPhiAntisymmetric} and the assumption that $\phi$ is tanh-like, it is immediate that $\Wt \phi(\qq ,cq)$ is nondecreasing in $\qq$.
	By dominated convergence, using the second equation of \cref{lemma:WtPhiAntisymmetric}, we get
	\begin{align*}
	\lim_{\qq \to \infty} \Wt \phi(\qq ,cq) &= \f 1 \pi \int_{0}^\infty r\dd r e^{-r^2/2} (\pi - 2 \tau)\\
		&= \f {\pi - 2 \tau}{\pi}\\
		&= \f 2 \pi \arcsin c.
	\end{align*}
	
	Then the convergence rate is $O(\inv \qq)$ by \cref{lemma:WtExtension} and Taylor's theorem.
	Thus to show the convergence rate is $\Theta(\inv \qq)$, it suffices to show that $\mathbf D := \pdf{\Wt \phi( \qq, cq)}{Q} < 0$.
	But this is apparent from the first equation of \cref{lemma:WtPhiAntisymmetric}:
	For $\tau \in (0, \pi/2)$,
	\begin{align*}
	\mathbf D &= \f 1 {\pi \sin \tau} \int_{w' \ge |w|} \dd w \dd w' \Upsilon(w, w';\tau)(-\f 1 {2\sqrt 2} Q^{-3/2})\\
		&\qquad\qquad\qquad\quad \times[\dot\phi(\sqrt{\qq/2}(w + w'))\phi(\sqrt{\qq/2}(w'-w))\\
		&\qquad\qquad\qquad\qquad + \phi(\sqrt{\qq/2}(w + w'))\dot\phi(\sqrt{\qq/2}(w'-w))]\\
		& < 0
	\end{align*}
	since $\Upsilon$ is positive on the integration domain, and $\dot \phi$ and $\phi$ are both positive for positive arguments, by the assumption of $\phi$ being tanh-like.
	
	\textbf{Independence of the constants in $\Theta(\inv \qq)$ from $c$ when $c \in [\eps, 1-\eps]$.}
	By \cref{lemma:WtExtension}, the upper constant can be made independent from $c$.
	Since $\mathbf D$ is monotonically decreasing in $c$ (or monotonically increasing in $\tau$) and $|\mathbf D|$ is monotonically increasing in $c$ (or monotonically decreasing in $\tau$), we have $|\mathbf D| > |\mathbf D|\bigg]_{c = \eps}$, which can be taken to be the lower constant in $\Theta(\inv \qq)$.
\end{proof}
\cref{fig:verifywtanhasymptotics} verifies empirically that the subleading term in $\Wt\tanh( q, cq)$ is linear in $\inv q$, for constant $c$.

\begin{figure}
	\centering
	\includegraphics[height=.15\textheight]{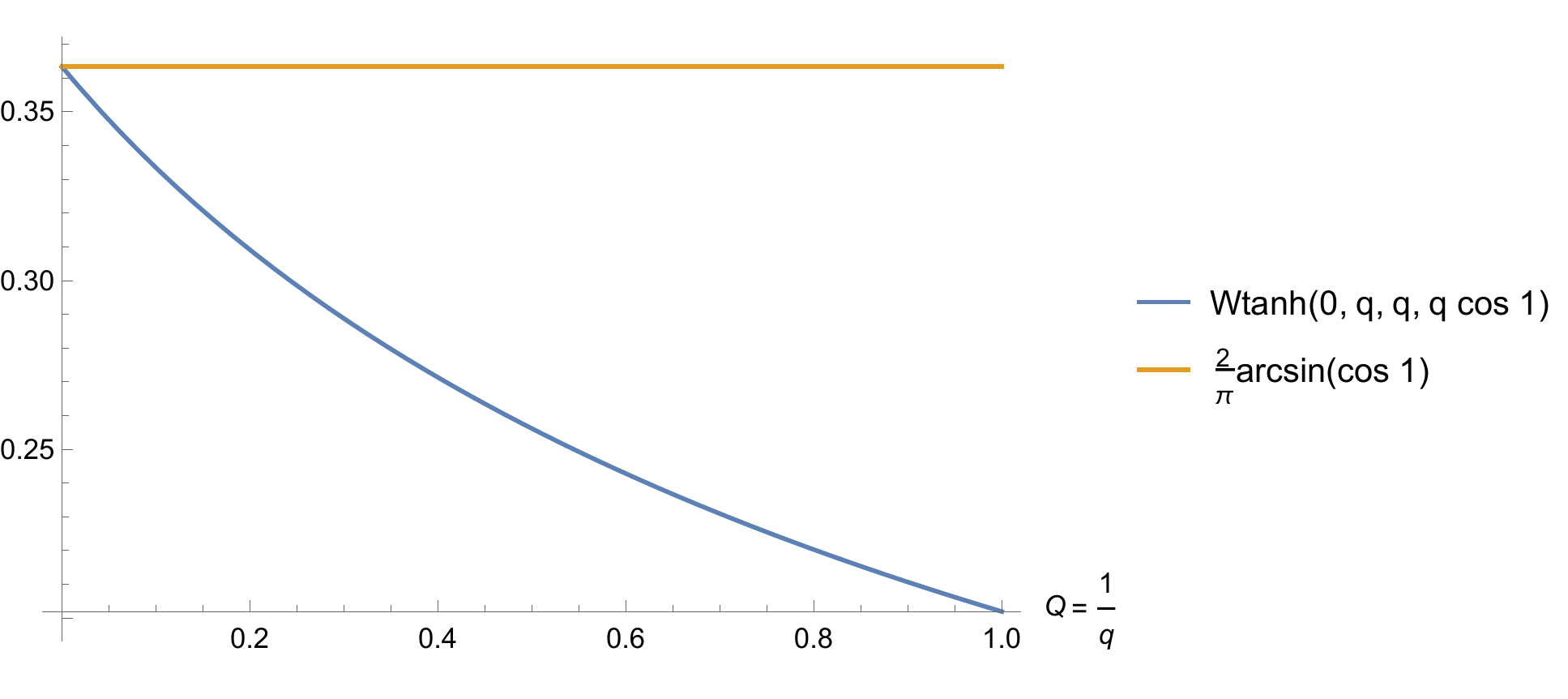}
	\caption{We verify empirically that the subleading term in $\Wt\tanh( q, cq)$ is linear in $\inv q$, for constant $c$.
	Indeed, observe that the curve of of $\Wt \tanh$ intersects the y-axis at an angle.}
	\label{fig:verifywtanhasymptotics}
\end{figure}

\edynamics*
\begin{proof}

We have by \cref{lemma:Wt_le_arcsin},
\begin{align*}
\gamma &= \f 2 \pi \arcsin(\lambda /\qq) - \Theta(\qq^{-1}) + \prv \gamma.
\end{align*}
Since $\qq = \sigma_w^2 \prv \pp + \sigma_b^2$ by \cref{thm:p_q_linear}, and $\lambda = \sigma_w^2 \prv \gamma + \sigma_b^2$ by \cref{thm:lambda_gamma_recurrence},
\begin{align*}
\gamma	&= \f 2 \pi \arcsin\lp\f{\sigma_w^2 \prv \gamma + \sigma_b^2}{\sigma_w^2 \prv \pp + \sigma_b^2}\rp - \Theta(\qq^{-1}) + \prv \gamma.
\end{align*}

We claim that $\p \gamma l \to \infty$ as $l \to \infty$.
Otherwise, there is some $C$ such that $\p \gamma l \le C$ for all $l$.
For large enough $l$, $\p \pp l \ge \omega l$ for any $\omega < 1$ and $\arcsin \lp \f{C}{\sigma_w^2 \p \pp {l-1} + \sigma_b^2} \rp = \Theta(1/l)$ by linearization of $\arcsin$.
Thus $\p \gamma l = \Theta(\log l)$, but this contradicts our assumption that $\gamma$ is bounded.
This proves our claim.

Therefore, for large enough $l$,
$$\f{\sigma_w^2 \prv \gamma + \sigma_b^2}{\sigma_w^2 \prv \pp + \sigma_b^2} = \prv \gamma/\prv \pp + \Theta(l^{-1}).$$


\cref{fig:arcsin-vs-x} shows $\f2\pi\arcsin x$ vs $x$.
One sees that 1 is an unstable fixed point; if $\ee < 1 - \epsilon$, then $\f 2 \pi \arcsin \ee < 1 - \epsilon - \delta$ for some $\delta$.
Thus $\cc$ drops monotonically until some threshold under which the linearization of $\arcsin$, $\arcsin x = x + \Theta(x^3)$, is applicable.
So for large enough $l$,
\begin{align*}
\gamma - \prv \gamma &= \f 2 \pi \arcsin (\prv \gamma / \prv \pp + \Theta(\inv l)) - \Theta(\inv l)\\
	&= \f 2 \pi  \prv \gamma / \prv \pp + O(\inv l)
\end{align*}
As $\p \pp l \sim l$ by \cref{thm:p_q_linear}, this difference equation has solution $\gamma = \Omega(l^{\f 2 \pi -\eps}), O(l^{\f 2 \pi + \eps})$ for any $\eps$ by using the dynamics of \cref{lemma:alphaDeltaDynamics} to upper and lower bound this difference equation.
\end{proof}

\subsubsection{Backward Dynamics}
\dalethExpSqrtTanh*
\begin{proof}
	The $\sigma_w = 0$ case is obvious.
	We will assume $\sigma_w > 0$ from here on.
	
	Let $\p \pp l = b_0 l + b_1 l^{1/2} + b_2 \log l + O(1)$.
	Then for $D = \f 2 3 \sqrt{\f 2 \pi}$, we have (implicitly applying \cref{lemma:Vt_dot_tanh_asymptotics} and \cref{lemma:power_sum_asymptotics}),
	\begin{align*}
	\qq^{-1/2} &= \sigma_w^{-1} b_0^{-1/2} l^{-1/2}(1 - b_1 b_0^{-1} \inv 2 l^{-1/2} - b_2 b_0^{-1} 2^{-1} l^{-1} \log l + O(l^{-1}))\\
	\Vt \dot \phi( \qq) & = D \qq^{-1/2} + \Theta(\qq^{-3/2})\\
		&= D\sigma_w^{-1} b_0^{-1/2} l^{-1/2}(1 - b_1 b_0^{-1} \inv 2 l^{-1/2} - b_2 b_0^{-1} 2^{-1} l^{-1} \log l + O(l^{-1}))\\
	\log(B \Vt \dot \phi( \qq) + 1) &= B D \sigma_w^{-1} b_0^{-1/2} l^{-1/2}\\
							&\phantom{={}}- (BD \sigma_w^{-1} b_0^{-3/2} b_1 2^{-1} +B^2 D^2 \sigma_w^{-2} b_0^{-1}2^{-1})l^{-1} + \Theta(l^{-3/2}\log l)\\
	\sum_{r=1}^l \log(B \Vt \dot \phi( \p \qq r) + 1) &= 2B D \sigma_w^{-1} b_0^{-1/2} l^{1/2}
\\
		&\phantom{={}} -(BD \sigma_w^{-1} b_0^{-3/2} b_1 2^{-1} +B^2 D^2 \sigma_w^{-2} b_0^{-1}2^{-1})\log l + O(1)\\
	\end{align*}
	In our case, we have $b_0 = 1, b_1 = -2C \sigma_w^{-1}, b_2 = C^2 \sigma_w^{-2},  B = \sigma_w^2, C = \sqrt{\f 2 \pi}$, which gives
	$$\sum_{r=1}^l \log(B \Vt \dot \phi( \p \qq r) + 1) = \f 4 3 \sqrt{\f 2 \pi} \sigma_w l^{1/2} + (\f 4 {3\pi} - \sigma_w^2 \f 4 {9\pi}) \log l + O(1).$$
	so that
	\begin{align*}
	\p \daleth {m}/\p \daleth l &= \exp\left[\f 4 3 \sqrt{\f 2 \pi} \sigma_w (\sqrt l - \sqrt m) + (\f 4 {3\pi} - \sigma_w^2 \f 4 {9\pi}) (\log l - \log m) + O(1)\right]\\
	\end{align*}
\end{proof}

\dalethExpSqrtTanhAllGrad*
\begin{proof}
	The $\sigma_w = 0$ case is obvious.
	We will assume $\sigma_w > 0$ from here on.
	
	As in the proof of \cref{thm:dalethExpSqrtTanh},
	$$\Vt \dot \phi( \qq) = D\sigma_w^{-1} b_0^{-1/2} l^{-1/2} + \Theta(\inv l)$$
	where $D = \f 2 3 \sqrt{\f 2 \pi}$.
	Thus by \cref{thm:dalethRecReduced},
	$$	\log(\p \daleth {m}/\p \daleth l) = \f 4 3 \sqrt{\f 2 \pi} \sigma_w (\sqrt l - \sqrt m) + (\f 4 {3\pi} - \sigma_w^2 \f 4 {9\pi}) (\log l - \log m) + O(1)$$
	$$\log(\p \chi m_b/\p \chi l _b) =\f 4 3 \sqrt{\f 2 \pi} \sigma_w (\sqrt l - \sqrt m) + (\f 4 {3\pi} - \f 1 2 - \sigma_w^2 \f 4 {9\pi}) (\log l - \log m) + O(1)$$
	Similarly, since $\pp = l + \Theta(\sqrt l)$ by \cref{thm:p_q_linear}, we have
	$$\log(\p \chi m_w/\p \chi l _w) = \f 4 3 \sqrt{\f 2 \pi} \sigma_w (\sqrt l - \sqrt m) + (\f 4 {3\pi} + \f 1 2 - \sigma_w^2 \f 4 {9\pi}) (\log l - \log m) + O(1)$$
\end{proof}

\subsection{Tanh: Full Residual Network}
\subsubsection{Forward Dynamics}

\pIsLinearTanh*
\begin{proof}
	The $\sigma_w = 0$ case is obvious.
	We will assume $\sigma_w > 0$ from here on.
	
	As in \cref{thm:p_q_linear}, $\pp$ will have expansion $\pp = b_0 l + b_1 l^{1/2} + b_2 \log l + O(1).$
	Then, for $C = \sqrt{\f 2 \pi}$,
	\begin{align*}
	\qq^{-1/2} &= \sigma_w^{-1} b_0^{-1/2} l^{-1/2}(1 - b_1 b_0^{-1} \inv 2 l^{-1/2} - b_2 b_0^{-1} 2^{-1} l^{-1} \log l + O(l^{-1}))\\
	\sum_{r=1}^l \Vt \phi( \p \qq r) &= \sum_{r=1}^l 1 - C(\p \qq r)^{-1/2} + \Theta((\p \qq r)^{-3/2})\\
		&= l - 2 C \sigma_w^{-1} b_0^{-1/2} l^{1/2} + C \sigma_w^{-1} b_1 b_0^{-3/2} 2^{-1} \log l + O(1)\\
	\p \pp l &= \sigma_v^2 \sum_{r=1}^l + \sigma_a^2 l\\
		&= (\sigma_v^2 + \sigma_a^2) l - 2C \sigma_v^2 \sigma_w^{-1} b_0^{-1/2} l^{1/2} + C \sigma_v^2 \sigma_w^{-1} b_1 b_0^{-3/2} 2^{-1} \log l + O(1)
	\end{align*}
	which yields
	\begin{align*}
	b_0 &= \sigma_v^2 + \sigma_a^2\\
	b_1 &= -2C \sigma_v^2 \sigma_w^{-1} b_0^{-1/2} = \f{-2C \sigma_v^2 \sigma_w^{-1}}{\sqrt{\sigma_v^2 + \sigma_a^2}}\\
	b_2 &= \f{-C^2 \sigma_v^4 \sigma_w^{-2}}{(\sigma_v^2 + \sigma_a^2)^2}
	\end{align*}
\end{proof}

\begin{lemma}
Suppose $\phi$ is tanh-like.
Then
$$ \gamma \le \sigma_v^2 \f 2 \pi \arcsin\lp { \lambda}/{\qq}\rp + \sigma_a^2 + \prv \gamma,$$
and
$$\sigma_v^2 \f 2 \pi \arcsin\lp { \lambda}/{\qq}\rp + \sigma_a^2 + \prv \gamma - \gamma = \Theta(\qq^{-1}).$$
\end{lemma}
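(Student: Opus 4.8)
The plan is to obtain this as a repackaging of two facts already in hand: the exact FRN correlation recurrence of \cref{thm:full_res_l_g_recurr}, and the one-sided asymptotic estimate for $\Wt\phi$ of \cref{lemma:Wt_le_arcsin}. First, by \cref{thm:full_res_l_g_recurr}, in an FRN the correlation quantities satisfy the exact identity $\gamma = \sigma_v^2 \Wt\phi(\qq,\lambda) + \sigma_a^2 + \prv\gamma$. Because we have assumed $\p \pp 0 = \p \pp 0 {}'$, the discussion following \cref{defn:corr} gives $\qq = \qq'$ at every layer, so writing $c := \lambda/\qq$ we have $\lambda = c\qq$ with $c \in [0,1]$ (the normalized dot product is at most the normalized squared length, and is nonnegative in the regime of interest), and therefore $\Wt\phi(\qq,\lambda) = \Wt\phi(\qq, c\qq)$. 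Rearranging the recurrence, the quantity we must control is
\begin{align*}
\sigma_v^2 \tfrac{2}{\pi}\arcsin(\lambda/\qq) + \sigma_a^2 + \prv\gamma - \gamma = \sigma_v^2\Bigl(\tfrac{2}{\pi}\arcsin c - \Wt\phi(\qq, c\qq)\Bigr).
\end{align*}

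Now I would apply \cref{lemma:Wt_le_arcsin} directly. Since $\phi$ is tanh-like, that lemma gives $\Wt\phi(\qq, c\qq) \le \tfrac{2}{\pi}\arcsin c$ for every $\qq$, so the right-hand side above is nonnegative; as $\sigma_v^2 > 0$ this yields the first assertion $\gamma \le \sigma_v^2\tfrac{2}{\pi}\arcsin(\lambda/\qq) + \sigma_a^2 + \prv\gamma$. For the second assertion, \cref{lemma:Wt_le_arcsin} also gives $\tfrac{2}{\pi}\arcsin c - \Wt\phi(\qq, c\qq) = \Theta(\qq^{-1})$ with $c$ held fixed in $(0,1)$; multiplying by the positive constant $\sigma_v^2$ preserves the $\Theta$, so $\sigma_v^2\tfrac{2}{\pi}\arcsin(\lambda/\qq) + \sigma_a^2 + \prv\gamma - \gamma = \Theta(\qq^{-1})$.

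The only delicate point is the uniformity of the implied constants: by \cref{lemma:Wt_le_arcsin} they depend on $c$ and in fact degenerate at $c = 0$ and $c = 1$ (where the gap is exactly $0$). This is harmless in the intended use, where $c = \p \lambda l / \p \qq l$ converges to the fixed point $\ee^*$ and eventually lies in a compact subinterval $[\eps, 1-\eps] \subset (0,1)$, so the clause of \cref{lemma:Wt_le_arcsin} permitting the constants to be taken independent of $c$ on such an interval applies and the estimate is uniform in $l$. I do not anticipate any real obstacle: all of the analytic work --- the symmetrization and polar-coordinate evaluation of $\Wt\phi$, and the monotonicity argument showing the derivative in $\qq^{-1}$ is strictly negative --- has already been carried out in \cref{lemma:WtPhiAntisymmetric,lemma:WtExtension,lemma:Wt_le_arcsin}; this lemma merely transports those conclusions across the scalar factor $\sigma_v^2$, the additive shifts $\sigma_a^2$ and $\prv\gamma$, and the substitution $\lambda = c\qq$.
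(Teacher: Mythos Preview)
Your proposal is correct and matches the paper's approach: the paper's proof is the one-liner ``Similar to the proof of \cref{lemma:Wt_le_arcsin},'' and you have simply made that explicit by combining the FRN recurrence $\gamma = \sigma_v^2 \Wt\phi(\qq,\lambda) + \sigma_a^2 + \prv\gamma$ with the conclusion of \cref{lemma:Wt_le_arcsin}. Your remark about uniformity of the $\Theta$-constants on a compact subinterval of $(0,1)$ is also the right caveat for the downstream use.
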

\begin{proof}
Similar to the proof of \cref{lemma:Wt_le_arcsin}.
\end{proof}

\begin{lemma}\label{lemma:timeDependentConvergence}
	Let $u^* \in [0, 1)$.
	Let $f_t: [0, 1) \to [0, 1]$ be a continuous function for each $t \in \N$, to each of which we associate two numbers $0 \le a_t \le u^* \le b_t \le 1$.
	Suppose for each $t$, $f_t(u) > u$ for all $u \in [0, a_t)$ and $f_t(u) < u$ for all $u \in (b_t, 1)$.
	Assume that for each $u$, $f_t(u) - u \to 0$ as $t \to \infty$ uniformly over $u$.
	If $a_t \nearrow u^*$ and $b_t \searrow u^*$, then for any $u_0 \in [0, 1)$, the dynamics $u_t = f_t(u_{t-1})$ has a limiting point.
	Furthermore, either $u_t \to u^*$ or $u_t$ eventually converges monotonically (decreasing or increasing) to a limit point.
\end{lemma}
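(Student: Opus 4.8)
The plan is to distill from the hypotheses a single \emph{forward-invariance} statement — for each $\eps$ there is a time $T_\eps$ beyond which the interval $[u^*-\eps,\,u^*+\eps]$ is invariant under the nonautonomous iteration — and then split into two cases according to whether the orbit ever enters such intervals.

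First I would fix $\eps\in(0,1-u^*)$. Using that $f_t(u)-u\to 0$ uniformly in $u$, together with $a_t\nearrow u^*$ and $b_t\searrow u^*$, I choose $T_\eps$ so that for all $t\ge T_\eps$ one has $\sup_u|f_t(u)-u|<\eps/2$, $a_t>u^*-\eps/2$, and $b_t<u^*+\eps/2<1$. I then verify: if $t\ge T_\eps$ and $u_{t-1}\in[u^*-\eps,\,u^*+\eps]$, then $u_t=f_t(u_{t-1})\in[u^*-\eps,\,u^*+\eps]$. This is a short case analysis. For the upper bound: if $u_{t-1}\le b_t$ then $u_t<u_{t-1}+\eps/2\le b_t+\eps/2<u^*+\eps$; if $u_{t-1}>b_t$ then $u_{t-1}\in(b_t,1)$, so by hypothesis $u_t=f_t(u_{t-1})<u_{t-1}\le u^*+\eps$. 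The lower bound is symmetric, using $a_t$ and the hypothesis $f_t(u)>u$ on $[0,a_t)$.

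Next comes the dichotomy. Either (P) for every $\eps>0$ the orbit satisfies $u_{t}\in[u^*-\eps,\,u^*+\eps]$ for some $t\ge T_\eps$; or ($\neg$P) there is $\eps_0>0$ with $u_t\notin[u^*-\eps_0,\,u^*+\eps_0]$ for all $t\ge T_{\eps_0}$. In case (P), forward invariance keeps the orbit inside $[u^*-\eps,\,u^*+\eps]$ from that first visit onward, so $\limsup_t|u_t-u^*|\le\eps$; since $\eps$ was arbitrary, $u_t\to u^*$. In case ($\neg$P), enlarge the threshold so that additionally $\sup_u|f_t(u)-u|<2\eps_0$ for $t$ past it; then the orbit cannot cross the gap $[u^*-\eps_0,\,u^*+\eps_0]$ of width $2\eps_0$ in one step, so it is eventually confined to exactly one of $[0,\,u^*-\eps_0)$ or $(u^*+\eps_0,\,1)$. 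In the first, once also $a_t>u^*-\eps_0$ every step has $u_{t-1}\in[0,a_t)$, hence $u_t=f_t(u_{t-1})>u_{t-1}$: the orbit is eventually strictly increasing and bounded above by $u^*-\eps_0$, hence converges. In the second, once $b_t<u^*+\eps_0$ every step has $u_{t-1}\in(b_t,1)$, hence $u_t<u_{t-1}$: the orbit is eventually strictly decreasing and bounded below by $u^*+\eps_0$, hence converges. In every case the orbit converges, and either the limit is $u^*$ or the convergence is eventually monotone, which is the claim.

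The argument is elementary once forward invariance is isolated, so I do not expect a genuine obstacle; the only thing requiring care is the bookkeeping of quantifiers — fix $\eps$, then produce $T_\eps$, then note the invariance holds uniformly for \emph{all} later $t$ — and the observation that in case ($\neg$P) strict monotonicity together with the bounds $u^*\pm\eps_0$ automatically keeps the orbit inside $[0,1)$, so the recursion remains well defined throughout.
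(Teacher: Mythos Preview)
Your proposal is correct and rests on the same ingredients the paper uses: the uniform smallness of $f_t(u)-u$ together with $a_t\nearrow u^*$ and $b_t\searrow u^*$ to trap the orbit. The organization differs slightly. You isolate a clean \emph{forward-invariance} statement for the intervals $[u^*-\eps,u^*+\eps]$ and then run a dichotomy (enters every such interval $\Rightarrow$ $u_t\to u^*$; avoids one $\Rightarrow$ eventually one-sided and monotone). The paper instead assumes $u_t\not\to u^*$, takes an infinite sequence of ``excursions'' below $u^*-\eps$, and derives a contradiction for the case where those excursions are not eventually permanent; this contradiction argument is precisely the contrapositive of your forward-invariance step. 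Both proofs share the same implicit caveat that the orbit is assumed to remain in $[0,1)$ so that the recursion stays well defined; you flag this, and the paper leaves it tacit. One small redundancy: you ``enlarge the threshold so that additionally $\sup_u|f_t(u)-u|<2\eps_0$,'' but your choice of $T_{\eps_0}$ already gives the stronger bound $\eps_0/2$, so no enlargement is needed.
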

\begin{proof}
	Fix a $u_0 \in [0, 1)$.
	If $u_t \to u^*$ then we are done.
	Otherwise, suppose there is a neighborhood $[u^* - \eps, u^* + \eps]$ such that for an infinite sequence $t_1, t_2, \ldots$, $u_{t_i} \not \in [u^* - \eps, u^* + \eps]$.
	WLOG assume $u_{t_i} < u^* - \eps$ for all $i$ and $(t_i)_i$ is the sequence of all $t$s that satisfy this inequality.
	
	If $(t_i)_i$ contains $\{s: s \ge N\}$ for some $N$, then for some $M > N$, for every $t > M$, $a_t > u^* - \eps > u_t$.
	By assumption, $u_t$ is monotonic for all $t > M$ but is bounded above.
	Thus $u_t$ has a fixed point $\hat u \le u^* - \eps$ as desired.
	
	Now assume there are infinite $i$s such that $t_i - 1 \not= t_{i-1}$ (i.e. $t_i - 1$ is not part of the sequence $(t_i)_i$).
	We will show that this case is contradictory.
	Take $T$ large enough such that $a_t > u^* - \eps/2$ and $|f_t(u) - u| < \eps/4$ for all $u$ and for all $t \ge T$ ($T$ exists by premise).
	Let $j$ be the smallest index such that $t_j > T$ and $t_j - 1 \not = t_{j-1}$.
	By the definition of $j$, $u_{t_j - 1} \ge u^* - \eps$.
	If $u_{t_j - 1} \ge u^* - \eps/2$, then by definition of $T$, $u^* - \eps > u_{t_j} = f_{t_j}(u_{t_j - 1}) > u_{t_j-1} - \eps/4 > u^* - 3\eps/4 > u^* - \eps$, a contradiction.
	If $u^* - \eps \le u_{t_j - 1} \le u^* - \eps/2$, then by the definition of $T$, $u_{t_j-1} \le a_{t_j-1}$ so that $u_{t_j } = f_{t_j}(u_{t_j - 1}) > u_{t_j - 1} \ge u^* - \eps,$ a contradiction.
	
	The ``furthermore'' claim is clear from our proof above.
\end{proof}

\eDynamicsFullResTanh*
\begin{proof}
	The $\sigma_w = 0$ case is obvious.
	We will assume $\sigma_w > 0$ from here on.

If $\sigma_a = 0$, then $\ee^*$ as defined above is 0, and $\ee = \f \gamma \pp$ decreases as $\Theta(l^{\f 2 \pi - 1})$ to 0, by the same reason as before.

So from now on suppose $\sigma_a > 0$.
We apply \cref{lemma:timeDependentConvergence} first to show that $\ee$ converges.
We have
\begin{align*}
\sigma_v^2 \Wt \phi( \qq, cq) + \sigma_a^2 &= \ee \pp - \prv \ee \prv \pp\\
	&= \ee \pp - \prv \ee \pp + \prv \ee \pp - \prv \ee \prv \pp\\
	&= (\ee - \prv \ee) \pp + \prv \ee(\pp - \prv \pp)\\
	&= (\pp - \prv \pp)[(\ee - \prv \ee)\f \pp {\pp - \prv \pp} + \prv \ee]\\
\f{\sigma_v^2 \Wt\phi( \qq, cq) + \sigma_a^2}{\sigma_v^2 \Vt \phi( \qq) + \sigma_a^2} &= (\ee - \prv \ee) \f \pp {\pp - \prv \pp} + \prv \ee\\
\f {\pp- \prv \pp}\pp \left[\f{\sigma_v^2 \Wt \phi + \sigma_a^2}{\sigma_v^2 \Vt \phi + \sigma_a^2} - \prv \ee\right] &= \ee - \prv \ee
\end{align*}
If we define $f_l(u) := \f{\p \pp l - \p \pp {l-1}}{\p \pp l}\left[\f{\sigma_v^2 \Wt \phi( \p \qq l, \p \cc l \p \qq l) + \sigma_a^2}{\sigma_v^2 \Vt \phi( \p \qq l) + \sigma_a^2} - u\right] + u$ (the LHS of the above), then $f_l(u) - u = O(l^{-1})$ uniformly for all $u$ because $\p \pp l = \Theta(l)$, $\p \pp l - \p \pp {l-1} = \Theta(1)$, and the part in the bracket is $O(1)$, with constants all (able to be taken) independent of $u$.
We divide $[0, 1)$ into the following intervals $I_1 = [1, 1/2), I_2 = [1/2, 3/4), I_3 = [3/4, 7/8), \ldots$.
For each $I_k$, it is clear that the trajectories of $\p \ee l = f_l(\p \ee {l-1})$ with $\p \ee 0 \in I_k$ will fall into some interval $J_k$ bounded away from 1 for all $l \ge L$, for large enough $L$ (dependent on $k$).
Then we can apply \cref{lemma:cExpansion,lemma:vtanhSqrtConvergence,lemma:Wt_le_arcsin} to get $f_l(u) = \f{\p \pp l - \p \pp {l-1}}{\p \pp l}\left[\f{\sigma_v^2\f 2 \pi \arcsin(u) + \sigma_a^2}{\sigma_v^2 + \sigma_a^2} - u + o(1)\right] + u$ where the constants in $o(1)$ is uniform for all $\p \ee 0 \in I_k$.
For $u < \ee^*$ (as defined in the theorem statement), $\f{\sigma_v^2\f 2 \pi \arcsin(u) + \sigma_a^2}{\sigma_v^2 + \sigma_a^2} > u$ and for $u > \ee^*$, $\f{\sigma_v^2\f 2 \pi \arcsin(u) + \sigma_a^2}{\sigma_v^2 + \sigma_a^2} < u$ (see \cref{fig:arcsin-vs-x}).
Thus as $l \to \infty$, the $o(1)$ term gets smaller and smaller, and this monotonicity holds for $f_l(u) - u = \left[\f{\sigma_v^2\f 2 \pi \arcsin(u) + \sigma_a^2}{\sigma_v^2 + \sigma_a^2} - u + o(1)\right] > 0$ (resp. $<0$) on larger and larger intervals $[0, a_l]  \cap J_k$ (resp. $[b_l, 1) \cap J_k$).
This proves all the preconditions for \cref{lemma:timeDependentConvergence}, which yields that $I_k$ converges to a limit point.
As this argument is independent of $k$, we have that for all $\p \ee 0 \in [0, 1)$, $\p \ee l$ converges.

Now we solve for the limit point.


Suppose $\ee$ has limit point $\ee^\dagger$ (possibly different from $\ee^*$ described in the theorem); if we express $\p \gamma l = (\ee^\dagger + \p \eps l ) \p \pp l$, then
\begin{align*}
\sigma_v^2 \Wt \phi( \qq, cq) + \sigma_a^2 &= \gamma - \prv \gamma\\
				&= (\ee^\dagger + \eps)\pp - (\ee^\dagger + \prv \eps) \prv \pp\\
				&= \ee^\dagger(\pp - \prv \pp) + \eps \pp - \prv \eps \prv \pp\\
\f{\sigma_v^2 \Wt \phi( \qq, cq) + \sigma_a^2}{\sigma_v^2 \Vt \phi( \qq) + \sigma_a^2} &= \ee^\dagger + \eps + (\eps - \prv \eps)\f{\prv \pp}{\pp - \prv \pp}
\end{align*}
As $l \to \infty$, $\cc \sim \ee \to \ee^\dagger$, and $\Wt\phi( \qq, \ee^\dagger \qq) \to \f 2 \pi \arcsin(\ee^\dagger)$, and $\Vt\phi( \qq) \to 1$.
Additionally, $\prv \pp /(\pp - \prv \pp) = \Theta(l)$ and $\eps = o(1)$ so that $\eps - \prv \eps = o(l^{-1})$.
Then we have, taking limits $l \to \infty$,
\begin{align*}
\f{\sigma_v^2 \f 2 \pi \arcsin(\ee^\dagger)+\sigma_a^2}{\sigma_v^2 + \sigma_a^2} &= \ee^\dagger.
\end{align*}
Since $f_l$ (as defined above) repels points away from 1, the only solution for $\ee^\dagger$ when $\p \ee 0 < 1$ is $\ee^\dagger = \ee^*$ as specified in the theorem statement.

We defer the proof of the convergence rate to $\ee^*$ to \cref{thm:TanhFullResConvergenceRate}.

\begin{figure}
	\centering
	\includegraphics[height=.15\textheight]{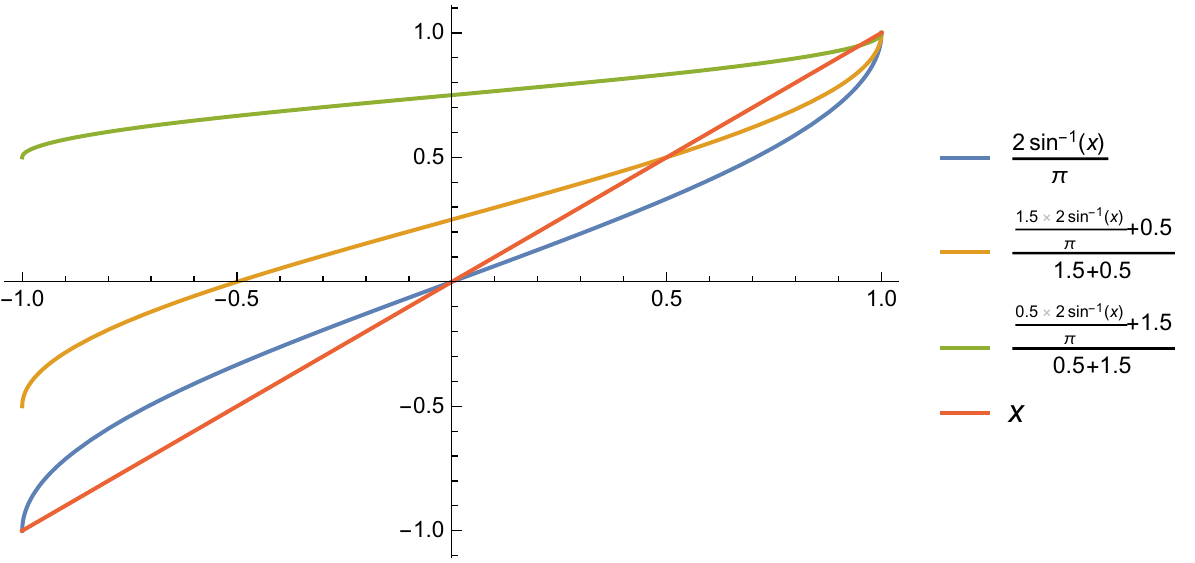}
	\caption{Graph of $y(\ee) = \f 1 {\sigma_v^2 + \sigma_a^2}[\sigma_v^2\f 2 \pi \arcsin(\ee) + \sigma_a^2]$ for various $\sigma_v$ and $\sigma_a$.}
	\label{fig:arcsin-vs-x}
\end{figure}
\end{proof}

\begin{lemma} \label{lemma:fixed_point_ineq}
Let $\ee^*$ be the stable fixed point determined by $\sigma_a$ and $\sigma_v$.
Then as long as $\sigma_v > 0$,
\begin{align*}
\f 2 \pi \f 1 {\sqrt{1 - (\ee^*)^2}} \f{\sigma_v^2 }{\sigma_v^2 + \sigma_a^2} \in (\f 1 2, \f 2 \pi]
\end{align*}
\end{lemma}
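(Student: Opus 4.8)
The plan is to collapse the whole statement to two elementary one-variable trigonometric inequalities via a single substitution. Write $t := \sigma_v^2/(\sigma_v^2+\sigma_a^2)\in(0,1]$, so that the stable fixed point satisfies $\ee^* = t\,\f 2\pi\arcsin\ee^* + (1-t)$ (the unique solution in $[0,1)$ of this equation, cf.\ \cref{thm:eDynamicsFullResTanh}). The key move is to set $v := \f\pi4 - \f 1 2\arcsin\ee^*$, equivalently $\ee^* = \cos 2v$ and $\arcsin\ee^* = \f\pi2 - 2v$; this is a decreasing bijection carrying $\ee^*\in[0,1)$ onto $v\in(0,\f\pi4]$, with $v=\f\pi4 \iff \ee^*=0 \iff \sigma_a=0$. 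Substituting into the fixed-point equation gives $\cos 2v = t\bigl(1-\f{4v}\pi\bigr)+1-t = 1-\f{4tv}\pi$, hence $\f{4tv}\pi = 1-\cos 2v = 2\sin^2 v$, i.e.\ $t = \f{\pi\sin^2 v}{2v}$. Also $\sqrt{1-(\ee^*)^2} = \sqrt{1-\cos^2 2v} = \sin 2v = 2\sin v\cos v$ (using $2v\in(0,\f\pi2]$).

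Next I would rewrite the quantity in question. Using the two identities just derived and the double-angle formulas,
\[
\f 2\pi\cdot\f{1}{\sqrt{1-(\ee^*)^2}}\cdot\f{\sigma_v^2}{\sigma_v^2+\sigma_a^2} \;=\; \f 2\pi\cdot\f{t}{\sin 2v} \;=\; \f 2\pi\cdot\f{\pi\sin^2 v}{2v\cdot 2\sin v\cos v} \;=\; \f{\tan v}{2v}.
\]
Thus the lemma is equivalent to the claim that $\f 1 2 < \f{\tan v}{2v} \le \f 2\pi$ for all $v\in(0,\f\pi4]$, i.e.\ to the pair of scalar inequalities $v < \tan v \le \f{4v}\pi$ on $(0,\f\pi4]$.

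Finally I would establish these. The lower bound $\tan v > v$ is the standard inequality valid on all of $(0,\f\pi2)$. For the upper bound, set $h(v) := \f{4v}\pi - \tan v$; then $h(0)=h(\f\pi4)=0$, and $h'(v) = \f 4\pi - \sec^2 v$ is strictly decreasing on $[0,\f\pi4]$ with $h'(0)=\f 4\pi-1>0$ and $h'(\f\pi4)=\f 4\pi-2<0$, so $h$ increases then decreases on $[0,\f\pi4]$; vanishing at both endpoints, it is therefore $\ge 0$ throughout, giving $\tan v\le\f{4v}\pi$ with equality only at $v=0$ and $v=\f\pi4$. Hence $\f{\tan v}{2v}\in(\f 1 2,\f 2\pi]$, the value $\f 2\pi$ being attained precisely when $v=\f\pi4$, i.e.\ when $\sigma_a=0$. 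There is no genuine obstacle here beyond spotting the substitution $\ee^*=\cos 2v$: once it is in place, the computation of $t$ and the reduction to $v<\tan v\le\f{4v}\pi$ are routine, and the only inequality needing any argument at all is the upper bound, handled by the one-line sign analysis of $h'$ above.
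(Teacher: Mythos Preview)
Your proof is correct and is in fact cleaner and more rigorous than the paper's. Both arguments begin the same way, eliminating the two variance parameters in favor of the single fixed point $\ee^*$ via the fixed-point equation, and then studying the resulting one-variable expression on $[0,1)$. The paper does this by writing $\rho=\sigma_a^2/\sigma_v^2$, solving for $\rho$ in terms of $\ee^*$, and simplifying to
\[
g(\ee^*)\;=\;\tfrac{2}{\pi}\,(1-\ee^*)^{1/2}(1+\ee^*)^{-1/2}\bigl(1-\tfrac{2}{\pi}\arcsin\ee^*\bigr)^{-1}.
\]
It then asserts, with reference only to a plot, that $g$ is monotone decreasing on $[0,1)$, and reads off the endpoint values $g(0)=\tfrac{2}{\pi}$ and $\lim_{\ee^*\to 1}g(\ee^*)=\tfrac 1 2$ to conclude. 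So the paper's argument is not fully analytic at the key monotonicity step.

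Your substitution $\ee^*=\cos 2v$ is exactly what is needed to finish this rigorously: it turns $g(\ee^*)$ into $\tfrac{\tan v}{2v}$ on $v\in(0,\tfrac\pi4]$, after which the required bounds $v<\tan v\le \tfrac{4v}{\pi}$ are the standard tangent inequality together with a one-line concavity check of $h(v)=\tfrac{4v}{\pi}-\tan v$. This not only avoids the appeal to a plot but also makes the equality case ($v=\tfrac\pi4\iff\ee^*=0\iff\sigma_a=0$) transparent. In effect, your substitution is the missing step that would make the paper's proof complete.
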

\begin{proof}
	\renewcommand{\EE}{{\ee^*}}
	Write $\rho := \f{\sigma_a^2}{\sigma_v^2}$.
	By definition of $\EE$, we get
	\begin{align*}
	\EE &= (1 - \rho) \f 2 \pi \arcsin \EE + \rho\\
	\rho = &= \f{\EE - \f 2 \pi \arcsin \EE}{1 - \EE}
	\end{align*}
	Substituting $\rho$ into the expression in question, it follows that we want to show
	\begin{align*}
	\f 2 \pi (1 - \EE^2)^{-1/2} (1 + \rho)^{-1}=  \f 2 \pi (1 - \EE^2)^{-1/2} \left(\f{1 - \f 2 \pi \arcsin \EE}{1 - \EE}\right)^{-1}\in (\f 1 2, \f 2 \pi]
	\end{align*}
	for $\EE \in [0, 1)$ (the endpoint at 1 is not included since $\sigma_v > 0$.
	But this is
	\begin{align*}
	&\phantom{={}} \f 2 \pi (1 - \EE)^{1/2} (1 + \EE)^{-1/2} (1 - \f 2 \pi \arcsin \EE)^{-1}.
	\end{align*}
	Set $g(\EE)$ to be this expression.
	We could proceed by finding critical points, but a simple plot \cref{fig:deltastarBound} shows that $g$ is decreasing on $[0, 1)$, with extremal values at the end points:
	$$g(\EE) \in [\lim_{\EE \to 1} g(\EE), g(0)), \quad \text{for }\EE \in [0, 1).$$
	Obviously $g(0) = \f 2 \pi$.
	For the limit, we note that $\arcsin \EE$ has an asymptotic expansion $\f \pi 2 - \sqrt 2 (1 - e)^{1/2} + \Theta((1-e)^{3/2})$ at 1, so that $(1 - \EE)^{1/2}(1 - \f 2 \pi \arcsin \EE)^{-1} \to \dfrac{\pi}{2 \sqrt 2}$, and $g(\EE) \to \dfrac{1}{2}$ as $\EE \to 1$.

	\begin{figure}[]
		\centering
		\includegraphics[width=.4\textwidth]{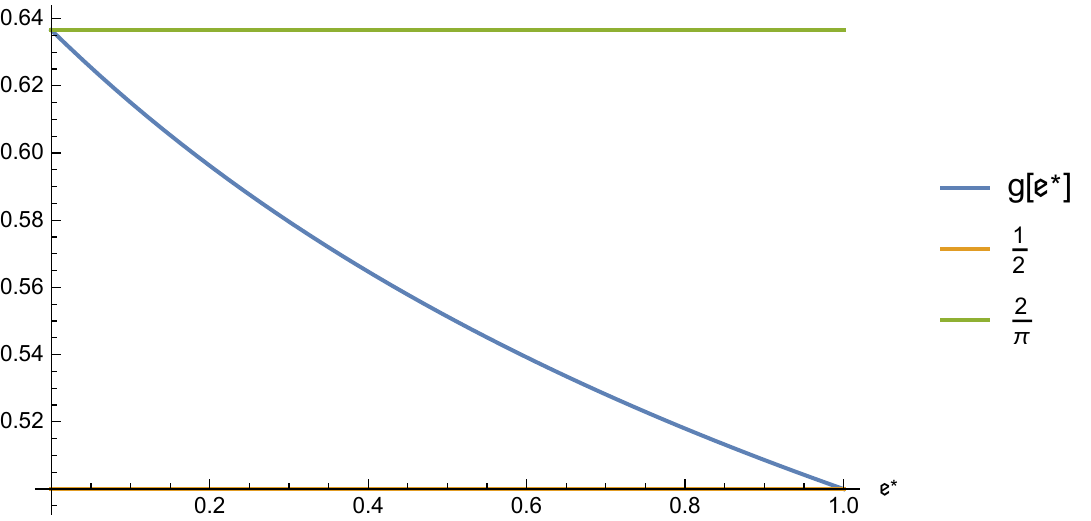}
		\caption{Plot of $g(\EE)$ in the proof of \cref{lemma:fixed_point_ineq}}
		\label{fig:deltastarBound}
	\end{figure}

\end{proof}

\begin{thm}\label{thm:TanhFullResConvergenceRate}
If $\p \ee 0 < 1$, then $|\p \ee l - \ee^*|$ is $\Omega(l^{-\delta^* - \varepsilon})$ and $O(l^{-\delta^* + \varepsilon})$ for any $\varepsilon > 0$, where
$$\delta^* := 1 - \f 2 \pi \f 1 {\sqrt{1 - (\ee^*)^2}} \f{\sigma_v^2 }{\sigma_v^2 + \sigma_a^2} \in [1-\f 2 \pi, \f 1 2),$$
where the bounds on the right follow from \cref{lemma:fixed_point_ineq}.
\end{thm}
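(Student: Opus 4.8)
The plan is to distill the forward recurrences into a scalar recurrence for $\p \eps l := \p \ee l - \ee^*$, and then read off the rate from the difference-equation lemmas of the Dynamics Zoo; this completes the convergence-rate claim deferred in \cref{thm:eDynamicsFullResTanh}. The case $\sigma_a = 0$ forces $\ee^* = 0$ and $\delta^* = 1 - \f 2 \pi$ and was already disposed of in \cref{thm:eDynamicsFullResTanh}, so assume $\sigma_a > 0$; then $\ee^* \in (0,1)$.

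First I would write $\p \gamma l = (\ee^* + \p \eps l)\,\p \pp l$ and, exactly as in the proof of \cref{thm:eDynamicsFullResTanh}, divide the FRN recurrence for $\gamma$ (\cref{thm:full_res_l_g_recurr}) by $\p \pp l - \p \pp {l-1}$ to get
$$\f{\sigma_v^2\Wt\tanh(\p \qq l,\p \lambda l)+\sigma_a^2}{\sigma_v^2\Vt\tanh(\p \qq l)+\sigma_a^2}=\ee^*+\p \eps l+(\p \eps l-\p \eps {l-1})\,\f{\p \pp {l-1}}{\p \pp l-\p \pp {l-1}}.$$
Then I expand the left-hand side: by \cref{lemma:cExpansion} (valid since $\ee^*<1$) the ratio $\p \lambda l/\p \qq l$ equals $\p \ee {l-1}+O(l^{-1})=\ee^*+\p \eps {l-1}+O(l^{-1})$; by \cref{lemma:Wt_le_arcsin}, $\Wt\tanh(\p \qq l,\p \lambda l)=\f 2 \pi\arcsin(\p \lambda l/\p \qq l)+O((\p \qq l)^{-1})$ uniformly (the argument of $\arcsin$ staying bounded away from $1$); by \cref{lemma:vtanhSqrtConvergence}, $\Vt\tanh(\p \qq l)=1-\sqrt{2/\pi}\,(\p \qq l)^{-1/2}+\Theta((\p \qq l)^{-3/2})$; and by \cref{thm:pIsLinearTanh}, $\p \pp l=b_0 l+O(l^{1/2})$, $\p \qq l=\sigma_w^2 b_0 l+O(l^{1/2})$ with $b_0=\sigma_v^2+\sigma_a^2$, whence $\p \pp {l-1}/(\p \pp l-\p \pp {l-1})=l+O(l^{1/2})$. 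Using the identity $\sigma_v^2\f 2 \pi\arcsin\ee^*+\sigma_a^2=b_0\ee^*$ and the linearization $\arcsin(\ee^*+t)=\arcsin\ee^*+t/\sqrt{1-(\ee^*)^2}+O(t^2)$, and rearranging, I obtain
$$\p \eps l=\p \eps {l-1}\lp 1-\f{\delta^*}{l}+O(l^{-3/2})\rp+\Lambda\, l^{-3/2}+O(l^{-2})+O\!\lp(\p \eps {l-1})^2/l\rp,$$
where the coefficient $-\delta^*/l$ of $\p \eps {l-1}$ is precisely the $\arcsin$-linearization term and the forcing $\Lambda l^{-3/2}$ (with $\Lambda>0$) is precisely the contribution of the $\sqrt{2/\pi}\,(\p \qq l)^{-1/2}$ correction in the denominator.

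The rate then follows by comparison, the essential input being $\delta^*\in[1-\f 2 \pi,\f 1 2)$ from \cref{lemma:fixed_point_ineq}: because $\delta^*<\f 1 2$, the homogeneous rate $l^{-\delta^*}$ is slower than $l^{-1/2}$, so the forcing and all error terms only affect the solution at orders $O(l^{-\delta^*})$ and below and cannot shift the exponent. For the upper bound I use $\p \eps l\to0$ (known from \cref{thm:eDynamicsFullResTanh}) to absorb the quadratic term and the $O(l^{-3/2})$ coefficient perturbation into $\varepsilon/l$ for large $l$, getting $|\p \eps l|\le|\p \eps {l-1}|(1-(\delta^*-\varepsilon)/l)+C_\varepsilon l^{-3/2}$; then \cref{lemma:simpleDynamics} (the update being affine and increasing in the state) compares this with $\p a l=\p a {l-1}(1-(\delta^*-\varepsilon)/l)+C_\varepsilon l^{-3/2}$, and \cref{lemma:alphaDeltaDynamics} (with $\alpha=\f 3 2$, $\beta=1$; here $\delta^*-\varepsilon<\f 1 2$ so the relevant maximum in that lemma equals $-(\delta^*-\varepsilon)$) gives $\p a l=\Theta(l^{-(\delta^*-\varepsilon)})$, hence $|\p \eps l|=O(l^{-\delta^*+\varepsilon})$. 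For the lower bound I first argue $\p \eps l$ is eventually positive — once $\p \eps {l-1}>0$ and $l$ is large the positive forcing gives $\p \eps l\ge\f\Lambda2 l^{-3/2}>0$, and the remaining terms cannot keep $\p \eps l$ negative indefinitely, this being where the monotone-dynamics bookkeeping (as in \cref{lemma:timeDependentConvergence}) is used — and then $\p \eps l\ge\p \eps {l-1}(1-(\delta^*+\varepsilon)/l)+c_\varepsilon l^{-3/2}$ with $c_\varepsilon>0$; comparing from below with $\p c l=\p c {l-1}(1-(\delta^*+\varepsilon)/l)+c_\varepsilon l^{-3/2}$ started at $0$, which is $\Theta(l^{-(\delta^*+\varepsilon)})$ by \cref{lemma:alphaDeltaDynamics} when $\varepsilon<\f 1 2-\delta^*$ (larger $\varepsilon$ being a weaker claim), yields $|\p \eps l|=\Omega(l^{-\delta^*-\varepsilon})$.

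The main obstacle is the first step: the careful tracking of sub-leading terms needed to show that the $\Theta(l^{-1/2})$-scale correction injected by $\Vt\tanh$ through the denominator — together with the $\Theta(l^{-1})$ corrections from $\p \lambda l/\p \qq l$ and from $\Wt\tanh$ — enters only at orders strictly below $l^{-\delta^*}$, which is exactly where $\delta^*<\f 1 2$ (\cref{lemma:fixed_point_ineq}) is indispensable, together with establishing the eventual one-sidedness of $\p \eps l$ so that the lower-bound comparison applies cleanly.
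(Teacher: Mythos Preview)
Your proposal is correct and follows essentially the same approach as the paper's proof: both linearize $\arcsin$ around $\ee^*$, use \cref{lemma:Wt_le_arcsin}, \cref{lemma:vtanhSqrtConvergence}, and \cref{lemma:cExpansion} to reduce to the scalar recurrence $\eps=\prv\eps(1-(\delta^*+O(l^{-1/2}))/l)+\Theta(l^{-3/2})$, and then invoke \cref{lemma:alphaDeltaDynamics} with $\delta=\delta^*\pm\varepsilon$, crucially using $\delta^*<\f 1 2$. Your treatment is slightly more explicit than the paper's in invoking \cref{lemma:simpleDynamics} for the comparison step and in arguing eventual one-sidedness of $\p\eps l$ for the lower bound, but the substance is identical.
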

\begin{proof}

Define $\omega(q, c) = \f 2 \pi \arcsin(c) - \Wt \tanh( q, c q)$.
By \cref{lemma:Wt_le_arcsin}, for large enough $l$, $\cc$ is close to $\ee^*$ bounded away from 0 or 1, so that $\omega(\qq, \cc) = \Theta(\inv \qq)$ with the constant hidden in $\Theta$ independent of $\cc$.
Additionally, by \cref{lemma:vtanhSqrtConvergence}, $1 - \Vt\tanh( \qq) = \Theta(\qq^{-1/2})$.
Therefore,
\begin{align*}
(\ee^* + \epsilon)\pp &= \sigma_v^2 (\f 2 \pi \arcsin(\ee^* + \prv\epsilon) - \omega(\qq, \cc))+ \sigma_a^2 + \prv \gamma\\
&= \sigma_v^2 \f 2 \pi [\arcsin(\ee^*) + \f{\prv \epsilon}{\sqrt{1 - (\ee^*)^2}} + \Theta(\prv \eps^2)] - \Theta(\inv l) + \sigma_a^2 + \prv \gamma\\
&= \ee^*(\sigma_v^2 + \sigma_a^2) + (\ee^* + \prv \eps) \prv \pp + \sigma_v^2 \f 2 \pi\f{\prv \epsilon}{\sqrt{1 - (\ee^*)^2}} + \Theta(\prv \eps^2) - \Theta(\inv l)\\
\ee^*(\pp - \prv \pp - \sigma_v^2 - \sigma_a^2) &= \prv \eps \prv \pp - \eps \pp + \sigma_v^2 \f 2 \pi\f{\prv \epsilon}{\sqrt{1 - (\ee^*)^2}} + \Theta(\prv \eps^2)- \Theta(\inv l)\\
\ee^*\sigma_v^2(\Vt \phi( \qq) - 1) &=  \prv \eps \prv \pp - \eps \pp + \sigma_v^2 \f 2 \pi\f{\prv \epsilon}{\sqrt{1 - (\ee^*)^2}} + \Theta(\prv \eps^2) - \Theta(\inv l)\\
\eps &= \f 1 \pp (\ee^*\sigma_v^2(1 - \Vt \phi( \qq)) + \Theta(\prv \eps^2) - \Theta(\inv l) + \prv \eps (\prv \pp+ \sigma_v^2 \f 2 \pi\f{1}{\sqrt{1 - (\ee^*)^2}}))\\
	&= \Theta(l^{-3/2}) + \prv \eps(1 - \p \delta l / l)\\
\end{align*}
where
\begin{align*}
\p \delta l &= \f l \pp (\sigma_v^2 \Vt \phi( \qq) + \sigma_a^2 -\sigma_v^2 \f 2 \pi\f{1}{\sqrt{1 - (\ee^*)^2}}) + \Theta(\prv \eps/l)\\
	&= (1 + \Theta(l^{-1/2}))(\sigma_v^2 (1 - \Theta(l^{-1/2})) + \sigma_a^2 -\sigma_v^2 \f 2 \pi\f{1}{\sqrt{1 - (\ee^*)^2}})/(\sigma_v^2 + \sigma_a^2) + \Theta(\prv \eps / l)\\
	&= \delta^* + O(l^{-1/2}),
\end{align*}
where $\delta^* := 1 - \f 2 \pi \f 1 {\sqrt{1 - (\ee^*)^2}} \f{\sigma_v^2 }{\sigma_v^2 + \sigma_a^2}$, which is positive by \cref{lemma:fixed_point_ineq}.
By taking the $\delta$ of \cref{lemma:alphaDeltaDynamics} to be $\delta^* + \varepsilon$ or $\delta^* - \varepsilon$ respectively for lower and upper bounding the dynamics of $\p \eps l$, the solution $\p \eps l$ is $\Omega(l^{-\delta^* - \varepsilon})$ and $O(l^{-\delta^* + \varepsilon})$ for any $\varepsilon > 0$ since $\f 1 2 > \delta^*$.

\end{proof}

\subsubsection{Backward Dynamics}

\dalethExpSqrtTanhFullRes*
\begin{proof}
	The $\sigma_w = 0$ case is obvious.
	We will assume $\sigma_w > 0$ from here on.
	
As in the proof of \cref{thm:dalethExpSqrtTanh},
\begin{align*}
	\log(\p \daleth {m} / \p \daleth l) &= 2B D \sigma_w^{-1} b_0^{-1/2} (\sqrt l - \sqrt m)
\\
&\phantom{={}} -(BD \sigma_w^{-1} b_0^{-3/2} b_1 2^{-1} +B^2 D^2 \sigma_w^{-2} b_0^{-1}2^{-1})(\log l - \log m) + O(1)
\end{align*}
where $B = \sigma_v^2 \sigma_w^2, D = \f 2 3 \sqrt{\f 2 \pi},$
\begin{align*}
b_0 &= \sigma_v^2 + \sigma_a^2\\
b_1 &= \f{-2C \sigma_v^2 \sigma_w^{-1}}{\sqrt{\sigma_v^2 + \sigma_a^2}}\\
b_2 &= \f{-C^2 \sigma_v^4 \sigma_w^{-2}}{(\sigma_v^2 + \sigma_a^2)^2}.
\end{align*}
with $C = \sqrt{\f 2 \pi}$.
This simplifies to the desired form.
\end{proof}

\dalethExpSqrtTanhFullResAllGrad*
\begin{proof}
	Similar to \cref{thm:dalethExpSqrtTanhAllGrad}.
\end{proof}

\subsection{$\alpha$-ReLU: Full Residual Network}

The following can be checked readily
\VtPsiAlpha*

Since $\dot \psi_\alpha = \alpha \psi_{\alpha - 1}$, we have as a corollary,
\begin{lemma}\label{lemma:Vt_dotpsi_alpha}
If $\alpha > \f 1 2$, then
$\Vt \dot \psi_\alpha( q) = \alpha^2 \cV_{\alpha - 1} q^{\alpha-1}$.
\end{lemma}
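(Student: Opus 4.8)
The plan is to reduce directly to Lemma~\ref{lemma:VtPsiAlpha}. First I would observe that for $x>0$ we have $\dot\psi_\alpha(x) = \alpha x^{\alpha-1} = \alpha\,\psi_{\alpha-1}(x)$, while $\dot\psi_\alpha(x) = 0 = \psi_{\alpha-1}(x)$ for $x<0$; the value at $x=0$ is irrelevant since it carries zero mass under any nondegenerate Gaussian. Hence $\dot\psi_\alpha = \alpha\,\psi_{\alpha-1}$ as an $L^2(\Gaus(0,q))$ function, and squaring and taking the Gaussian expectation that defines $\Vt$ gives
$$\Vt\dot\psi_\alpha(q) = \EV[\dot\psi_\alpha(z)^2 : z\sim\Gaus(0,q)] = \alpha^2\,\EV[\psi_{\alpha-1}(z)^2 : z\sim\Gaus(0,q)] = \alpha^2\,\Vt\psi_{\alpha-1}(q).$$

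Then I would apply Lemma~\ref{lemma:VtPsiAlpha} with the exponent $\alpha-1$ in place of $\alpha$. Its hypothesis is that the exponent exceeds $-\f12$, which here reads $\alpha-1 > -\f12$, i.e.\ exactly the assumed $\alpha > \f12$ (this is also precisely the condition under which the defining expectation $\EV[\psi_{\alpha-1}(z)^2]$ is finite, matching the remark following Lemma~\ref{lemma:VtPsiAlpha}). The lemma then yields $\Vt\psi_{\alpha-1}(q) = \cV_{\alpha-1}\,q^{\alpha-1}$, and substituting into the display above gives $\Vt\dot\psi_\alpha(q) = \alpha^2\,\cV_{\alpha-1}\,q^{\alpha-1}$, as claimed.

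There is no genuine obstacle here; the statement is a one-line corollary. The only point requiring a moment's care is bookkeeping on the parameter range: the shift of the exponent by $-1$ converts the ``$>-\f12$'' requirement of Lemma~\ref{lemma:VtPsiAlpha} into the ``$>\f12$'' hypothesis, and this threshold is sharp, since at $\alpha=\f12$ the integrand $\psi_{\alpha-1}(z)^2 = z^{-1}\mathbf{1}_{z>0}$ fails to be integrable near the origin.
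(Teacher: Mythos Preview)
Your proposal is correct and matches the paper's approach exactly: the paper presents this lemma as an immediate corollary of Lemma~\ref{lemma:VtPsiAlpha} via the identity $\dot\psi_\alpha = \alpha\,\psi_{\alpha-1}$, which is precisely what you do. Your additional remarks on the sharpness of the $\alpha > \tfrac12$ threshold are accurate and go slightly beyond what the paper states explicitly.
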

As a special case, when $\alpha = 1$, $\cV_\alpha = \f 1 2$.

The following is a trivial computation, but useful for many simplifications.
\begin{lemma}
	$\cV_{\alpha+1}/\cV_\alpha = 2\alpha+1$.
\end{lemma}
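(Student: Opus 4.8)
The plan is to substitute the closed form from \cref{lemma:VtPsiAlpha} and simplify. Recall that $\cV_\alpha = \f 1 {\sqrt\pi} 2^{\alpha-1}\Gamma\left(\alpha+\f 1 2\right)$ for $\alpha > -\f 1 2$. Applying this at both $\alpha$ and $\alpha+1$, the prefactors $\f 1 {\sqrt\pi}$ cancel and the powers of $2$ contribute a single factor of $2$:
\[
\f{\cV_{\alpha+1}}{\cV_\alpha} \;=\; \f{2^{\alpha}\,\Gamma\left(\alpha+\f 3 2\right)}{2^{\alpha-1}\,\Gamma\left(\alpha+\f 1 2\right)} \;=\; 2\cdot\f{\Gamma\left(\alpha+\f 3 2\right)}{\Gamma\left(\alpha+\f 1 2\right)}.
\]

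The remaining step is to invoke the functional equation $\Gamma(z+1)=z\,\Gamma(z)$ with $z=\alpha+\f 1 2$, which gives $\Gamma\left(\alpha+\f 3 2\right)=\left(\alpha+\f 1 2\right)\Gamma\left(\alpha+\f 1 2\right)$. Substituting this in, the ratio collapses to $2\left(\alpha+\f 1 2\right)=2\alpha+1$, as claimed.

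There is essentially no obstacle here: this is a one-line consequence of the Gamma functional equation, and the only bookkeeping point worth stating is the domain. Since \cref{lemma:VtPsiAlpha} requires $\alpha > -\f 1 2$, the identity is asserted on that range (note $\alpha+1 > \f 1 2 > -\f 1 2$ automatically holds there), where both $\cV_\alpha$ and $\cV_{\alpha+1}$ are finite and nonzero so the quotient is well defined.
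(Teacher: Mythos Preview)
Your proof is correct and matches the paper's approach: the paper simply states this as ``a trivial computation'' without giving details, and your substitution of the closed form $\cV_\alpha = \f 1 {\sqrt\pi} 2^{\alpha-1}\Gamma(\alpha+\f 1 2)$ followed by the functional equation $\Gamma(z+1)=z\Gamma(z)$ is exactly the intended one-line verification.
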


\subsubsection{Forward Dynamics}

\label{sec:AlphaReluForwardProofs}
\begin{thm}\label{thm:pDynamic1ReLU}
Suppose we have the nonlinearity $\phi = \psi_1$.
Then $\p \pp l = \Theta((1 + \sigma_v^2 \sigma_w^2/2)^l)$, with the hidden constant depending on the initial condition.
\end{thm}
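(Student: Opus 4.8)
The plan is to reduce the $\alpha=1$ case of the length recurrence to a first-order \emph{affine} recurrence and solve it in closed form; the whole point is that for standard ReLU the transform $\Vt\psi_1$ is exactly linear, so the generally nonlinear $\pp$-recurrence degenerates to something elementary.

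First I would instantiate the forward length recurrence. By \cref{thm:fullResPQRec} together with \cref{lemma:VtPsiAlpha} at $\alpha=1$ (where $\cV_1=\f12$, as noted just after the lemma), we have $\p\qq l = \sigma_w^2\,\p\pp{l-1}+\sigma_b^2$ and
\[
\p\pp l \;=\; \sigma_v^2\,\cV_1\,\p\qq l + \sigma_a^2 + \p\pp{l-1}
\;=\; \tfrac12\sigma_v^2\bigl(\sigma_w^2\,\p\pp{l-1}+\sigma_b^2\bigr) + \sigma_a^2 + \p\pp{l-1}.
\]
Collecting the $\p\pp{l-1}$ terms, this is $\p\pp l = B\,\p\pp{l-1} + D$, where $B := 1 + \sigma_v^2\sigma_w^2/2$ and $D := \sigma_v^2\sigma_b^2/2 + \sigma_a^2$.

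Next I would solve this standard linear recurrence by unwinding: $\p\pp l = B^l\,\p\pp0 + D\sum_{k=0}^{l-1}B^k$. Under the standing assumption $\sigma_\bullet>0$ we have $B>1$, so the geometric sum evaluates to $D(B^l-1)/(B-1)$, yielding the exact closed form
\[
\p\pp l \;=\; \Bigl(\p\pp0 + \tfrac{D}{B-1}\Bigr)B^l - \tfrac{D}{B-1}.
\]
Reading off the asymptotics: the coefficient $\p\pp0+\tfrac{D}{B-1}$ is strictly positive (since $\p\pp0\ge0$ and $D,\,B-1>0$), and $B>1$, so $\p\pp l = \Theta(B^l) = \Theta\bigl((1+\sigma_v^2\sigma_w^2/2)^l\bigr)$, with the implied constant equal to $\p\pp0+\tfrac{D}{B-1}$ and hence depending on the initial condition, as claimed.

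I do not expect a real obstacle here: once $\Vt\psi_1$ is identified as exactly linear, the argument is just solving an affine one-step recurrence. The only point needing (trivial) care is checking that the leading coefficient is strictly positive so that the $\Theta$ is genuinely two-sided — which is immediate — and noting that the closed form also covers the degenerate sub-cases (e.g.\ $\sigma_v\sigma_w=0$ gives $B=1$ and merely linear or constant growth), though these are outside the hypotheses of this section.
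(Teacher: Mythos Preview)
Your proposal is correct and follows essentially the same route as the paper: both instantiate the FRN length recurrence with $\Vt\psi_1(\qq)=\tfrac12\qq$ to obtain the affine recurrence $\p\pp l = B\,\p\pp{l-1}+D$ with $B=1+\sigma_v^2\sigma_w^2/2$, and then solve it in closed form. Your version is slightly more careful in explicitly checking that the leading coefficient $\p\pp0+D/(B-1)$ is strictly positive so that the $\Theta$ bound is genuinely two-sided, a point the paper leaves implicit.
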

\begin{proof}
We have
\begin{align*}
\pp &= \f 1 2\sigma_v^2(\sigma_w^2 \prv \pp + \sigma_b^2) + \sigma_a^2 + \prv \pp\\
	&= (\f 1 2\sigma_v^2\sigma_w^2 + 1) \prv \pp + \f 1 2 (\sigma_v^2 \sigma_b^2 + \sigma_a^2).
\end{align*}
By the standard method of characteristic equation, we get that
$$\p \pp l = A + C B^l$$
where $A = - \f{\sigma_a^2 + \sigma_b^2 \sigma_v^2}{\sigma_v^2 \sigma_w^2}$, $B = 1 + \f{\sigma_v^2 \sigma_w^2}{2}$, and $C$ is a coefficient determined by initial conditions.
\end{proof}

\begin{thm}\label{thm:pDynamicLT1ReLU}
	Suppose $\alpha < 1$.
	We have the following asymptotic expansion
	$$\p \pp l = K_1 l^\oalpha + R(l)$$
	where the remainder term
	$$R(l) \sim \begin{cases}
	-K_2 l^{\aalpha} \log l	&\text{if $\alpha > \f 1 2$}\\
	(C-K_2) l \log l			&\text{if $\alpha = \f 1 2$ and $K_2 \not=C$}\\
	\f{C(1-\alpha)}{1-2\alpha} l	& \text{if $ \alpha < \f 1 2$}
	\end{cases}$$
	where $K_1 = [\sigma_v^2 \sigma_w^{2\alpha} \cV_\alpha (1 - \alpha)]^{\f 1 {1-\alpha}}, K_2 = \f 1 2 [\sigma_v^2 \cV_\alpha \sigma_w^{2\alpha}]^{\oalpha} (1-\alpha)^{\aalpha - 1} \alpha$ and $C = \sigma_a^2$.
\end{thm}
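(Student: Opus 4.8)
The plan is to reduce the statement to a direct application of \cref{lemma:polyDynamicsConstant}. First I would eliminate $\qq$ from the forward length equations: by \cref{thm:fullResPQRec} and \cref{lemma:VtPsiAlpha} (applicable since $\alpha > -\f 1 2$), the pair $\qq = \sigma_w^2 \prv \pp + \sigma_b^2$, $\pp = \sigma_v^2 \cV_\alpha \qq^\alpha + \sigma_a^2 + \prv \pp$ collapses to the scalar recurrence
\[
\p \pp l - \p \pp {l-1} = \sigma_v^2 \cV_\alpha (\sigma_w^2 \p \pp {l-1} + \sigma_b^2)^\alpha + \sigma_a^2 .
\]
Since $\sigma_w > 0$, factoring $\sigma_w^{2\alpha}$ out of the power puts this in the normal form of \cref{lemma:polyDynamicsConstant}, $\p u l - \p u {l-1} = A(\p u {l-1} + B)^\alpha + C$, with $\p u l = \p \pp l$, $A = \sigma_v^2 \cV_\alpha \sigma_w^{2\alpha}$, $B = \sigma_b^2/\sigma_w^2$, and $C = \sigma_a^2$.

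Next I would simply invoke \cref{lemma:polyDynamicsConstant}: its hypotheses hold because $\alpha \in (0,1)$ and $A = \sigma_v^2 \cV_\alpha \sigma_w^{2\alpha} > 0$ ($\cV_\alpha > 0$ for $\alpha > -\f 1 2$ by \cref{lemma:VtPsiAlpha}). The lemma gives $\p \pp l = K_1 l^{\oalpha} + R(l)$ with $K_1 = [A(1-\alpha)]^{\oalpha}$, $K_2 = \f 1 2 A^{\oalpha}(1-\alpha)^{\aalpha-1}\alpha$, and $R(l) \sim -K_2 l^{\aalpha}\log l$ when $\alpha > \f 1 2$, $R(l) \sim (C - K_2) l\log l$ when $\alpha = \f 1 2$ and $K_2 \ne C$, and $R(l) \sim \f{C(1-\alpha)}{1-2\alpha} l$ when $\alpha < \f 1 2$. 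Substituting $A = \sigma_v^2 \cV_\alpha \sigma_w^{2\alpha}$ and $C = \sigma_a^2$ into these expressions reproduces verbatim the claimed $K_1 = [\sigma_v^2 \sigma_w^{2\alpha} \cV_\alpha(1-\alpha)]^{\oalpha}$, $K_2 = \f 1 2 [\sigma_v^2 \cV_\alpha \sigma_w^{2\alpha}]^{\oalpha}(1-\alpha)^{\aalpha-1}\alpha$, and $C = \sigma_a^2$, finishing the proof. For completeness one can then note $\p \qq l = \sigma_w^2 \p \pp {l-1} + \sigma_b^2$ inherits the same leading asymptotics scaled by $\sigma_w^2$.

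All the analytic substance has been pushed into \cref{lemma:polyDynamicsConstant} (which itself rests on \cref{lemma:polyDynamics}, \cref{lemma:simpleDynamics}, and comparison with the ODE $\dot x = A(x+B)^\alpha + C$), so there is no real obstacle beyond verifying those hypotheses and guarding against degeneracies: the one genuinely exceptional case is $\alpha = \f 1 2$ with $K_2 = C = \sigma_a^2$, which the theorem statement already sidesteps, and in the $\alpha < \f 1 2$ regime one should observe that the $\Theta(l)$ remainder is indeed of strictly lower order than $K_1 l^{\oalpha}$ since $\oalpha > 1$, so the stated expansion is non-trivial.
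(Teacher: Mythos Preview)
Your proposal is correct and follows essentially the same approach as the paper: reduce the $\pp$ recurrence via \cref{thm:fullResPQRec} and \cref{lemma:VtPsiAlpha} to $\pp - \prv\pp = A(\prv\pp + B)^\alpha + C$ with $A = \sigma_v^2 \cV_\alpha \sigma_w^{2\alpha}$, $B = \sigma_b^2/\sigma_w^2$, $C = \sigma_a^2$, and then invoke \cref{lemma:polyDynamicsConstant}. The paper's proof is in fact a two-sentence version of exactly this; your added hypothesis checks and remarks on degeneracies are correct but go slightly beyond what the paper writes out.
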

\cref{fig:55reluverifyleadingcoeffp} verifies the leading coefficient and the exponent of the leading term.
\begin{proof}
The difference equation governing the evolution of $\pp$ is
$$\pp - \prv \pp = A(\prv \pp + B)^\alpha + C$$
where $A =\sigma_v^2 \cV_\alpha \sigma_w^{2\alpha}$, $B = \sigma_b^2/\sigma_w^2$, and $C = \sigma_a^2$.
Then \cref{lemma:polyDynamicsConstant} yields the result.
\end{proof}

\begin{figure}
	\centering
	\includegraphics[height=.2\textheight]{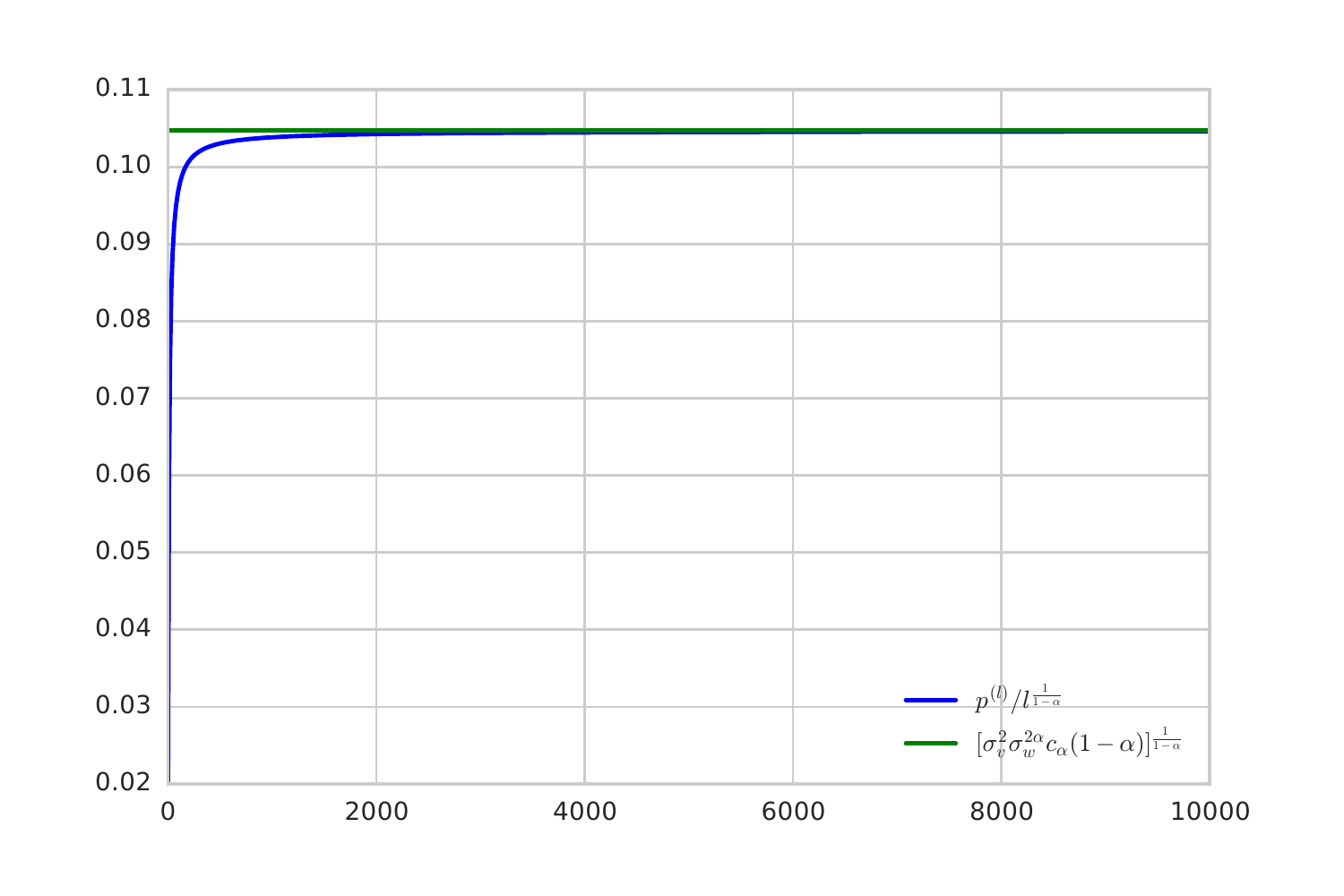}
	\caption{Verification of leading term of \cref{thm:pDynamic1ReLU} for $\alpha = 0.55$.}
	\label{fig:55reluverifyleadingcoeffp}
\end{figure}

\cref{thm:pDynamicLT1ReLU} combined with \cref{thm:pDynamic1ReLU} gives the following result.
\pDynamicAlphaReLU*

By \cite{cho_kernel_2009}, we know that $\Wt\psi_\alpha( q, q c) = \Vt\psi_\alpha( q) \JJ_\alpha(c)$, where $\JJ_\alpha(c) = J_\alpha(\arccos c)$ and
\begin{equation}
J_\alpha(\theta) := \f 1 {2\pi \cV_\alpha} (\sin \theta)^{2\alpha + 1} \Gamma(\alpha + 1)\int_0^{\pi / 2} \f{\dd \eta \cos^\alpha \eta}{(1 - \cos \theta \cos \eta)^{1 + \alpha}}.\tag{$\triangle$}\label{eqn:JalphaIntegralFormula}
\end{equation}

Note that $\JJ_\alpha(c) \in (-\infty, \infty)$ for $\alpha \in (-1, \infty)$ and any $c \in (0, 1)$, even though $\Vt \psi_\alpha$ is only defined for $\alpha > -1/2$.
\begin{figure}
\centering
\includegraphics[width=.3\textwidth]{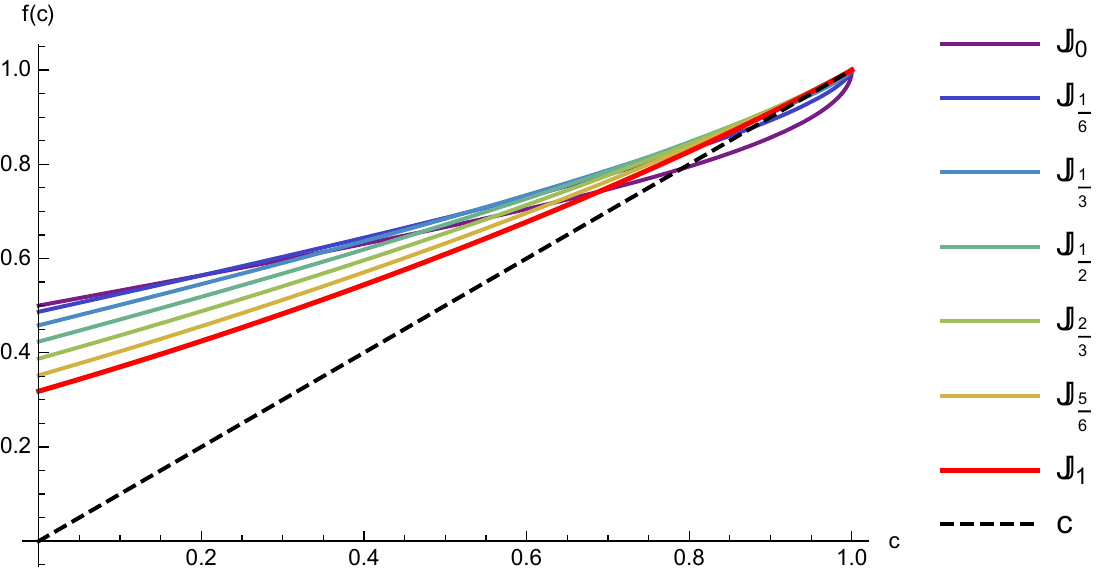}
~
\includegraphics[width=.3\textwidth]{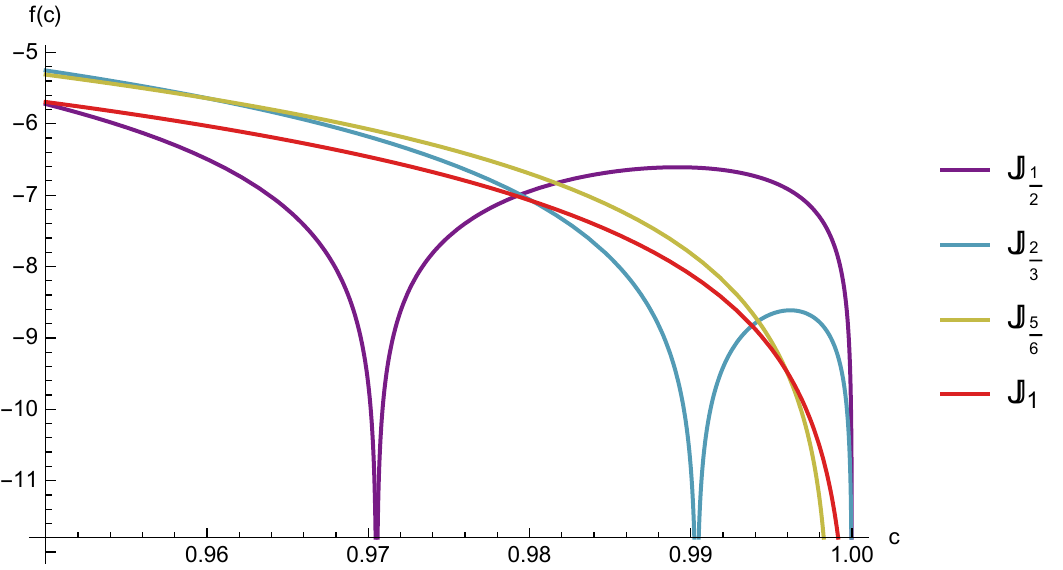}
~
\includegraphics[width=.3\textwidth]{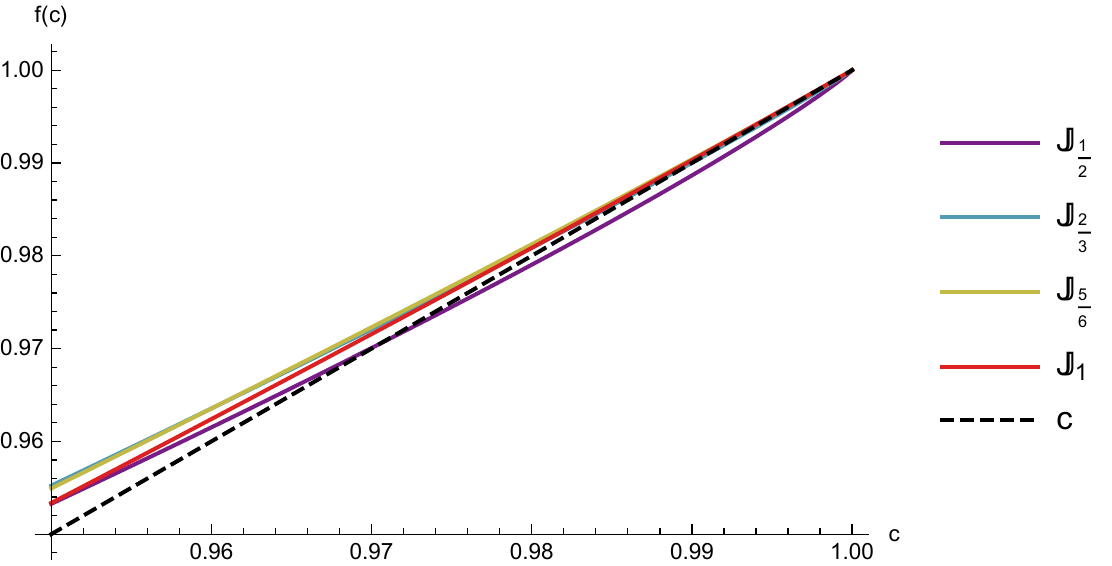}
\caption{(a) $\JJ_\alpha$ for different $\alpha$s and the identity function.
From this plot, it looks like $\JJ_\alpha(c) \ge c$ and $\dot\JJ_\alpha(c) \le 1$ for all $\alpha \in (\f 1 2, 1]$ with equality iff $c = 1$, but this is misleading.
(b) shows $|\JJ_\alpha(c) - c|$ in log scale.
Where the curves dip below the x-axis indicate points where $\JJ_\alpha(c) = c$.
We see that in fact every $\JJ_\alpha$ has a solution $\JJ_\alpha(c) = c$ for a $c < 1$, when $\alpha < 1$.
(c) Furthermore, at each such $c$, $\dot\JJ_\alpha < 1$.}
\label{fig:jjj_vs_id}
\end{figure}

\cref{fig:jjj_vs_id} shows a comparison of $\JJ_\alpha$ for different $\alpha$s along with the identity function.
By \cite[Lemma 11]{daniely_toward_2016}, $\JJ_\alpha$ is an increasing and convex function as long as $\psi_\alpha^2$ is Gaussian-integrable, which is precisely when $\alpha > -1/2$.
We can compute $\JJ_\alpha(1) = \Wt \psi_\alpha(q, q)/\Vt\psi_\alpha(q) = 1$, and $\JJ_\alpha(0) = \Wt \psi_\alpha(q, 0)/\Vt\psi_\alpha(q) = \Vt \psi_{\alpha/2}(q)^2/\Vt\psi_\alpha(q) = \cV_{\alpha/2}^2/\cV_\alpha = \f 1{2\sqrt \pi} \f{\Gamma(\f \alpha 2 + \f 1 2)^2}{\Gamma(\alpha + \f 1 2)}$.
We record these observations as a lemma.
\begin{lemma}\label{lemma:basicJalpha}
	$\JJ_\alpha(c)$ is an increasing and convex function for each $\alpha > -1/2$ on $c \in [0, 1]$.
	$\JJ_\alpha(1) = 1$ and $\JJ_\alpha(0) = \f 1{2\sqrt \pi} \f{\Gamma(\f \alpha 2 + \f 1 2)^2}{\Gamma(\alpha + \f 1 2)}$.
\end{lemma}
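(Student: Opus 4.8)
The plan is to work entirely with the identity from \cite{cho_kernel_2009} recalled just above, $\Wt\psi_\alpha(q, qc) = \Vt\psi_\alpha(q)\,\JJ_\alpha(c)$, which shows that $\JJ_\alpha(c) = \Wt\psi_\alpha(q, qc)/\Vt\psi_\alpha(q)$ is independent of $q > 0$; every claim of the lemma is then a statement about this ratio. By \cref{lemma:VtPsiAlpha}, $\Vt\psi_\alpha(q)$ is defined and positive exactly for $\alpha > -\tfrac12$, so the ratio makes sense throughout the stated range.

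For monotonicity and convexity, I would first reduce to the unit-variance case by the scaling $z = \sqrt q\,\zeta$, so that $\JJ_\alpha(c)$ equals the ``dual function'' of $\psi_\alpha$ with respect to the standard Gaussian. Writing the Hermite expansion $\psi_\alpha = \sum_{n\ge 0}\hat a_n \mathrm{He}_n$ (valid in $L^2(\Gaus(0,1))$ precisely because $\psi_\alpha^2$ is Gaussian-integrable, i.e. $\alpha > -\tfrac12$), Mehler's formula gives $\JJ_\alpha(c) = \sum_{n\ge 0} b_n c^n$ with $b_n = \hat a_n^2/\sum_m \hat a_m^2 \ge 0$ and $\sum_n b_n = 1$. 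A power series with nonnegative coefficients is nondecreasing and convex on $[0,1]$, which is the first assertion; alternatively one may simply invoke \cite[Lemma 11]{daniely_toward_2016} with the same hypothesis, as is done in the text.

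For the endpoint values I would argue probabilistically. At $c = 1$ the pair $(z,z')\sim\Gaus$ satisfies $z = z'$ almost surely, so $\Wt\psi_\alpha(q,q) = \EV[\psi_\alpha(z)^2] = \Vt\psi_\alpha(q)$ and hence $\JJ_\alpha(1) = 1$. At $c = 0$ the pair is independent, so $\Wt\psi_\alpha(q,0) = \EV[\psi_\alpha(z)]^2$; the key observation is $\EV[\psi_\alpha(z)] = \EV[z^\alpha\,\I(z>0)] = \EV[\psi_{\alpha/2}(z)^2] = \Vt\psi_{\alpha/2}(q) = \cV_{\alpha/2}\,q^{\alpha/2}$ by \cref{lemma:VtPsiAlpha} (legitimate since $\alpha/2 > -\tfrac12$). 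Therefore $\JJ_\alpha(0) = \cV_{\alpha/2}^2 q^\alpha/(\cV_\alpha q^\alpha) = \cV_{\alpha/2}^2/\cV_\alpha$, and substituting the closed form $\cV_\beta = \pi^{-1/2}2^{\beta-1}\Gamma(\beta+\tfrac12)$ and cancelling the powers of $2$ yields $\JJ_\alpha(0) = \tfrac{1}{2\sqrt\pi}\,\Gamma(\tfrac\alpha2+\tfrac12)^2/\Gamma(\alpha+\tfrac12)$.

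The only point requiring real care — everything else being the two endpoint evaluations and a $\Gamma$-function simplification — is the nonnegative-coefficient representation: one must verify that $\psi_\alpha^2$ is integrable against the Gaussian exactly when $\alpha > -\tfrac12$, so that the Hermite series converges in $L^2$ and the termwise application of Mehler's formula is valid. If one prefers to avoid this, the fallback is to cite \cite[Lemma 11]{daniely_toward_2016} directly with hypothesis $\alpha > -\tfrac12$, which is what the surrounding text already does.
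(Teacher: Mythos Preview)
Your proposal is correct and follows essentially the same route as the paper: the paper cites \cite[Lemma~11]{daniely_toward_2016} for monotonicity and convexity (valid since $\psi_\alpha^2$ is Gaussian-integrable iff $\alpha>-\tfrac12$), and computes the endpoints exactly as you do, via $\JJ_\alpha(1)=\Wt\psi_\alpha(q,q)/\Vt\psi_\alpha(q)=1$ and $\JJ_\alpha(0)=\Vt\psi_{\alpha/2}(q)^2/\Vt\psi_\alpha(q)=\cV_{\alpha/2}^2/\cV_\alpha$. Your Hermite--Mehler sketch is simply an unpacking of the mechanism behind the Daniely lemma, not a different argument.
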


\begin{figure}
\centering
\includegraphics[width=.3\textwidth]{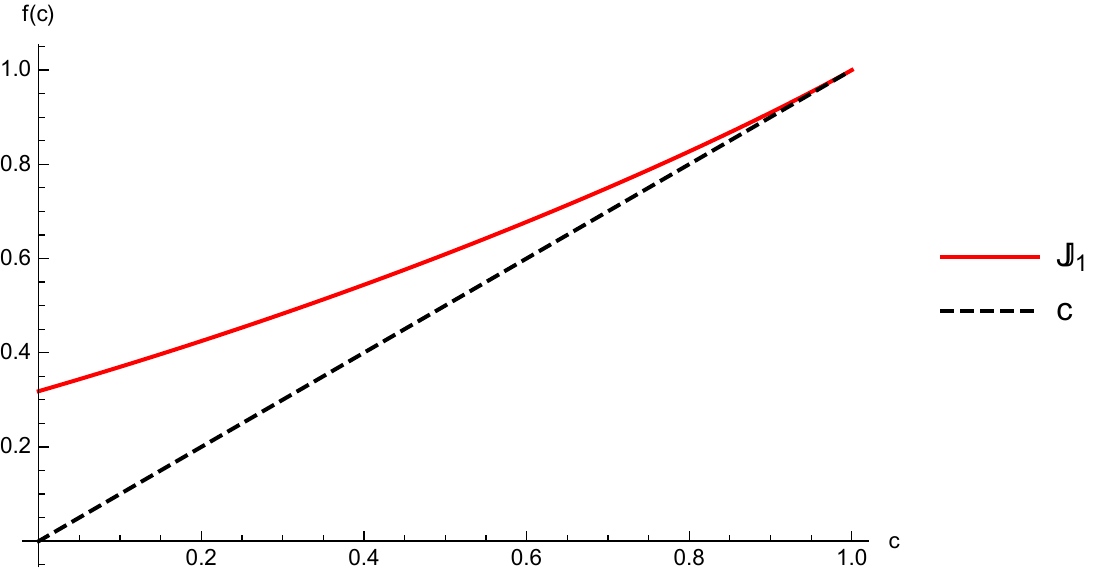}
\caption{$\JJ_1$ vs identity}
\label{fig:jjj1_vs_id}
\end{figure}
For $\alpha = 1$, \citet{cho_kernel_2009} computed
$$\JJ_1(c) = \f 1 \pi (\sqrt{1 - c^2} + (\pi - \arccos(c))c).$$
\cref{fig:jjj1_vs_id} shows a plot of $\JJ_1$ vs identity.
It has derivative $\dot\JJ_1(c) = 1 - \f 1 \pi \arccos c$, which shows that $\dot\JJ_1(c) < 1$ with equality iff $c = 1$, and consequently $\JJ_1(c) \ge c$ with equality iff $c = 1$.
At the same time, $\dot \JJ_1(c) \ge 0$ with equality iff $c = -1$, so $\JJ_1$ is increasing on $[-1, 1]$.
It has an asymptotic expansion $\JJ_1(1 - \varepsilon) = 1 - \varepsilon + \f {2\sqrt 2}{3\pi} \eps^{3/2} + \Theta(\eps^{5/2})$ at 1.

The zeroth Bessel function of the second kind is defined by $\bessel(z) = \int_1^\infty e^{-z x} (x^2 - 1)^{-1/2} \dd x$.
It is one of the fundamental solutions to the homogeneous differential equation $x^2 \dot y + x \dot y - x^2 y = 0$.
The following lemma shows that $J_\alpha$ can be expressed in terms of $\bessel$.
\begin{lemma}\label{lemma:JalphaBessel}
	For any $\alpha > -1$,
	$J_\alpha(\theta) = \f 1 {2\pi \cV_\alpha}\sin^{2\alpha+1} \theta \int_0^\infty \dd x \bessel(x) e^{x\cos \theta} x^\alpha$
\end{lemma}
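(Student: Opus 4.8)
The plan is to bring the defining integral \eqref{eqn:JalphaIntegralFormula} for $J_\alpha$ into the stated Bessel form by two changes of variables followed by the elementary identity $\Gamma(\alpha+1)\,A^{-\alpha-1}=\int_0^\infty x^\alpha e^{-Ax}\,\dd x$ (valid for $A>0$, $\alpha>-1$) and an interchange of integrals. Throughout I use that in the range of interest $\cos\theta<1$ (indeed $\JJ_\alpha$ is applied at $c=\cos\theta\in(0,1)$), so the integrand of \eqref{eqn:JalphaIntegralFormula} is finite and everything below stays positive.

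First I would substitute $u=\cos\eta$ in \eqref{eqn:JalphaIntegralFormula}. Since $\dd\eta=-\dd u/\sqrt{1-u^2}$ on $(0,\pi/2)$ and $\eta:0\mapsto\pi/2$ corresponds to $u:1\mapsto 0$, this gives
$$J_\alpha(\theta)=\f{\Gamma(\alpha+1)}{2\pi\cV_\alpha}\,\sin^{2\alpha+1}\theta\int_0^1\f{u^\alpha\,\dd u}{(1-u\cos\theta)^{1+\alpha}\sqrt{1-u^2}}.$$
Next I would substitute $u=1/t$, $t\in[1,\infty)$; using $1-u\cos\theta=(t-\cos\theta)/t$, $\sqrt{1-u^2}=\sqrt{t^2-1}/t$, and $\dd u=-t^{-2}\,\dd t$, the powers of $t$ cancel exactly and the inner integral becomes
$$\int_0^1\f{u^\alpha\,\dd u}{(1-u\cos\theta)^{1+\alpha}\sqrt{1-u^2}}=\int_1^\infty\f{\dd t}{(t-\cos\theta)^{1+\alpha}\sqrt{t^2-1}}.$$

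Now, because $t-\cos\theta\ge 1-\cos\theta>0$ for every $t\ge 1$, I can write $\Gamma(\alpha+1)(t-\cos\theta)^{-1-\alpha}=\int_0^\infty x^\alpha e^{-x(t-\cos\theta)}\,\dd x$, substitute this into the last integral, and interchange the $x$- and $t$-integrations. The interchange is immediate by Tonelli since the integrand $x^\alpha e^{-x(t-\cos\theta)}(t^2-1)^{-1/2}$ is nonnegative, and it yields
$$\Gamma(\alpha+1)\int_1^\infty\f{\dd t}{(t-\cos\theta)^{1+\alpha}\sqrt{t^2-1}}=\int_0^\infty x^\alpha e^{x\cos\theta}\Bigl(\int_1^\infty\f{e^{-xt}}{\sqrt{t^2-1}}\,\dd t\Bigr)\dd x=\int_0^\infty x^\alpha e^{x\cos\theta}\,\bessel(x)\,\dd x,$$
the last step being just the definition $\bessel(x)=\int_1^\infty e^{-xt}(t^2-1)^{-1/2}\,\dd t$. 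Substituting this back into the displayed formula for $J_\alpha(\theta)$ produces exactly the claimed identity. The only real point to check — and the step where I would be most careful, though it is routine — is the finiteness of the final integral, which also retroactively licenses the Tonelli step: near $x=0$ the integrand is $\Theta(x^\alpha)$, integrable precisely because $\alpha>-1$, and as $x\to\infty$ the asymptotics $\bessel(x)=\Theta(x^{-1/2}e^{-x})$ make the integrand decay like $x^{\alpha-1/2}e^{-(1-\cos\theta)x}$, which is integrable since $\cos\theta<1$.
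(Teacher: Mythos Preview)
Your proof is correct and takes a genuinely different route from the paper's. The paper starts from Cho's double-integral representation
\[
2\pi\cV_\alpha J_\alpha(\theta)=\csc\theta\int_0^\infty\!\int_0^\infty e^{-(u^2+v^2-2uv\cos\theta)/2\sin^2\theta}\,u^\alpha v^\alpha\,\dd u\,\dd v,
\]
exploits the $u\leftrightarrow v$ symmetry to restrict to the sector $v\ge u$, and then makes the nonobvious change of variables $(\mathbbm p,\mathbbm q)=(uv,u+v)$; after this the $\mathbbm q$-integral collapses to $\bessel(\mathbbm p\csc^2\theta)$ and a final rescaling finishes. You instead start from the one-dimensional integral \eqref{eqn:JalphaIntegralFormula}, reduce it by two elementary substitutions to $\int_1^\infty (t-\cos\theta)^{-1-\alpha}(t^2-1)^{-1/2}\,\dd t$, and then read off the Bessel function by inserting the Laplace representation $\Gamma(\alpha+1)A^{-1-\alpha}=\int_0^\infty x^\alpha e^{-Ax}\,\dd x$ and swapping integrals. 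Your argument is shorter and more mechanical; the paper's has the advantage of isolating the symmetric-function structure that also drives the later recursion lemmas. One small slip: near $x=0$ the integrand is not $\Theta(x^\alpha)$ but $\Theta(x^\alpha\log(1/x))$, since $\bessel(x)\sim-\log x$ there; this does not affect your integrability claim, as $x^\alpha\log(1/x)$ is still integrable at $0$ for every $\alpha>-1$.
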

\begin{proof}[]
	\citet{cho_kernel_2009} gave the expression
	$$2\pi \cV_\alpha J_\alpha(\theta) = \csc \theta \int_0^\infty \dd u \int_0^\infty \dd v e^{-(u^2+v^2 - 2uv \cos \theta)/2\sin^2\theta} u^\alpha v^\alpha.$$
	Note that the integrand is symmetric in $u$ and $v$.
	\newcommand{\sector}{{\mathsf V}}
	Thus, if $\sector = \{(u, v): u, v \ge 0 \And v \ge u\}$, then
	$$2\pi \cV_\alpha J_\alpha(\theta) = 2\csc \theta \int_\sector \dd u \dd v e^{-(u^2+v^2 - 2uv \cos \theta)/2\sin^2\theta} u^\alpha v^\alpha.$$
	\renewcommand{\pp}{\mathbbm{p}}
	\renewcommand{\qq}{\mathbbm{q}}
	Now make the change of variables from $\sector$ to $\{(\pp, \qq): \qq \ge 2 \sqrt{\pp}\}$:
	\begin{align*}
	\pp &= uv	&	\dd \pp &= v \dd u + u \dd v\\
	\qq &= u + v&	\dd \qq &= \dd u + \dd v\\
	\dd \pp \dd \qq &= (v - u) \dd u \dd v	&	\dd u \dd v &= (\qq^2 - 4 \pp)^{-1/2} \dd \pp \dd \qq
	\end{align*}
	so that we have
	$$2 \pi \cV_\alpha J_\alpha(\theta) = 2\csc \theta \int_0^\infty \dd \pp e^{\pp(1 + \cos \theta)\csc^2\theta} \pp^\alpha \int_{2\sqrt{\pp}}^\infty \dd \qq e^{-\qq^2\csc^2 \theta}(\qq^2 - 4 \pp)^{-1/2}.$$
	The inner integral in $\qq$ can be expressed in terms of $\bessel$ by a change of variable $x = \qq^2/2\sqrt \pp$:
	\begin{align*}
	2 \pi \cV_\alpha J_\alpha(\theta) &= 2\csc \theta \int_0^\infty \dd \pp e^{\pp(1 + \cos \theta)\csc^2\theta} \pp^\alpha \f 1 2 e^{-\pp\csc^2 \theta} \bessel(\pp\csc^2 \theta)\\
		&= \csc\theta \int_0^\infty \dd \pp \bessel(\pp\csc^2\theta) e^{\pp \cos \theta \csc^2\theta} \pp^\alpha\\
		&= \sin^{2\alpha + 1} \theta \int_0^\infty \dd x \bessel(x) e^{x \cos \theta} x^\alpha
	\end{align*}
\end{proof}

Define $L_\alpha(\theta) = 2 \pi \cV_\alpha J_\alpha(\theta)\csc^{2\alpha+1}\theta = \int_0^\infty \dd x \bessel(x) e^{x \cos \theta} x^\alpha$.
\begin{lemma}\label{lemma:LAlphaRec}
	If $\alpha > 1$, then
	$$L_\alpha(\theta) = \csc^2 \theta [(2\alpha-1) \cos \theta L_{\alpha-1}(\theta) + (\alpha-1)^2 L_{\alpha - 2}(\theta)].$$
\end{lemma}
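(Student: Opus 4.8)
The plan is to use the modified Bessel differential equation satisfied by $\bessel$. Writing the equation $x^2 \bessel''(x) + x\bessel'(x) - x^2 \bessel(x) = 0$ in the form $\bigl(x\bessel'(x)\bigr)' = x\bessel(x)$, and abbreviating $c := \cos\theta$ (so that $|c| < 1$ for $\theta \in (0,\pi)$), we have
$$L_\alpha(\theta) = \int_0^\infty x\bessel(x)\, e^{xc} x^{\alpha-1}\,\dd x = \int_0^\infty \bigl(x\bessel'(x)\bigr)'\, e^{xc} x^{\alpha-1}\,\dd x .$$
First I would integrate by parts once to move the derivative off $x\bessel'$, and then a second time to move the remaining derivative off $\bessel'$ onto $\bessel$; after expanding $\frac{\dd}{\dd x}\bigl(e^{xc}x^\beta\bigr) = e^{xc}\bigl(c x^\beta + \beta x^{\beta-1}\bigr)$, every integral that appears is a linear combination of the $L_\beta$.

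Concretely, setting $M_\beta := \int_0^\infty \bessel'(x) e^{xc} x^\beta\,\dd x$, the first integration by parts gives $L_\alpha = -c M_\alpha - (\alpha-1) M_{\alpha-1}$, and the second gives $M_\beta = -c L_\beta - \beta L_{\beta-1}$, which I apply with $\beta = \alpha$ and $\beta = \alpha-1$. Substituting,
\begin{align*}
L_\alpha &= -c\bigl(-cL_\alpha - \alpha L_{\alpha-1}\bigr) - (\alpha-1)\bigl(-cL_{\alpha-1} - (\alpha-1)L_{\alpha-2}\bigr)\\
&= c^2 L_\alpha + (2\alpha-1)\,c\, L_{\alpha-1} + (\alpha-1)^2 L_{\alpha-2},
\end{align*}
so $(1-c^2)L_\alpha = (2\alpha-1)\,c\,L_{\alpha-1} + (\alpha-1)^2 L_{\alpha-2}$; dividing through by $1 - c^2 = \sin^2\theta$ gives exactly the claimed recurrence.

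The one point requiring care is that all boundary terms vanish and all integrals converge, and this is precisely where the hypothesis $\alpha > 1$ enters. Near $x = 0$ one has $\bessel(x) = \mathcal K_0(x) = \Theta(\log x)$ and $x\bessel'(x) = -x\mathcal K_1(x) \to -1$, so each boundary factor $e^{xc}x^{\alpha-1}\cdot x\bessel'(x)$, $e^{xc}x^\alpha \bessel(x)$, and $e^{xc}x^{\alpha-1}\bessel(x)$ tends to $0$ as $x \downarrow 0$ as soon as $\alpha > 1$, and the integral defining $L_{\alpha-2}$ converges at the origin only because $\alpha - 2 > -1$. Near $x = \infty$, $\bessel(x)$ and $\bessel'(x)$ decay like $e^{-x}x^{-1/2}$, which dominates $e^{xc}x^\beta$ since $c < 1$, so all boundary contributions at infinity vanish and all three integrals converge absolutely, justifying the two integrations by parts. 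The main obstacle is therefore only this bookkeeping of the asymptotics of $\bessel$ at $0$ and $\infty$; the algebraic core — two integrations by parts followed by the identity $1 - \cos^2\theta = \sin^2\theta$ — is routine.
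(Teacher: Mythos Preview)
Your proof is correct and is essentially the same argument as the paper's: both substitute the modified Bessel ODE into the defining integral for $L_\alpha$ and perform two integrations by parts, checking via the asymptotics $\bessel(x)\sim -\log x$, $\dot\bessel(x)\sim -x^{-1}$ at $0$ and $\bessel(x),\dot\bessel(x)\sim \sqrt{\pi/(2x)}\,e^{-x}$ at $\infty$ that all boundary terms vanish and that $L_{\alpha-2}$ converges precisely when $\alpha>1$. The only cosmetic difference is that you package the ODE as $(x\bessel')'=x\bessel$ and introduce the auxiliary $M_\beta=\int_0^\infty \bessel'(x)e^{xc}x^\beta\,\dd x$, whereas the paper writes $\bessel=\ddot\bessel+x^{-1}\dot\bessel$ and treats the two resulting pieces separately; both routes land on $(1-\cos^2\theta)L_\alpha=(2\alpha-1)\cos\theta\,L_{\alpha-1}+(\alpha-1)^2 L_{\alpha-2}$.
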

\begin{proof}[]
	We will prove this claim for $\theta < 1$, and by continuity this also proves the case $\theta = 1$.
	As remarked above, $\bessel(z) = \ddot \bessel(z) + \inv z \dot \bessel(z).$
	Thus
	\begin{align*}
	L_\alpha(\theta) &= \int_0^\infty \dd x (\ddot \bessel(x) + \inv x \dot \bessel(x)) e^{x \cos\theta} x^\alpha\\
		&=\dot \bessel e^{x\cos\theta} x^\alpha \rvert_0^\infty + \bessel e^{x\cos \theta} x^{\alpha - 1} \rvert_0^\infty\\
			&\phantom{={}} - \int \dd x[\cos \theta e^{x \cos \theta} x^\alpha + \alpha e^{x \cos \theta} x^{\alpha - 1}] \dot \bessel\\
			&\phantom{={}} - \int \dd x[\cos \theta e^{x \cos \theta} x^{\alpha - 1} + (\alpha-1) e^{x\cos\theta} x^{\alpha - 2}] \bessel
	\end{align*}
	Asymptotically, $\bessel(z) \sim \sqrt{\f{\pi}{2z}} e^{-z}$ as $z \to \infty$ and $\bessel(z) \sim -\ln(z)$ as $z \searrow 0$, and $\dot\bessel(z) \sim -\sqrt{\f{\pi}{2z}} e^{-z}$ as $z \to \infty$ and $\dot\bessel(z) \sim -\inv z$ as $z \searrow 0$.
	Thus, as $\alpha > 1$,
	\begin{align*}
	\dot \bessel e^{x\cos\theta} x^\alpha \rvert_0^\infty = -\lim_{x \to \infty} \sqrt{\pi/2} e^{-x(1 - \cos \theta)}x^{\alpha - 1} + \lim_{x\searrow 0} e^{x\cos\theta} x^{\alpha-1}  = 0\\
	\bessel e^{x\cos\theta} x^{\alpha-1} \rvert_0^\infty = -\lim_{x \to \infty} \sqrt{\pi/2} e^{-x(1 - \cos \theta)}x^{\alpha - 2} + \lim_{x\searrow 0} e^{x\cos\theta} x^{\alpha-1} \ln x  = 0
	\end{align*}
	So
	\begin{align*}
	L_\alpha(\theta) &= -\cos \theta L_{\alpha-1}(\theta) - (\alpha-1)L_{\alpha-2}(\theta) - \int \dd x[\cos \theta e^{x \cos \theta} x^\alpha + \alpha e^{x \cos \theta} x^{\alpha - 1}] \dot \bessel
	\end{align*}
	Via another integration by parts, the integral on the right is
	\begin{align*}
	&\phantom{={}} \cos \theta e^{x \cos \theta} x^\alpha \bessel \rvert_0^\infty + \alpha e^{x\cos \theta} x^{\alpha-1} \bessel \rvert_0^\infty\\
	&\phantom{={}} - \int \dd x[\cos^2 \theta e^{x \cos \theta} x^\alpha + 2 \alpha \cos \theta e^{x \cos \theta} x^{\alpha - 1} + \alpha (\alpha - 1) e^{x \cos \theta} x^{\alpha - 2}] \bessel\\
	&= - [\cos^2 \theta L_\alpha(\theta) + 2 \alpha \cos \theta L_{\alpha-1}(\theta) + \alpha(\alpha-1) L_{\alpha-2}(\theta)]
	\end{align*}
	where the evaluation terms vanish just like before.
	Altogether, we have
	\begin{align*}
	L_\alpha(\theta) &= \cos^2 \theta L_\alpha(\theta) + (2\alpha-1)\cos\theta L_{\alpha-1}(\theta) + (\alpha-1)^2 L_{\alpha-2}(\theta)\\
				 &= \csc^2 \theta[(2\alpha-1) \cos \theta L_{\alpha-1}(\theta) + (\alpha-1)^2 L_{\alpha-2}(\theta)]
	\end{align*}
\end{proof}
As a corollary we get
\begin{lemma}\label{lemma:JalphaRec}
	Suppose $\alpha > 1$.
	Then
	\begin{align*}
	J_\alpha(\theta) &= \cos \theta J_{\alpha-1}(\theta) + (\alpha-1)^2 (2\alpha-1)^{-1}(2\alpha-3)^{-1} \sin^2 \theta J_{\alpha-2}(\theta)\\
	\JJ_\alpha(c) &=  c \JJ_{\alpha-1}(c) + (\alpha-1)^2 (2\alpha-1)^{-1}(2\alpha-3)^{-1}(1-c^2) \JJ_{\alpha-2}(c)
	\end{align*}
\end{lemma}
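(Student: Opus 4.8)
The plan is to derive this corollary from Lemma~\ref{lemma:LAlphaRec} by a change of normalization; the mathematical substance is entirely contained in that lemma. Recall the identity $L_\beta(\theta) = 2\pi\cV_\beta\, J_\beta(\theta)\csc^{2\beta+1}\theta$ introduced just after Lemma~\ref{lemma:JalphaBessel}, valid for $\beta > -1$. First I would substitute it for $\beta = \alpha,\alpha-1,\alpha-2$ into
$$L_\alpha(\theta) = \csc^2\theta\bigl[(2\alpha-1)\cos\theta\, L_{\alpha-1}(\theta) + (\alpha-1)^2 L_{\alpha-2}(\theta)\bigr],$$
cancel the common factor $2\pi$, and multiply both sides by $\sin^{2\alpha+1}\theta$. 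The left side becomes $\cV_\alpha J_\alpha(\theta)$. On the right the powers of $\sin\theta$ collapse, since $\sin^{2\alpha+1}\theta\cdot\csc^{2}\theta\cdot\csc^{2\alpha-1}\theta = 1$ in the $J_{\alpha-1}$ term and $\sin^{2\alpha+1}\theta\cdot\csc^{2}\theta\cdot\csc^{2\alpha-3}\theta = \sin^2\theta$ in the $J_{\alpha-2}$ term, leaving
$$\cV_\alpha J_\alpha(\theta) = (2\alpha-1)\,\cV_{\alpha-1}\cos\theta\, J_{\alpha-1}(\theta) + (\alpha-1)^2\,\cV_{\alpha-2}\sin^2\theta\, J_{\alpha-2}(\theta).$$

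Next I would divide by $\cV_\alpha$ and simplify the constants using the elementary ratio $\cV_{\beta+1}/\cV_\beta = 2\beta+1$ established earlier: this gives $\cV_{\alpha-1}/\cV_\alpha = (2\alpha-1)^{-1}$, so the first coefficient is exactly $1$, and $\cV_{\alpha-2}/\cV_\alpha = (2\alpha-1)^{-1}(2\alpha-3)^{-1}$, so the second coefficient becomes $(\alpha-1)^2(2\alpha-1)^{-1}(2\alpha-3)^{-1}$. That is the first displayed identity. The second follows at once by setting $\theta = \arccos c$, since $\JJ_\beta(c) = J_\beta(\arccos c)$, $\cos\theta = c$, and $\sin^2\theta = 1-c^2$.

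There is no genuine obstacle here — it is all bookkeeping of sine powers and $\cV$-ratios — so I do not expect a hard step. The only point deserving a remark is the exceptional value $\alpha = \tfrac32$, where $2\alpha-3 = 0$ makes the second coefficient in the stated form undefined; there the factor $(2\alpha-3)^{-1}\JJ_{\alpha-2}(c)$ should be read as its limit in $\alpha$ (equivalently, one keeps the finite intermediate quantity $\cV_{\alpha-2}J_{\alpha-2}(\theta) = \tfrac1{2\pi}\sin^{2\alpha-3}\theta\, L_{\alpha-2}(\theta)$, which is well defined for all $\alpha>1$ and $\theta\in(0,\pi/2]$). Since both sides are continuous in $\alpha$, the identity extends to that point, while for every other $\alpha>1$ the manipulation above is literal.
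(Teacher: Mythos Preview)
Your proposal is correct and follows exactly the route the paper intends: the paper presents this lemma without proof, simply introducing it with ``As a corollary we get,'' so the substance is indeed just the rescaling of Lemma~\ref{lemma:LAlphaRec} via $L_\beta = 2\pi\cV_\beta J_\beta\csc^{2\beta+1}\theta$ and the ratio $\cV_{\beta+1}/\cV_\beta = 2\beta+1$. Your sine-power and $\cV$-ratio bookkeeping is accurate, and your additional remark about the removable singularity at $\alpha=\tfrac32$ (where $\cV_{\alpha-2}$ blows up while $(2\alpha-3)^{-1}$ does too, but their product stays finite via the $L$-form) is a nice point the paper does not make explicit.
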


The derivative of $J_\alpha(\theta)$ turns out to be quite simple.
\begin{lemma}\label{lemma:JalphaGrad}
	Suppose $\alpha > 0$. Then
	\begin{align*}
	\dot J_\alpha(\theta) &= -\alpha^2(2 \alpha-1)^{-1} J_{\alpha-1}(\theta) \sin \theta\\
	\dot \JJ_\alpha(c) &= \alpha^2 (2 \alpha-1)^{-1} \JJ_{\alpha-1}(c)
	\end{align*}
\end{lemma}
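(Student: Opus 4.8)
The plan is to differentiate the Bessel-integral representation of $J_\alpha$ and then collapse the answer with the recurrence of \cref{lemma:LAlphaRec}. Recall from \cref{lemma:JalphaBessel} and the definition immediately following it that $J_\alpha(\theta) = \f{1}{2\pi\cV_\alpha}\sin^{2\alpha+1}\theta\cdot L_\alpha(\theta)$ with $L_\alpha(\theta) = \int_0^\infty \bessel(x)e^{x\cos\theta}x^\alpha\,\dd x$. Fixing $\theta\in(0,\pi)$ I would first differentiate under the integral sign to obtain $\dot L_\alpha(\theta) = -\sin\theta\,L_{\alpha+1}(\theta)$; this is justified by dominated convergence on an interval $[\theta_0-\delta,\theta_0+\delta]\subset(0,\pi)$, using the decay $\bessel(x)\sim\sqrt{\pi/2x}\,e^{-x}$ as $x\to\infty$ and $\bessel(x)\sim-\ln x$ as $x\searrow0$, which make $L_{\alpha+1}(\theta)$ (and its natural dominating function) absolutely convergent for $\theta$ bounded away from $0$ and $\alpha>-2$.

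Next I would feed in \cref{lemma:LAlphaRec} with $\alpha$ replaced by $\alpha+1$ (valid since $\alpha+1>1\iff\alpha>0$): $L_{\alpha+1}(\theta) = \csc^2\theta\,[(2\alpha+1)\cos\theta\,L_\alpha(\theta) + \alpha^2 L_{\alpha-1}(\theta)]$, so that $\sin^{2\alpha+1}\theta\,\dot L_\alpha(\theta) = -(2\alpha+1)\sin^{2\alpha}\theta\cos\theta\,L_\alpha(\theta) - \alpha^2\sin^{2\alpha}\theta\,L_{\alpha-1}(\theta)$. Applying the product rule to $J_\alpha = \f{1}{2\pi\cV_\alpha}\sin^{2\alpha+1}\theta\,L_\alpha(\theta)$, the term $(2\alpha+1)\sin^{2\alpha}\theta\cos\theta\,L_\alpha(\theta)$ produced by differentiating $\sin^{2\alpha+1}\theta$ cancels exactly against the matching term above, leaving the clean intermediate identity $\dot J_\alpha(\theta) = -\f{\alpha^2}{2\pi\cV_\alpha}\sin^{2\alpha}\theta\,L_{\alpha-1}(\theta)$.

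Finally I would convert back: since $J_{\alpha-1}(\theta) = \f{1}{2\pi\cV_{\alpha-1}}\sin^{2\alpha-1}\theta\,L_{\alpha-1}(\theta)$, we get $\sin^{2\alpha}\theta\,L_{\alpha-1}(\theta) = 2\pi\cV_{\alpha-1}\sin\theta\,J_{\alpha-1}(\theta)$, hence $\dot J_\alpha(\theta) = -\alpha^2\,\f{\cV_{\alpha-1}}{\cV_\alpha}\,\sin\theta\,J_{\alpha-1}(\theta)$; invoking $\cV_\alpha/\cV_{\alpha-1}=2\alpha-1$ (the elementary identity stated after \cref{lemma:Vt_dotpsi_alpha}) gives the first formula. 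For the second, $\JJ_\alpha(c)=J_\alpha(\arccos c)$ and the chain rule with $\dd(\arccos c)/\dd c = -1/\sqrt{1-c^2} = -1/\sin(\arccos c)$ cancels one power of $\sin(\arccos c)=\sqrt{1-c^2}$, yielding $\dot\JJ_\alpha(c) = \alpha^2(2\alpha-1)^{-1}\JJ_{\alpha-1}(c)$; the endpoints $\theta\in\{0,\pi\}$ (i.e.\ $c\in\{\pm1\}$) follow by continuity.

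I expect the only genuine obstacle to be the routine justification of differentiating $L_\alpha$ under the integral sign and of the convergence of $L_{\alpha-1}$; once those are granted the algebraic cancellation is immediate. It is worth noting that the $L_{\alpha-1}$-form of $\dot J_\alpha$ is the most robust statement, valid for every $\alpha>0$, whereas passing to the $J_{\alpha-1}$-form as written implicitly needs $\cV_{\alpha-1}<\infty$, i.e.\ $\alpha>1/2$, the cases $\alpha\in(0,1/2]$ being recovered by continuity in $\alpha$.
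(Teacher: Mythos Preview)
Your proposal is correct and follows essentially the same route as the paper: differentiate the Bessel-integral representation, then collapse the $\alpha+1$ quantity with the recurrence of \cref{lemma:LAlphaRec} so that the $\cos\theta$ contribution from the product rule cancels, leaving only the $L_{\alpha-1}$ (equivalently $J_{\alpha-1}$) term. The paper phrases the computation at the level of $J_\alpha$ and invokes the derived recurrence \cref{lemma:JalphaRec} on $J_{\alpha+1}$, whereas you work one level lower with $L_\alpha$ and invoke \cref{lemma:LAlphaRec} directly; the algebra is identical, and your version has the minor advantage of making the cancellation more transparent and of explicitly noting the dominated-convergence justification for differentiating under the integral (which the paper leaves implicit) as well as the $\alpha\le 1/2$ subtlety.
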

\begin{proof}[]
	We will prove the first formula.
	The second follows from chain rule.
	By \cref{lemma:JalphaBessel},
	\begin{align*}
	J_\alpha(\theta) &= \f 1 {2\pi \cV_\alpha}\sin^{2\alpha+1} \theta \int \dd x \bessel(x) e^{x\cos \theta} x^\alpha\\
	\dot J_\alpha(\theta) &= \f 1 {2\pi \cV_\alpha}[(2\alpha+1)\sin^{2\alpha} \theta \cos\theta \int \dd x \bessel(x) e^{x\cos \theta} x^\alpha\\
						&\phantom{={}} -\sin^{2\alpha+2}\theta \int \dd x \bessel(x) e^{x\cos \theta} x^{\alpha+1}]\\
				&= (2\alpha+1) \cot \theta J_\alpha(\theta) - \f{c_{\alpha+1}}{c_\alpha}\csc\theta J_{\alpha+1}(\theta)\\
				&= (2\alpha+1) \csc \theta [\cos \theta J_\alpha(\theta) - J_{\alpha+1}(\theta)].\\
	\end{align*}
	As $\alpha + 1 > 1$, by \cref{lemma:JalphaRec}, this is
	\begin{align*}
	&\phantom{={}}-(2\alpha+1)\csc \theta [(\alpha-1)^2 (2\alpha+1)^{-1}(2\alpha-1)^{-1} \sin^2 \theta J_{\alpha-1}(\theta)]\\
	&= -(\alpha-1)^2(2\alpha-1)^{-1} \sin \theta J_{\alpha-1}(\theta).
	\end{align*}
\end{proof}

Thus $\dot \JJ_\alpha(1) = \alpha^2(2\alpha-1)^{-1} \JJ_{\alpha-1}(1) = \alpha^2(2\alpha-1)^{-1}$ for any $\alpha > 0$ by \cref{lemma:basicJalpha}.
For $1/2 < \alpha \le 1$, $\dot \JJ_\alpha(1) \ge 1$ with equality iff $\alpha = 1$, and for $\alpha = 1/2$, $\dot \JJ_\alpha(1) =\infty > 1$ by continuity of $\dot \JJ_\alpha(c)$ in $\alpha$.
Because for $\alpha > -1/2$, $\JJ_\alpha$ is increasing and convex on $[0, 1]$ and $\JJ_\alpha(0) > 0$ by \cref{lemma:basicJalpha}, $\JJ_\alpha$ intersects identity at a unique point away from 1 when $\alpha \in [1/2, 1)$.
We record this as a theorem.

\begin{thm}\label{thm:stableFixedPointsJJ}
	For $\alpha \in [1/2, 1)$, $\JJ_\alpha(c) = c$ has two solutions: an unstable solution at 1 ("unstable" meaning $\dot\JJ_\alpha(1) > 1$) and a stable solution in $\ee^* \in (0, 1)$ ("stable" meaning $\dot \JJ_\alpha(\ee^*) < 1$).
\end{thm}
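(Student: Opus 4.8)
The plan is to handle the two fixed points separately. For $c=1$: it is a fixed point since $\JJ_\alpha(1)=1$ by \cref{lemma:basicJalpha}, and its instability is immediate from \cref{lemma:JalphaGrad}, which gives $\dot\JJ_\alpha(1)=\alpha^2(2\alpha-1)^{-1}\JJ_{\alpha-1}(1)=\alpha^2(2\alpha-1)^{-1}$ (using $\JJ_{\alpha-1}(1)=1$). On $\alpha\in(1/2,1)$ this is $>1$ because $\alpha^2-(2\alpha-1)=(\alpha-1)^2>0$, and at $\alpha=1/2$ it equals $+\infty$, which I would justify (as already noted just before the theorem) by continuity of $\dot\JJ_\alpha(c)$ in $\alpha$. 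So $\dot\JJ_\alpha(1)>1$ for every $\alpha\in[1/2,1)$.

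For the interior fixed point I would study $g(c):=\JJ_\alpha(c)-c$ on $[0,1]$. It is smooth, with $g(0)=\JJ_\alpha(0)=\frac{1}{2\sqrt\pi}\,\frac{\Gamma(\alpha/2+1/2)^2}{\Gamma(\alpha+1/2)}>0$ by \cref{lemma:basicJalpha} and $g(1)=0$. Since $g'(1)=\dot\JJ_\alpha(1)-1>0$ from the previous step, $g$ is strictly negative just to the left of $1$; combined with $g(0)>0$, the intermediate value theorem yields a root $\ee^*\in(0,1)$, i.e.\ $\JJ_\alpha(\ee^*)=\ee^*$.

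Uniqueness of $\ee^*$ and the stability bound $\dot\JJ_\alpha(\ee^*)<1$ I would both extract from the convexity of $\JJ_\alpha$ (\cref{lemma:basicJalpha}): $g$ is convex, being convex minus linear, and a convex function that meets one of its secant chords at an interior point must agree with that chord on the whole spanning interval. If there were a second root $c'\in(0,1)$, then over $[\min(\ee^*,c'),1]$ the relevant chord of $g$ is identically $0$ and $g$ touches it at the remaining interior root, forcing $g\equiv 0$ on that interval and hence $g'(1)=0$, contradicting $g'(1)>0$; so $\ee^*$ is unique. For stability, convexity gives $g'(\ee^*)\le(g(1)-g(\ee^*))/(1-\ee^*)=0$, and if equality held then $g'$ nondecreasing would make $g$ nondecreasing on $[\ee^*,1]$ with equal endpoint values, hence $g\equiv 0$ there and again $g'(1)=0$, a contradiction; therefore $g'(\ee^*)<0$, i.e.\ $\dot\JJ_\alpha(\ee^*)<1$.

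I expect the main obstacle to be the convexity bookkeeping in the last step — making the ``convex function meeting a chord at an interior point is affine there'' principle precise and checking that each degenerate configuration (a second interior root, or a flat tangency at $\ee^*$) genuinely collides with $g'(1)>0$ — together with the boundary case $\alpha=1/2$, where $\Vt\psi_{\alpha-1}$ is undefined and $\dot\JJ_\alpha(1)=\infty$, which must be dealt with through the continuity-in-$\alpha$ remark preceding the theorem rather than by the $(\alpha-1)^2>0$ computation.
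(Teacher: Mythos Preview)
Your proposal is correct and follows essentially the same approach as the paper: both use \cref{lemma:JalphaGrad} to get $\dot\JJ_\alpha(1)=\alpha^2/(2\alpha-1)>1$ (with the $\alpha=1/2$ case handled by continuity in $\alpha$), and then combine $\JJ_\alpha(0)>0$, $\JJ_\alpha(1)=1$, and the convexity from \cref{lemma:basicJalpha} to pin down a unique interior fixed point. Your write-up is in fact more explicit than the paper's, which simply asserts that convexity plus $\JJ_\alpha(0)>0$ forces a unique non-unit intersection without spelling out the IVT/chord argument or the derivation of $\dot\JJ_\alpha(\ee^*)<1$.
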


\begin{figure}
	\centering
	\includegraphics[height=.1\textheight]{graphics/jjjVsId2.pdf}
	~
	\includegraphics[height=.1\textheight]{graphics/jjj_vs_id_log.pdf}
	~
	\includegraphics[height=.1\textheight]{graphics/jjj_vs_id_close.pdf}
	\caption{Left-to-right:
		\textbf{(a)} $\JJ_\alpha$ for different $\alpha$s and the identity function (black, dashed line).
		$\JJ_1$ is highlighted in red.
		From this plot, it looks like $\JJ_\alpha(c) \ge c$ and $\dot\JJ_\alpha(c) \le 1$ for all $\alpha \in (\f 1 2, 1]$ with equality iff $c = 1$, but this is misleading.
		\textbf{(b)} shows $|\JJ_\alpha(c) - c|$ in log scale.
		Where the curves dip below the x-axis indicate points where $\JJ_\alpha(c) = c$.
		We see that in fact every $\JJ_\alpha$ has a solution $\JJ_\alpha(c) = c$ for a $c < 1$, when $\alpha < 1$.
		\textbf{(c)} Furthermore, at each such $c$, $\dot\JJ_\alpha < 1$.
		(b) and (c) demonstrate the existence of stable fixed points away from 1 for $\JJ_\alpha, \alpha \in (1/2, 1)$, which is confirmed rigorously by \cref{thm:stableFixedPointsJJ}.
	}
	\label{fig:jjj_vs_id_main}
\end{figure}

This result confirms that pictures presented in \cref{fig:jjj_vs_id_main}b,c are qualitatively correct, that there are indeed stable fixed points of $\JJ_\alpha$ away from 1.
\ReLUSquaredConvergence*
\begin{proof}
If $\prv \ee < 1$, then
\begin{align*} 
\cc = \f{\sigma_w^2 \prv \gamma + \sigma_b^2}{\sigma_w^2 \prv \pp + \sigma_b^2} &\ge \prv \ee\\
\JJ_1(\cc) &\ge \JJ_1(\prv\ee)\\
\ee = \f{\sigma_v^2 \cV_\alpha \qq^\alpha\JJ_1(\cc) + \sigma_b^2}{\sigma_v^2 \cV_\alpha \qq^\alpha + \sigma_b^2} &\ge \JJ_1(\prv \ee) 
\end{align*}
but $\ee \ge \JJ_1(\prv \ee) > \prv \ee$ as noted above.
Thus by monotone convergence $\ee$ converges, and $\ee^* = 1$ is the only possible fixed point.

By \cref{lemma:cExpansion}, $\cc = \prv \ee(1 + \Theta(\prv\eps \prv \pp^{-1})) = 1 - \prv \eps + \Theta(\prv \eps \inv \pp) = 1 - u\prv\eps$ where $u := 1 - \Theta(\prv \pp^{-1})$.
Using the asymptotic expansion $\JJ_1(1 - \eps) = 1 - \eps+ U \eps^{3/2} + \Theta(\eps^{5/2})$, we have
\begin{align*}
(1 - \eps)\pp &= \sigma_v^2 \f \qq 2 \JJ_1(1 - u\prv \eps) + \sigma_a^2 + (1 - \prv \eps) \prv \pp\\
-\eps \pp &= \sigma_v^2 \f \qq 2 (\JJ_1(1 - u\prv \eps) - 1) - \prv \eps \prv \pp\\
	&= \sigma_v^2 \f \qq 2 [- u\prv\eps  + U u^{3/2}\prv\eps^{3/2} + \Theta(u^{5/2}\prv\eps^{5/2})] - \prv \eps \prv \pp\\
\eps &= \prv \eps \f 1 \pp [\prv \pp + \sigma_v^2 \f \qq 2 (u - U u^{3/2} \prv \eps^{1/2} + \Theta(u^{5/2}\prv \eps^{3/2}))]\\
	&=  \prv \eps \f 1 \pp [\pp - \sigma_a^2 + \sigma_v^2 \f \qq 2 (\Theta(\inv {\prv \pp}) - U u^{3/2} \prv \eps^{1/2} + \Theta(u^{5/2}\prv \eps^{3/2}))]\\
	&=  \prv \eps [1 + \f{-\sigma_a^2 + \sigma_v^2 \f \qq 2 (\Theta(\inv {\prv \pp}) - U u^{3/2} \prv \eps^{1/2} + \Theta(u^{5/2}\prv \eps^{3/2}))}{\pp}]\\
	&=  \prv \eps [1 + \f{-\sigma_a^2\inv \qq + \f 1 2 \sigma_v^2 (\Theta(\inv {\prv \pp}) - U u^{3/2} \prv \eps^{1/2} + \Theta(u^{5/2}\prv \eps^{3/2}))}{\pp\inv \qq}]
\end{align*}
Let the content of the bracket on the RHS be $\aleph$.
We have $\pp \inv \qq = (1+ o(1))B/\sigma_w^2$.
If $\eps = O(\prv \pp^{-1})$, then $\aleph = 1 - O(\inv \pp)$, but because $\pp$ is exponentially decreasing, this means $\eps = \Theta(1)$ and does not converge to 0 --- this is a contradiction.
Therefore, $\prv \eps = \omega(\prv \pp^{-1})$, and
\begin{align*}
\eps &= \prv \eps [1 - \f 1 2 B^{-1}\sigma_v^2 \sigma_w^2 U\prv \eps^{1/2}(1 + o(1))]\\
\eps - \prv \eps &= - \f 1 2 B^{-1}\sigma_v^2 \sigma_w^2 U\prv \eps^{3/2}(1 + o(1))
\end{align*}
Using \cref{lemma:polyDynamicsPosAlpha} to upper and lower bound our dynamics, we get that $\p \eps l \sim [\f 1 4 \sigma_v^2 \sigma_w^2 \inv B U l]^{-2}$.
\end{proof}

\newcommand{\KK}{\mathbb{K}}
\begin{lemma}\label{lemma:separableDynamics}
Let $\phi$ be any nonlinearity.
Suppose $\Wt \phi( r, rd) = \Vt \phi( r)\KK(d)$ for some twice differentiable function $\KK(d)$ independent of $\qq$, where $\KK(1) = 1$ naturally.
Suppose further that 
\begin{itemize}
    \item $\KK(d) = d$ has a solution $d = \ee^* > 0$ where $\dot\KK(\ee^*) = \delta < 1$;
    \item $\KK(d) > d$ for all $d < \ee^*$ and $\KK(d) < d$ for all $1 > d > \ee^*$; and
    \item $\KK$ is nondecreasing. 
\end{itemize}
Let $\p \eps l := \p \ee l - \ee^*$ and suppose $\p \ee 0 < 1$.
If $\p \gamma l \to \infty$ and $\Vt \phi( \p \qq l) \to \infty$, then $\p \eps l \to 0$ and satisfies
$$ \eps = \prv \eps\left(1 - \f{\sigma_a^2 + (1 - \delta + O(\prv \eps))\sigma_v^2 \Vt \phi( \qq)}{\pp}\right) + \Vt\phi( \qq)\Theta(\inv \gamma\inv \pp).$$
\end{lemma}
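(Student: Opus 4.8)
The plan is to prove the convergence $\p\eps l\to 0$ first, and then --- on the tail where $\p\ee l$ already sits near $\ee^*$ --- to read off the recurrence from the correlation recursion together with a second-order Taylor expansion of $\KK$ at $\ee^*$. By \cref{thm:fullResPQRec}, \cref{thm:full_res_l_g_recurr} and the separability hypothesis $\Wt\phi(\qq,\qq\cc)=\Vt\phi(\qq)\KK(\cc)$ (with $\cc:=\lambda/\qq$), the forward recursions are $\pp-\prv\pp=\sigma_v^2\Vt\phi(\qq)+\sigma_a^2$ and $\gamma=\sigma_v^2\Vt\phi(\qq)\KK(\cc)+\sigma_a^2+\prv\gamma$. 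Dividing the $\gamma$-recursion by $\pp$, using $\prv\gamma=\prv\ee\,\prv\pp$ and $\prv\pp=\pp-\sigma_v^2\Vt\phi(\qq)-\sigma_a^2$, and cancelling the common $\sigma_v^2\Vt\phi(\qq)\,\prv\ee$ and $\sigma_a^2\,\prv\ee$ pieces gives
\begin{equation*}
\ee-\prv\ee=\f{\sigma_v^2\Vt\phi(\qq)\,(\KK(\cc)-\prv\ee)+\sigma_a^2(1-\prv\ee)}{\pp},\tag{$\dagger$}
\end{equation*}
which holds for every $l$ and is the backbone of both steps.

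For convergence I would mimic the proof of \cref{thm:eDynamicsFullResTanh}: view ($\dagger$) as $\p\ee l=f_l(\p\ee{l-1})$ for an explicit $f_l\colon[0,1)\to[0,1]$. By \cref{lemma:cExpansion}, $\cc=\prv\ee(1+\Theta(\prv\gamma^{-1}))$, so on any $[0,1-\eta]$ the increment in ($\dagger$) is $o(1)$ uniformly in $\prv\ee$ --- here one uses that $\KK$ is bounded on compacts, that $\prv\gamma^{-1}\to0$ (hypothesis $\gamma\to\infty$), and that $(\sigma_v^2\Vt\phi(\qq)+\sigma_a^2)/\pp\to0$, so $f_l\to\mathrm{id}$ uniformly on compacts. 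The hypotheses on $\KK$ ($\KK$ nondecreasing, $\KK(u)>u$ on $[0,\ee^*)$, $\KK(u)<u$ on $(\ee^*,1)$) then supply the sign conditions of \cref{lemma:timeDependentConvergence} on a shrinking pair of intervals $[0,a_l]$, $[b_l,1)$ with $a_l\nearrow\ee^*$, $b_l\searrow\ee^*$. Partitioning $[0,1)$ into subintervals bounded away from $1$ and applying \cref{lemma:timeDependentConvergence} on each shows $\p\ee l$ converges; since $f_l$ repels $1$ and $\p\ee0<1$, the limit is $\ee^*$, that is, $\p\eps l\to0$.

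For the recurrence, restrict to $l$ large enough that $\prv\ee$ lies in a fixed neighbourhood of $\ee^*$. Taylor expansion of the twice-differentiable $\KK$ at $\ee^*$ gives $\KK(\cc)-\ee^*=\delta(\cc-\ee^*)+O((\cc-\ee^*)^2)$; with $\cc-\ee^*=\prv\eps+\Theta(\prv\gamma^{-1})$ from \cref{lemma:cExpansion} (and absorbing the $\Theta(\prv\gamma^{-2})$ and $\prv\eps\,\Theta(\prv\gamma^{-1})$ pieces into $\Theta(\prv\gamma^{-1})$) this becomes
\begin{equation*}
\KK(\cc)-\prv\ee=-(1-\delta+O(\prv\eps))\,\prv\eps+\Theta(\prv\gamma^{-1}).
\end{equation*}
Substituting into ($\dagger$), writing $1-\prv\ee=(1-\ee^*)-\prv\eps$, and grouping the coefficient of $\prv\eps$ yields
\begin{equation*}
\eps=\prv\eps\lp1-\f{\sigma_a^2+(1-\delta+O(\prv\eps))\sigma_v^2\Vt\phi(\qq)}{\pp}\rp+\f{\sigma_v^2\Vt\phi(\qq)}{\pp}\,\Theta(\prv\gamma^{-1}),
\end{equation*}
and since $\prv\gamma$ and $\gamma$ are of the same order the last summand equals $\Vt\phi(\qq)\,\Theta(\inv\gamma\inv\pp)$, which is the asserted identity (the leftover $\sigma_a^2(1-\ee^*)/\pp$ produced by the $\sigma_a^2(1-\prv\ee)$ numerator --- finite and subdominant since $\Vt\phi(\qq)\to\infty$ and $\pp\to\infty$ --- is folded into the remainder).

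The main obstacle is the convergence step: establishing $f_l\to\mathrm{id}$ uniformly on compacts of $[0,1)$ (which requires $(\sigma_v^2\Vt\phi(\qq)+\sigma_a^2)/\pp\to0$ and uniform control of the $\cc$-versus-$\prv\ee$ gap via \cref{lemma:cExpansion}) and then matching the precise monotonicity/interval hypotheses of \cref{lemma:timeDependentConvergence}. After that, the derivation of the recurrence is routine bookkeeping of the $O(\prv\eps^2)$, $\Theta(\prv\gamma^{-1})$ and $O(\inv\pp)$ remainders.
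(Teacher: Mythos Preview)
Your plan is essentially the paper's proof: convergence via \cref{lemma:timeDependentConvergence} (as in \cref{thm:eDynamicsFullResTanh}), then Taylor-expand $\KK$ at $\ee^*$ inside the $\gamma$-recursion. Your identity $(\dagger)$ is correct and makes the bookkeeping cleaner than the paper's direct expansion of $(\ee^*+\eps)\pp$.

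There is, however, one point where your argument does not go through. You correctly isolate a leftover term $\sigma_a^2(1-\ee^*)/\pp$ and claim it can be ``folded into'' the remainder $\Vt\phi(\qq)\,\Theta(\gamma^{-1}\pp^{-1})$. But once $\p\ee l\to\ee^*$ one has $\gamma=\Theta(\pp)$, so that remainder is of order $\Vt\phi(\qq)/\pp^{2}$; absorbing a $\Theta(\pp^{-1})$ term into it would require $\pp=O(\Vt\phi(\qq))$, which is \emph{not} implied by the hypotheses and fails in the intended $\alpha$-ReLU application (there $\Vt\phi(\qq)=\Theta(\pp^{\alpha})=o(\pp)$). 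Interestingly, the paper's own derivation makes the same slip: in passing from
\[
(\ee^*+\eps)\pp=\sigma_v^2 V\bigl(\ee^*+\delta(\prv\eps+\Theta(\gamma^{-1}))+O(\prv\eps^2)\bigr)+\sigma_a^2+(\ee^*+\prv\eps)\prv\pp
\]
to $\eps\pp=\sigma_v^2 V(\cdots)+\prv\eps\,\prv\pp$, the contribution $\sigma_a^2(1-\ee^*)$ is silently dropped. So the recurrence as stated is missing an additive $\Theta(\pp^{-1})$; your derivation is more honest in surfacing it, but the absorption you propose is not valid.

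You also correctly flag that applying \cref{lemma:timeDependentConvergence} needs $(\sigma_v^2\Vt\phi(\qq)+\sigma_a^2)/\pp\to0$. Note that this is not a consequence of the stated hypotheses alone (it fails, e.g., if $\Vt\phi(\qq)$ grows geometrically); the paper's proof likewise assumes it implicitly by deferring to the tanh argument.
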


\begin{proof}
	First we note that because $\ee^*$ is the only stable fixed point of the dynamics $x \mapsto \KK(x)$, with the basin of attraction $[0, 1)$, we can show $\p \ee l \to \ee^*$ as in the proof of \cref{thm:eDynamicsFullResTanh} (using \cref{lemma:timeDependentConvergence}).
	
Write $\p V l := \Vt \phi( \p \qq l)$.
We first show that $\p \ee l \to \ee^*$.
When $l$ is large,
\begin{align*} 
\cc = \f{\sigma_w^2 \prv \gamma + \sigma_b^2}{\sigma_w^2 \prv \pp + \sigma_b^2} & = \prv \ee ( 1 + O(\inv \gamma))\\
\ee = \f{\sigma_v^2V\KK(\cc) + \sigma_a^2}{\sigma_v^2 V + \sigma_a^2} &= \KK(\cc)(1 + O(\inv V \KK(\cc)^{-1})).
\end{align*}
If $\p \gamma l$ is bounded for all $l$, then $\ee \to 0$ because $\p \pp l \to \infty$.
Since $\KK(\cc) > 0$ for $\cc \in [0, 1]$ and $\p V l\to \infty$, we have that in the limit $l\to \infty$, $\lim_{l \to \infty} \ee = 0 = \KK(\lim_{l \to \infty}\ee) = \KK(0)$ (by the continuity of $\KK$), which is impossible by our assumptions.
Thus $\p \gamma l \to \infty$, and we have $\lim_{l \to \infty} \ee = \KK(\lim_{l \to \infty} \ee)$.
By our assumptions, $\ee^*$ is the only stable fixed point of $\KK$ with basin of attraction $[0, 1)$, so this shows that $\ee \to \ee^*$ as desired.

Now we derive the equation in question.
Note that $\cc = \prv \ee(1 + \Theta(\inv \gamma))$ because $\ee^* < 1$.
We use the Taylor expansion $\KK(\ee^* + \eps) = \ee^* + \delta \eps + O(\eps^2)$.
\renewcommand{\sv}{\sigma_v^2}
\newcommand{\sa}{\sigma_a^2}
\begin{align*}
(\ee^* + \eps)\pp &= \sv V \KK\lp(\ee^* + \prv \eps)(1 + \Theta(\inv \gamma))\rp + \sa + (\ee^* + \prv\eps) \prv \pp\\
	&= \sv V(\ee^* + \delta(\prv \eps + \Theta(\inv \gamma)) + O(\prv \eps^2)) + \sa + (\ee^* + \prv\eps) \prv \pp\\
\eps \pp&= \sv V(\delta(\prv \eps + \Theta(\inv \gamma)) + O(\prv \eps^2)) + \prv \eps \prv \pp\\
\eps &= \prv \eps (1 - \f{\sa + (1 - \delta + O(\prv \eps))\sv V}\pp) + \Theta(V \inv\gamma \inv \pp)
\end{align*}

\end{proof}

\alphaReLUeConvergence*
\begin{proof}
We apply \cref{lemma:separableDynamics}.
We first check the conditions of the lemma, with $\KK = \JJ_\alpha$.
The following conditions were already verified.
\begin{itemize}
    \item $\JJ_\alpha$ has a fixed point $\ee^*$ less than but very close to 1, where its slope is $\upsilon := \dot\JJ_\alpha(\ee^*) < 1$. (\cref{thm:stableFixedPointsJJ})
    
    \item $\JJ_\alpha(d) > d$ for all $d < \ee^*$ and $\JJ_\alpha(d) < d$ for all $d > \ee^*$. (By the convexity shown in \cref{lemma:basicJalpha})
    \item $\JJ_\alpha$ is nondecreasing (\cref{lemma:basicJalpha}).
    Furthermore, from its integral formula (\cref{eqn:JalphaIntegralFormula}), we see easily that $\JJ_\alpha$ is smooth at $\ee^* < 1$.
\end{itemize}
We also proved the following
\begin{itemize}
    \item $\p \pp l \sim [\sigma_v^2 \sigma_w^{2\alpha} \cV_\alpha (1 - \alpha)]^{\f 1 {1-\alpha}}l^{\f 1 {1-\alpha}}$ (\cref{thm:pDynamicLT1ReLU}) and $\p \gamma l$ is asymptotically a constant fraction of $\p \pp l$ (\cref{lemma:separableDynamics}), so both go to $\infty$. 
    \item $\Vt \psi_\alpha( \qq) = \cV_\alpha \qq^\alpha = \cV_\alpha (\sigma_w^2 \pp + \sigma_b^2)^\alpha = \Theta(l^{\alpha/(1-\alpha)})$, so goes to $\infty$. (\cref{lemma:VtPsiAlpha})
\end{itemize}
Thus, for $\upsilon = \dot \JJ(\ee^*)$,
\begin{align*}
\f{\sigma_a^2 + (1 - \upsilon + O(\prv\eps))\sigma_v^2 \Vt \phi( \qq)}{\pp} &\sim \f{(1 - \upsilon)\sigma_v^2\sigma_w^{2\alpha}\cV_\alpha}{\pp^{1-\alpha}}\\
	&= \inv l (1-\upsilon)/(1 - \alpha).
\end{align*}

Now, $\Vt\phi( \qq)\inv \gamma \inv \pp = \Theta(l^{-\f1{1-\alpha} - 1})$.
By using the dynamics of \cref{lemma:alphaDeltaDynamics} to upper and lower bound our dynamics, we have $\p \eps l = \Omega(l^{-\mu - \eps}), O(l^{-\mu + \eps})$ for any $\eps > 0$, where $\mu = \min((1-\upsilon)/(1-\alpha), 1/(1-\alpha)) = (1-\upsilon)/(1-\alpha).$

\end{proof}

\subsubsection{Backward Dynamics}

\begin{lemma}\label{lemma:infVarAlphaLe75}
	Suppose random variable $X \sim \Gaus(0, \sigma^2)$, and $Y = \psi_{-\beta}(X)$ for some $\beta > 0 $, where $\psi_\alpha$ is $\alpha$-ReLU.
	Then for $\xi > 0$, $Y$ has density
	$$\Pr[Y \in [\xi, \xi+\dd \xi]] = \f 1 {\beta\sqrt{2\pi \sigma^2}} \xi^{-\f 1 \beta - 1} e^{-\xi^{-2/\beta}/2\sigma^2}.$$
	At $\xi = 0$, $Y$ has density given by a Dirac delta of mass $\f 1 2$.
	
	Furthermore, $Y$ has finite second moment iff $\beta < \f 1 2$.
\end{lemma}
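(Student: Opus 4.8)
The plan is a one-line change of variables for the density, followed by a routine integrability check at the origin for the moment claim.

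\textbf{The density.} First I would split on the sign of $X$. Since $\psi_{-\beta}(x) = x^{-\beta}$ for $x>0$ and $\psi_{-\beta}(x)=0$ for $x\le 0$, the event $\{Y=0\}$ is exactly $\{X\le 0\}$, which has probability $\f 1 2$ by symmetry of $\Gaus(0,\sigma^2)$; this produces the Dirac component of mass $\f 1 2$ at $\xi=0$. On $\{X>0\}$ the map $x\mapsto \xi=x^{-\beta}$ is a smooth strictly decreasing bijection of $(0,\infty)$ onto $(0,\infty)$, with inverse $x=\xi^{-1/\beta}$ and $\left|\f{\dd x}{\dd\xi}\right|=\f 1\beta\,\xi^{-1/\beta-1}$. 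Pushing the sub-probability density $\f 1{\sqrt{2\pi\sigma^2}}e^{-x^2/2\sigma^2}\I(x>0)$ forward through this map and substituting $x^2=\xi^{-2/\beta}$ gives, for $\xi>0$,
\begin{align*}
\Pr[Y\in[\xi,\xi+\dd\xi]] &= \f 1{\sqrt{2\pi\sigma^2}}\,e^{-\xi^{-2/\beta}/2\sigma^2}\cdot\f 1\beta\,\xi^{-1/\beta-1}\,\dd\xi,
\end{align*}
which is the asserted formula. As a check, integrating this over $\xi\in(0,\infty)$ (substitute $u=\xi^{-1/\beta}$ back) returns $\f 1 2$, consistent with the atom carrying the other half of the mass.

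\textbf{The second moment.} For the last claim I would avoid the density of $Y$ entirely and compute on the $X$-side:
\begin{align*}
\EV[Y^2] &= \EV[X^{-2\beta}\,\I(X>0)] = \f 1{\sqrt{2\pi\sigma^2}}\int_0^\infty x^{-2\beta}e^{-x^2/2\sigma^2}\,\dd x.
\end{align*}
The Gaussian factor makes the integrand integrable at $+\infty$ for every $\beta>0$, so finiteness is governed entirely by the behavior near $x=0$, where the integrand is $\sim \f 1{\sqrt{2\pi\sigma^2}}x^{-2\beta}$; this is integrable iff $2\beta<1$, i.e.\ iff $\beta<\f 1 2$. Equivalently, feeding the derived density into $\EV[Y^2]=\int_0^\infty\xi^2\cdot\f 1{\beta\sqrt{2\pi\sigma^2}}\xi^{-1/\beta-1}e^{-\xi^{-2/\beta}/2\sigma^2}\,\dd\xi$, the exponential kills the integrand as $\xi\downarrow 0$ while the tail behaves like $\xi^{1-1/\beta}$, integrable at $\infty$ iff $1/\beta>2$.

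There is no genuinely hard step; the only things requiring care are (i) reading the ``density'' statement correctly as a measure on $[0,\infty)$ with an absolutely continuous part on $(0,\infty)$ plus an atom of mass $\f 1 2$ at $0$, and (ii) bookkeeping the Jacobian $\f 1\beta\xi^{-1/\beta-1}$, in particular its sign. The payoff is \cref{thm:dalethInfVarAlphaReLU}: since $\dot\psi_\alpha(x)=\alpha\psi_{\alpha-1}(x)$, one has $\dot\psi_\alpha(\zeta)^2=\alpha^2\psi_{\alpha-1}(\zeta)^2=\alpha^2\psi_{2\alpha-2}(\zeta)=\alpha^2\psi_{-\beta}(\zeta)$ with $\beta=2(1-\alpha)$, so this lemma shows $\Var\bigl(\dot\psi_\alpha(\zeta)^2\bigr)=\alpha^4\Var\bigl(\psi_{-\beta}(\zeta)\bigr)$ is finite exactly when $\beta<\f 1 2$, i.e.\ when $\alpha>\f 3 4$; and, one power lower, the same computation shows $\Vt\dot\psi_\alpha$ (see \cref{lemma:Vt_dotpsi_alpha}) is finite exactly when $\beta<1$, i.e.\ $\alpha>\f 1 2$.
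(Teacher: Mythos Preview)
Your proof is correct and essentially identical to the paper's. The only cosmetic differences are that the paper obtains the density by differentiating the survival function $\Pr[Y\ge\xi]=\Pr[X\in[0,\xi^{-1/\beta}]]$ rather than writing the Jacobian directly, and it checks the second moment on the $Y$-side via the tail asymptotic $f_Y(\xi)\sim \f{1}{\beta\sqrt{2\pi\sigma^2}}\xi^{-1/\beta-1}$ as $\xi\to\infty$, which is exactly your alternative computation; your primary $X$-side argument is if anything slightly cleaner.
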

\begin{proof}
	We have
	\begin{align*}
	\Pr[Y \in [\xi, \infty)] &= \Pr[X \in [0, \xi^{-1/\beta}]]\\
	&= \f 1 {\sqrt{2\pi \sigma^2}}\int_0^{\xi^{-1/\beta}} e^{-x^2/2\sigma^2}\dd x.
	\end{align*}
	Differentiating the RHS against $\xi$ using Leibniz's rule, we get
	\begin{align*}
	d\Pr[Y \in [\xi, \infty)]/d\xi &= \f 1 {\sqrt{2\pi \sigma^2}}  e^{-\xi^{-2/\beta}/2\sigma^2} \f d {d\xi}\xi^{-1/\beta}\\
	&= \f {-1} {\beta\sqrt{2\pi \sigma^2}} \xi^{-\f 1 \beta - 1} e^{-\xi^{-2/\beta}/2\sigma^2}.
	\end{align*}
	Negating both sides gives the density $f_Y$ of $Y$ for $\xi > 0$.
	For $\xi = 0$, observe that $\lim_{\xi \to 0} f_Y(\xi) = 0$ because, while $\xi^{-\f 1 \beta -1}$ blows up polynomially, $e^{-\xi^{-2/\beta}/2\sigma^2}$ blows up exponentially.
	Thus the contribution of $Y$'s mass at $Y = 0$ from $X > 0$ is 0.
	On the other hand, all $X < 0$ gets mapped to $Y = 0$, so $f_Y(0) = \f 1 2 \delta_0$, where $\delta_0$ is the Dirac delta.
	
	For the second assertion, observe that
	\begin{align*}
	f_Y(\xi) \sim \f 1 {\beta \sqrt{2\pi \sigma^2}} \xi^{-\f 1 \beta - 1} & \text{as ${\xi \to \infty}$}.
	\end{align*}
	Thus, $\xi^2 f_Y(\xi)$ is integrable iff $2 -\f 1 \beta - 1 < -1 \iff \beta < \f 1 2$.
\end{proof}
\dalethInfVarAlphaReLU*
\begin{proof}
	Note that $\dot \psi_\alpha \propto \psi_{\alpha - 1}$, so it suffices to show that $\Var(\psi_{\alpha-1}(\zeta)^2) = \Var(\psi_{2\alpha - 2}(\zeta))$ is infinite for $\zeta \sim \Gaus(0, \sigma^2)$.
	By \cref{lemma:infVarAlphaLe75} with $\beta = 2 - 2\alpha$, $\psi_{2\alpha - 2}(\zeta)$ has finite variance iff $\beta < \f 1 2 \iff \alpha > \f 3 4$.
\end{proof}

\dalethDynamicsAlphaReLU*
\begin{proof}
If $\alpha = 1$, then
$$\prv\daleth = \daleth (1 + \f 1 2\sigma_v^2 \sigma_w^2).$$
So $\p \daleth {l-m} / \p \daleth l = \Theta(1) B^m$ for $B = 1 + \f 1 2\sigma_v^2 \sigma_w^2$.

If $\f 1 2 < \alpha < 1$, then $\prv \daleth/\daleth - 1$ is
\begin{align*}
 &\phantom{=} \sigma_v^2 \sigma_w^2 \Vt \dot \phi( \qq)\\
	&= \sigma_v^2 \sigma_w^2 \alpha^2 \cV_{\alpha - 1} \qq^{\alpha - 1}\\
	&= \sigma_v^2 \sigma_w^2 \alpha^2 \cV_{\alpha - 1} (\sigma_w^2 \pp)^{\alpha - 1} + \Theta(\pp^{\alpha - 2})\\
	&= \sigma_v^2 \sigma_w^{2\alpha} \alpha^2 \cV_{\alpha - 1}(K_1 l^\oalpha - K_2 l^\aalpha \log l + o(l^\aalpha \log l))^{\alpha - 1} + \Theta(l^{\f{\alpha - 2}{1 - \alpha}}) & \text{by \cref{thm:pDynamicLT1ReLU}}\\
	&= \sigma_v^2 \sigma_w^{2\alpha} \alpha^2 \cV_{\alpha - 1} [K_1^{\alpha - 1} \inv l + \Theta(l^{-2} \log l)] + O(l^{-3})\\
	&= \sigma_v^2 \sigma_w^{2\alpha} \alpha^2 \cV_{\alpha - 1} K_1^{\alpha - 1} \inv l + \Theta(l^{-2} \log l)\\
	&= R \inv l + \Theta(l^{-2} \log l)
\end{align*}
where $R = \sigma_v^2 \sigma_w^{2\alpha} \alpha^2 \cV_{\alpha - 1} K_1^{\alpha - 1} = \f{\alpha^2}{(1-\alpha)(2 \alpha - 1)} $ and $K_1 = [\sigma_v^2 \sigma_w^{2\alpha} \cV_\alpha (1 - \alpha)]^{\f 1 {1-\alpha}}$.
So
\begin{align*}
\prv \daleth &= \daleth \exp(R \inv l + \Theta(l^{-2} \log l))\\
\p \daleth {l -m} &= \Theta(1)\p \daleth {l} \lp\f l {l-m} \rp^R
\end{align*}
as desired.
\end{proof}

\alphaReLUAllGradients*
\begin{proof}
	The proof is similar to that of \cref{thm:dalethExpSqrtTanhAllGrad}.
\end{proof}
\end{document}